\documentclass{article}
\usepackage[accepted]{icml2026}

\usepackage[utf8]{inputenc}   
\usepackage[T1]{fontenc}    	
\usepackage{url}                
\usepackage{booktabs}       	
\usepackage{amsfonts}       	
\usepackage{nicefrac}       	  
\usepackage{microtype}      	

\usepackage{graphicx}
\usepackage{color}
\usepackage{rotating}
\usepackage{tabularx}
\usepackage{pdflscape}
\usepackage{amsmath,amsthm,amssymb}
\usepackage{algorithm,algorithmic}
\usepackage{bbm,dsfont}
\usepackage[caption=false]{subfig}
\usepackage{caption}
\usepackage{cancel}
\usepackage{enumerate,cases}
\usepackage{thmtools,thm-restate}
\usepackage{mathtools}

\usepackage{multirow}
\usepackage{makecell}
\usepackage{setspace}
\usepackage{pifont}
\usepackage{array}
\usepackage{enumitem}
\usepackage{lineno}     
\usepackage{bm}

\allowdisplaybreaks

\usepackage{hyperref}		 
\hypersetup{
	colorlinks	= true,		 
	urlcolor     = blue,	 
	linkcolor	 = purple, 	 
	citecolor    = violet,   
}
\usepackage[capitalize]{cleveref}
\usepackage{etoolbox}

\crefformat{figure}{Fig.~#2#1#3}       
\crefformat{table}{Tab.~#2#1#3}        
\crefformat{equation}{Eq.~(#2#1#3)}    
\crefformat{section}{Sec.~#2#1#3}      
\crefformat{appendix}{App.~#2#1#3}     
\crefformat{algorithm}{Alg.~#2#1#3}    

\usepackage{silence}
\newcommand{\alg}{\textsc{MineDraft}}
\newcommand{\SD}{\textnormal{\text{SD}}}
\newcommand{\PSD}{\textnormal{\text{PSD}}}

\newcommand{\para}[1]{\noindent\textbf{#1}~}

\newcommand{\EE}[1]{\bE\left[#1\right]}

\newcommand{\vone}{\bm{1}}

\renewcommand{\phi}{\varphi}
\renewcommand{\epsilon}{\varepsilon}

\newcommand{\eqs}[1]{ \begin{equation*} #1  \end{equation*}}

\newcommand{\el}{\end{flushleft}}
\newcommand{\bl}{\begin{flushleft}}

\newcommand{\squishlisttwo}{
 \begin{list}{$\bullet$}
  { \setlength{\itemsep}{1pt}
     \setlength{\parsep}{0pt}
    \setlength{\topsep}{0pt}
    \setlength{\partopsep}{0pt}
    \setlength{\leftmargin}{1em}
    \setlength{\labelwidth}{1.5em}
    \setlength{\labelsep}{0.5em} } 
}
\newcommand{\squishend}{\end{list}}

\newcommand{\bE}{\mathbb{E}}

\newcommand{\cD}{\mathcal{D}}

\newcommand{\cG}{\mathcal{G}}

\newcommand{\cS}{\mathcal{S}}

\theoremstyle{plain}

\icmltitlerunning{
    \alg{}: A Framework for Batch Parallel Speculative Decoding
}

\begin{document}
	\twocolumn[

	\icmltitle{ 
        \texorpdfstring{\raisebox{-0.2em}{\includegraphics[height=1em]{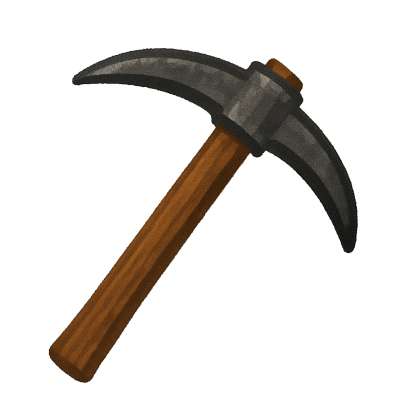}}}{pickaxe}        
        \alg{}: 
        A Framework for Batch Parallel Speculative Decoding
    }

	\icmlsetsymbol{equal}{*}
	
	\begin{icmlauthorlist}
        \icmlauthor{Zhenwei Tang}{ntu,smart,equal}
        \icmlauthor{Arun Verma}{smart,equal}
        \icmlauthor{Zijian Zhou}{smart,nus}
        \icmlauthor{Zhaoxuan Wu}{smart}
        \icmlauthor{Alok Prakash}{smart}\\
        \icmlauthor{Daniela Rus}{mit,smart}
        \icmlauthor{Bryan Kian Hsiang Low}{nus,smart}
	\end{icmlauthorlist}
	\icmlaffiliation{ntu}{College of Computing and Data Science, Nanyang Technological University, Republic of Singapore}
	\icmlaffiliation{smart}{Singapore-MIT Alliance for Research and Technology Centre}
    \icmlaffiliation{nus}{Department of Computer Science, National University of Singapore, Republic of Singapore}
    \icmlaffiliation{mit}{CSAIL, MIT, Cambridge, MA, USA}

    \icmlcorrespondingauthor{Arun Verma}{arun.verma@smart.mit.edu}
	
	\icmlkeywords{}
	
	\vskip 0.3in
	]
	
	\printAffiliationsAndNotice{\icmlEqualContribution} 

    \begin{abstract}
        Speculative decoding (SD) accelerates large language model inference by using a smaller draft model to propose draft tokens that are subsequently verified by a larger target model. However, the performance of standard SD is often limited by the strictly sequential execution of these drafting and verification stages. To address this, this paper proposes \alg, a batch parallel speculative decoding (PSD) framework designed to effectively hide drafting latency by overlapping it with verification. Our theoretical analysis shows that PSD is substantially more efficient than standard SD. \alg{} realizes the PSD through a novel batch-parallel design that maintains two batches of requests, overlapping drafting for one batch with verification for the other. Our experimental results show significant improvements of \alg{} in both throughput (up to 75\%) and end-to-end latency (up to 39\%) over standard SD. Furthermore, we have implemented \alg{} as a plugin for vLLM, demonstrating its practicality for production-ready inference systems. 
        The code is publicly available in the \href{https://github.com/electron-shaders/MineDraft}{MineDraft GitHub repository}.
    \end{abstract}



    \section{Introduction}
    \label{sec:introduction}

Speculative decoding (SD)~\citep{leviathan2023, ryu2024closerlook-survey,xia2024sd-survey} is a popular method used for accelerating large language model (LLM) inference. 
Instead of generating tokens sequentially with the target large model, SD uses a typically smaller and faster draft model to autoregressively draft (propose) a short sequence of tokens, which the target model then verifies in parallel.
This parallel verification allows SD to achieve higher generation throughput than conventional autoregressive decoding when most drafted tokens are accepted. 
However, the effectiveness of SD is highly sensitive to the acceptance rate of drafted tokens, as drafting introduces additional computational overhead, frequent rejections can negate its benefits, and even make SD slower than standard autoregressive decoding.

Prior work has improved the verification success rate (VSR) of draft tokens through methods such as specialized drafting models~\citep{cai2024medusa,li2024eagle}, tree-based drafting~\citep{wang2025opt}, and active draft token selection~\citep{wu2025tetris}. 
While these methods increase the overall VSR, they often lengthen the drafting stage due to increased model complexity~\citep{cai2024medusa,li2024eagle} or sampling overhead~\citep{wang2025opt,wu2025tetris}.
In standard speculative decoding, drafting and verification are executed \textit{sequentially}, placing drafting directly on the critical path and fundamentally limiting achievable speedups. 
This observation motivates parallelizing the two phases: since the draft model is typically much smaller than the target model, its computation can be effectively overlapped with the target model's verification.
By parallelizing the drafting and verification stages, a smaller draft model can generate draft tokens without blocking the verification process. 
When combined with an effective drafting mechanism, parallel speculative decoding hides drafting latency and yields improved overall inference performance.

Since the verification stage inherently depends on the outputs of the drafting stage, implementing parallelism in speculative decoding is nontrivial~\citep{leviathan2023}. 
Existing methods~\citep{wang2024minions,xiao2025parallelspec,timor2025distributed} typically break this dependency at the cost of substantial additional resources, such as increased GPU or memory usage \citep{wang2024minions, timor2025distributed}, or require a dedicated training stage for the draft model \citep{xiao2025parallelspec}. 
While \citeauthor{liu2025pearl} \citeyearpar{liu2025pearl} propose a method that parallelizes drafting and verification for a single request using pre-verify and post-verify strategies, scaling this method to batched requests (multi-request) remains challenging. 
Moreover, its performance degrades when prerequisites for post-verification are not satisfied, a limitation that becomes especially severe in batched request settings.
\begin{figure}[!ht]
    \centering
    \includegraphics[width=0.9\linewidth]{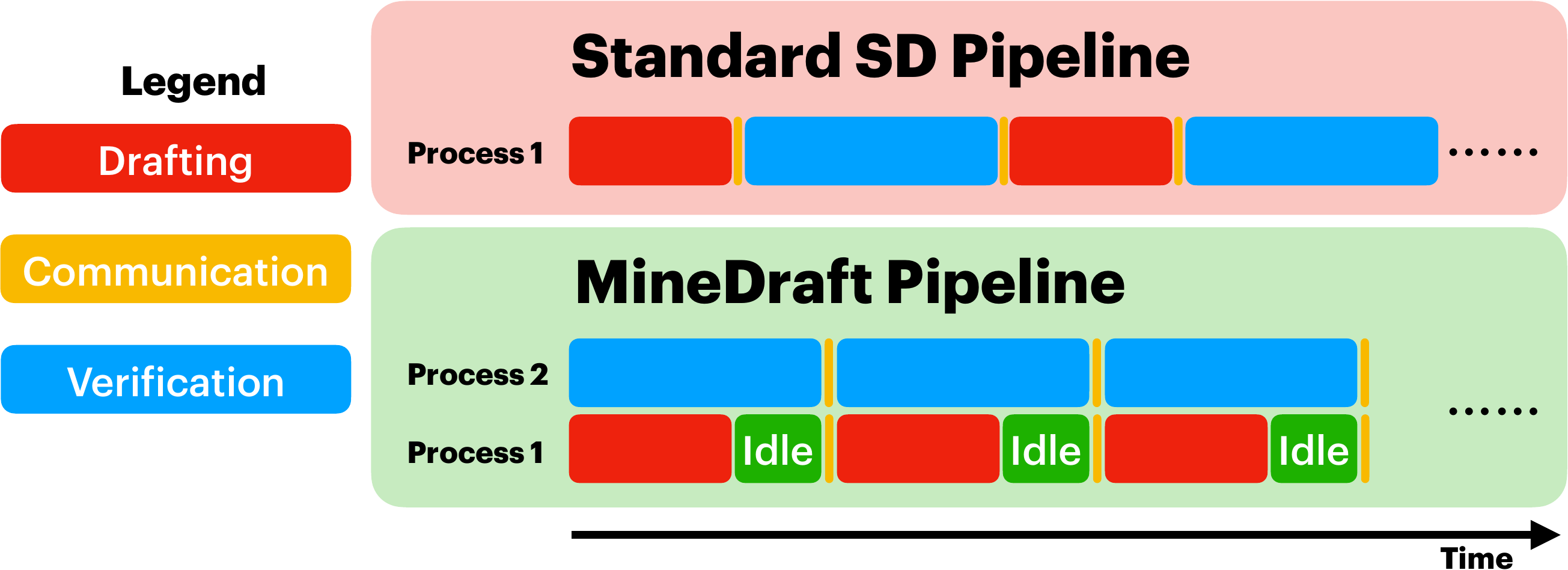}
    \vspace{-1.5mm}
    \caption{
        \textbf{MineDraft parallelizes drafting and verification:} a draft model generates tokens while the target model simultaneously verifies the previously generated draft tokens, thereby hiding drafting latency and improving overall inference throughput.
    }
    \label{fig:pipeline}
    \vspace{-4.5mm}
\end{figure}

This paper presents \alg{}, a framework for batch parallel speculative decoding (PSD) that systematically parallelizes the drafting and verification stages to hide drafting latency and thereby accelerate LLM inference. 
We provide a theoretical justification showing that PSD outperforms standard SD by efficiently utilizing additional computational resources through parallel drafting and verification. 
To efficiently parallelize drafting and verification, our framework uses a novel batch-parallel design that maintains two batches of requests, overlapping drafting for one batch with verification for the other.
The name of our framework is inspired by the \emph{Minecraft game engine}, which loads the next `chunk' of blocks (the next set of draft tokens) into the game world while the player is still interacting with the current blocks (verifying the current drafted tokens). This `background loading' is exactly how \alg{} hides drafting latency.

Since \alg{} is agnostic to the strategies for generating and selecting draft tokens, it can be combined with existing SD methods such as EAGLE~\citep{li2024eagle,li2024eagle2,li2025eagle3}, PEARL~\citep{liu2025pearl}, and TETRIS~\citep{wu2025tetris}, and is orthogonal to most LLM inference optimization techniques, including forward-pass modifications, model parallelism, and draft models.
Our experimental results show that \alg{} outperforms standard SD by achieving average throughput gains and reducing end-to-end latency at the cost of only $1$ additional GPU, highlighting its significant practical impact.
Furthermore, \alg{} is implemented as a plugin for the production-ready inference library vLLM~\citep{kwon2023vllm}. 
The specific contributions of this paper are summarized as follows:

\vspace{-3.5mm}
\begin{itemize}[leftmargin=*]
	\setlength\itemsep{-0.1em}
    \setlength{\itemindent}{2pt}
    \item \textbf{Theoretical efficiency of PSD.}
    In \cref{sec:theoretical_analysis}, we first present a theoretical analysis of batch PSD, demonstrating that it achieves a 37.9\% reduction in inference time compared to standard SD under mild assumptions about the quality and acceptance rate of the draft model.
        
    \item \textbf{A framework for batch PSD.}
    In \cref{sec:minedraft}, we propose a novel framework, \alg{}, for batch PSD that maintains two batches of requests, overlapping drafting for one batch with verification for the other. To enable parallel execution, \alg{} runs the draft model on a separate GPU and employs direct GPU-to-GPU communication to transfer draft tokens to the target model, which runs on GPUs separate from the draft model.
   
    \item \textbf{Experimental results.}
    In \cref{sec:experiments}, we evaluate the performance of \alg{} through comprehensive experiments across existing SD methods. Our results show that \alg{} achieves average throughput gains of up to $75$\% and reduces end-to-end latency by up to $39$\% compared to standard SD, at the cost of one extra GPU.

    \item \textbf{\alg{} as a vLLM plugin.}
    We develop a plug-and-play vLLM plugin that implements \alg{}, encouraging further academic research and experimentation. The plugin also supports continuous batching (iteration-level scheduling)~\citep{yu2022orca} and is fully compatible with PagedAttention~\citep{kwon2023vllm}, highlighting its significant practical impact and ease of use. 
\end{itemize}

    \section{Related Work}
    \label{sec:related_work}

\para{Speculative decoding.} SD has been receiving increasing attention as an effective method to accelerate LLM inference by decomposing autoregressive decoding into a drafting phase and a verification phase~\citep{leviathan2023,ryu2024closerlook-survey,xia2024sd-survey}. 
Prior work has focused on improving SD performance in both single-request settings~\cite{cai2024medusa,cheng2024recurrentdrafterfastspeculative,zhang2024skiplayer,li2025eagle3,oliaro2025suffixdecoding} and batch-request settings~\citep{li2026adaserve,liu2025turbospec,wu2025tetris}, primarily by improving drafting efficiency. 
In contrast, \alg{} explores an orthogonal direction by parallelizing the drafting and verification phases, allowing easy integration with existing methods to further improve SD performance.

\para{Parallel speculative decoding.}
Recent work has explored reducing drafting overhead through parallelism. Parallel SD methods achieve faster decoding at the cost of substantial GPU resources, such as increased VRAM consumption~\citep{wang2024minions,timor2025distributed}. Another promising direction is parallel decoding with tree attention, which accelerates SD by drafting multiple tokens in parallel~\citep{cai2024medusa,li2024eagle}. 
In addition, prior work focuses on the single-request setting and modifies the SD process to generate variable-length drafts, thereby reducing synchronization overhead between the drafter and verifier~\citep{liu2025pearl}.
Whereas \alg{} adopts a new parallelism paradigm, \textit{Batch Parallelism}, requiring only one additional GPU over standard SD, balancing resource usage with reduced overhead, and eliminating a dedicated training stage.

\para{Adaptive drafting.}
Under many concurrent requests, standard SD can have suboptimal performance when using long speculative lengths, as the increased number of draft tokens places significant pressure on the verification stage. To mitigate this issue, adaptive drafting strategies generate variable-length drafts and selectively verify only the most promising tokens. EAGLE-2~\citep{li2024eagle2} adopts this approach by expanding and reranking a context-aware draft tree. Other heuristics include selecting drafts with the highest likelihood of acceptance within a batch~\citep{wu2025tetris} and choosing among drafts generated by different speculative methods based on their historical acceptance rates~\citep{hou2025banditspec}. Additionally, AdaSpec~\citep{huang2025adaspec} dynamically controls the execution of the draft model based on estimated draft quality and generation speed.
Despite these advances, existing adaptive drafting methods yield limited performance gains: the number of draft tokens that can be generated remains constrained, as excessive drafting quickly makes verification overhead prohibitive.
\vspace{-2mm}

    \section{Preliminaries}
    \label{sec:preliminaries}

\para{Speculative decoding (SD).}
We consider the standard SD setting, which involves a small draft model (the drafter) and a large target model (the verifier). 
At each SD step, the drafter first generates a sequence of draft tokens, after which the verifier evaluates them in parallel to decide whether to accept or reject them (i.e., verification), and then produces the final output. 
Standard SD does not allow parallelization between drafting and verification, since the verification inherently depends on the prior availability of the draft tokens.

\para{Parallel speculative decoding (PSD).}
We consider a variant of PSD that systematically parallelizes the drafting and verification stages. By overlapping these stages, the draft model generates tokens for one batch of requests while the target model simultaneously verifies the draft tokens from another batch, thereby effectively hiding drafting latency and providing a significant speedup over standard SD.

\para{Notations.}
We consider an LLM inference service provider with a fixed capacity $C$, representing the maximum number of tokens its computing resources can process in parallel at each decoding step. 
In practice, this capacity depends on the service provider's server configuration.
At each decoding step, the server processes a batch of $m$ requests $r_1, r_2, \cdots, r_m$.
Each request $r_i$ is associated with a partially generated sequence $S_{i,\ell_i} = (d_{i,1}, \ldots, d_{i, \ell_i})$, where $\ell_i$ represents the number of tokens generated so far for request $r_i$. 
We allow a variable draft window size for each request, i.e., $k_i$ for request $r_i$. 
At decoding step $s$, the draft model $\cS$ drafts and selects a set $\cD_s \coloneqq \{(i, j) | i \in \{1, \dots, m\}, j \in \{\ell_i+1, \dots, \ell_i+k_i\} \}$, such that the total number of selected tokens satisfies $|\cD_s| = \sum_{i=1}^{m} k_i = C$.

\para{Average throughput.} 
Let $\vone_{i,j}$ be an indicator variable that equals $1$ if the token $(i,j) \in \cD_s$ is verified, $0$ otherwise.
The average throughput is defined as the total number of verified tokens generated over $T$ steps, divided by the total time $\tau$ required to complete these steps:
\begin{equation*}
    \textstyle \cG \coloneqq \tau^{-1}\sum_{s=1}^{T} {(\sum_{(i,j) \in \cD_s}\vone_{i,j}} + m ).
\end{equation*}

\para{End-to-end latency (E2EL).}
The end-to-end latency is defined as the average time from request arrival to completion across a set of requests under the SD inference system.

    \section{Theoretical Efficiency of PSD}
    \label{sec:theoretical_analysis}

To highlight the fundamental differences between SD and PSD, we present an analysis under the constraint that the number of tokens sent for verification at each step is fixed.

\para{Drafting Pareto frontier.} 
Let $f: \mathbb{R}^+\to [0,1]$ denote the \textit{Pareto frontier} of the draft model, characterizing the \textit{maximum achievable verification success rate} given a drafting time budget. 
The function $f$ can be modified by changing the draft model or by applying speculating techniques such as Medusa~\citep{cai2024medusa}, EAGLE~\citep{li2024eagle}, and TETRIS~\citep{wu2025tetris}. 
Our goal is to compare the inference time of SD, denoted by $T_{\SD}$, and that of PSD, denoted by $T_{\PSD}$. For simplicity, we ignore the time required to draft the first batch of requests in the PSD analysis.

\para{Analysis.}
We assume that the drafting time per step is a constant $t$. Let $R$ denote the total number of tokens to be generated across all requests. Under standard SD, the expected inference time incurred is
\begin{equation*}
    \begin{aligned}
       \textstyle T_{\SD} \coloneqq \min\limits_t\EE{\sum_{s=1}^T (t + V)},
    \end{aligned}
\end{equation*}
where $T$ is a random variable representing the number of steps required to generate responses for all requests such that $\mathbb{E}[T] = R/f(t)$. The term $V$ denotes the verification time per step, which we assume to be a constant since the number of tokens submitted for verification is fixed. In practice, $f$ is typically an increasing function of $t$. 
On the other hand, under PSD, the drafting and verification times overlap, resulting in the expected inference time 
\begin{equation*}
    \begin{aligned}
    \textstyle T_{\PSD} &\coloneqq \min_t\EE{\sum_{s=1}^T\max(V, t)} 
    = \min_t \frac{R\max(V, t)}{f(t)}.
    \end{aligned}
\end{equation*}

\begin{figure*}
    \centering
    \includegraphics[width=\linewidth]{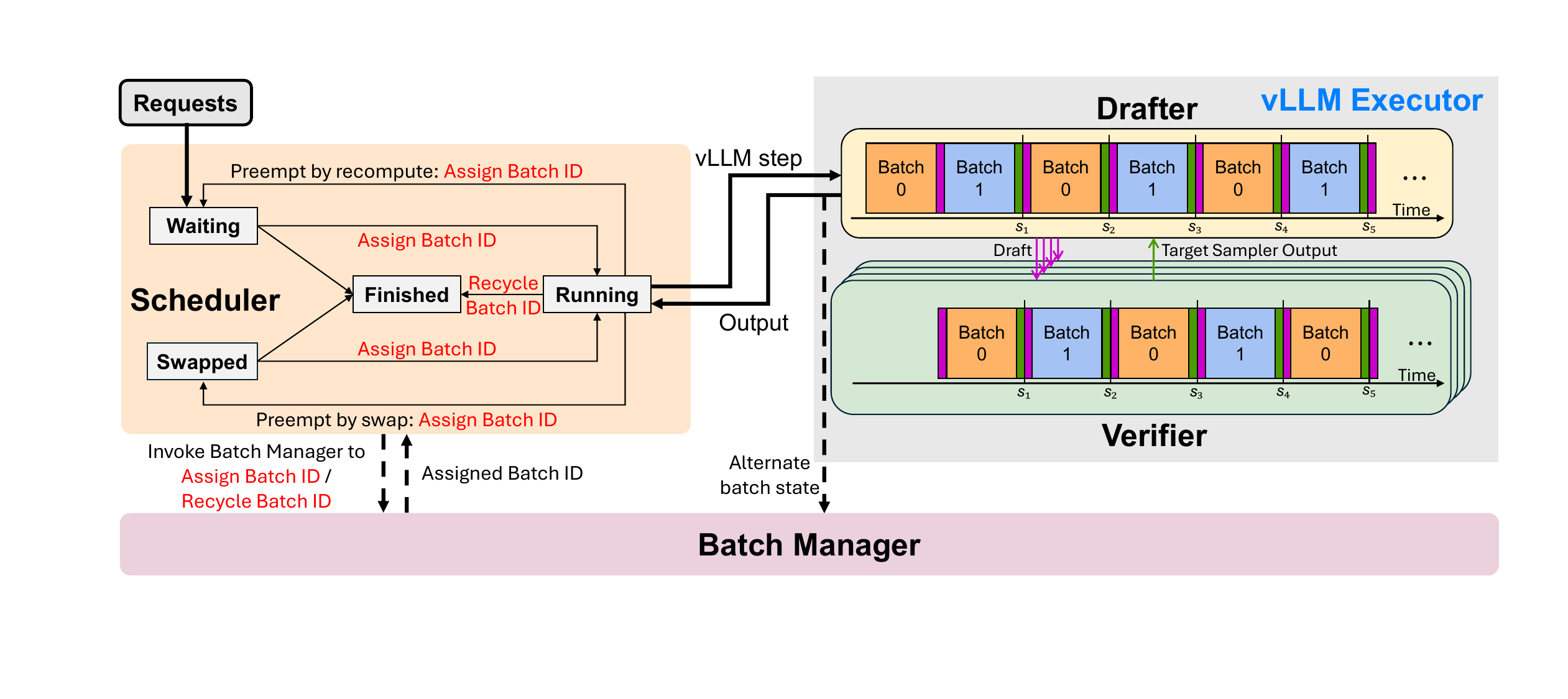}
    \caption{
        \textbf{Architecture overview of \alg{}}.
        (Left) The \textbf{Scheduler} manages request life-cycles and batch IDs by coordinating with the \textbf{Batch Manager}, which maintains two batches to enable parallelism in \alg{}.
        (Right) Parallel execution timeline of the \textbf{Drafter} and \textbf{Verifier} across speculative decoding (SD) steps. Magenta blocks/arrows denote broadcast of drafts from the Drafter to the Verifier, while dark green blocks/arrows denote point-to-point dispatch of target sampler outputs from the Verifier to the Drafter. Ticks indicate synchronization points between SD steps. During the initial SD step (before $s_1$), the Drafter sequentially drafts Batch 0, broadcasts drafts to the Verifier, then drafts Batch 1. At the same time, the Verifier immediately processes Batch 0 upon receipt and returns outputs to the Drafter. In subsequent SD steps, the two batches alternate roles between drafting and verification, enabling overlapped execution. Batch alternation is triggered when the Drafter returns outputs to the Scheduler at the end of each SD step.   
    }
    \label{fig:overall-arch}
    \vspace{-4mm}
\end{figure*}

Under a mild assumption on $f$, we have the following result.
\begin{restatable}{thm}{ratio}
    \label{thm:ratio}
    Let $f(t) = 1 - e^{-\alpha t}$ for some constant $\alpha \in \mathbb{R}^+$. For $\alpha V \approx 2.23$, we have $T_{\SD} \gtrsim 1.61\ T_{\PSD}$.
\end{restatable}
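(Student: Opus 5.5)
Both quantities reduce to one-dimensional minimizations, and the plan is to compute them explicitly. Since $V$ and $t$ are constant and $\mathbb{E}[T]=R/f(t)$, the SD cost is $T_{\SD}=\min_t R\,(t+V)/f(t)$. The PSD cost is the expression already displayed, $T_{\PSD}=\min_t R\max(V,t)/f(t)$. The factor $R$ cancels in the ratio. Substituting $x=\alpha t$ and $v=\alpha V$ and setting $g(x)=1-e^{-x}$, the claim becomes
\[
\frac{\min_{x>0}(x+v)/g(x)}{\min_{x>0}\max(v,x)/g(x)} \gtrsim 1.61 \quad\text{at } v\approx 2.23 .
\]

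\textbf{PSD side.} For $x\le v$ the objective is $v/g(x)$, which is decreasing in $x$. For $x\ge v$ it is $x/g(x)$. Since $g$ is concave with $g(0)=0$, the map $x\mapsto x/g(x)$ is nondecreasing. Hence the minimum is attained at $x=v$, and
\[
T_{\PSD}/R=\frac{V}{1-e^{-\alpha V}} .
\]

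\textbf{SD side.} Differentiating $(x+v)/(1-e^{-x})$ and setting the derivative to zero gives $1-e^{-x}=(x+v)e^{-x}$, that is, $e^{x}=1+x+v$. The function $e^x-1-x$ is strictly increasing on $(0,\infty)$, so there is a unique positive root $x^*$. The objective tends to $+\infty$ both as $x\to0^+$ and as $x\to\infty$, so $x^*$ is the global minimizer. At $x^*$ we have $1-e^{-x^*}=(x^*+v)/e^{x^*}$, which gives the value
\[
T_{\SD}/R=\frac{e^{x^*}}{\alpha}=\frac{1+x^*+v}{\alpha}.
\]
The ratio is therefore
\[
\rho(v)=\frac{(1+x^*(v)+v)\,(1-e^{-v})}{v}.
\]

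\textbf{Numerics.} At $v=2.23$, solving $e^x=3.23+x$ gives $x^*\approx1.73$, since $e^{1.73}\approx5.64\approx3.23+1.73+0.68$ fails only slightly. Iterating to $x^*\approx1.74$–$1.75$ should settle it. Then $e^{x^*}\approx5.0$, and $\rho\approx 5.0\cdot(1-e^{-2.23})/2.23\approx5.0\cdot0.892/2.23\approx2.0$.

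\textbf{Main obstacle.} This first estimate exceeds $1.61$, which is suspicious. The likely reason is that the paper intends $v\approx2.23$ to be the value at which the ratio is extremal, for example its minimum over $v$. I would therefore study $\rho(v)$ as a function of $v$, differentiate using $dx^*/dv=1/(e^{x^*}-1)$, and locate its critical point. If instead the statement really concerns only the fixed $v=2.23$, the claimed inequality $\gtrsim1.61$ follows from the computation above.

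The main work, then, is the careful numerical root-finding for $x^*$ and the evaluation of $\rho$. I would confirm these with a short bisection and state the result to two decimal places. The $37.9\%$ reduction figure quoted in the introduction equals $1-1/1.61$, so it should serve as the consistency check on the final numerics.
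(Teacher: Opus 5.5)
Your approach is correct, and it takes a different route from the paper's. The paper never evaluates $T_{\SD}$ exactly. It brackets the SD optimizer as $\frac{1}{\alpha}\ln(\alpha V+1)<t^*<\frac{1}{\alpha}\ln(2(\alpha V+1))$. It then puts the lower end into the numerator $t+V$ and the upper end into $f$ in the denominator, which combined with the exact $T_{\PSD}$ (identical to your PSD computation) gives the closed-form lower bound
\[
T_{\SD}/T_{\PSD}>L(\alpha V)=\frac{\ln(\alpha V+1)+\alpha V}{\alpha V}\cdot\frac{1-e^{-\alpha V}}{1-1/(2(\alpha V+1))}.
\]
The constants $2.23$ and $1.61$ are the numerical maximizer and maximum of this lower bound $L$. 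You instead use the first-order condition $e^{x^*}=1+x^*+v$ to evaluate the SD minimum exactly as $e^{x^*}/\alpha$, so you get the true ratio $\rho(v)=(1+x^*+v)(1-e^{-v})/v$ rather than a bound. Your route gives a sharper picture: $\rho(2.23)\approx 1.92$. Also, since $t+V\le 2\max(t,V)$, one has $\rho\le 2$, with equality exactly when $x^*=v$, i.e.\ $e^v=1+2v$, $v\approx 1.256$; this matches the paper's 50\% remark. The paper's route gives a fully explicit expression in $\alpha V$ with no implicit root, and it is where the specific constants in the statement come from.

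This also explains your ``suspicion.'' The value $2.23$ is not a critical point of $\rho$; indeed $\rho'(2.23)<0$, and $\rho$ falls to about $1.607$ by $v=5$. So differentiating $\rho$ will not lead you back to $2.23$. The statement is a conservative consequence of a loose bound, and the literal fixed-$v$ reading you fall back on is the right one.

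Your numerics need fixing, though the conclusion survives. Since $e^{1.73}-1.73\approx 3.91>3.23$ and $e^x-x$ is increasing, the root of $e^x=3.23+x$ lies below $1.73$, not above it. Bisection gives $x^*\approx 1.568$, $e^{x^*}=1+x^*+v\approx 4.80$, and $\rho(2.23)\approx 4.80\cdot 0.892/2.23\approx 1.92$. Your value $e^{x^*}\approx 5.0$ is also inconsistent with $x^*\approx 1.74$, for which $e^{x^*}\approx 5.70$. Because $\rho$ is increasing in $x^*$ through the factor $1+x^*+v$, an overestimate of $x^*$ overestimates $\rho$. So your ``$\approx 2.0$'' is not a valid certificate for a lower bound; you should certify with a lower bracket instead. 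At $x=\ln(1+v)$ one has $e^x-1-x=v-\ln(1+v)<v$, hence $x^*>\ln(1+v)$. This gives $\rho(v)>(1+\ln(1+v)+v)(1-e^{-v})/v\approx 1.76$ at $v=2.23$, which already exceeds both $1.61$ and the paper's $L(2.23)\approx 1.611$, with no root-finding needed.
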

This result shows that the guaranteed reduction in inference time is maximized at $\alpha V \approx 2.23$, where PSD achieves a $(1.61 - 1)/1.61 \approx 37.9\%$ reduction compared to standard SD.
Since larger values of $\alpha$ yield stronger diminishing returns in the verification success rate, the quantity $\alpha V$ can be interpreted as a proxy for the expected number of accepted tokens per verification step. 
From this perspective, any method with an improved Pareto frontier for $f$ results in more accepted tokens per step, thereby making PSD more effective than standard SD.
Finally, note that the theoretical upper bound on the improvement is $50\%$, corresponding to the idealized case in which drafting is perfectly hidden within the verification time and therefore contributes no additional delay to the critical path.

    \section{\alg: A Framework for Batch Parallel Speculative Decoding}
    \label{sec:minedraft}

We illustrate the overall architecture of \alg{} in \cref{fig:overall-arch}, which consists of a \textit{Batch Manager}, a \textit{Scheduler}, a \textit{Drafter}, and a \textit{Verifier}. 
Our parallelism follows a simple principle: at the beginning of each SD step, we regroup requests by filtering out those verified in the previous step, allowing the Verifier to continue operating in parallel with the Drafter. 
To fully utilize the Verifier, if it processes $m$ requests per step, we maintain $2m$ requests partitioned into two batches and alternate between them, sending $m$ requests to the Verifier at each SD step.
We describe the role of each component below and provide a concrete example of a \alg{} execution in \cref{app:example-illust}.

\subsection{Batch Manager}
\para{State variables and initialization.} 
Given a maximum of $m$ requests processed per step, the system supports up to $2m$ concurrent requests, which the Batch Manager organizes into two batches. Each running request is assigned a batch ID, either $0$ or $1$, indicating membership in Batch~$0$ or Batch~$1$, respectively.
The imbalance between the two batches is tracked by a state variable $\texttt{balance}$, defined as the difference in their sizes: $\texttt{balance} = |\text{Batch 1}| - |\text{Batch 0}|$. A positive $\texttt{balance}$ indicates that Batch~$1$ contains more requests, in which case new requests are assigned to Batch~$0$ for re-balancing; a negative value indicates the opposite.
At each SD step, the state variable \texttt{skip\_batch} identifies the batch undergoing parallelized drafting, referred to as the \textit{draft batch}. The remaining batch, whose batch ID differs from \texttt{skip\_batch}, is the \textit{target batch}, whose draft tokens are verified concurrently in the same SD step.
At initialization, $\texttt{balance}$ is set to $0$, indicating equal batch sizes, and \texttt{skip\_batch} is set to $1$, designating Batch~$1$ as the initial draft batch.

\para{Batch ID assignment.}
In the first SD step, the \texttt{assign} operation assigns a batch ID to each new running request and serves as a load-balancing mechanism between the two batches. If $\texttt{balance} \geq 0$, indicating that Batch~$1$ has at least as many requests as Batch~$0$, the request is assigned to Batch~$0$ and $\texttt{balance}$ is decremented. Otherwise, if $\texttt{balance} < 0$, the request is assigned to Batch~$1$ and $\texttt{balance}$ is incremented. This assignment rule maintains a balanced workload between the two batches.

After the first SD step, assuming both batches have reached their capacity of $m$ requests, new running requests are assigned to the current $\texttt{skip\_batch}$ to ensure verification occurs only when draft tokens are available. This policy typically preserves balance between the two batches, since after the first SD step, a waiting request can be scheduled only when a request in the target batch finishes. However, exceptional terminations due to preemption or abort can disrupt this balance.
Moreover, when chunked prefill is enabled, the Batch Manager may receive fewer than $2m$ requests from the Scheduler in the first SD step, resulting in batch sizes that fall below the capacity $m$. As subsequent requests are consistently assigned to the draft batch, this initial shortfall can lead to an irrecoverable workload imbalance between the two batches, degrading the expected performance gains of \alg{}. We further discuss the implications of this limitation and possible mitigation in \cref{sec:limitations}.

\para{Batch ID recycling.} 
The \texttt{recycle} operation is invoked when a request is finished, preempted, or aborted. It takes the batch ID $b$ of the terminated request as input, and updates \texttt{balance} to reflect the new distribution of requests. If the request belonged to Batch~$0$ ($b=0$), \texttt{balance} is incremented; if it belonged to Batch~$1$ ($b=1$), \texttt{balance} is decremented. This update is the inverse of the \texttt{assign} operation, preserving consistency of \texttt{balance}.

\subsection{Drafter and Verifier}
\para{Start-up of parallelism.} 
Batch Parallelism uses a \textit{Drafter} and a \textit{Verifier}. In the first SD step, the Drafter generates drafts for the target batch and sends them to the Verifier. While the Verifier processes the target batch, the Drafter concurrently proposes draft tokens for the draft batch. After verification, the Verifier returns the sampled tokens and probability distributions (target sampler output) to the Drafter.

\para{Batch state alternation.} 
At the end of each SD step, the Verifier sends the target sampler's output to the Drafter, which processes it and returns the output to the Scheduler. We refer to the moment when the Drafter returns output to the Scheduler as a \textbf{\emph{sync point}}. At each sync point, \texttt{skip\_batch} is alternated with the other batch.
After the first SD step, the Verifier verifies the target batch corresponding to the previous step's draft batch, while the Drafter concurrently proposes draft tokens for the new draft batch, which corresponds to the previous step's target batch.

\para{Fallback mechanism.} 
A corner case arises when one batch becomes empty as requests are finished and no new requests arrive. In this case, Batch Parallelism falls back to standard SD if draft tokens of the target batch are unavailable. This exposes a native limitation of \alg{}: when the two batches cannot be replenished to maintain balance, it eventually leads to an empty batch and degrades performance. We discuss this limitation and a potential mitigation in \cref{sec:limitations}.

\subsection{Scheduler}
\para{KV block management.} 
In PagedAttention, a \textit{KV block} stores the key–value vectors for a fixed number of tokens, defined by the \textit{KV block size} ($B$). KV blocks associated with a request need not be contiguous in physical memory, allowing flexible memory management~\citep{kwon2023vllm}. Before token generation, the Scheduler allocates the required KV blocks for each running request.

\para{Over-allocation issue in default scheduler.} 
We observe that during drafting, the Drafter accesses newly allocated KV blocks without populating them, whereas during verification, the Verifier computes and writes KV caches to the allocated blocks. The default vLLM scheduler assumes that all running requests are generated in each step and allocates KV blocks for all of them. 
Under Batch Parallelism, this causes redundant allocations for requests in the draft batch, which are drafted but not verified, leaving allocated blocks accessed but unwritten.
We patch the vLLM scheduler's KV block allocation logic to maintain compatibility with PagedAttention and avoid unnecessary memory allocation.

\para{Scheduler patch.}
To implement this patch, we introduce a set, \texttt{has\_deferred}, that tracks request IDs for which KV block allocation skipping has been deferred. The modified allocation logic proceeds as follows. If Batch~$0$ or Batch~$1$ is empty, the Scheduler allocates KV blocks for all running requests, except for the first occurrence of this condition.
When both batches are non-empty, KV blocks are allocated for prefill requests to support the next prefill iteration. For requests in the decoding stage, KV blocks are allocated if the request ID is not in \texttt{has\_deferred} or if the request belongs to the current draft batch; otherwise, allocation is skipped.
After this decision, the request ID of each decoding request is added to \texttt{has\_deferred}. This ensures that KV block allocation is not inadvertently skipped for requests in the target batch during the first SD step.

    \section{Experiments}
    \label{sec:experiments}

\begin{figure*}[!ht]
    \centering
    \setlength{\tabcolsep}{1pt} 
    \resizebox{0.99\linewidth}{!}{
    \begin{tabular}{ccccc}
        & \hspace{8mm}\textbf{Arena} & \hspace{8mm}\textbf{ShareGPT} & \hspace{8mm}\textbf{Spec-Bench}  & \hspace{8mm}\textbf{Tough}\\
               
        \rotatebox{90}{\parbox{3.5cm}{\centering \hspace{8mm}\textbf{Qwen3~32B-0.6B}}} &
        \includegraphics[width=0.23\linewidth]{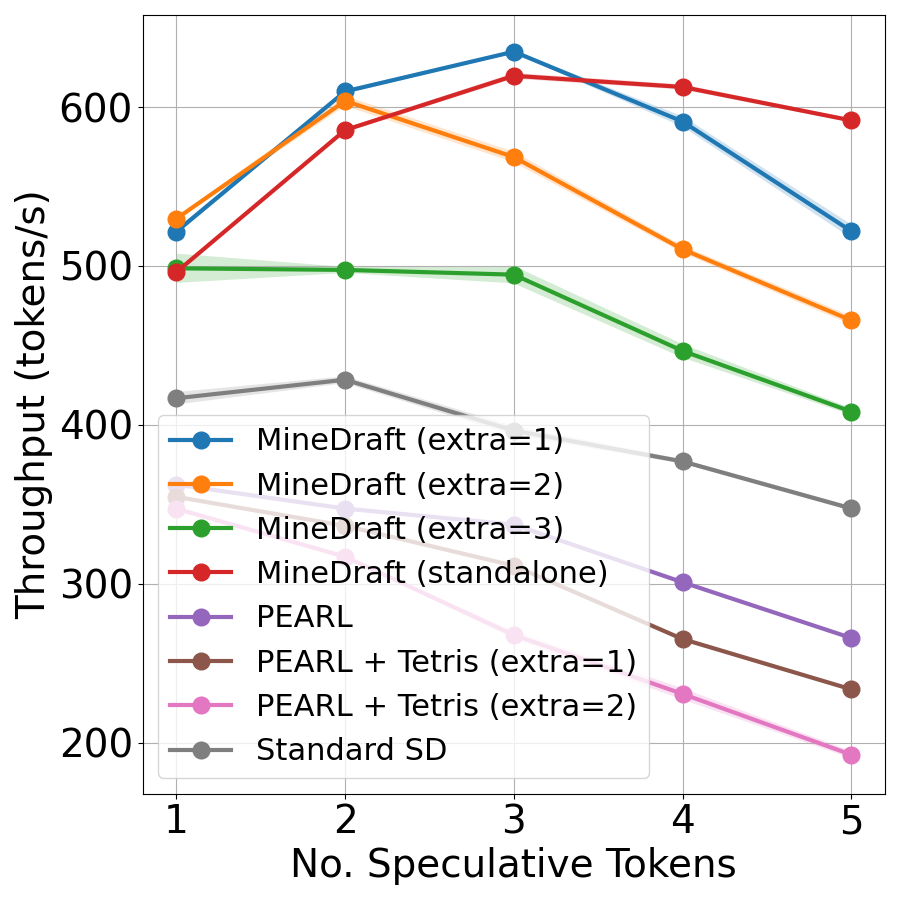} &
        \includegraphics[width=0.23\linewidth]{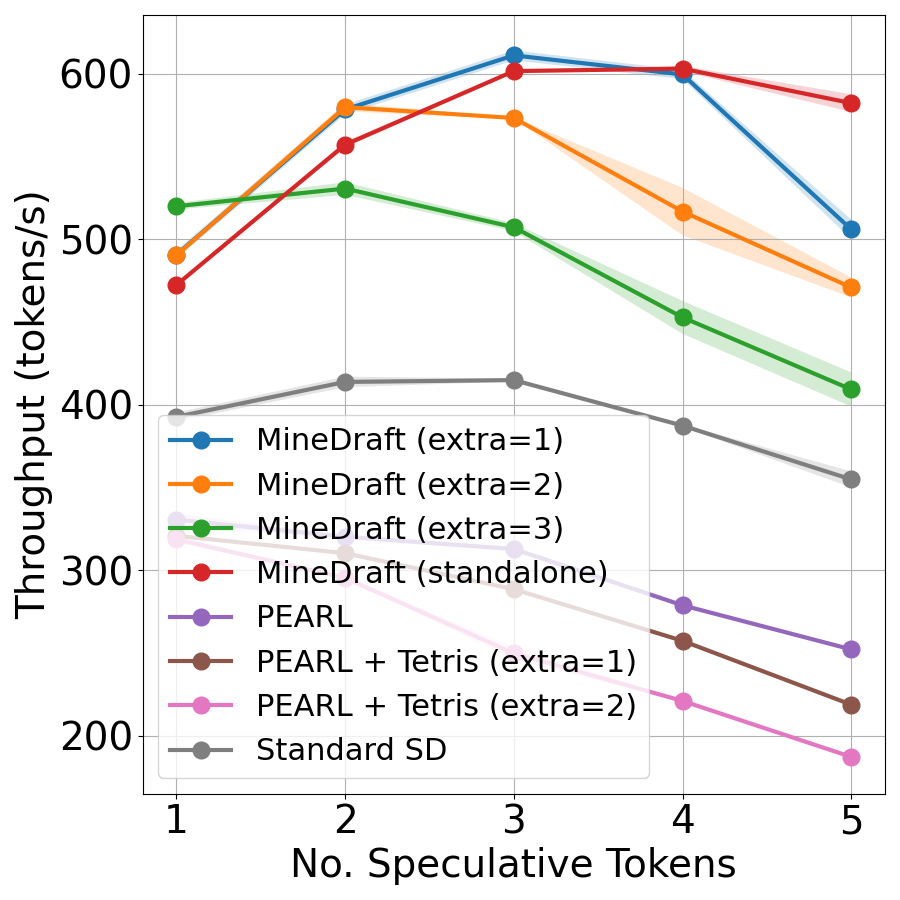} &
        \includegraphics[width=0.23\linewidth]{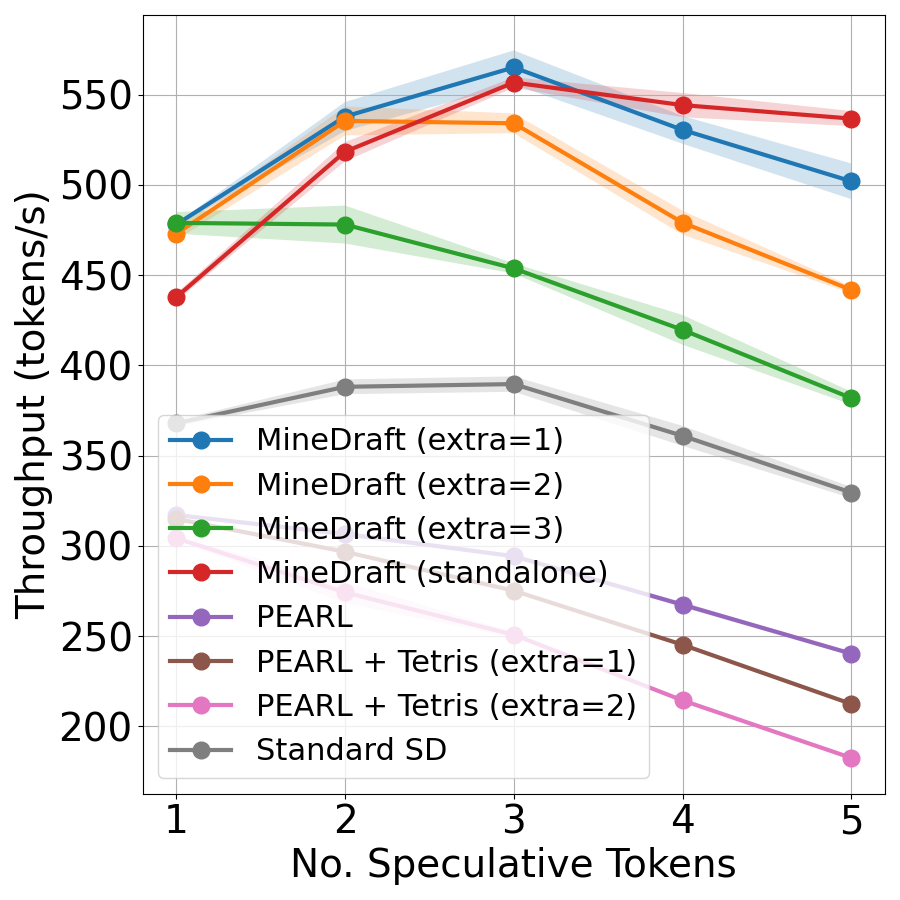} &
        \includegraphics[width=0.23\linewidth]{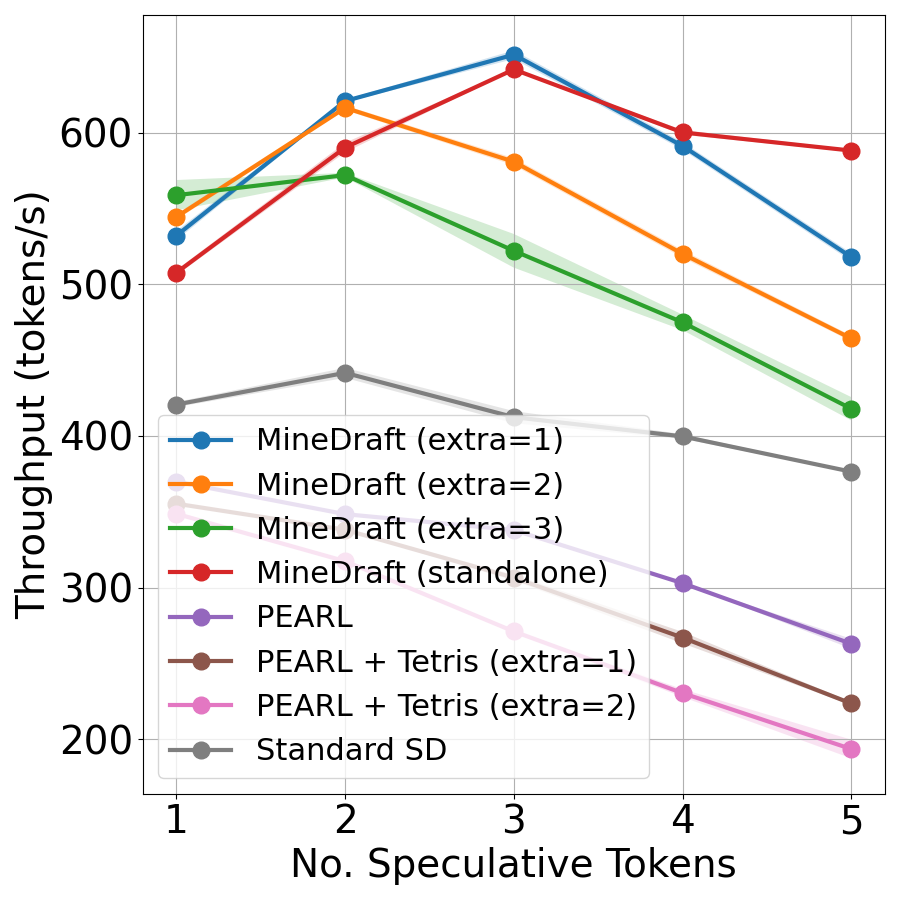} \\
        & \hspace{2mm}(a) \hspace{2mm} $\uparrow$ 42.36\%, $\Delta$ 70.32\% &
        \hspace{2mm}(b) \hspace{2mm} $\uparrow$ 47.28\%, $\Delta$ 64.05\% &
        \hspace{2mm}(c) \hspace{2mm} $\uparrow$ 45.02\%, $\Delta$ 62.87\% &
        \hspace{2mm}(d) \hspace{2mm} $\uparrow$ 40.59\%, $\Delta$ 57.95\% \\
        \vspace{2mm}
        \rotatebox{90}{\parbox{3.5cm}{\centering \hspace{8mm}\textbf{Qwen3~32B-1.7B}}} &
        \includegraphics[width=0.23\linewidth]{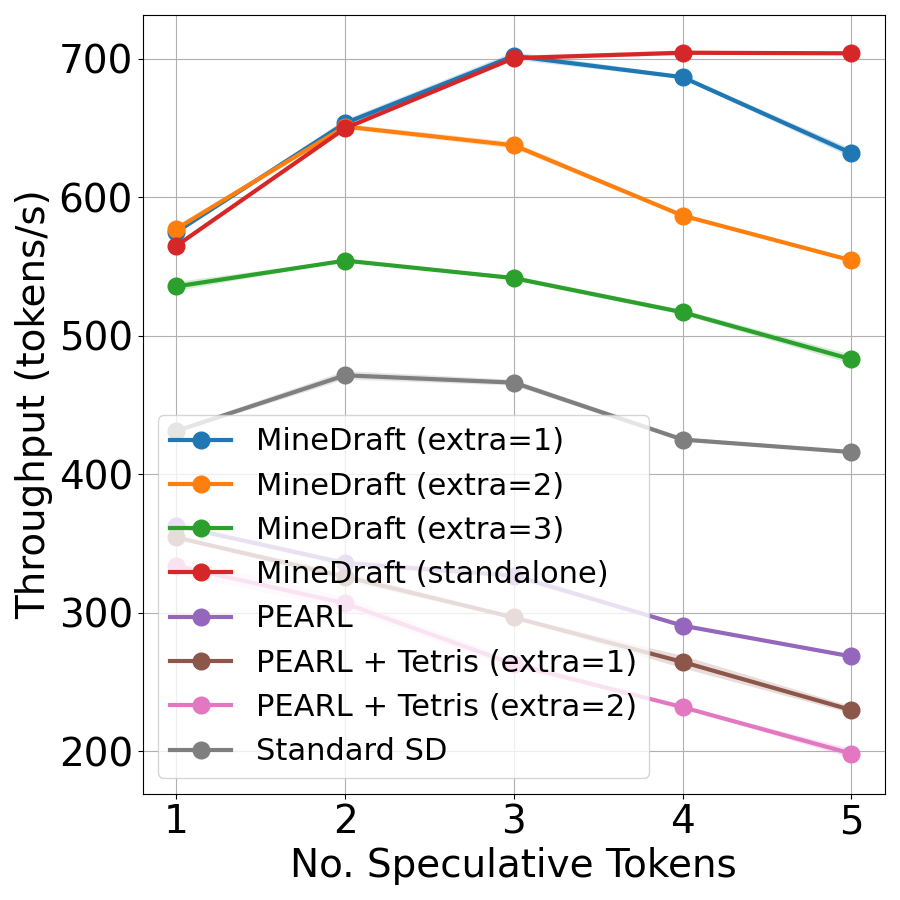} &
        \includegraphics[width=0.23\linewidth]{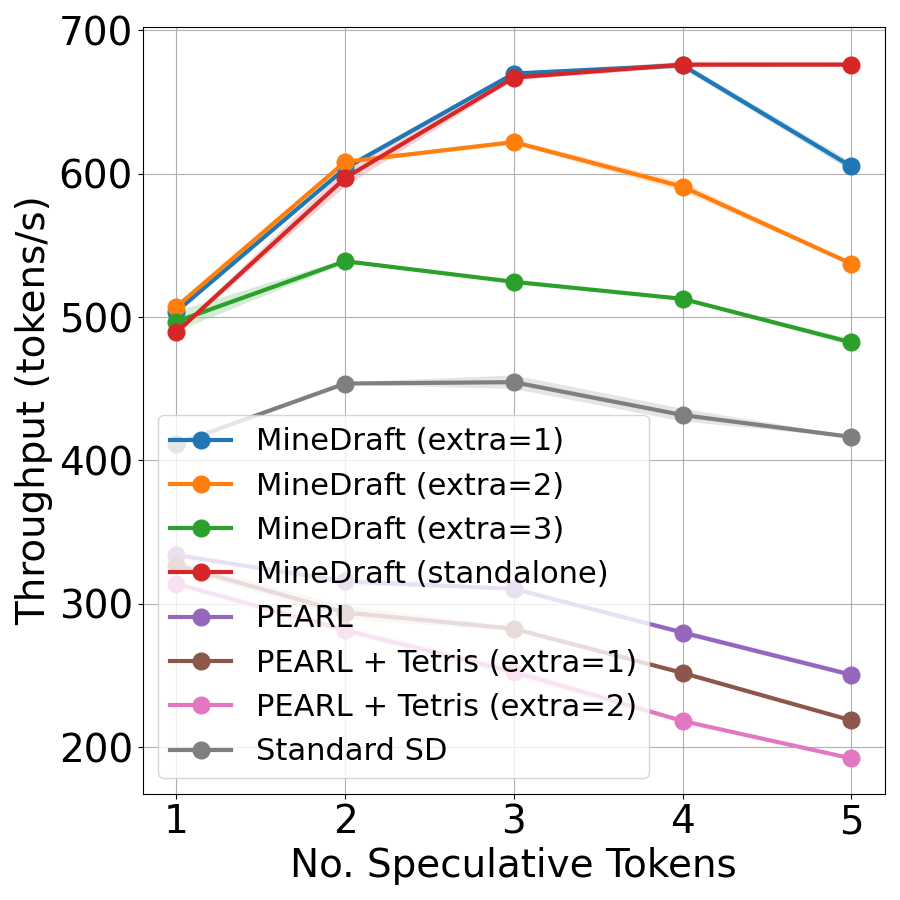} &
        \includegraphics[width=0.23\linewidth]{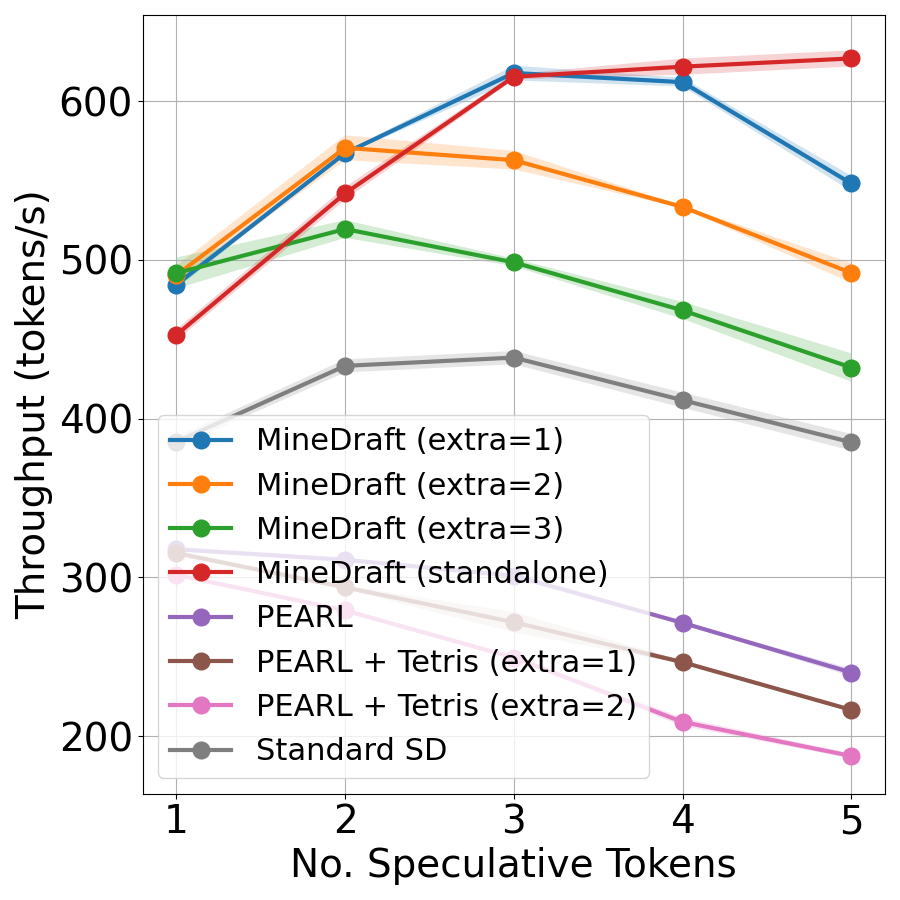} &
        \includegraphics[width=0.23\linewidth]{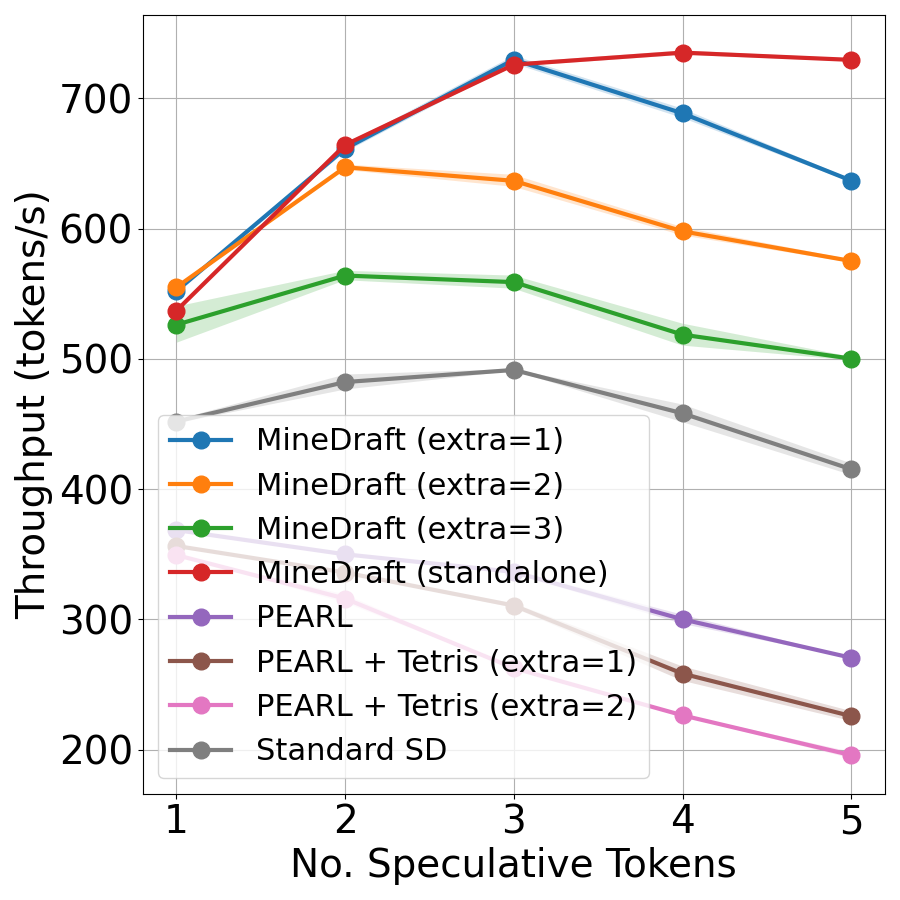} \\
        & (e) \hspace{2mm} $\uparrow$ 38.62\%, $\Delta$ 69.19\% &
        (f) \hspace{2mm} $\uparrow$ 47.35\%, $\Delta$ 62.36\% &
        (g) \hspace{2mm} $\uparrow$ 40.90\%, $\Delta$ 62.80\% &
        (h) \hspace{2mm} $\uparrow$ 48.47\%, $\Delta$ 75.68\% \\
        \vspace{2mm}
        \rotatebox{90}{\parbox{3.5cm}{\centering \hspace{8mm}\textbf{Qwen3~32B-4B}}} &
        \includegraphics[width=0.23\linewidth]{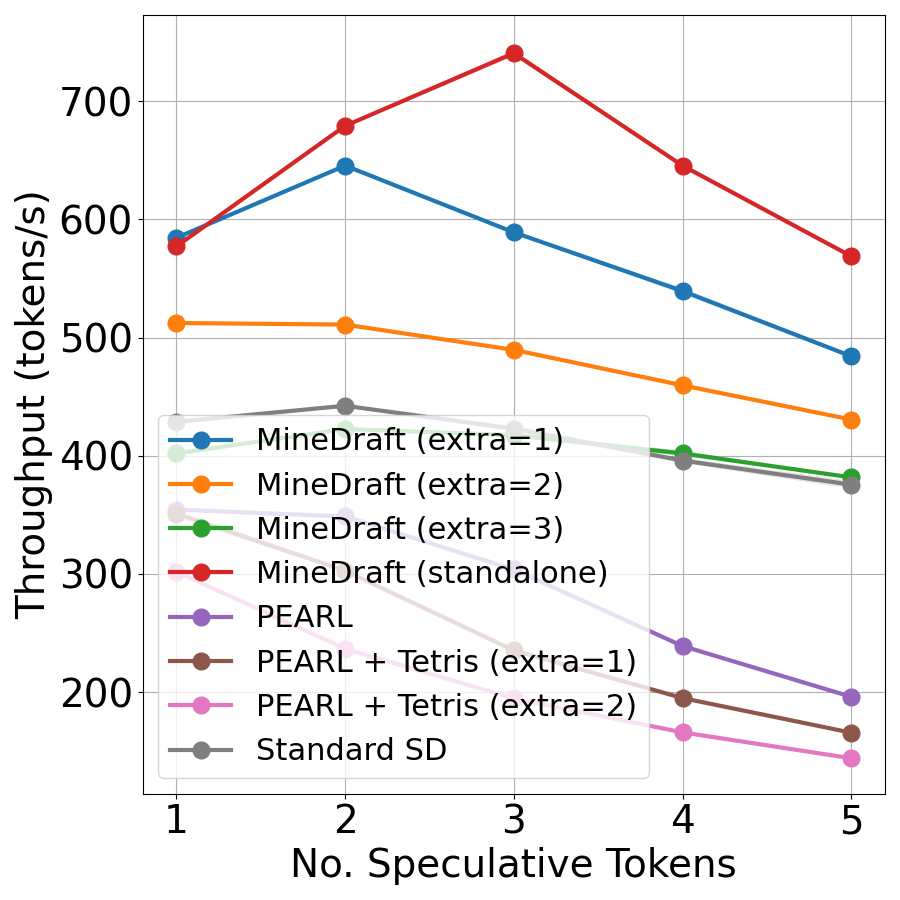} &
        \includegraphics[width=0.23\linewidth]{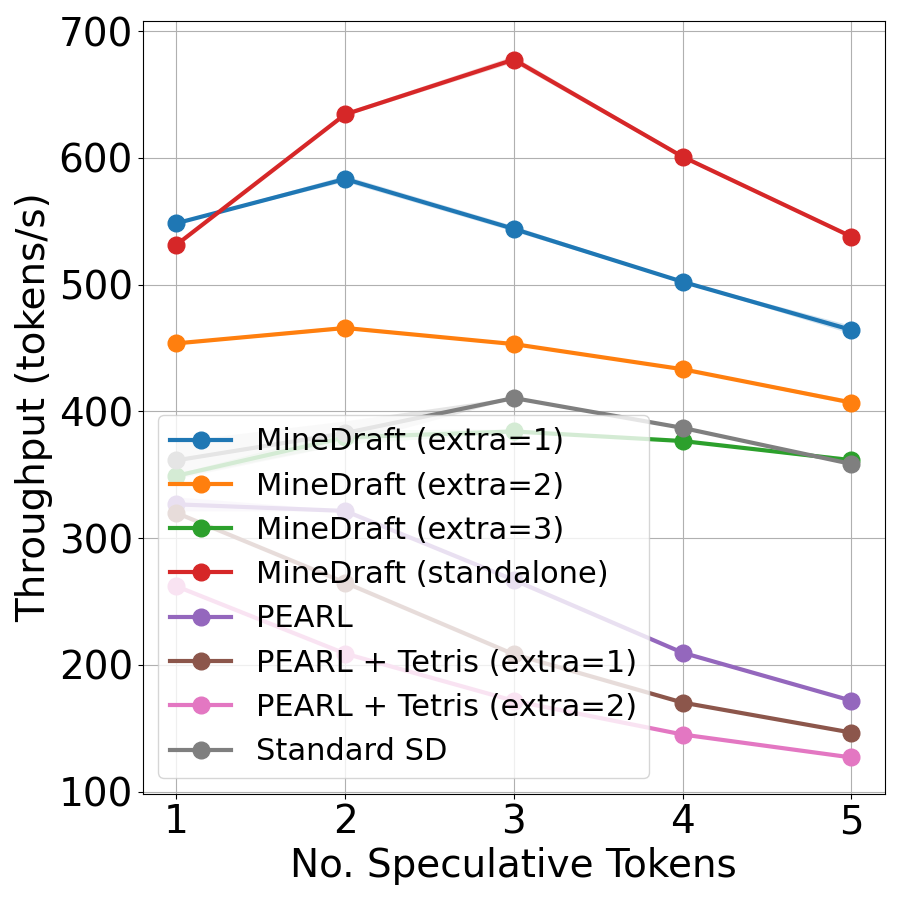} &
        \includegraphics[width=0.23\linewidth]{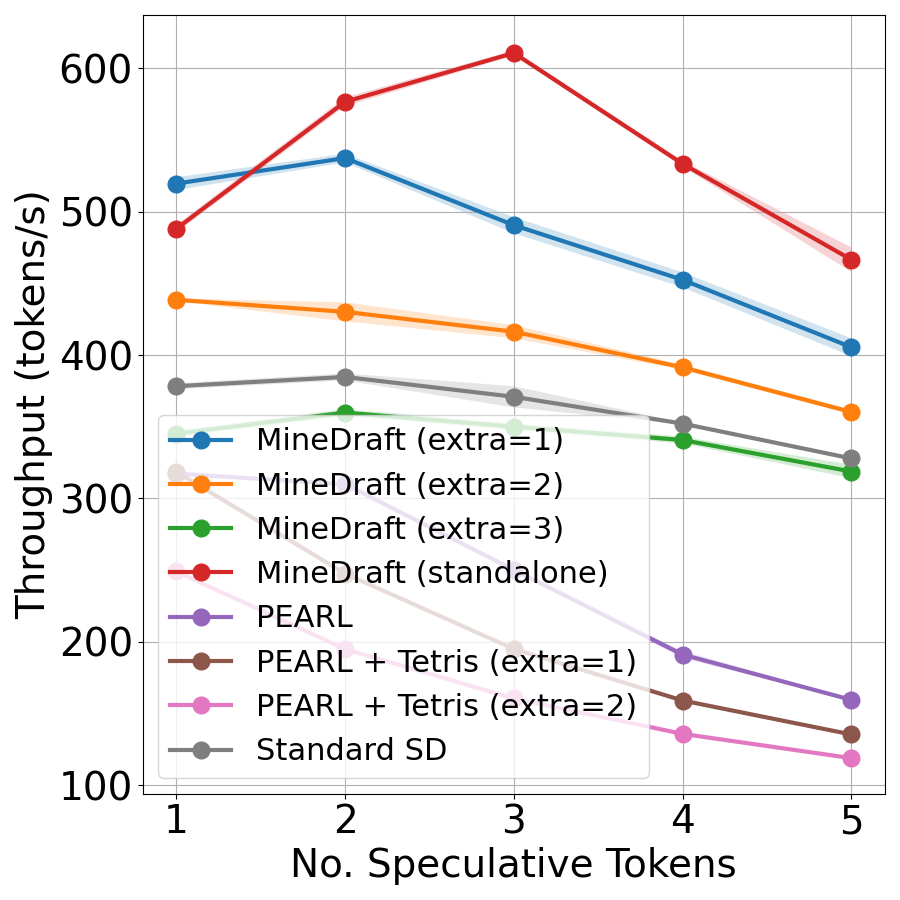} &
        \includegraphics[width=0.23\linewidth]{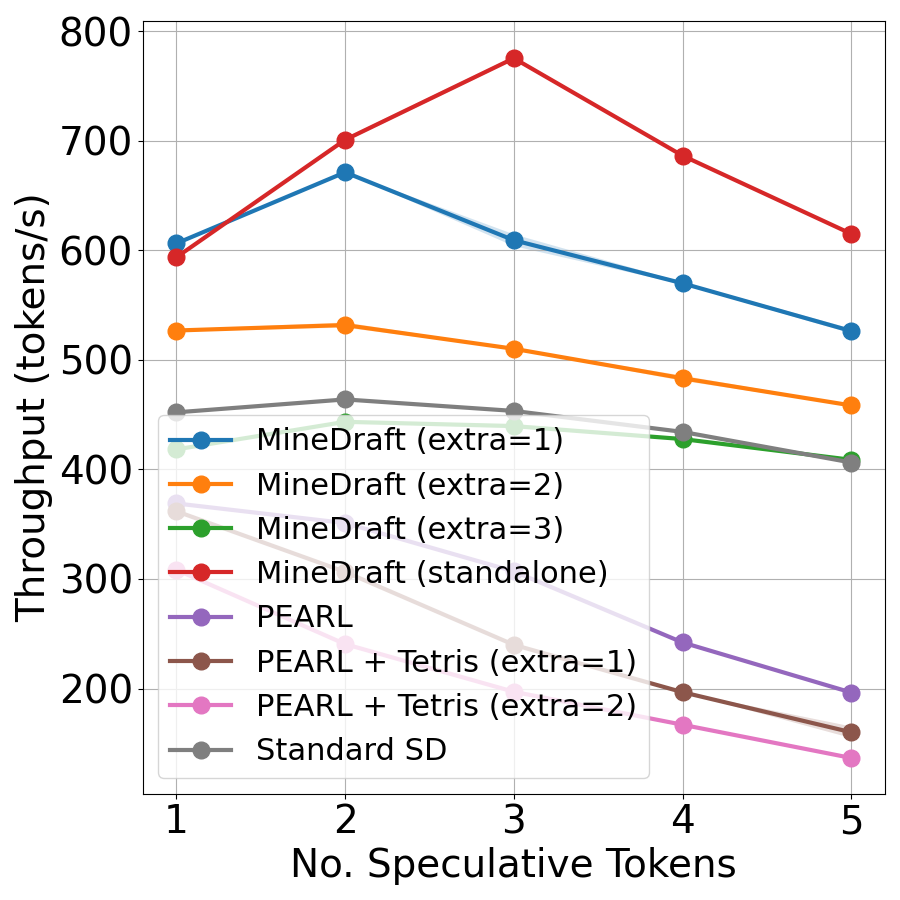} \\
        & \hspace{2mm}(i) \hspace{2mm} $\uparrow$ 53.55\%, $\Delta$ 75.12\% &
        \hspace{2mm}(j) \hspace{2mm} $\uparrow$ 65.02\%, $\Delta$ 65.64\% &
        \hspace{2mm}(k) \hspace{2mm} $\uparrow$ 49.88\%, $\Delta$ 64.65\% &
        \hspace{2mm}(l) \hspace{2mm} $\uparrow$ 51.06\%, $\Delta$ 70.98\% \\
    \end{tabular}}
    \caption{
        Throughput comparison against baseline methods across Settings 1--3. $\uparrow$ indicates the average improvement over the best baseline method. $\Delta$ indicates the maximum average gap between \alg{} and standard SD. More details about $\uparrow$ and $\Delta$ are provided in \cref{app:performance_improvement}. 
        \alg{} consistently outperforms baselines, improving average throughput by up to 65.02\% over the best-performing baseline and by up to 75.68\% over standard SD. 
    }
    \label{fig:throughput-baseline}
    \vspace{-2mm}
\end{figure*}

\begin{figure*}[!ht]
    \centering
    \setlength{\tabcolsep}{1pt} 
    \resizebox{0.99\linewidth}{!}{
    \begin{tabular}{ccccc}
        & \hspace{8mm}\textbf{Arena} & \hspace{8mm}\textbf{ShareGPT} & \hspace{8mm}\textbf{Spec-Bench}  & \hspace{8mm}\textbf{Tough}\\

        \rotatebox{90}{\parbox{3.5cm}{\centering \hspace{8mm}\textbf{Llama-3~70B-8B}}} & 
        \includegraphics[width=0.23\linewidth]{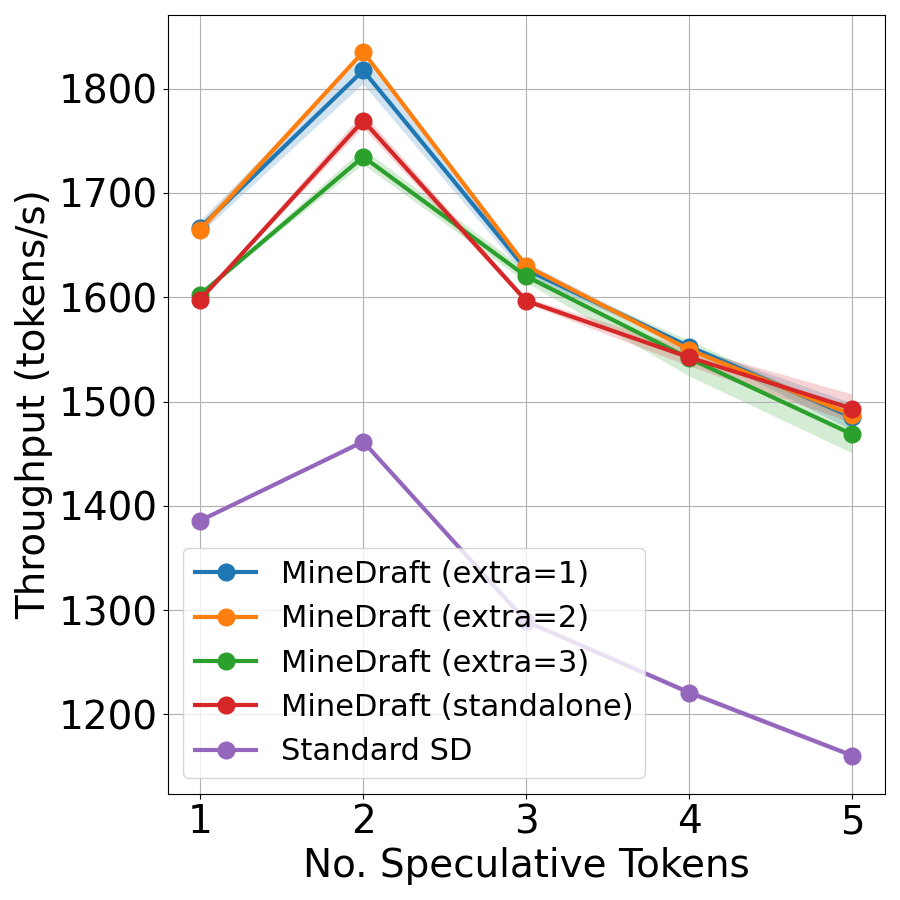} &
        \includegraphics[width=0.23\linewidth]{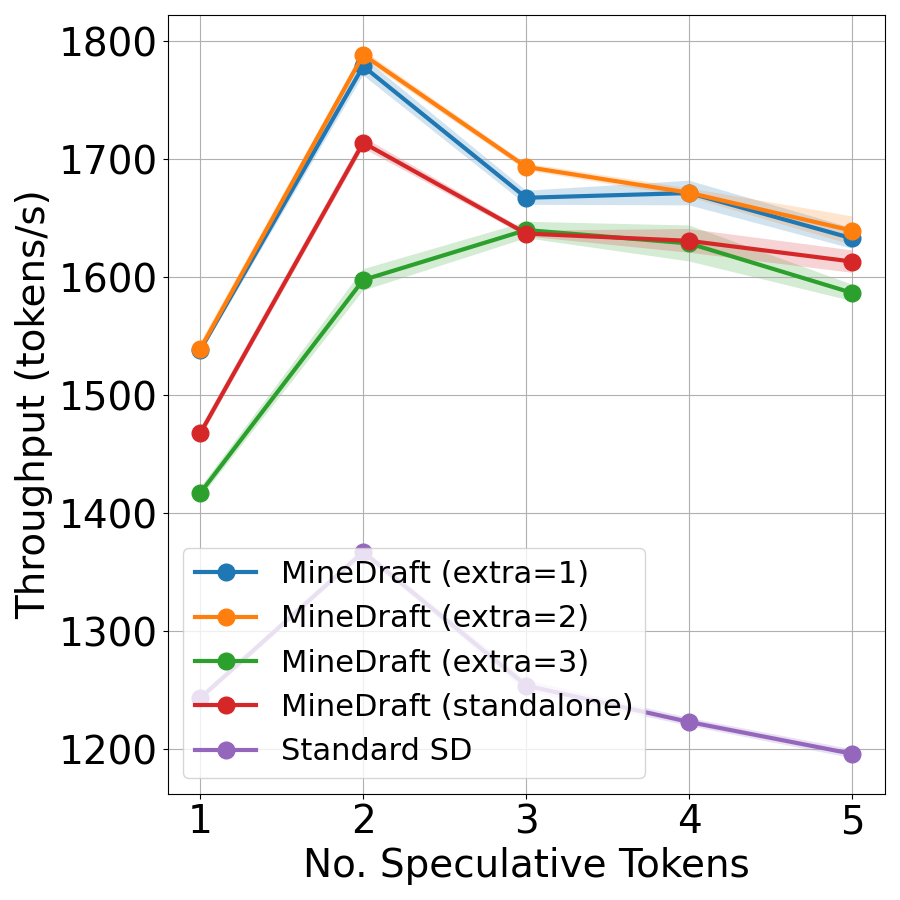} &
        \includegraphics[width=0.23\linewidth]{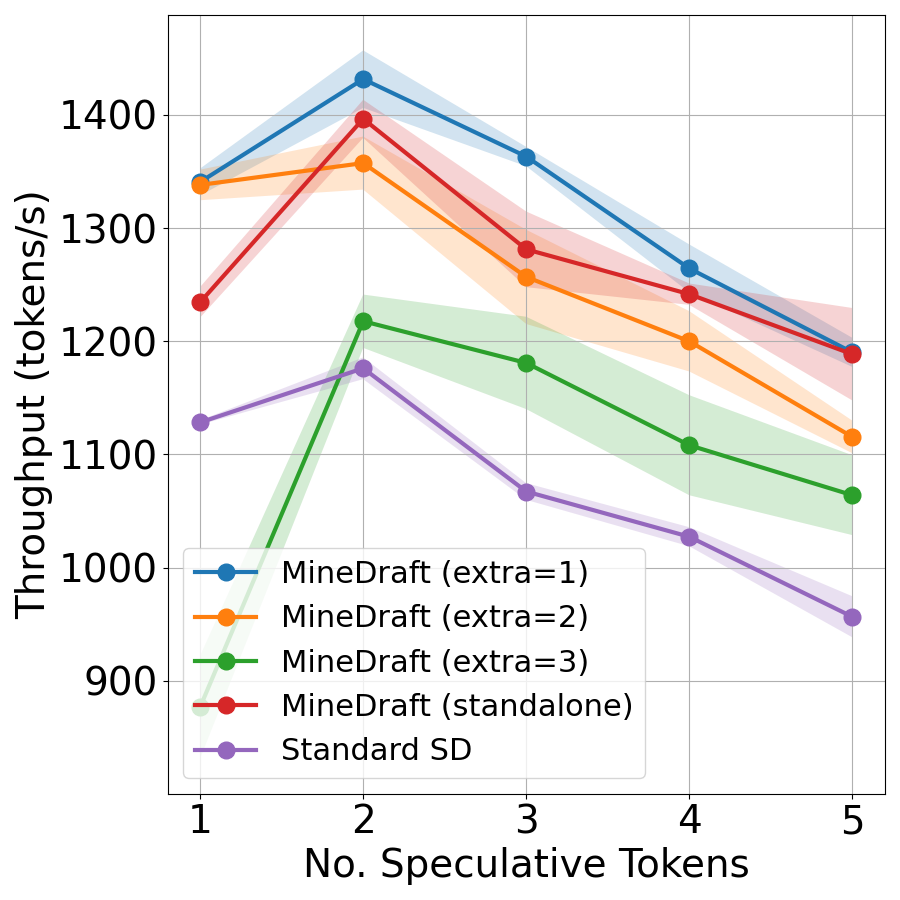} &
        \includegraphics[width=0.23\linewidth]{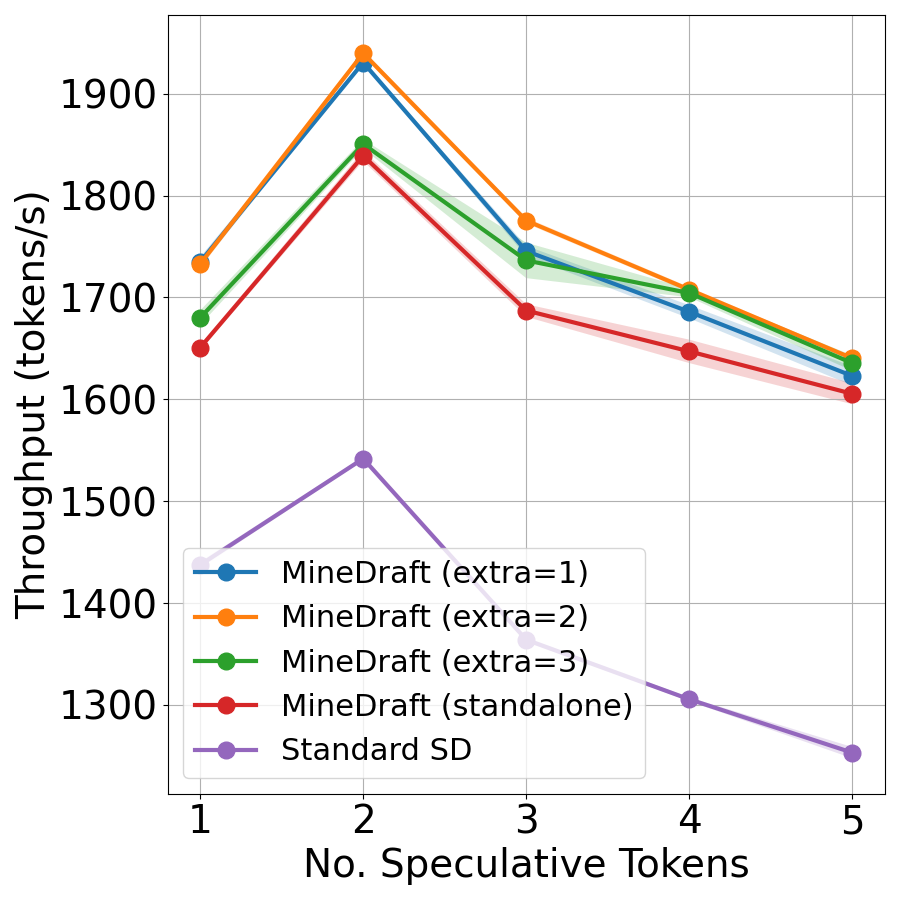} \\
        & (a) \hspace{2mm} $\uparrow$ 25.57\%, $\Delta$ 28.68\% &
        (b) \hspace{2mm} $\uparrow$ 30.81\%, $\Delta$ 37.06\% &
        (c) \hspace{2mm} $\uparrow$ 21.73\%, $\Delta$ 27.73\% &
        (d) \hspace{2mm} $\uparrow$ 25.81\%, $\Delta$ 30.89\% \\
        \vspace{2mm}
        \rotatebox{90}{\parbox{3.5cm}{\centering \hspace{8mm}\textbf{Vicuna~33B-EAGLE}}} &
        \includegraphics[width=0.23\linewidth]{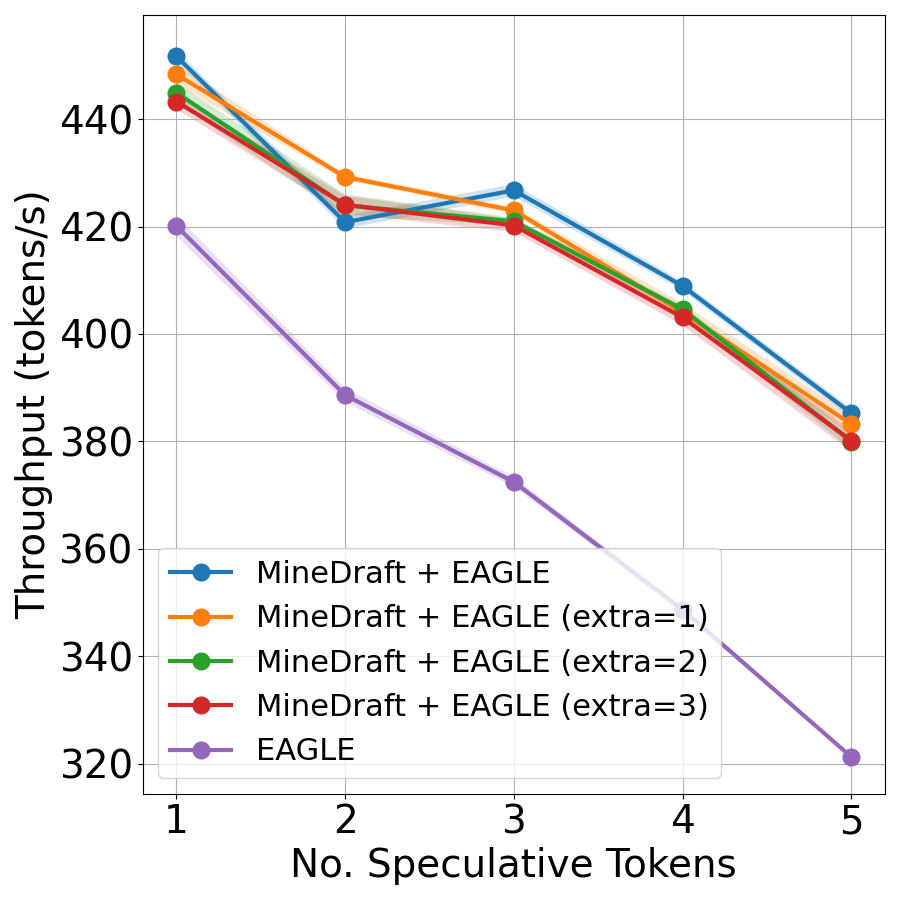} &
        \includegraphics[width=0.23\linewidth]{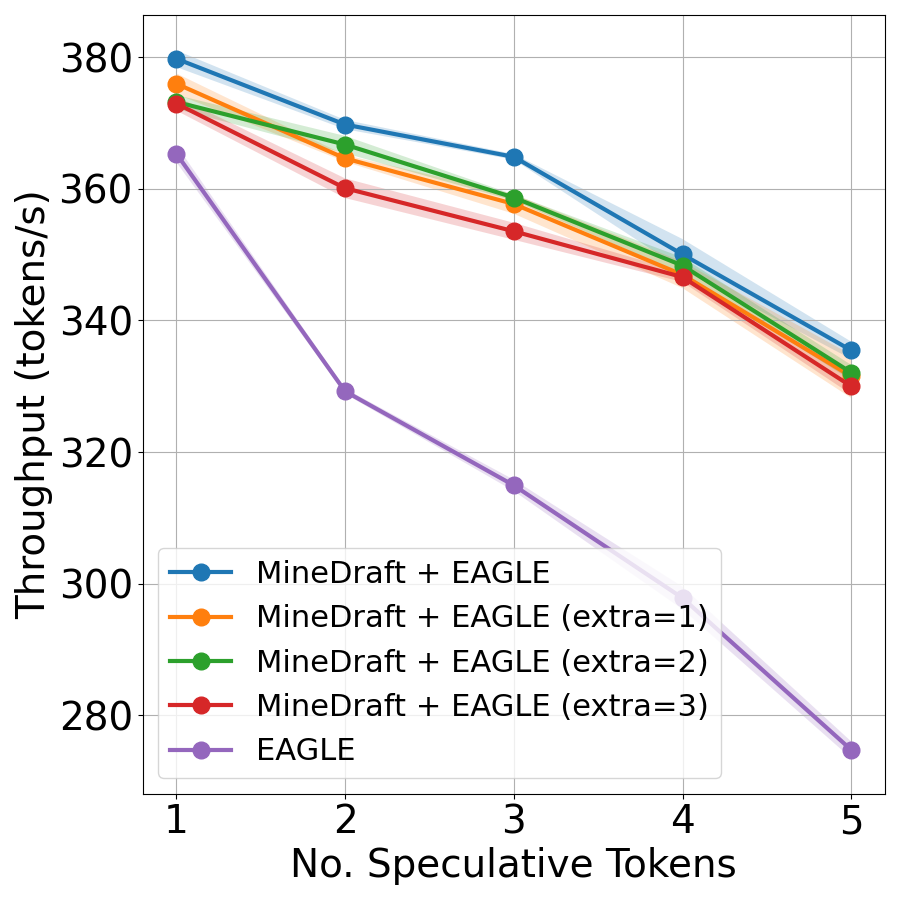} &
        \includegraphics[width=0.23\linewidth]{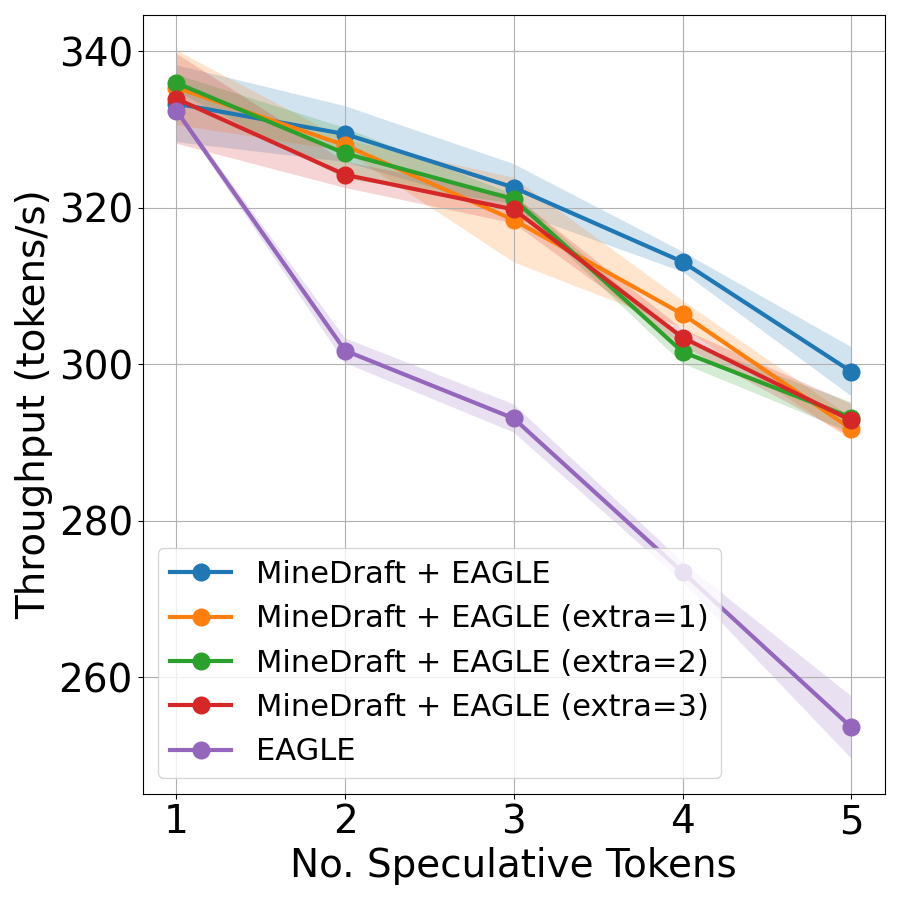} &
        \includegraphics[width=0.23\linewidth]{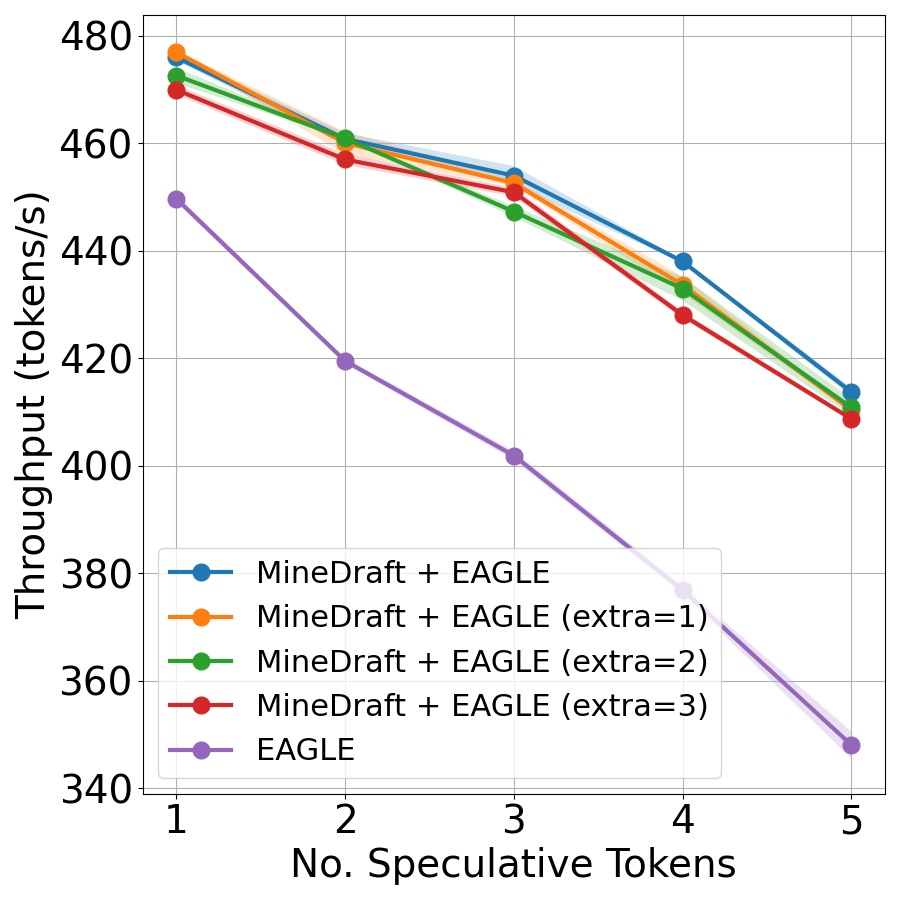} \\
        & \hspace{2mm}(e) \hspace{2mm} $\uparrow$ 7.51\%, $\Delta$ 19.91\% &
        \hspace{2mm}(f) \hspace{2mm} $\uparrow$ 3.95\%, $\Delta$ 22.09\% &
        \hspace{2mm}(g) \hspace{2mm} $\uparrow$ 1.06\%, $\Delta$ 17.92\% &
        \hspace{2mm}(h) \hspace{2mm} $\uparrow$ 6.06\%, $\Delta$ 18.87\% 
    \end{tabular}}
    \vspace{-1mm}
    \caption{
        Throughput comparison across Settings 5 and 6. $\uparrow$ indicates the average improvement over the best baseline method. $\Delta$ indicates the maximum average gap between \alg{} and standard SD or EAGLE. 
        \alg{} consistently outperforms standalone EAGLE and standard SD, achieving maximum average throughput gains of 37.06\% and 22.09\%, respectively.
    }
    \label{fig:throughput-strategy}
    \vspace{-3mm}
\end{figure*}

\begin{figure*}[!ht]
    \centering
    \setlength{\tabcolsep}{1pt} 
    \resizebox{0.99\linewidth}{!}{
    \begin{tabular}{ccccc}
        & \hspace{8mm}\textbf{Arena} & \hspace{8mm}\textbf{ShareGPT} & \hspace{8mm}\textbf{Spec-Bench}  & \hspace{8mm}\textbf{Tough}\\

        \rotatebox{90}{\parbox{3.5cm}{\centering \hspace{8mm}\textbf{Qwen3~32B-8B}}} & 
        \includegraphics[width=0.23\linewidth]{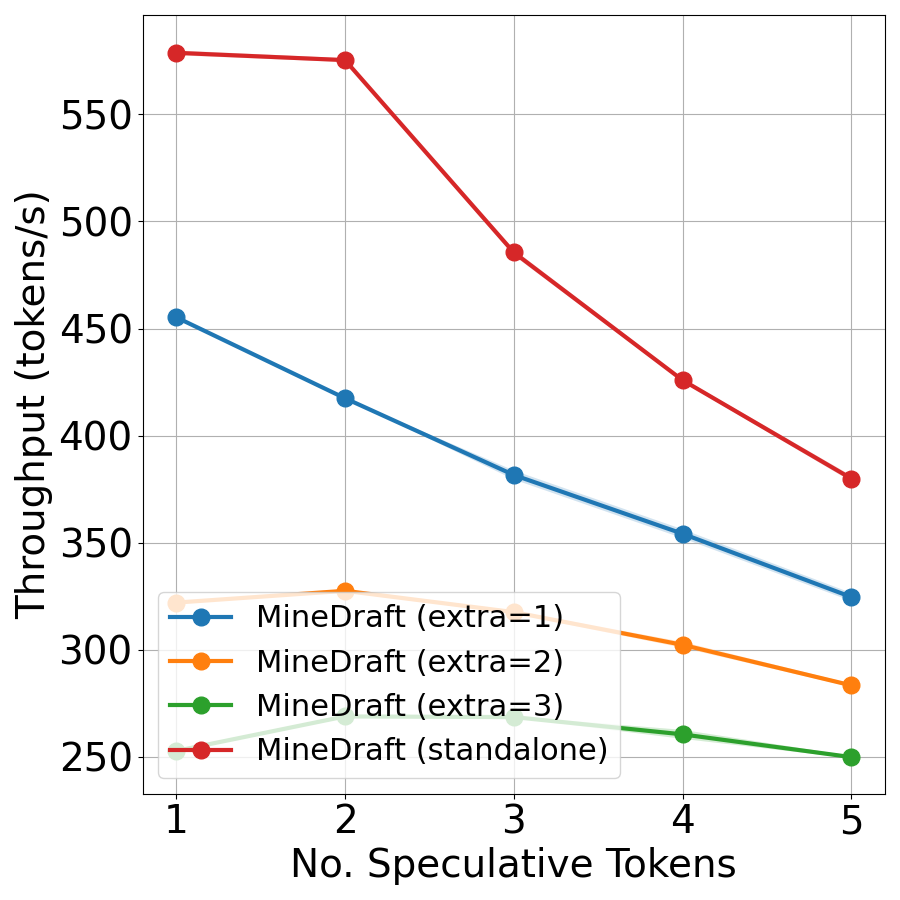} &
        \includegraphics[width=0.23\linewidth]{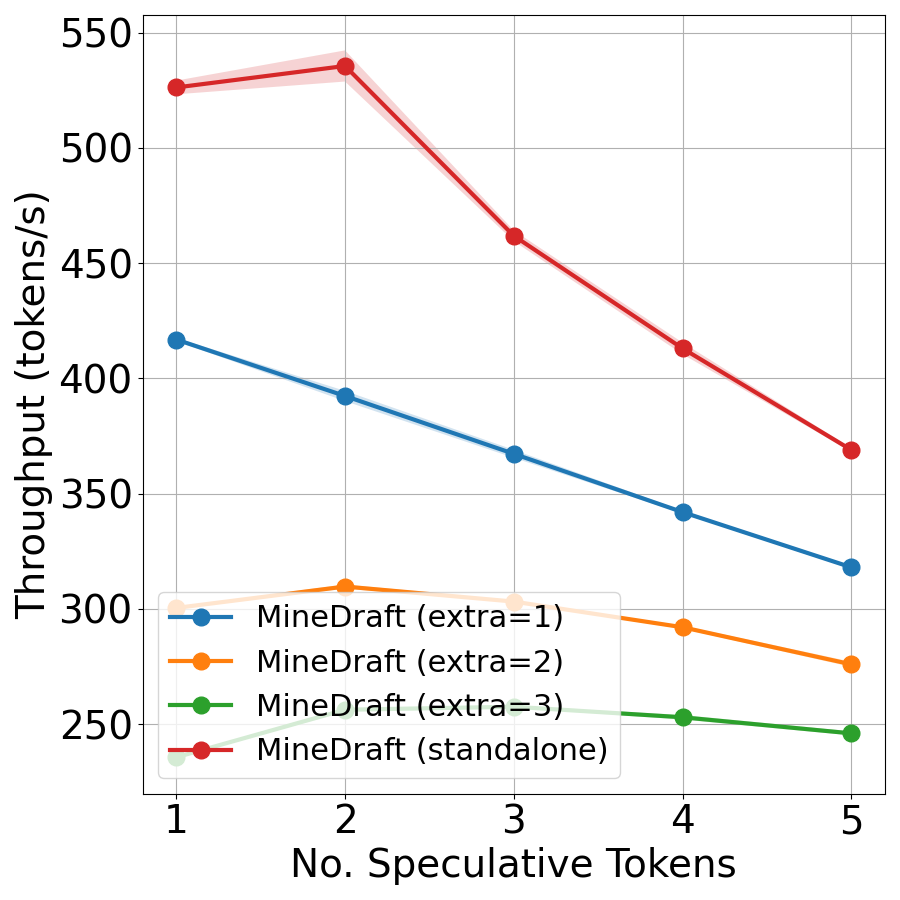} &
        \includegraphics[width=0.23\linewidth]{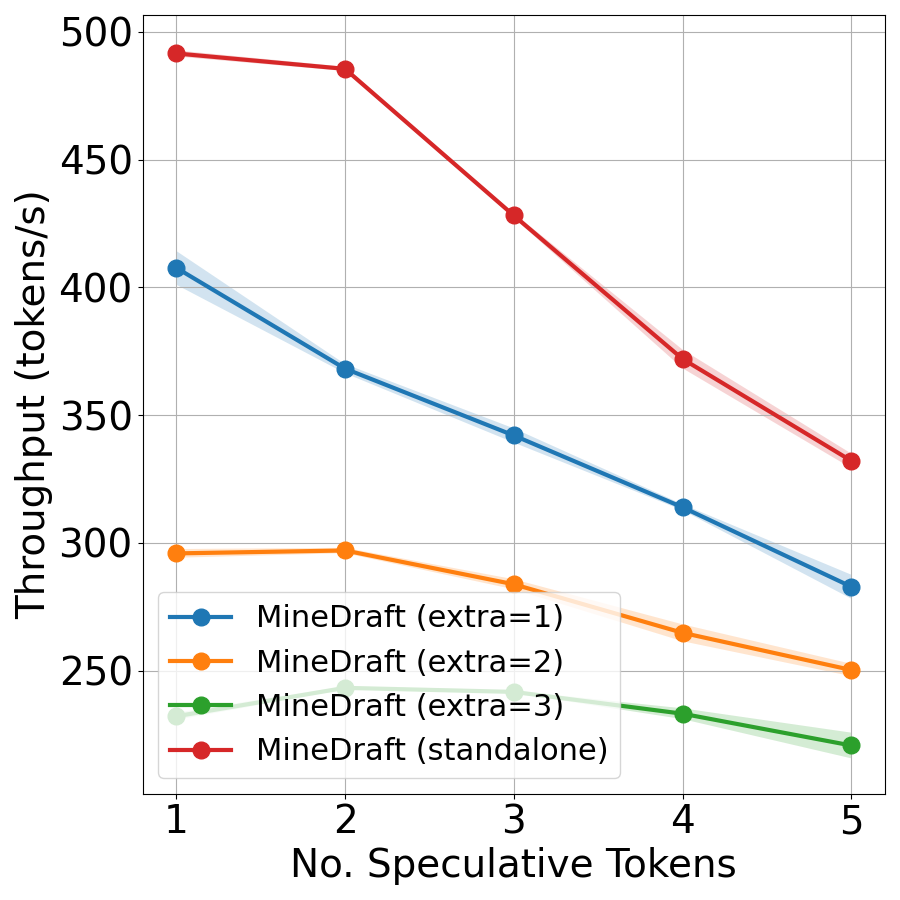} &
        \includegraphics[width=0.23\linewidth]{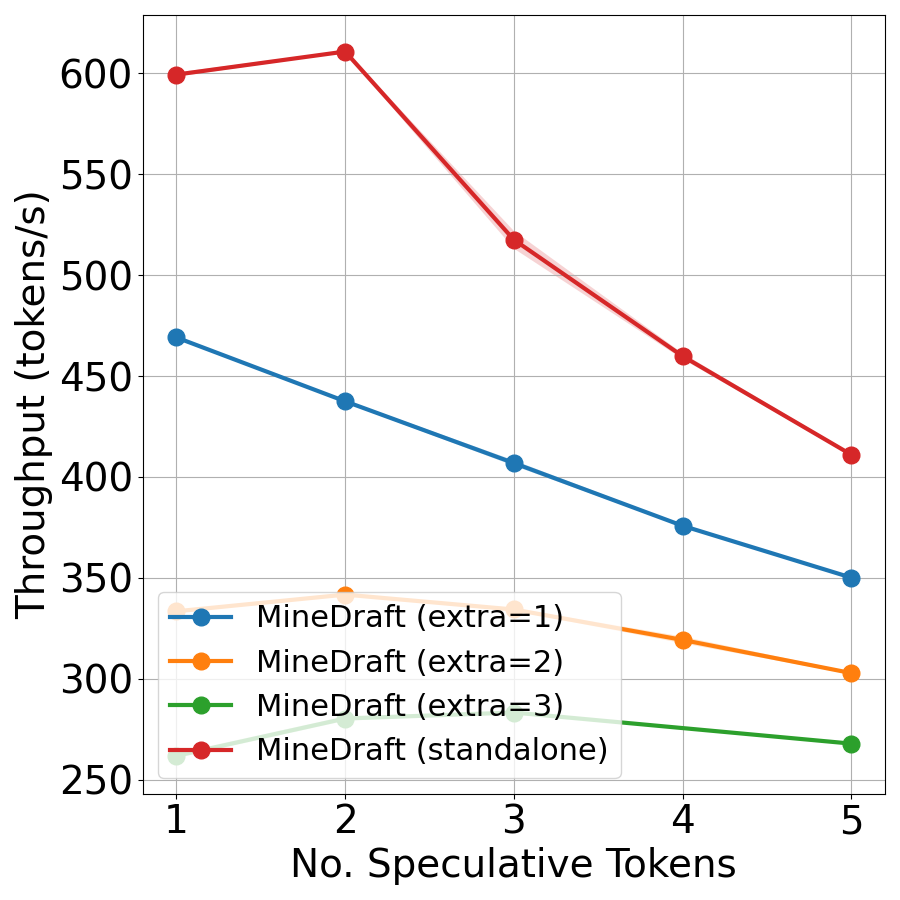}
    \end{tabular}}
    \vspace{-1mm}
    \caption{
        Throughput results on Setting 4. Standalone \alg{} outperforms \alg{}'s integration with TETRIS when draft model is large. Standard SD experiments failed on this setting due to OOM.
    }
    \label{fig:throughput-qwen8b}
    \vspace{-3.25mm}
\end{figure*}

\begin{figure*}[!ht]
    \centering
    \setlength{\tabcolsep}{1pt} 
    \resizebox{0.99\linewidth}{!}{
    \begin{tabular}{ccccc}
        & \hspace{8mm}\textbf{Arena} & \hspace{8mm}\textbf{ShareGPT} & \hspace{8mm}\textbf{Spec-Bench}  & \hspace{8mm}\textbf{Tough}\\

        \rotatebox{90}{\parbox{3.5cm}{\centering \hspace{8mm}\textbf{Qwen3~32B-1.7B}}} &
        \includegraphics[width=0.23\linewidth]{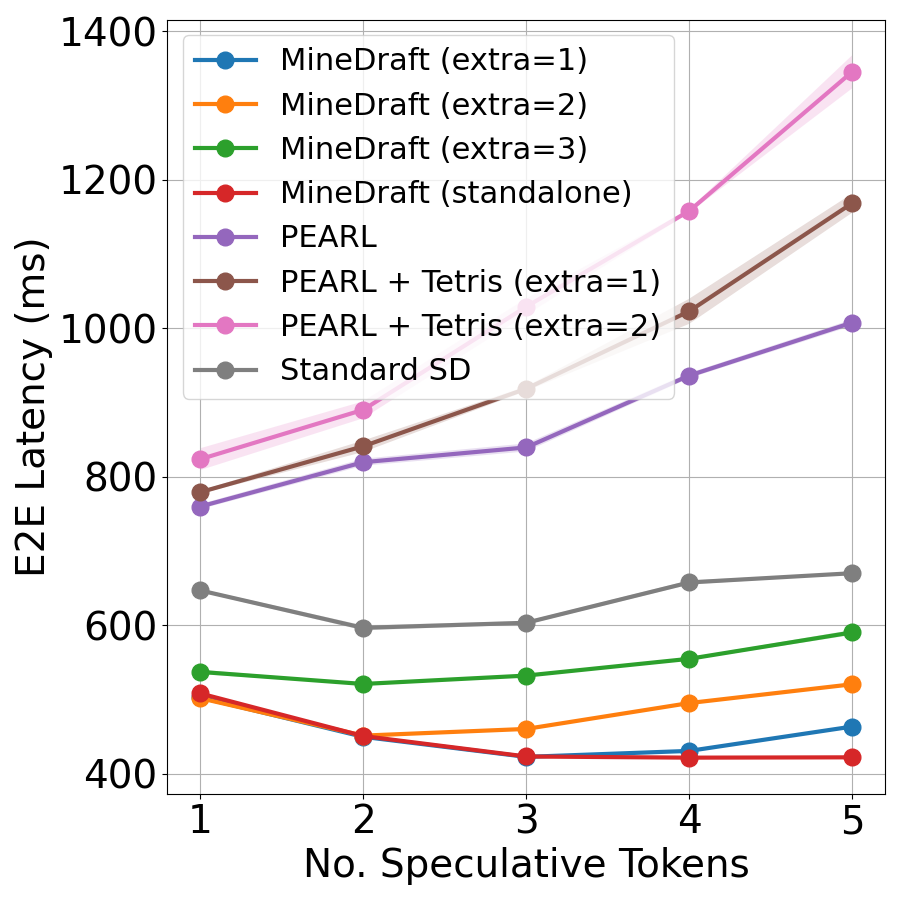} &
        \includegraphics[width=0.23\linewidth]{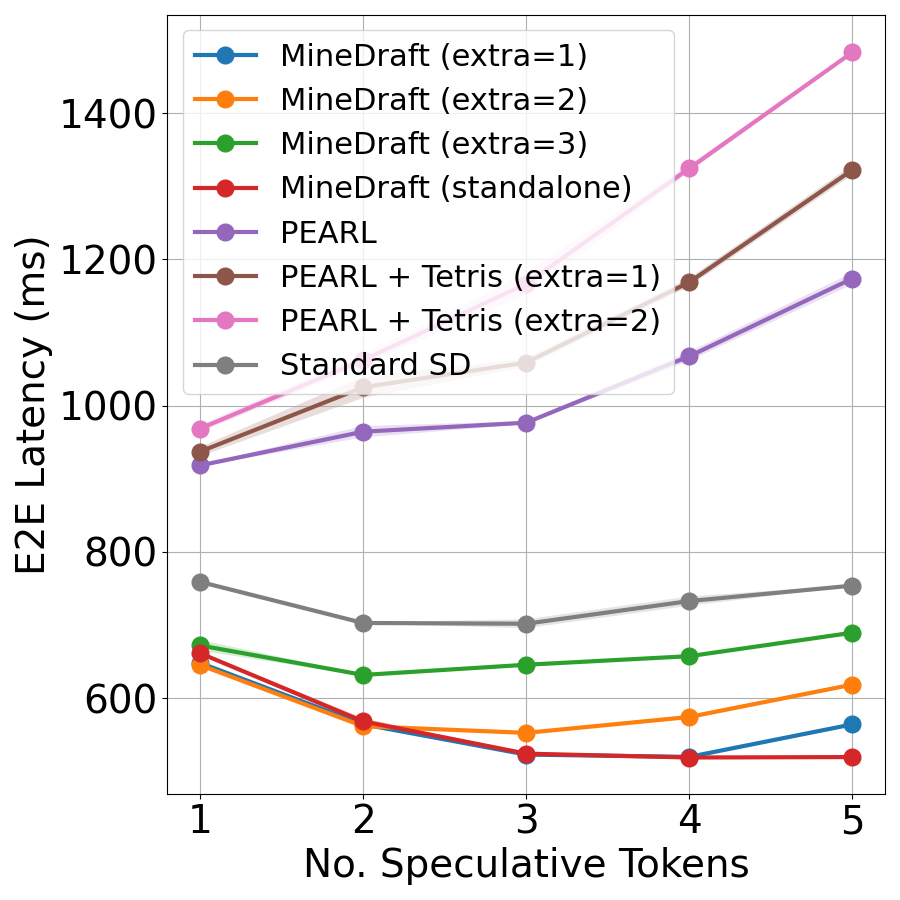} &
        \includegraphics[width=0.23\linewidth]{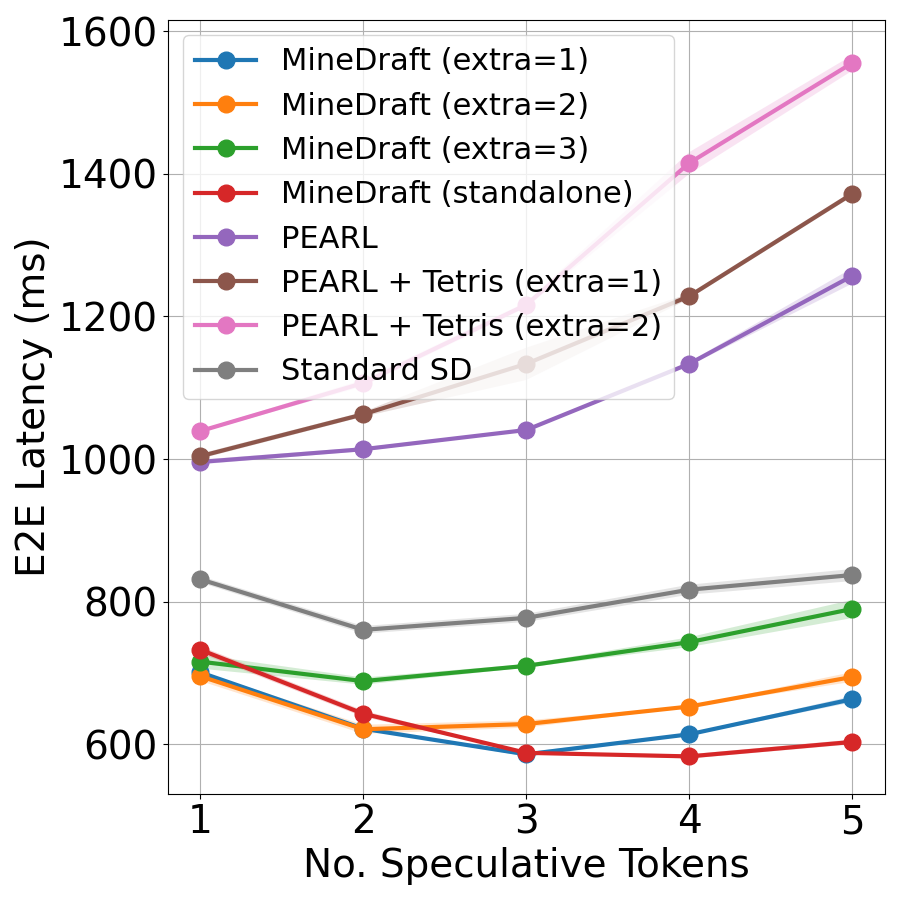} &
        \includegraphics[width=0.23\linewidth]{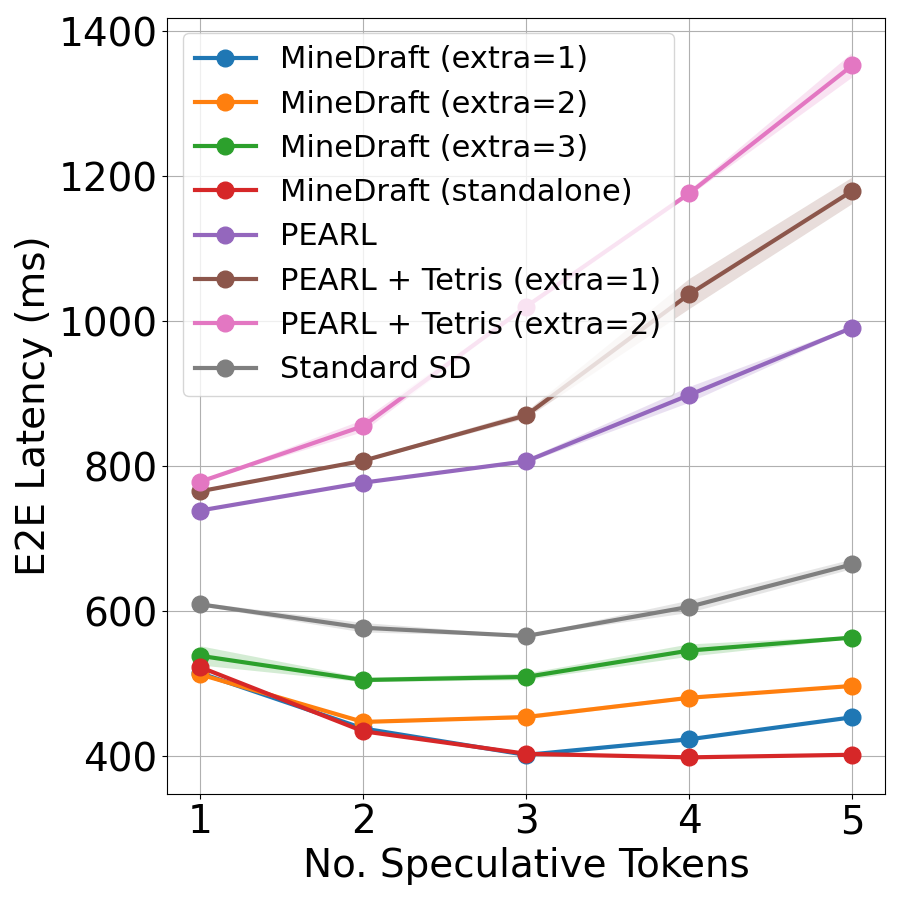} \\
        & (a) \hspace{2mm} $\uparrow$ 24.59\%, $\Delta$ 36.99\% &
        (b) \hspace{2mm} $\uparrow$ 25.50\%, $\Delta$ 31.09\% &
        (c) \hspace{2mm} $\uparrow$ 18.34\%, $\Delta$ 28.63\% &
        (d) \hspace{2mm} $\uparrow$ 28.97\%, $\Delta$ 39.51\% \\
    \end{tabular}}
    \vspace{-1mm}
    \caption{
        End-to-End Latency (ms) comparison against baseline methods in Setting 2. $\uparrow$ indicates the average improvement over the best baseline method. $\Delta$ indicates the maximum average gap between \alg{} and standard SD. \alg{} reduces latency by up to 28.97\% compared to best-performing baseline and by up to 39.51\% compared to standard SD.
    }
    \label{fig:e2elatency-1.7b}
    \vspace{-2mm}
\end{figure*}

\begin{figure*}[!ht]
    \centering
    \setlength{\tabcolsep}{1pt} 
    \resizebox{0.99\linewidth}{!}{
    \begin{tabular}{ccccc}
        & \hspace{8mm}\textbf{Arena} & \hspace{8mm}\textbf{ShareGPT} & \hspace{8mm}\textbf{Spec-Bench}  & \hspace{8mm}\textbf{Tough}\\
               
        \rotatebox{90}{\parbox{3.5cm}{\centering \hspace{8mm}\textbf{Qwen3~32B-1.7B}}} &
        \includegraphics[width=0.23\linewidth]{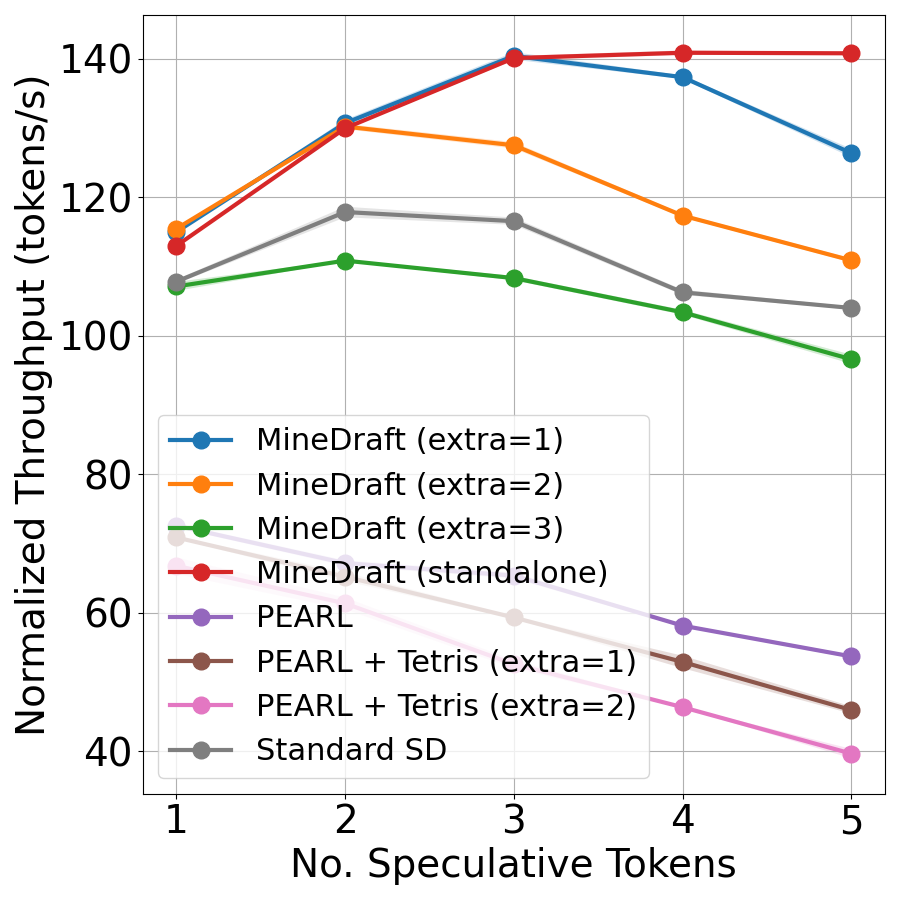} &
        \includegraphics[width=0.23\linewidth]{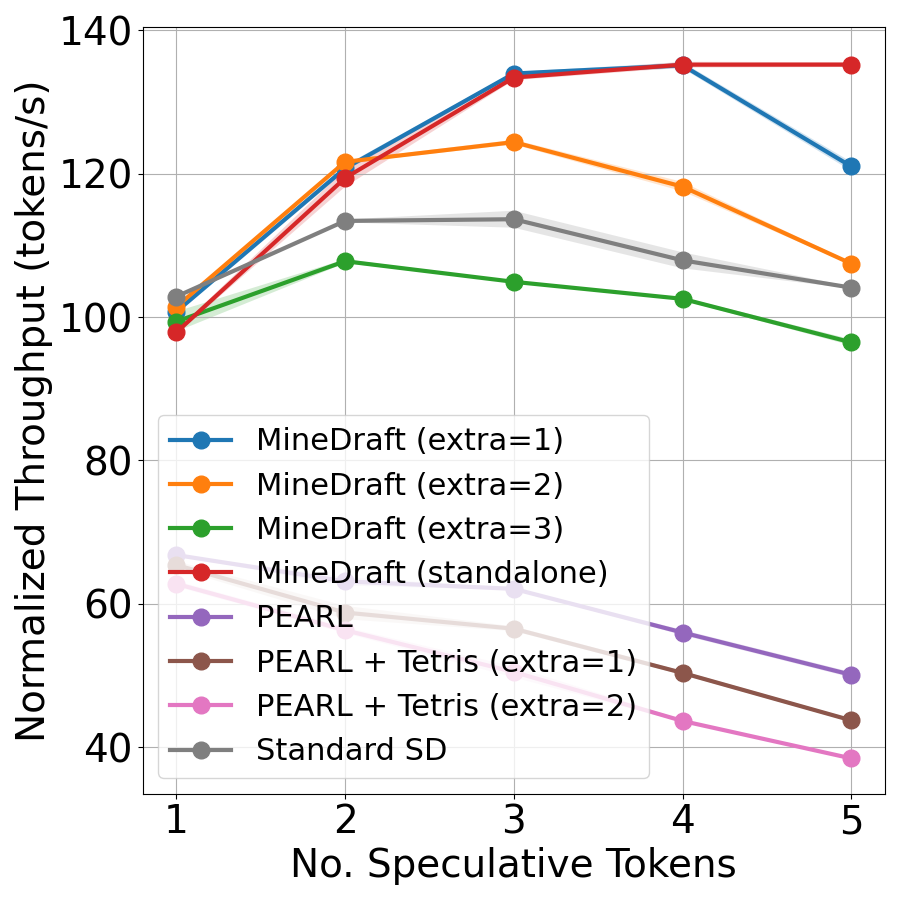} &
        \includegraphics[width=0.23\linewidth]{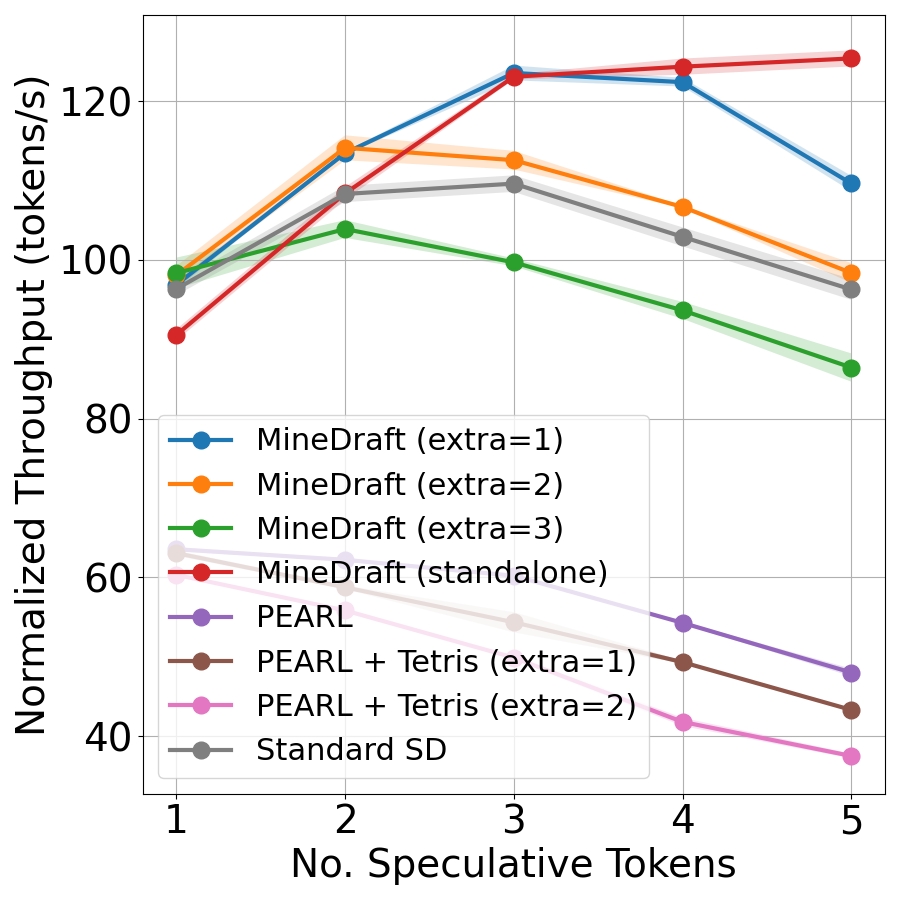} &
        \includegraphics[width=0.23\linewidth]{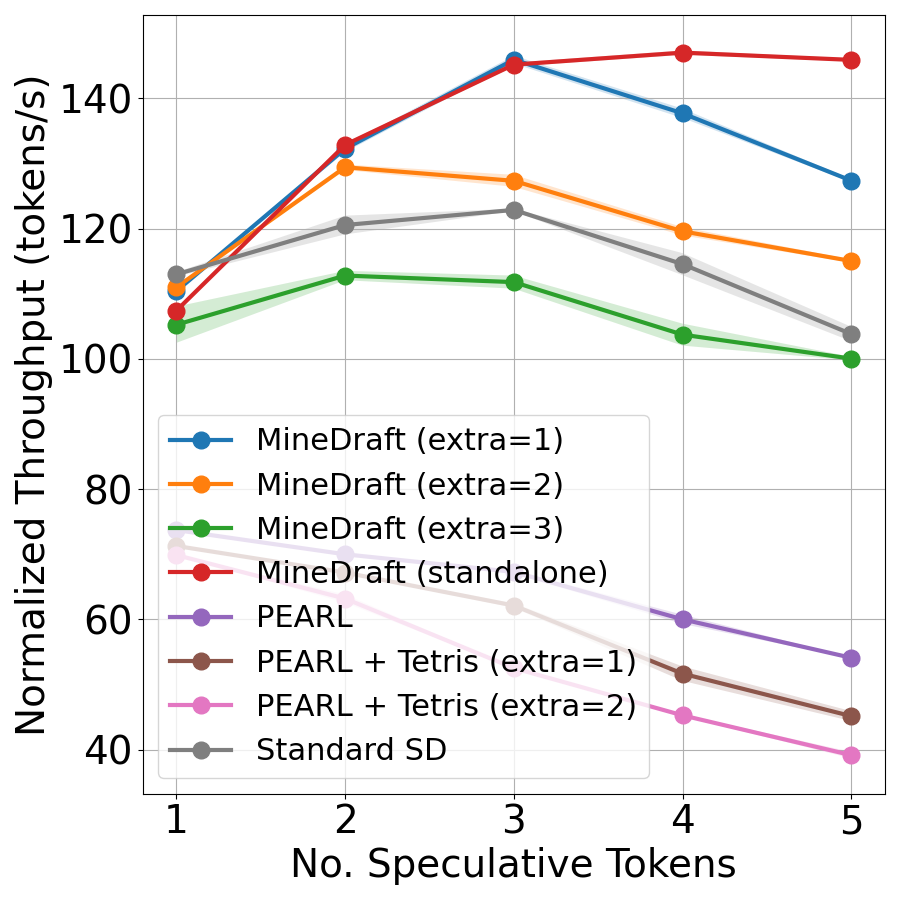} \\
        & (a) \hspace{2mm} $\uparrow$ 12.72\%, $\Delta$ 30.24\% &
        (b) \hspace{2mm} $\uparrow$ 17.88\%, $\Delta$ 29.88\% &
        (c) \hspace{2mm} $\uparrow$ 10.89\%, $\Delta$ 35.35\% &
        (d) \hspace{2mm} $\uparrow$ 18.78\%, $\Delta$ 40.55\% \\
        \vspace{2mm}

        \rotatebox{90}{\parbox{3.5cm}{\centering \hspace{8mm}\textbf{Qwen3~32B-1.7B}}} &
        \includegraphics[width=0.23\linewidth]{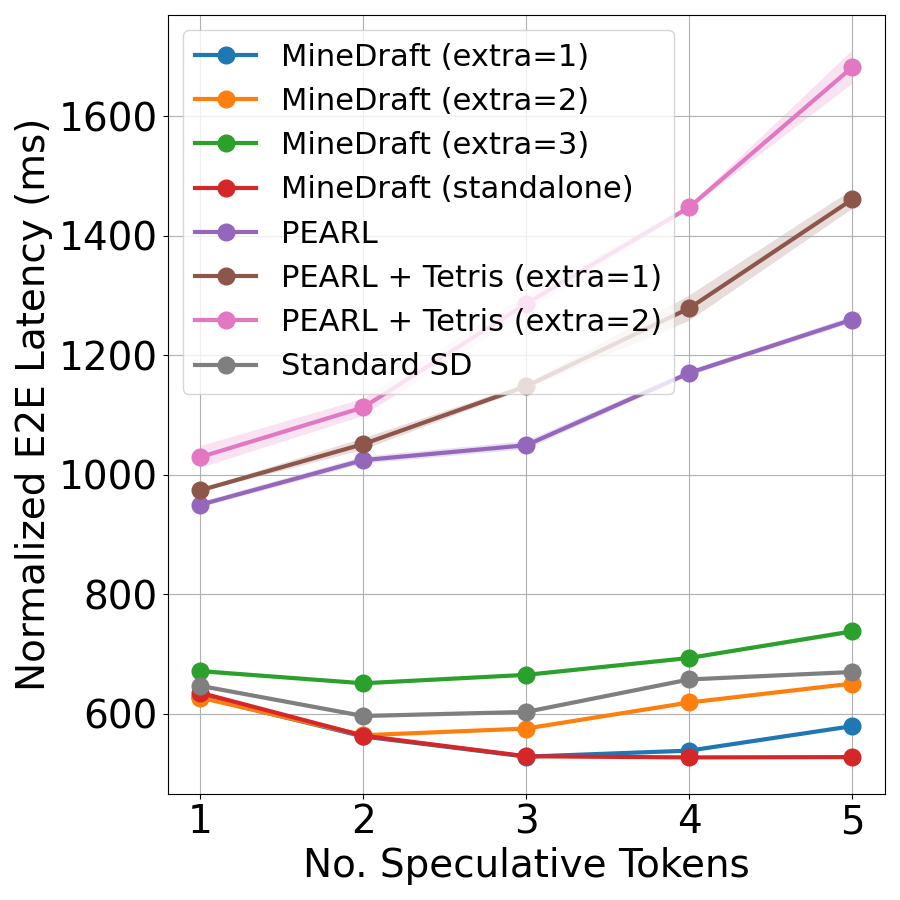} &
        \includegraphics[width=0.23\linewidth]{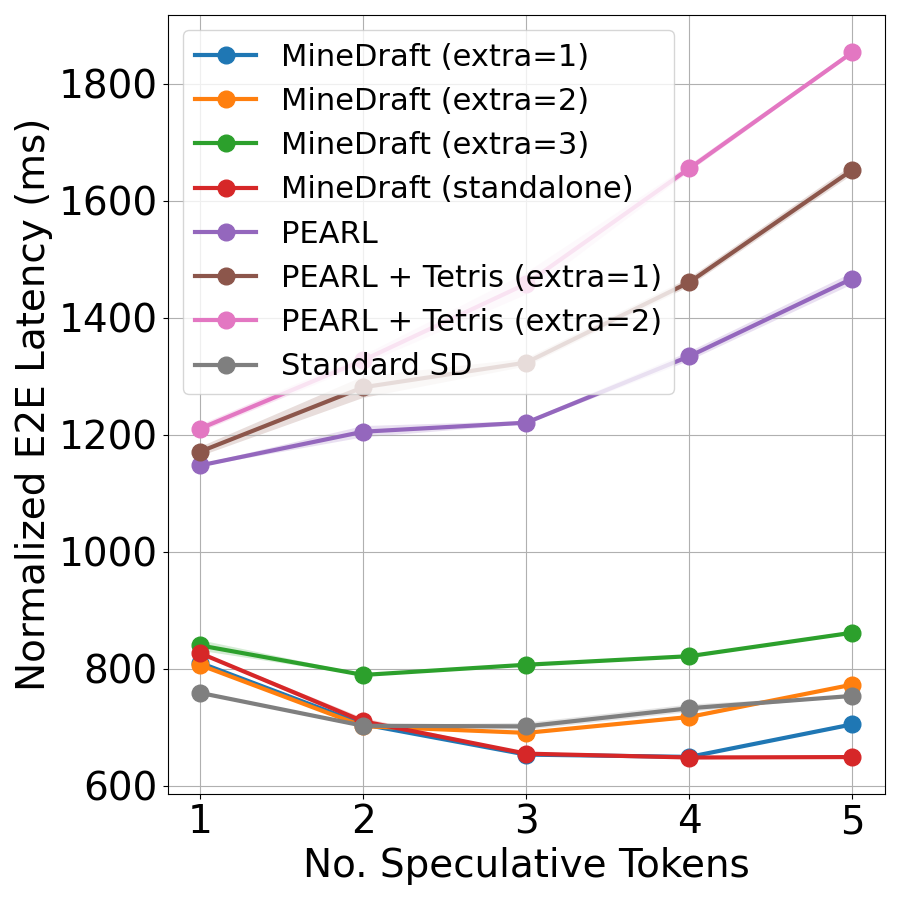} &
        \includegraphics[width=0.23\linewidth]{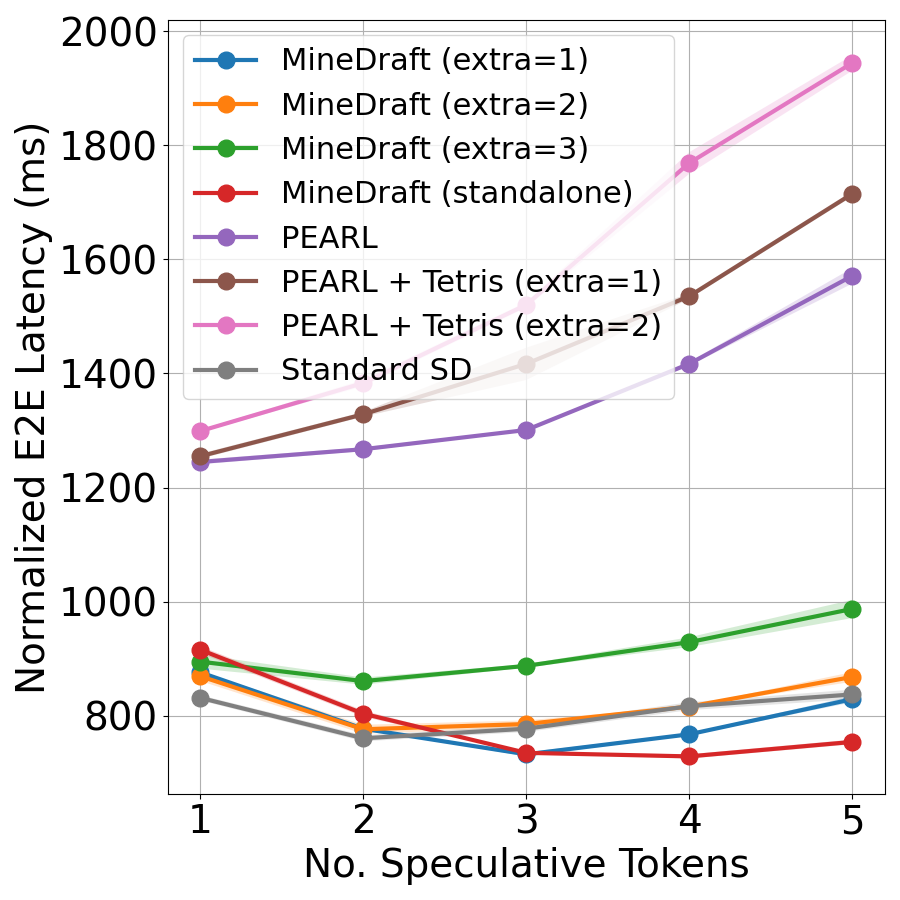} &
        \includegraphics[width=0.23\linewidth]{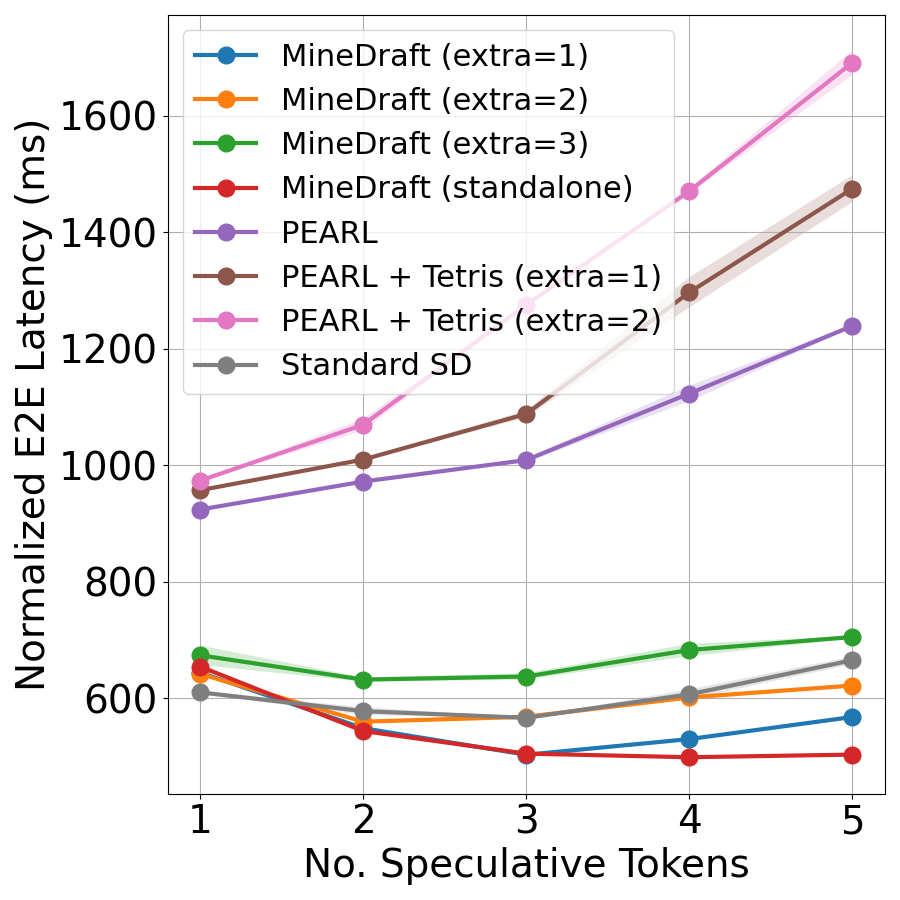} \\
        & (e) \hspace{2mm} $\uparrow$ 5.74\%, $\Delta$ 21.24\% &
        (f) \hspace{2mm} $\uparrow$ 6.87\%, $\Delta$ 13.87\% &
        (g) \hspace{2mm} $\uparrow$ -2.08\%, $\Delta$ 10.78\% &
        (h) \hspace{2mm} $\uparrow$ 11.21\%, $\Delta$ 24.38\% \\
    \end{tabular}}
    \vspace{-1mm}
    \caption{
        Normalized throughput and latency comparisons against baseline methods in Setting 2. $\uparrow$ indicates the average improvement over the best baseline method. $\Delta$ indicates the maximum average gap between \alg{} and standard SD.
        \alg{} improves normalized average throughput and latency by up to 40.55\% and 24.38\% over standard SD, respectively.
        Additional normalized results across other model settings are included in \cref{app:exp-norm}.
    }
    \label{fig:norm-1.7b}
    \vspace{-4mm}
\end{figure*}

\begin{figure*}[!ht]
    \centering
    \setlength{\tabcolsep}{1pt} 
    \resizebox{0.99\linewidth}{!}{
    \begin{tabular}{ccccc}
        \rotatebox{90}{\parbox{3.5cm}{\centering \hspace{8mm}\textbf{Arena}}} & 
        \includegraphics[width=0.23\linewidth]{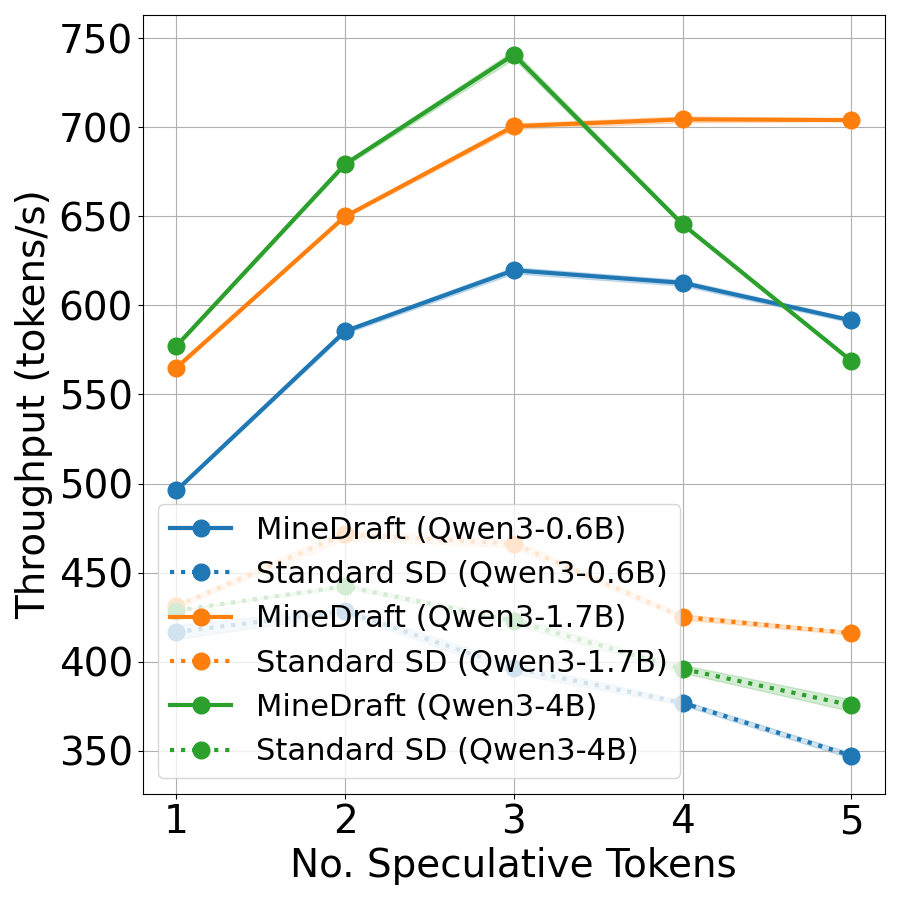} &
        \includegraphics[width=0.23\linewidth]{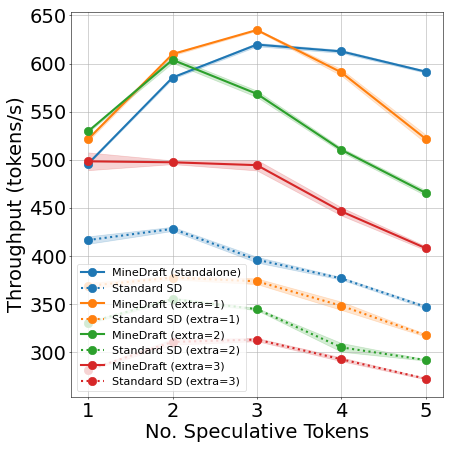} &
        \includegraphics[width=0.23\linewidth]{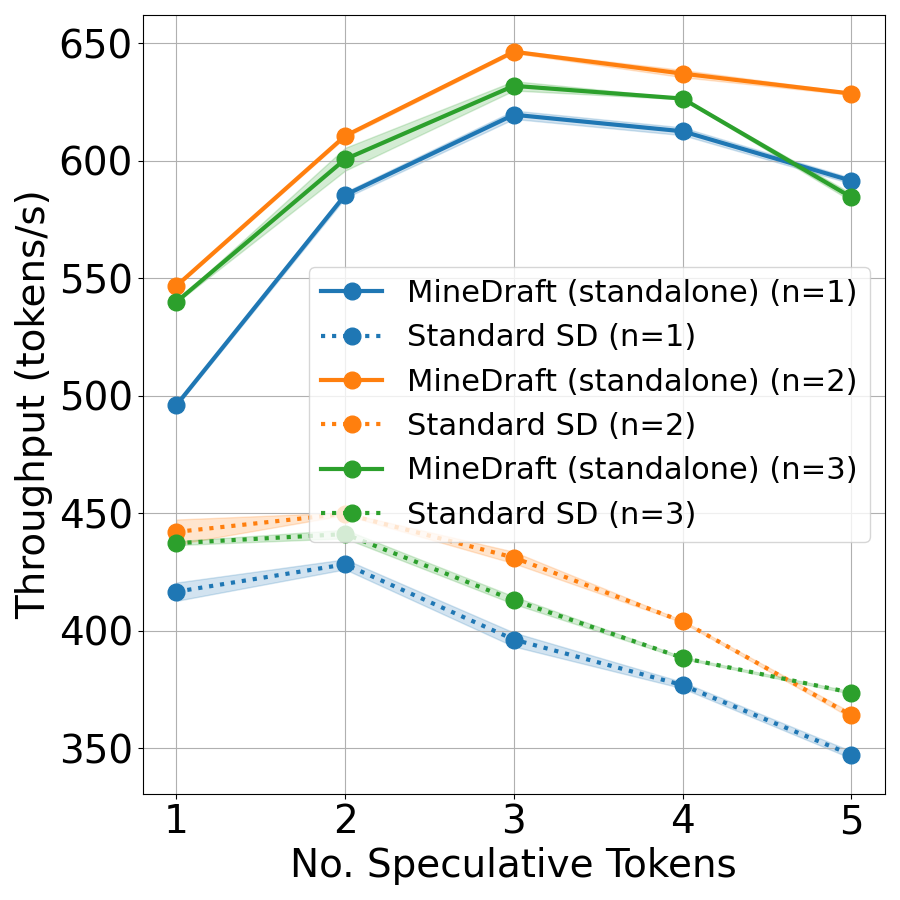} &
        \includegraphics[width=0.23\linewidth]{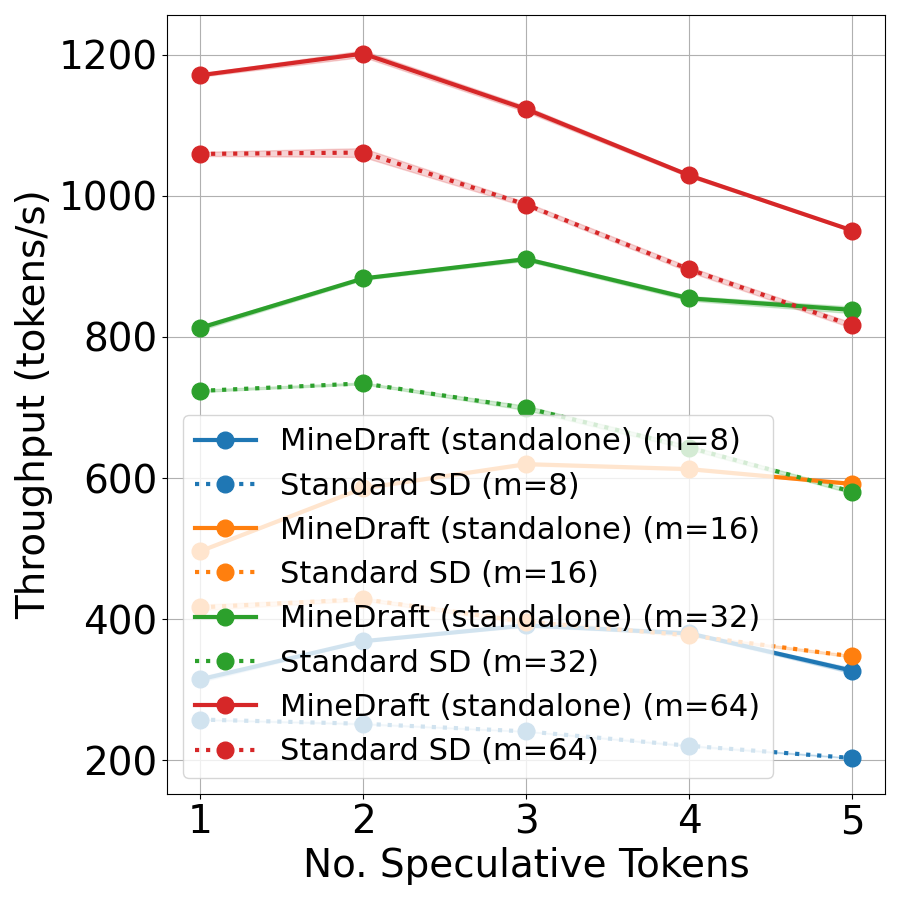} \\
        & \hspace{2mm}(a) Different Draft Models &
        \hspace{2mm}(b) Different Extra Tokens &
        \hspace{2mm}(c) Different \#Sequences &
        \hspace{2mm}(d) Different Batch Sizes \\
    \end{tabular}}
    \caption{
        Different ablation studies using the Arena dataset.
        The comprehensive details and analysis of our ablation studies are included in \cref{app:exp-extra-tokens,app:exp-draft-model,app:exp-n-m}. Plot (a) indicates that the choice of draft model can affect \alg{}'s parallelism benefit. Plot (b) shows \alg{} consistently improves average throughput relative to standard SD across different numbers of extra draft tokens generated for TETRIS. Plots (c) and (d) present how \alg{} scales with larger $n$ (number of sequences per request) and $m$ (batch size) values.
    }
    \label{fig:ablations}
    \vspace{-4mm}
\end{figure*}

We evaluate the effectiveness of \alg{} using the performance metrics introduced in \cref{sec:preliminaries}, namely average throughput (tokens/second or tokens/s) and end-to-end latency (milliseconds or ms). In addition to demonstrating the performance gains of \alg{}, our empirical results show that \alg{} can further improve speculative decoding by integrating existing drafting techniques.

\para{Model settings.}
We consider seven target–draft model configurations, listed below. In all settings, the target model is served with tensor parallelism set to four, while a single GPU is dedicated to serving the draft model. Additional setup details are provided in \cref{app:exp-setup}.
\squishlisttwo
    \item \textbf{Settings 1--4} use Qwen3-32B~\citep{qwen3technicalreport} as the target model with different draft models: Qwen3-0.6B (Setting~1), Qwen3-1.7B (Setting~2), Qwen3-4B (Setting~3), Qwen3-8B (Setting~4). All are served on $5$ L40S GPUs with $m=16$.
    
    \item \textbf{Setting 5} pairs Llama-3.3-70B-Instruct-AWQ-INT4~\citep{llama3370bawq2024} as the target model with Llama-3.1-8B-Instruct~\citep{grattafiori2024llama3} as the draft model, served on five H100 GPUs with $m=64$.

    \item \textbf{Settings 6 and 7} use vicuna-33b-v1.3 and vicuna-13b-v1.3~\citep{zheng2023arena} as the target models, respectively, paired with EAGLE-Vicuna-33B-v1.3 and EAGLE-Vicuna-13B-v1.3~\citep{li2024eagle} as the draft model, respectively. These settings are served on five L40S GPUs with $m=16$.
\squishend

\para{Datasets.}
Following the experimental settings of prior work~\citep{hou2025banditspec,oliaro2025suffixdecoding,wu2025tetris}, we evaluate \alg{} on ShareGPT~\citep{sharegpt-dataset} (Apache License 2.0), Arena~\citep{zheng2023arena} (CC), LLM-Tough-Questions (Tough)~\citep{yav-ai2024domain-tough} (MIT License), and Spec-Bench~\citep{xia2024sd-survey} (Apache License 2.0), to robustly assess the applicability of \alg{} across a diverse range of scenarios.

\subsection{Comparison against Baselines.}
\label{subsec:exp-baseline}
We measure the average throughput of \alg{} and establish baselines, including PEARL~\citep{liu2025pearl} and standard SD. As shown in \cref{fig:throughput-baseline}, \alg{} improves throughput over the best baseline by 38.62\% to 65.02\%, with maximum gains over standard SD ranging from 57.95\% to 75.68\%. 
These results show \alg{} consistently outperforms all baselines across all model settings and datasets.
For all experimental results, we report the mean and standard deviation over three independent trials unless otherwise specified.
We omit PEARL from subsequent throughput comparisons, as it exhibits substantially lower efficiency than \alg{} in preceding evaluations.

\para{Issue with extreme $k$ value.}
Notably, increasing the number of draft tokens per step is not always beneficial: excessive drafting can cause the draft stage to dominate verification time, thereby negating the benefit of hiding drafting latency via PSD. This effect is particularly significant for large draft models and is further analyzed in \cref{app:exp-draft-model}.

\subsection{Results with Different Drafting Strategies}
\label{ssec:Drafting_Strategies}
We further evaluate \alg{} integrated with other drafting strategies, including EAGLE and TETRIS, on the other model settings. As shown in \cref{fig:throughput-strategy}, integrating \alg{} with TETRIS and/or EAGLE consistently outperforms standalone EAGLE and standard SD across all datasets. In particular, \alg{} improves average throughput by up to 30.81\% and 7.51\% over the best results of standard SD and EAGLE, respectively. The maximum average improvements over standard SD and EAGLE reach 37.06\% and 22.09\%, respectively.

\para{Memory contention in standard SD.}
As shown in \cref{fig:throughput-qwen8b}, standard SD can encounter memory limitations when deploying larger draft models, such as Qwen3-8B, due to the coexistence of draft and target models within the same GPU memory, leading to VRAM contention. In contrast, the \alg{} architecture places the draft model on a dedicated GPU, enabling parallel drafting without competing for memory with the target model. This design alleviates memory bottlenecks and promotes a more balanced utilization of available GPU resources. Similar memory constraints arise for standard SD when deploying large target models, such as Qwen3-235B-A22B-Instruct-2507-FP8~\citep{qwen3technicalreport}, as detailed in \cref{app:exp-large-model}.

\para{EAGLE's subpar performance.} 
We observe that EAGLE's performance degrades as the number of draft tokens per step increases, potentially due to the current vLLM implementation of EAGLE-based speculators, which is under investigation by the vLLM team~\citep{vllm_eagle_issue}.

\para{\alg{} as a batch PSD framework.} 
When integrated with TETRIS (denoted by ``extra='' in the legends), \alg{} outperforms its standalone variant under certain settings, highlighting its potential for further inference optimization when paired with effective drafting strategies. A more detailed breakdown of performance differences between integrations with drafting strategies and the standalone \alg{} implementation is left for future work.

\subsection{End-to-End Latency}
\label{subsec:exp-e2el}
We measure end-to-end latency of \alg{} and baseline methods across drafting strategies, selected model settings, and datasets. \cref{fig:e2elatency-1.7b} shows representative results for Setting~2 across all datasets, where \alg{} demonstrates a 28.97\% latency reduction relative to the best-performing baseline and a maximum reduction of 39.51\% compared to standard SD. Complete results are in \cref{app:exp-e2el}.

\subsection{Normalized Results w.r.t. GPU Count}
\label{subsec:exp-norm}
To ensure a fair comparison between parallel methods (\alg{} and PEARL, evaluated on 5 GPUs) and standard SD (evaluated on 4 GPUs), we normalize the metrics to a per-GPU basis, as specified in \cref{app:exp-norm}.
\cref{fig:norm-1.7b} illustrates that \alg{} outperforms both standard SD and PEARL in normalized average throughput, achieving a 18.78\% improvement over the best throughput achieved by baselines. Compared to standard SD, \alg{} improves normalized average throughput by up to 40.55\%.
However, \alg{} marginally underperforms standard SD in normalized latency when $k$ is small. Particularly, \alg{} exhibits a 2.08\% increase in normalized latency compared to the best-performing baseline at $k=2$ on Spec-Bench. Additional comparisons of normalized metrics across Settings 1, 3, and 4, along with an analysis of this degradation in normalized latency, are provided in \cref{app:exp-norm}.

\subsection{Ablation Studies}
\label{subsec:ablations}
This section presents representative ablation studies on the Arena dataset. Comprehensive results across all selected datasets are provided in \cref{app:exp-draft-model,app:exp-extra-tokens,app:exp-n-m}. Due to space constraints, analysis of verification success rate of integration with TETRIS and Nsight Systems profiling results is also provided in \cref{app:exp-vsr,app:nsight}.

\para{Draft model sizes.}
We study the impact of draft model scale on system performance. Although \alg{} consistently outperforms standard SD across draft model sizes, \cref{fig:ablations} shows that increasing the number of draft tokens per step does not monotonically improve average throughput. This degradation is most pronounced for Qwen3-4B: when more than three draft tokens are generated per step, throughput drops below that achieved with only two draft tokens. A detailed analysis of this behavior is provided in \cref{app:exp-draft-model}.

\para{Extra tokens.}
We evaluate the average throughput of \alg{} and standard SD across varying numbers of extra draft tokens ($e$) for TETRIS. \cref{fig:ablations} presents results for Setting~1 on the Arena dataset, where \alg{} attains its maximum throughput at $k=3$ and $e=1$. As $e$ increases, \alg{} integrated with TETRIS outperforms standalone \alg{} for $e \le 2$, but experiences a degradation in throughput for $e \ge 3$. Complete results and a detailed analysis of this trend are provided in \cref{app:exp-extra-tokens}.

\para{Scaling behavior with $n$ and $m$.}
We evaluate throughput scaling with respect to the number of sequences per request ($n$) and batch size ($m$). The final two plots in \cref{fig:ablations} show that, with $m$ held constant, increasing $n$ yields only marginal throughput gains, whereas increasing $m$ consistently improves average throughput. Moreover, when the number of draft tokens per step exceeds 4, the average throughput of \alg{} approaches that of standard SD with doubled batch size, suggesting that \alg{} can complement adaptive drafting strategies, which are often limited by draft length, to enable additional performance gains. More experiments are provided in \cref{app:exp-n-m}.

    \section{Conclusion}
    \label{sec:conclusion}

This paper introduces \alg{}, a novel framework that realizes the potential of PSD. \alg{} exploits the latency savings allowed by \textit{Batch Parallelism}, which overlaps drafting and verification in SD by maintaining a target batch and a draft batch. Our theoretical analysis characterizes the expected minimum end-to-end latency gains achievable with PSD. Extensive experiments across diverse model families, model sizes, hyperparameter settings, and datasets further validate the performance advantages of \alg{} in practice.
As future work, we will explore extending support to the vLLM v1 engine and the chunked prefill feature~\citep{agrawal2023sarathi}. We also plan to study whether the current requirement for an additional GPU in Batch Parallelism can be eliminated by leveraging the weight-padding method described in Appendix~F of \cite{liu2025pearl}.

    \clearpage
    \section*{Acknowledgements}
    This research is supported by the National Research Foundation (NRF), Prime Minister’s Office, Singapore under its Campus for Research Excellence and Technological Enterprise (CREATE) programme. The Mens, Manus, and Machina (M3S) is an interdisciplinary research group (IRG) of the Singapore MIT Alliance for Research and Technology (SMART) centre.

    \section*{Impact Statement}
    This paper proposes \alg{}, a framework for batch parallel speculative decoding that improves throughput and reduces latency for LLM inference. By accelerating end-to-end generation, \alg{} can broaden access to capable AI systems and improve the efficiency of shared inference infrastructure. Speculative decoding incurs additional computation relative to standard autoregressive decoding because it executes both a draft and a target model. Consequently, the overall impact on energy consumption depends on the deployment setting: improvements in serving throughput and hardware utilization may offset the increased FLOPs. However, such gains are not universal across workloads or system configurations. As with other advances in efficient text generation, reduced inference latency may lower the cost of producing content at scale, including in adversarial contexts such as spam or misinformation campaigns. We believe, however, that the benefits of faster and more accessible inference outweigh these risks. Mitigating harmful downstream use is better addressed through application-level safeguards, monitoring, and policy interventions than by limiting research on inference efficiency.

	\bibliography{references}
	\bibliographystyle{icml2026}

    \clearpage    
    \onecolumn
    \appendix

    \section{Limitations}
    \label{sec:limitations}

\para{Irrecoverable from biased batch workload.}
The workload balance between batches can become irrecoverably disrupted under certain conditions. When a request is terminated due to user abort or preemption, \alg{} may fail to recover from the resulting imbalance because new replacement running requests are consistently assigned to the draft batch. While this policy ensures that draft tokens are available for verification, it prevents the load-balancing mechanism (the \texttt{assign} operation) from correcting the induced batch imbalance.
Furthermore, when chunked prefill is enabled, the Scheduler may initialize fewer than $2m$ running requests in the first SD step. In this case, \alg{} must continue to apply the load-balancing \texttt{assign} operation for all subsequent batch ID assignments, rather than switching to the simplified post–first-step policy. Otherwise, an irrecoverable workload imbalance between the two batches can arise.
However, continuously applying the \texttt{assign} operation introduces a minor overhead. If a new running request is assigned to the target batch, the Verifier can sample only a single token because there are no draft tokens, resulting in one less efficient “idle” SD step for that request. Since this cost is incurred at most once per request, it is deemed an acceptable trade-off for the long-term performance benefits of maintaining batch balance. Accordingly, we plan to adopt this revised batch ID assignment strategy in a future version of \alg{} that supports the vLLM v1 engine and chunked prefill.

\para{Batch exhaustion and tail effect.}
The Batch Parallelism scheme, as specified in \Cref{sec:minedraft}, includes a fallback mechanism: when one batch becomes empty and the other batch contains no verifiable drafts, the system reverts to standard SD, which inevitably curtails the duration of parallel execution. Although Batch Parallelism successfully parallelizes the substantial majority of SD steps and yields significant performance gains per our empirical results, the tail effect circumscribes its maximum potential improvement.
A possible approach to offset this limitation involves adopting a modified PEARL parallelism. When one batch becomes empty, the system instructs the Drafter to generate draft tokens for the remaining batch, including those undergoing verification, without confirming their success. In the next step, instead of verifying only previously successful requests as in standard PEARL, the system re-drafts requests that failed verification before submitting the batch to the Verifier. This re-drafting mechanism for failed verification attempts is intended for partially overlapping drafting time, optimizing performance to the greatest extent possible.

\para{Adaptive to varying workload.}
Although \alg{} supports continuous batching that automatically handles varying workloads, it is assumed that the number of requests in the system seldom drops below the nominal capacity of $2m$. If the actual total batch size of draft and target batches is frequently less than $2m$, \alg{} may suffer from batch imbalance or even fall back to sequential SD, thereby diluting the parallelism effect. 
On the other hand, if the user-configured batch size $m$ is too large, it is desirable to automatically turn off speculative decoding or adaptively adjust $m$ to prevent the compute-bound scenarios \citep{liu2025turbospec,sadhukhan2025magicdec}. 
It is left as a future direction to design and integrate an adaptive scheduling mechanism that accounts for resource constraints, selected drafter-verifier pair, and active workload.

\para{Request-dependent draft token selection.}
In practice, draft acceptance rates are varied and correlated with the ``difficulty'' of different requests, and selecting draft tokens irrespective of request difficulty can be insufficient to attain the full potential of speculative decoding and \alg{}. Existing literature has proposed a proxy measure to estimate the request difficulty \citep{agrawal2024adaedlearlydraftstopping}, which can be employed by future work to develop a request-dependent draft selection algorithm that dynamically adjusts the number of draft tokens based on the predicted difficulty of each incoming request.
    
    \section{Leftover Proof}
    \label{asec:proof}

We first derive an upper bound on $T_{\SD}$ that does not involve the expectation operator. 
Under a mild assumption on the form of $f$, we have the following result.
\begin{restatable}{lem}{sd}
    \label{prop:sd}
    Let $f(t) = 1 - e^{-\alpha t}$ for some constant $\alpha \in \mathbb{R}^+$. We have that
    \begin{equation*}
        T_{\SD} > \frac{\frac{1}{\alpha}\ln(\alpha V+1) + V}{1-\frac{1}{2(\alpha V+1)}} \times R\ .
    \end{equation*}
\end{restatable}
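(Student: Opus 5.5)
The plan is to remove the expectation, reduce $T_{\SD}$ to a one-dimensional minimization with a closed-form stationarity condition, and then bound the minimizer from below.

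\textbf{Removing the expectation.} Since the per-step cost $t+V$ is deterministic once $t$ is fixed, we have $\EE{\sum_{s=1}^T (t+V)} = (t+V)\,\mathbb{E}[T] = R(t+V)/f(t)$. Hence
\begin{equation*}
T_{\SD} = R\min_{t>0} \frac{t+V}{1-e^{-\alpha t}}.
\end{equation*}
Substituting $x=\alpha t$ and $a=\alpha V$ turns this into $T_{\SD} = \frac{R}{\alpha}\min_{x>0} g(x)$, where $g(x) = \frac{x+a}{1-e^{-x}}$.

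\textbf{Locating the minimizer.} The function $g$ tends to $+\infty$ both as $x\to 0^+$ and as $x\to\infty$, so the minimum is attained at an interior critical point. Differentiating, $g'(x)=0$ is equivalent to $1-e^{-x} = (x+a)e^{-x}$, that is,
\begin{equation*}
e^{x} = 1+x+a .
\end{equation*}
Let $h(x)=e^x-1-x-a$. Then $h(0)=-a<0$, $h$ is strictly increasing on $(0,\infty)$, and $h(x)\to\infty$. So there is a unique root $x^*$, and it is the global minimizer.

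\textbf{Evaluating the minimum.} At $x^*$ we have $1-e^{-x^*} = 1-\frac{1}{1+x^*+a} = \frac{x^*+a}{1+x^*+a}$. Therefore $g(x^*) = 1+x^*+a = e^{x^*}$, and
\begin{equation*}
T_{\SD} = \frac{R}{\alpha}\,(1+a+x^*).
\end{equation*}

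\textbf{Lower-bounding $x^*$.} From $e^{x^*} = 1+a+x^* > 1+a$ we get $x^* > \ln(1+a)$, so
\begin{equation*}
T_{\SD} > \frac{R}{\alpha}\bigl(1+a+\ln(1+a)\bigr).
\end{equation*}

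\textbf{Matching the stated bound.} With $L=\ln(1+a)$, the claimed bound equals $\frac{R}{\alpha}(a+L)\cdot\frac{2(a+1)}{2a+1}$, because $1-\frac{1}{2(a+1)} = \frac{2a+1}{2(a+1)}$. It therefore suffices to show
\begin{equation*}
1+a+L \ge (a+L)\Bigl(1+\tfrac{1}{2a+1}\Bigr).
\end{equation*}
This rearranges to $2a+1 \ge a+L$, i.e.\ $a+1 \ge \ln(1+a)$, which holds since $\ln y < y$ for all $y>0$. Strictness of the final inequality comes from $x^* > \ln(1+a)$.

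The step that needs the most care is the evaluation $g(x^*) = e^{x^*}$ via the stationarity identity, together with the existence and uniqueness of $x^*$. Everything after that is elementary algebra. If one prefers a sharper bound, one could instead use $x^* > \ln(1+a+\ln(1+a))$, but the crude bound $x^* > \ln(1+a)$ already suffices.
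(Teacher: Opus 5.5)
Your proof is correct, and it differs from the paper's at the key step. Both arguments reduce $T_{\SD}$ to minimizing $(t+V)/f(t)$, derive the same stationarity condition $e^{\alpha t}=\alpha(t+V)+1$, and use the lower bound $\alpha t^*>\ln(\alpha V+1)$. The paper, however, never evaluates the objective at the minimizer. It brackets $t^*$ from both sides, additionally proving $\alpha t^*<\ln(2(\alpha V+1))$. It then bounds the numerator $t^*+V$ from below and the denominator $f(t^*)$ from above separately, and that upper bound on $f(t^*)$ is exactly where the factor $1-\frac{1}{2(\alpha V+1)}$ comes from. The paper also takes a detour through Lambert $W$ asymptotics that plays no role in the final bound. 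You instead use the stationarity identity to obtain the closed form $T_{\SD}=R\,(t^*+V+1/\alpha)$, so only the one-sided bound on $t^*$ is needed. This gives the strictly sharper estimate $T_{\SD}>R\bigl(\frac{1}{\alpha}\ln(\alpha V+1)+V+\frac{1}{\alpha}\bigr)$, and the stated bound follows from $\ln(1+a)<1+a$. Your route is shorter and also more careful about existence and uniqueness of the minimizer: you use the limits at both ends plus monotonicity of $e^x-1-x-a$, whereas the paper briefly appeals to monotonicity of $f$. The paper's route, in exchange, lands directly on the displayed form that is plugged into the ratio $L(\alpha V)$ in Theorem~\ref{thm:ratio}. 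Feeding your sharper bound into that ratio instead would give a larger constant, about $1.76$ rather than $1.61$ at $\alpha V\approx 2.23$.
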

\begin{proof}
    First, the form of $f(x)$ is monotone increasing in the domain of $\alpha$. Hence, the function $g(t) = \frac{t+V}{f(t)}$ has a global minimum, taken at its turning point (critical point theorem). In the following, we derive this turning point $g'(t) = 0$. Taking the derivative with respect to $t$ and substituting the formula of $f(t)$, we have
    \begin{align*}
        & g'(t) = 0 \\
        & \iff (t+V)f'(t)=f(t) \\
        & \iff \alpha (t+V)e^{-\alpha t} = 1-e^{-\alpha t}\ .
    \end{align*}
    Rearrange the last equation:
    \begin{equation*}
        (\alpha(t+V)+1)e^{-\alpha t} = 1\ .
    \end{equation*}
    We use the Lambert $W$ function to solve this equation. Specifically, the Lambert $W$ function is defined as the inverse function of $f(w) = we^{w}$, i.e.,
    \begin{equation*}
        z = we^{w} \iff w = W(z)\ .
    \end{equation*}
    With this, let $u \coloneqq \alpha(t+V) + 1$. Then, the equation above can be rewritten as $-ue^{-u}=-e^{-(\alpha V + 1)}$. So,
    \begin{equation*}
        u = -W(-e^{-(\alpha V+1)})\ ,
    \end{equation*}
    and since $t = \frac{u}{\alpha} - ( V + \frac{1}{\alpha})$, we have
    \begin{equation*}
        t^*=-\frac{1}{\alpha}W(-e^{-(\alpha V + 1)})-(V+\frac{1}{\alpha})\ .
    \end{equation*}
    For $z \in (-e^{-1},0)$, there are two real branches: $W_0(z) \in (-1,0)$ and $W_{-1}(z) \in (-\infty,-1]$. Here $z=-e^{-(\alpha V+1)} \in (-e^{-1},0)$ and $u = \alpha(t+V)+1\geq1$. If we took $W_0$, we would have $u=-W_0(z) \in (0,1)$, a contradiction. Therefore, we take $W_{-1}$. Let $\mu \coloneqq \alpha V+1$. The expansion for the $-1$ branch as $\mu \to \infty$ is
    \begin{equation*}
        W_{-1}(z) = \ln(-z) - \ln(-\ln(-z)) + \frac{\ln(-\ln(-z))}{\ln(-z)} + \cdots\ .
    \end{equation*}
    Substitute $z=-e^{-\mu}$, we have
    \begin{equation*}
        W_{-1}(-e^{-\mu})=-\mu - \ln{\mu} - \frac{\ln{\mu}}{\mu} + \mathcal{O}\left(\frac{(\ln{\mu})^2}{\mu^2}\right)\ .
    \end{equation*}
    Plugging this equation into the equation of $t^*$, we have
    \begin{align*}
        t^*&=\frac{1}{\alpha}\left[ \ln{\mu} + \frac{\ln{\mu}}{\mu} + \mathcal{O}\left(\frac{(\ln{\mu})^2}{\mu^2} \right) \right] \\
         &= \frac{1}{\alpha}\ln(\alpha V + 1) + \frac{1}{\alpha} \frac{\ln(\alpha V + 1)}{\alpha V + 1} + \mathcal{O}\left( \frac{(\ln(\alpha V))^2}{\alpha (\alpha V)^2} \right)\ .
    \end{align*}
    With this, we now bound the range of $t^*$. First, recall that $(\alpha(t+V)+1)e^{-\alpha t} = 1$. We rearrange the equation of $t$ into
    \begin{equation*}
        \alpha t = \ln(\alpha(t+V) + 1)\ .
    \end{equation*}
    Let $y \coloneqq \alpha t$ and define $g(y) \coloneqq \ln(\alpha V + 1 +  y) - y$. Then, $g'(y) = \frac{1}{\alpha V + 1 + y} - 1 < 0$, so $g$ is strictly decreasing and has a unique zero $y^*$. Define $y_0 \coloneqq \ln(\alpha V + 1)$ and $y_1 \coloneqq \ln(2(\alpha V + 1))$. We now show that $y_0 < y^* < y_1$.

    At $y_0$, we have $g(y_0) = \ln(1+\frac{\ln(\alpha V + 1)}{\alpha V + 1}) > 0$. This implies $y^* > y_0$.

    At $y_1$, $g(y_1) = \ln(1/2 + \frac{\ln(2(\alpha V+1))}{2(\alpha V + 1)})$. For $x \geq 1$, $\ln(2x) \leq x$. Hence, $1/2+\ln(2x)/(2x) < 1$. Therefore, $g(y_1) < 0$ for $\alpha V> 0$. This implies $y^* < y_1$. Therefore, substitute $t = y / \alpha$, we have
    \begin{equation*}
        \frac{1}{\alpha}\ln(\alpha V + 1) < t^* < \frac{1}{\alpha}\ln(2(\alpha V+ 1))\ .
    \end{equation*}
    As such, we have a bound for $T_{\SD}$ as
    \begin{align*}
        T_{\SD} &= \frac{t^*+V}{f(t^*)} \times R    > \frac{\frac{1}{\alpha}\ln(\alpha V + 1) + V}{f(\frac{1}{\alpha} \ln(2(\alpha V + 1)))}\times R =  \frac{\frac{1}{\alpha}\ln(\alpha V+1) + V}{1-\frac{1}{2(\alpha V+1)}} \times R\ . \hfill \qedhere 
    \end{align*}
\end{proof}

On the other hand, under PSD, the expected time for verification is
\begin{equation*}
    \begin{aligned}
        \textstyle T_{\PSD} &\coloneqq \min_t \EE{\sum_{s=1}^T\max(V, t)} \\ 
        &= \min_t \EE{T}\max{(V, t)} \\
        &= \min_t \frac{R\max(V, t)}{f(t)}\ .
    \end{aligned}
\end{equation*}
While it is obvious from the two definitions that $T_{\PSD} < T_{\SD}$, we prove a stronger result:
\begin{restatable}{lem}{psd}
    \label{prop:psd}
    Similarly, let $f(t) = 1 - e^{-\alpha t}$ for some constant $\alpha \in \mathbb{R}^+$. We have that
    \begin{equation*}
        T_{\PSD} = \frac{RV}{1-e^{-\alpha V}}\ .
    \end{equation*}
\end{restatable}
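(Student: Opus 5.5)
We reduce the claim to a one-dimensional minimization. As computed just above, $T_{\PSD} = \min_{t>0} h(t)$, where
\begin{equation*}
    h(t) \coloneqq \frac{R\max(V,t)}{f(t)} = \frac{R\max(V,t)}{1-e^{-\alpha t}}.
\end{equation*}
The plan is to split the domain at $t=V$, where the $\max$ switches branches. We then show that $h$ is nonincreasing on $(0,V]$ and nondecreasing on $[V,\infty)$. This makes $t=V$ the global minimizer, and substituting it gives $h(V) = RV/(1-e^{-\alpha V})$.

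\textbf{Case $0<t\le V$.} Here $h(t) = RV/f(t)$. By the monotonicity of $f$ noted in the proof of \cref{prop:sd}, $f'(t)=\alpha e^{-\alpha t}>0$, so $f$ is strictly increasing and positive on $(0,\infty)$. Hence $h$ is strictly decreasing on $(0,V]$. Its infimum there is attained at $t=V$, with value $RV/(1-e^{-\alpha V})$. As $t\to0^+$ we have $h(t)\to\infty$, so small drafting budgets are never optimal.

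\textbf{Case $t\ge V$.} Here $h(t)=Rt/(1-e^{-\alpha t})$, and it suffices to show that $\phi(t)\coloneqq t/(1-e^{-\alpha t})$ is increasing on $(0,\infty)$. Differentiating, the sign of $\phi'(t)$ equals the sign of
\begin{equation*}
    N(t) \coloneqq 1-e^{-\alpha t}-\alpha t e^{-\alpha t}.
\end{equation*}
We have $N(0)=0$ and $N'(t)=\alpha^2 t e^{-\alpha t}>0$ for $t>0$, so $N(t)>0$ on $(0,\infty)$. An equivalent check is the inequality $e^{y}>1+y$ with $y=\alpha t$. Therefore $h$ is strictly increasing on $[V,\infty)$, and its minimum on that piece is again $h(V)$.

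Combining the two cases, $h$ attains its global minimum uniquely at $t^*=V$, which yields $T_{\PSD}=RV/(1-e^{-\alpha V})$. The only step needing care is the sign of $\phi'$ on the second branch, and that is the elementary argument above. The first branch relies only on monotonicity of $f$. We also note that the identity $\mathbb{E}[T]=R/f(t)$ is taken as given from the setup, since $\max(V,t)$ is deterministic once $t$ is fixed.
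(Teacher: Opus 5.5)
Your proof is correct and follows the paper's argument essentially step for step. You split at $t=V$, note that $V/f(t)$ is strictly decreasing on the first piece, and show $t/f(t)$ is strictly increasing on the second via the sign of $f(t)-tf'(t)=1-e^{-\alpha t}-\alpha t e^{-\alpha t}$, which gives the unique minimizer $t^*=V$. The only cosmetic difference is how positivity of that numerator is shown: you use $N(0)=0$ and $N'(t)=\alpha^2 t e^{-\alpha t}>0$, whereas the paper multiplies through by $e^{\alpha t}$ and invokes $e^{x}>1+x$ directly (which you also mention as an equivalent check).
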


\begin{proof}
    We minimize $h(t) \coloneqq \frac{\max(V,t)}{f(t)}$ over $t > 0$. For $t < V$, we have $\max(V,t) = V$, so $h(t) = V/f(t)$, which is strictly decreasing since $f$ is strictly increasing. For $t \geq V$, we have $\max(V,t) = t$, so $h(t) = t/f(t)$. Computing the derivative, $h'(t) > 0$ if and only if $f(t) > tf'(t)$, i.e., $(1 - e^{-\alpha t}) > \alpha t e^{-\alpha t}$, which after multiplying through by $e^{\alpha t}$ is equivalent to $e^{\alpha t} - 1 > \alpha t$. This holds for all $t > 0$ by the standard inequality $e^x > 1 + x$ applied with $x = \alpha t > 0$, so $h$ is strictly increasing on $[V, \infty)$. Since $h$ is strictly decreasing on $(0, V)$ and strictly increasing on $[V, \infty)$, the unique global minimum is attained at $t^* = V$.
    
    \begin{equation*}
        T_{\PSD} = \min_{t} \frac{R\max(V, t)}{f(t)} = \frac{RV}{1-e^{-\alpha V}}\ . \qedhere 
    \end{equation*}
\end{proof}

\ratio*
\begin{proof}
    Combining \cref{prop:sd} and \cref{prop:psd},
    \begin{equation*}
        \frac{T_{\SD}}{T_{\PSD}} > \underbrace{\frac{\frac{1}{\alpha}\ln(\alpha V + 1) + V}{V}\cdot\frac{1-e^{-\alpha V}}{1-\frac{1}{2(\alpha V + 1)}}}_{=:\, L(\alpha V)}\ ,
    \end{equation*}
    where the strict inequality follows from the lower bound on $T_{\SD}$ in \cref{prop:sd} and $T_{\PSD}$ is exact by \cref{prop:psd}. Note that $L$ depends on $\alpha$ and $V$ only through the product $\alpha V$. Maximizing $L(\cdot)$ over $\alpha V > 0$ (solving $\mathrm{d}L/\mathrm{d}(\alpha V) = 0$ numerically) gives the maximizer $\alpha V \approx 2.23$, at which $L(\alpha V) \approx 1.6111$. Hence, at $\alpha V \approx 2.23$,
    \begin{equation*}
        \frac{T_{\SD}}{T_{\PSD}} > L(\alpha V) \approx 1.6111 \implies T_{\SD} \gtrsim 1.61\, T_{\PSD}. \qedhere
    \end{equation*}
\end{proof}
    
    \section{Additional Experiment Results}
    \label{sec:appendix}

\subsection{Experiment Setups}
\label{app:exp-setup}
Unless otherwise specified, we use PyTorch~2.7.0~\citep{pytorch27}, HuggingFace Transformers~4.53.3~\citep{transformers4533}, and vLLM~0.9.2~\citep{vllm092}, with temperature set to $1.0$, one sequence per request $(n=1)$, and $256$ output tokens per sequence (ignoring the end-of-sequence token). Target models are served with a tensor parallelism (TP) size of $4$, and draft models with TP set to $1$.

\subsection{Example Lifespan Illustration}
\label{app:example-illust}
\para{Timeline illustration.} 
\cref{fig:bp_timeline} illustrates a Batch Parallelism lifespan with $m=4$ and 12 total requests. In the first SD step, Batch~$0$ serves as the target batch and Batch~$1$ as the draft batch. The target batch is drafted and then verified in SD step~$1$. While the target batch is being verified in SD step $1$, the draft batch is drafted in parallel. In subsequent SD steps, the two batches alternate states as target and draft batches, enabling concurrent drafting and verification. This alternating pattern continues until the target batch of an SD step lacks draft tokens (e.g., SD step~$10$ in \cref{fig:bp_timeline}), at which point the fallback to standard SD is triggered.

\para{KV block allocation illustration.}
\cref{fig:bp_scheduler} depicts the same lifespan from the Scheduler's perspective, illustrating the patched KV block allocation logic. In the first SD step, KV blocks are allocated for all requests in both the target and draft batches. From SD step~$2$ through SD step~$8$, the Scheduler allocates KV blocks only for the draft batch, skipping allocation for the target batch. At SD step~$9$, one batch becomes empty; since this is the first such occurrence, the patched logic continues to skip allocation. From SD step~$10$, the Scheduler resumes allocating KV blocks to all running requests at each step until all requests are completed.

\begin{figure}[!ht]
  \centering
  \begin{minipage}[t]{0.48\linewidth}
    \centering
    \includegraphics[width=\linewidth]{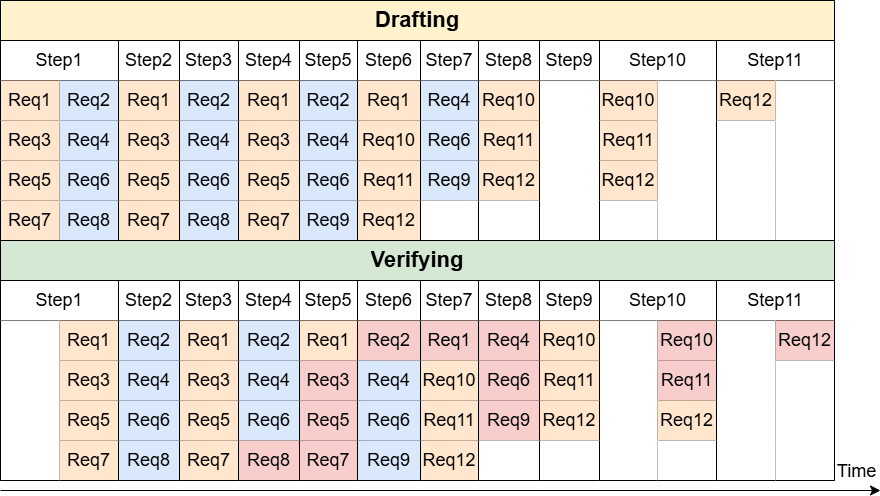}
    \caption{
        \textbf{Timeline of an example Batch Parallelism lifespan.} Requests with an orange background belong to Batch~$0$, those with a blue background belong to Batch~$1$, and those with a red background have completed execution.
    }
    \label{fig:bp_timeline}
  \end{minipage}\hfill 
  \begin{minipage}[t]{0.48\textwidth}
    \centering
    \includegraphics[width=\linewidth]{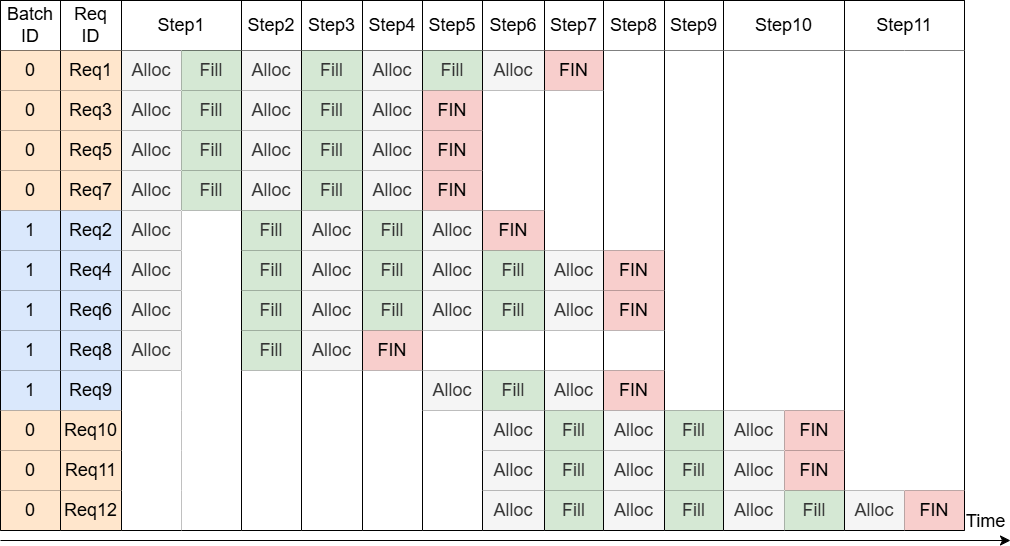}
    \caption{
        \textbf{Scheduler perspective of the example Batch Parallelism lifespan.} At each time point, \texttt{Alloc} indicates that the Scheduler allocates KV blocks for a request, \texttt{Fill} indicates that the allocated blocks are populated with updated KV vectors, and \texttt{FIN} indicates that the request has completed.
    }
    \label{fig:bp_scheduler}
  \end{minipage}
\end{figure}

\begin{figure}[!ht]
    \centering
    \setlength{\tabcolsep}{1pt} 
    \resizebox{0.99\linewidth}{!}{
    \begin{tabular}{ccccc}
        & \hspace{8mm}\textbf{Arena} & \hspace{8mm}\textbf{ShareGPT} & \hspace{8mm}\textbf{Spec-Bench}  & \hspace{8mm}\textbf{Tough}\\
    
        \rotatebox{90}{\parbox{3.5cm}{\centering \hspace{8mm}\textbf{Vicuna~13B-EAGLE}}} &
        \includegraphics[width=0.23\linewidth]{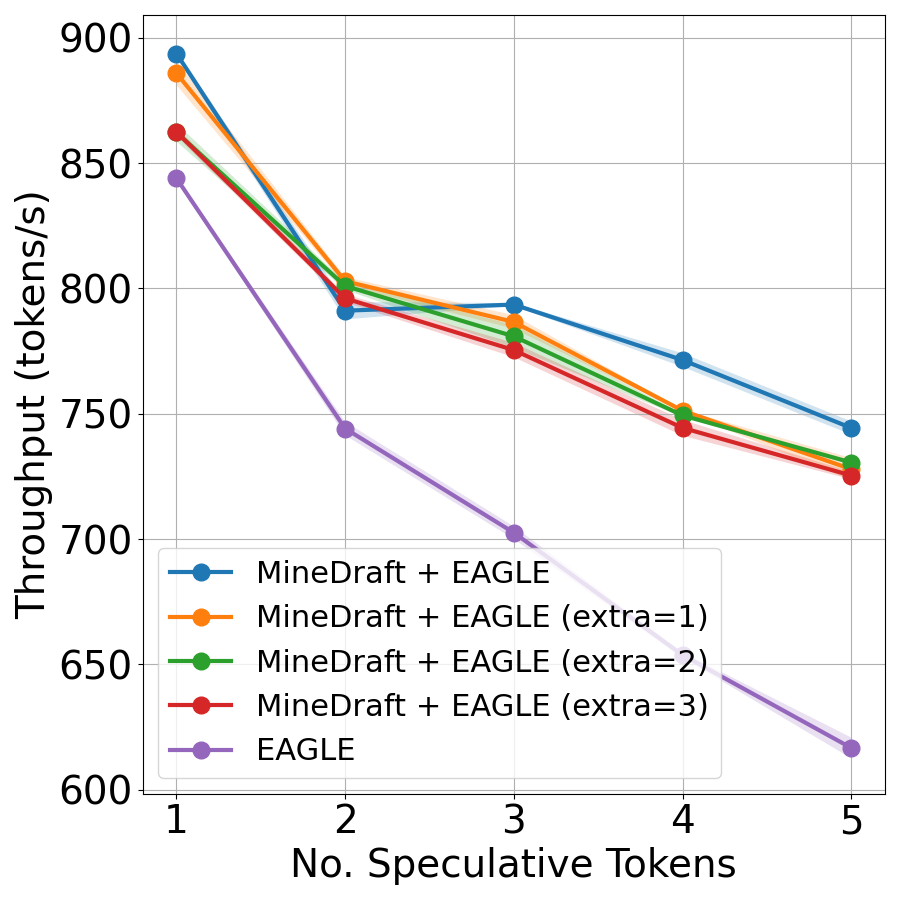} &
        \includegraphics[width=0.23\linewidth]{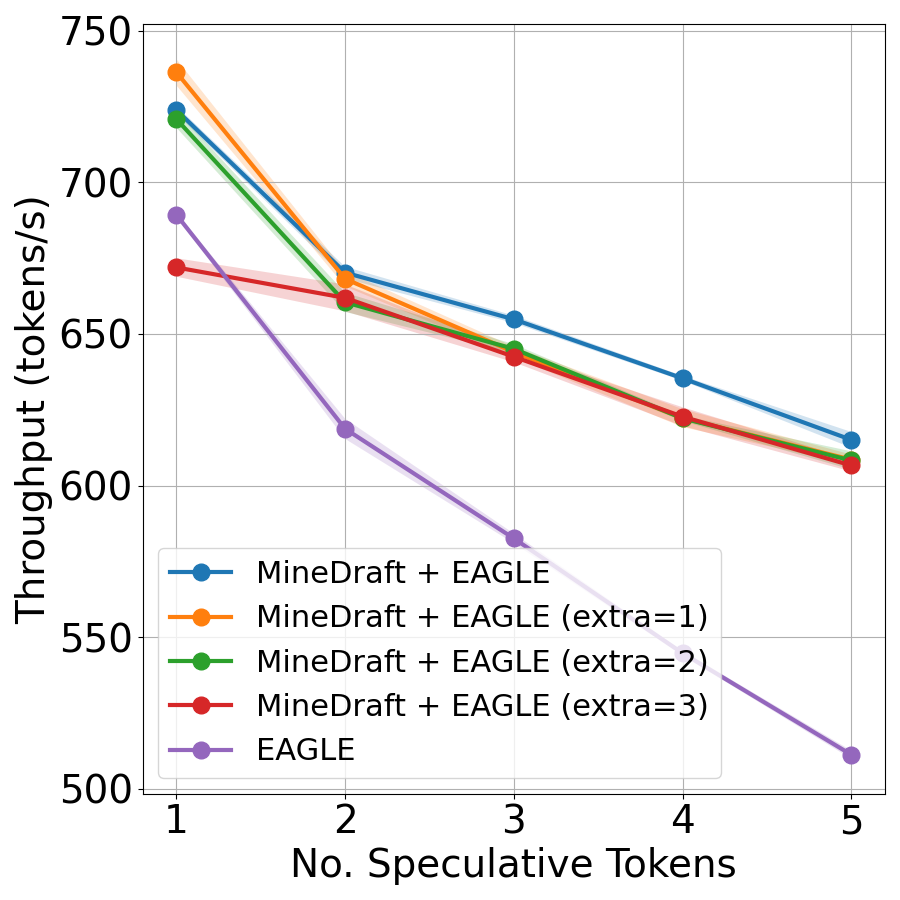} &
        \includegraphics[width=0.23\linewidth]{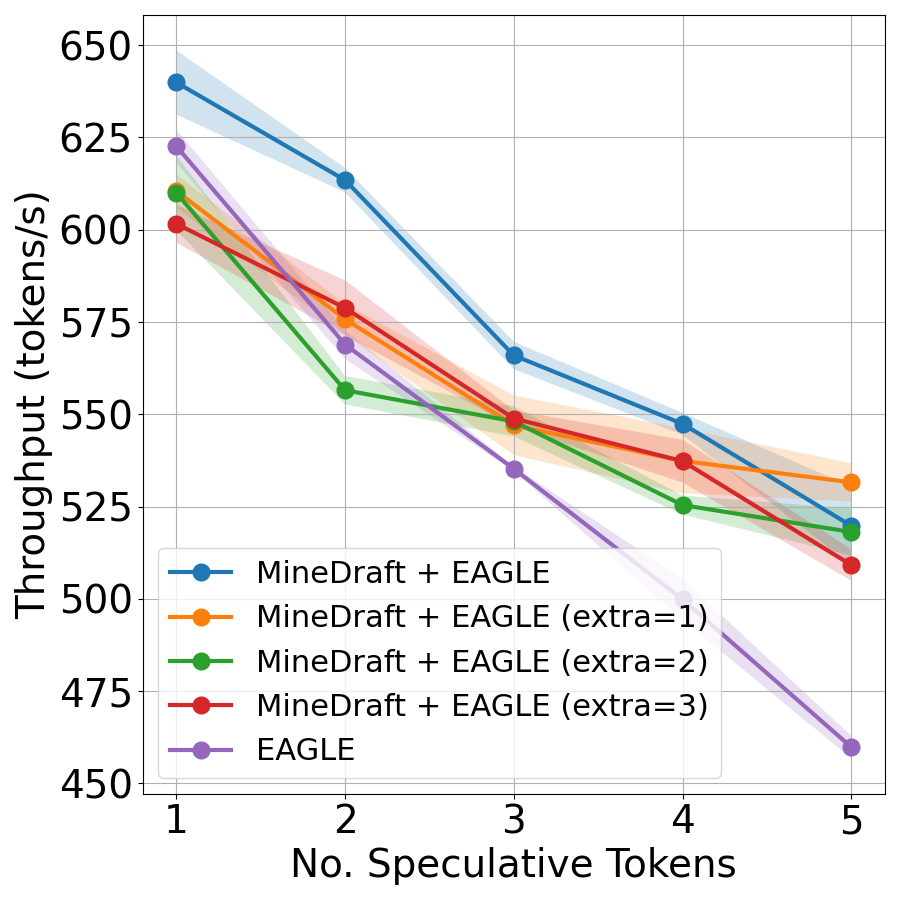} &
        \includegraphics[width=0.23\linewidth]{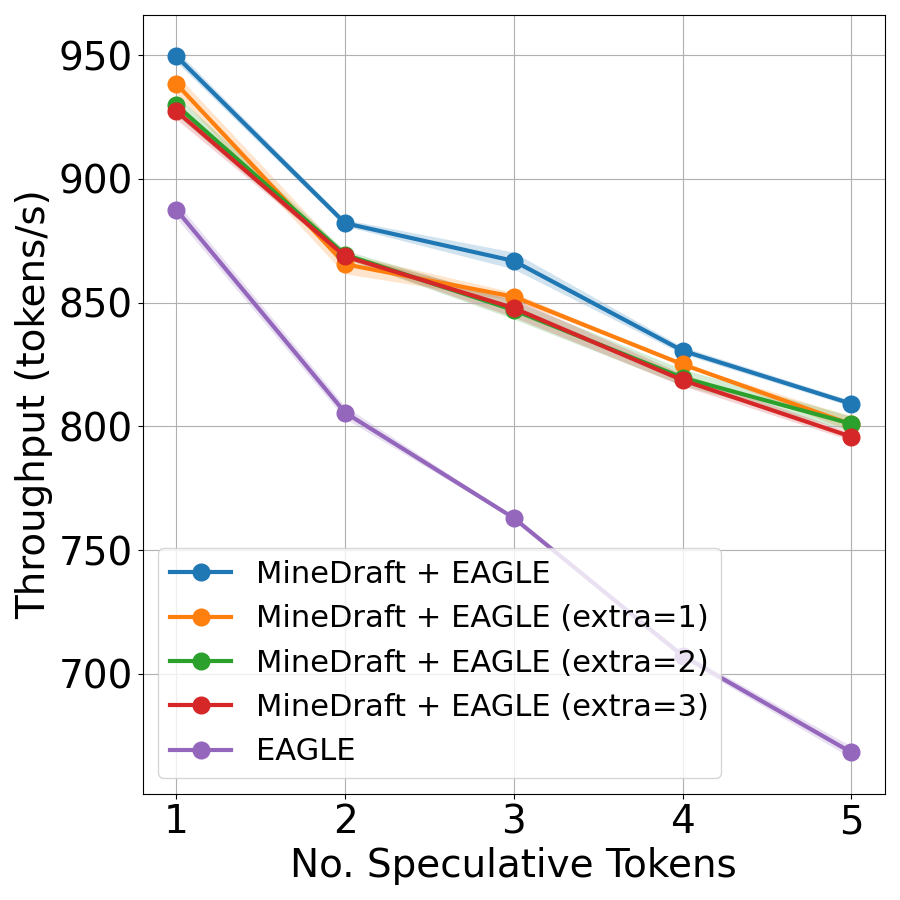} \\
        & \hspace{2mm}(a) \hspace{2mm} $\uparrow$ 5.89\%, $\Delta$ 20.71\% &
        \hspace{2mm}(b) \hspace{2mm} $\uparrow$ 6.82\%, $\Delta$ 20.31\% &
        \hspace{2mm}(c) \hspace{2mm} $\uparrow$ 2.78\%, $\Delta$ 15.60\% &
        \hspace{2mm}(d) \hspace{2mm} $\uparrow$ 6.99\%, $\Delta$ 21.08\% \\

    \end{tabular}}
    \caption{Throughput comparison on Setting 7. $\uparrow$ indicates the average improvement over the best baseline method. $\Delta$ indicates the maximum average gap between \alg{} and EAGLE. \alg{} consistently outperforms EAGLE, improving average throughput by up to 6.99\% over the best case of EAGLE and by up to 21.08\% across all tested $k$.
    }
    \label{fig:throughput-eagle-13b}
\end{figure}

\begin{figure}[!ht]
    \centering
    \setlength{\tabcolsep}{1pt} 
    \resizebox{0.99\linewidth}{!}{
    \begin{tabular}{ccccc}
        & \hspace{8mm}\textbf{Arena} & \hspace{8mm}\textbf{ShareGPT} & \hspace{8mm}\textbf{Spec-Bench}  & \hspace{8mm}\textbf{Tough}\\
        
        \rotatebox{90}{\parbox{3.5cm}{\centering \hspace{8mm}\textbf{Qwen3~32B-0.6B}}} &
        \includegraphics[width=0.23\linewidth]{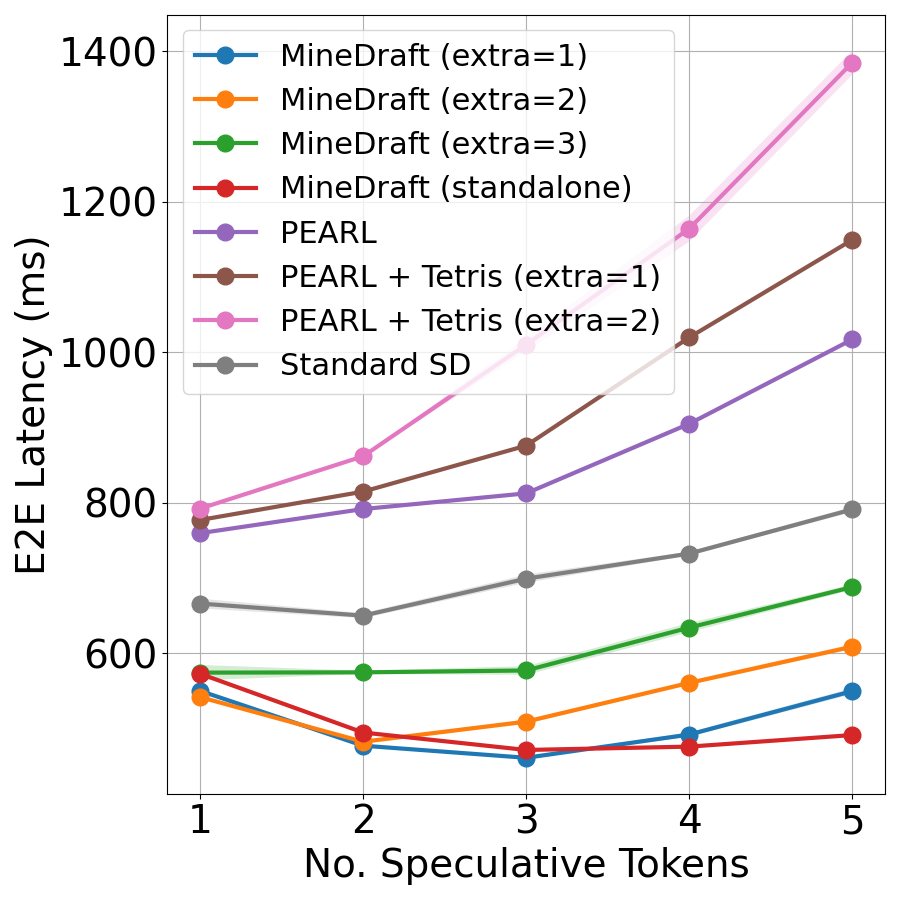} &
        \includegraphics[width=0.23\linewidth]{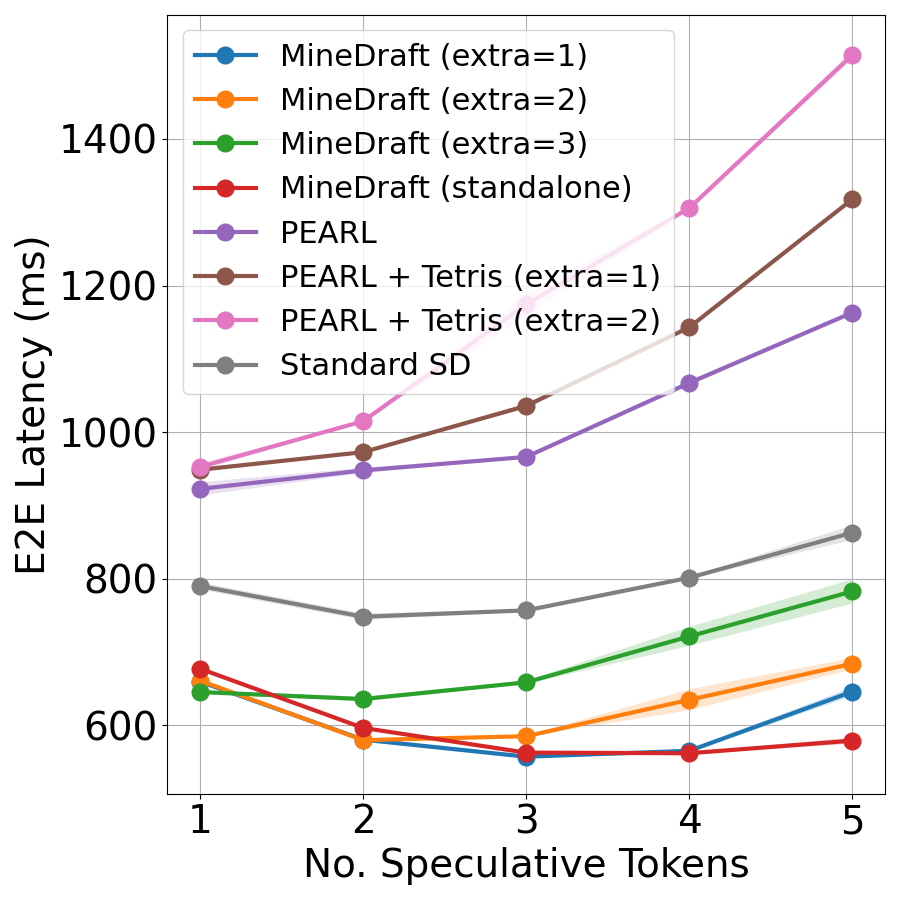} &
        \includegraphics[width=0.23\linewidth]{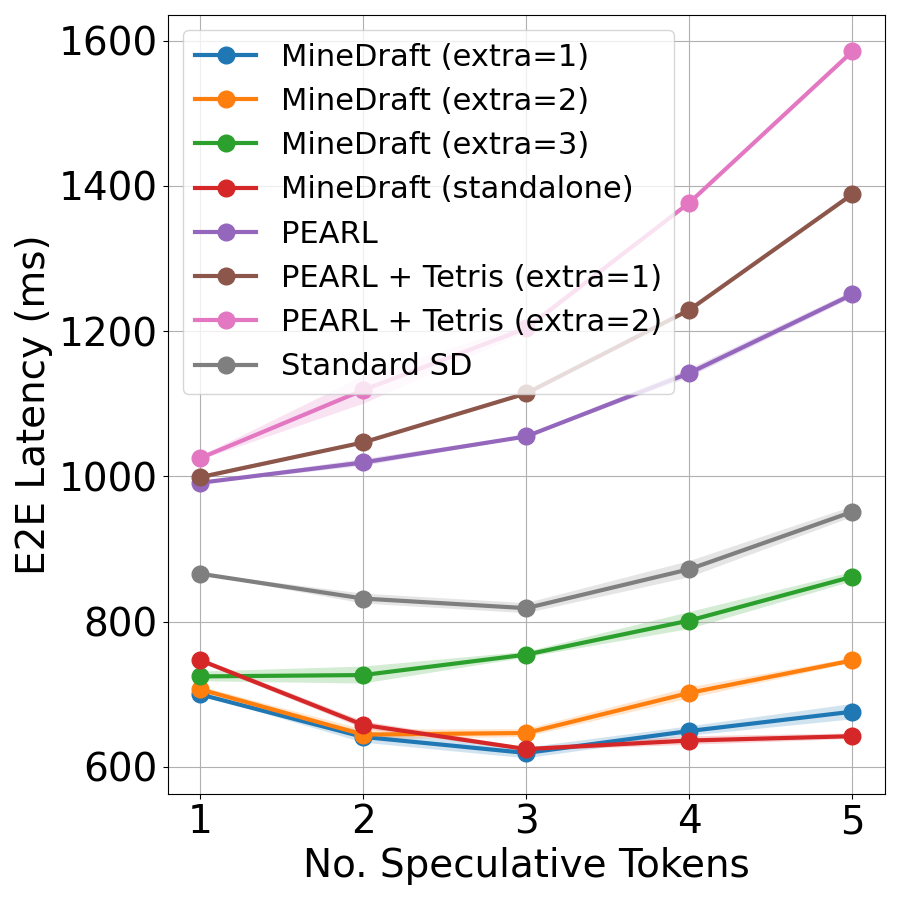} &
        \includegraphics[width=0.23\linewidth]{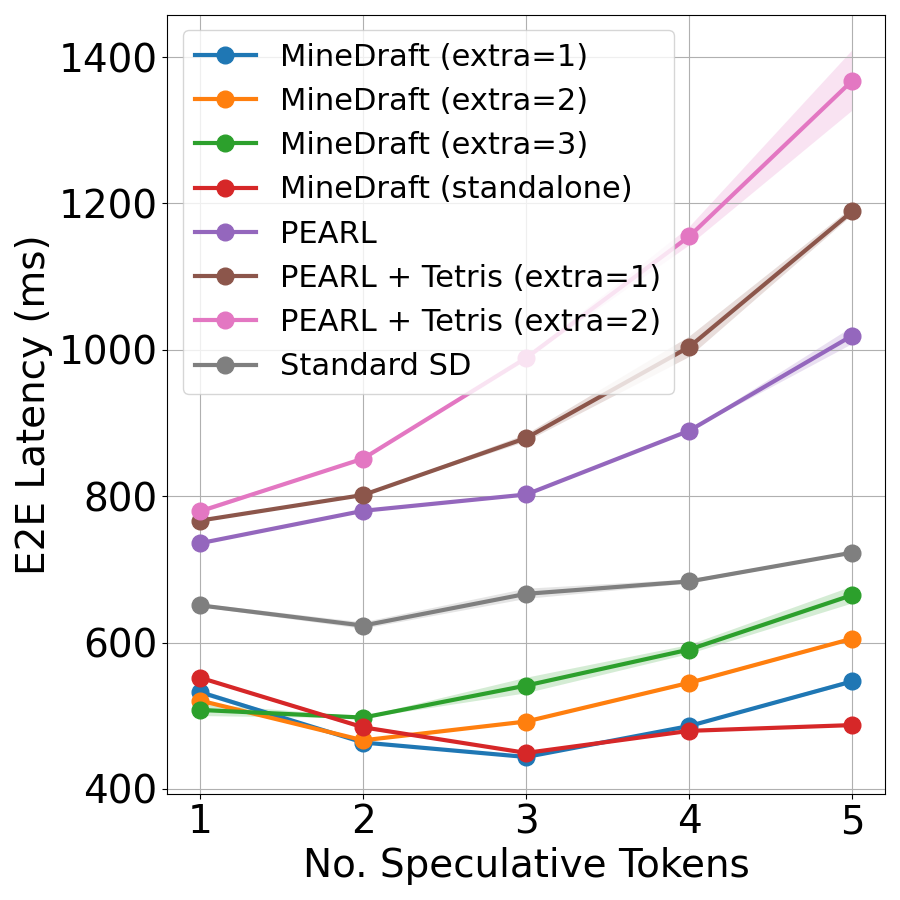} \\
        & \hspace{2mm}(a) \hspace{2mm} $\uparrow$ 26.58\%, $\Delta$ 37.89\% &
        \hspace{2mm}(b) \hspace{2mm} $\uparrow$ 22.49\%, $\Delta$ 32.87\% &
        \hspace{2mm}(c) \hspace{2mm} $\uparrow$ 24.37\%, $\Delta$ 32.49\% &
        \hspace{2mm}(d) \hspace{2mm} $\uparrow$ 25.63\%, $\Delta$ 33.44\% \\
        
        \rotatebox{90}{\parbox{3.5cm}{\centering \hspace{8mm}\textbf{Qwen3~32B-4B}}} &
        \includegraphics[width=0.23\linewidth]{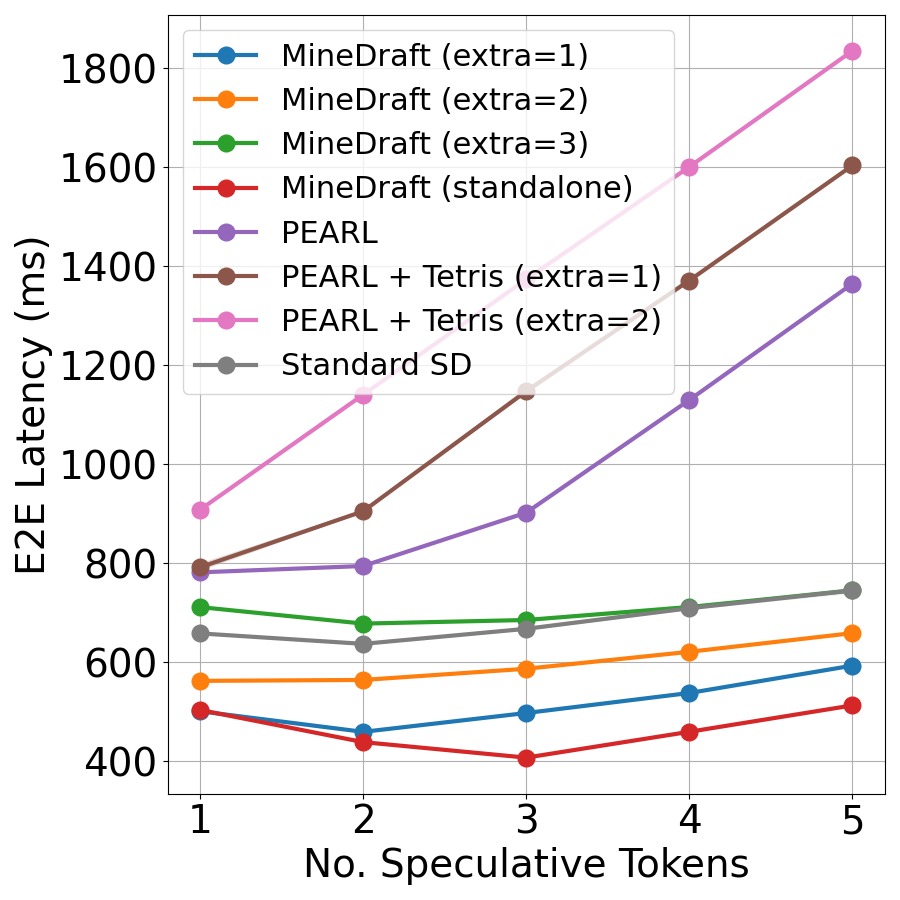} &
        \includegraphics[width=0.23\linewidth]{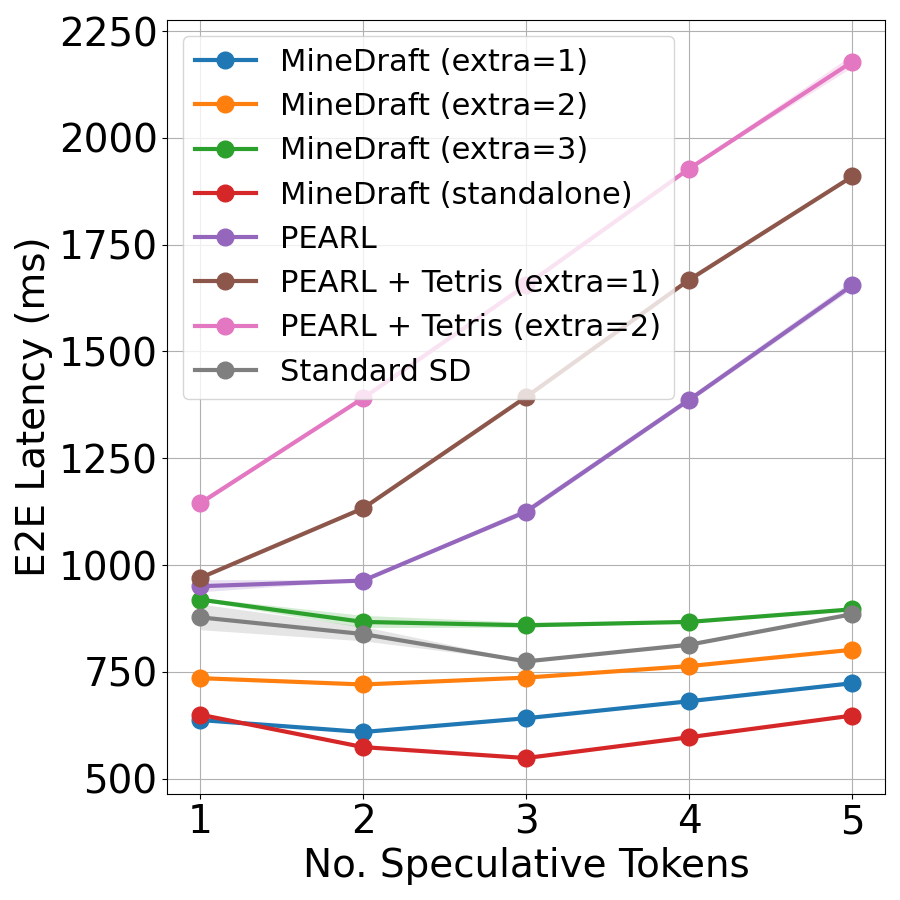} &
        \includegraphics[width=0.23\linewidth]{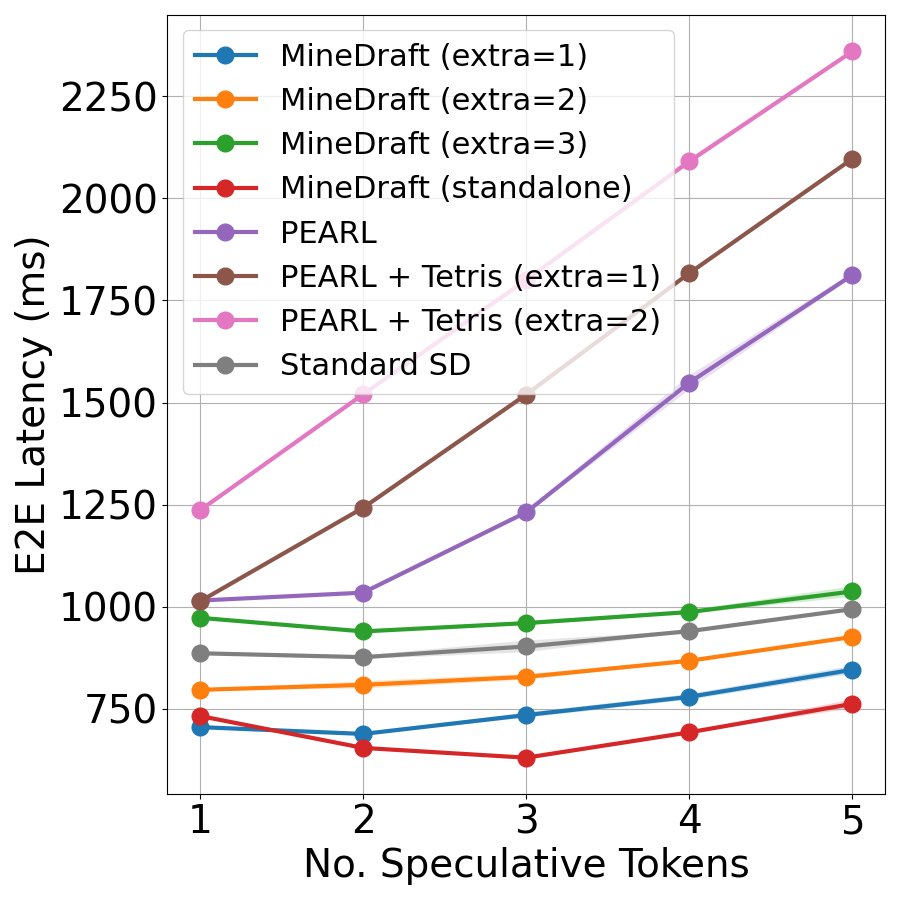} &
        \includegraphics[width=0.23\linewidth]{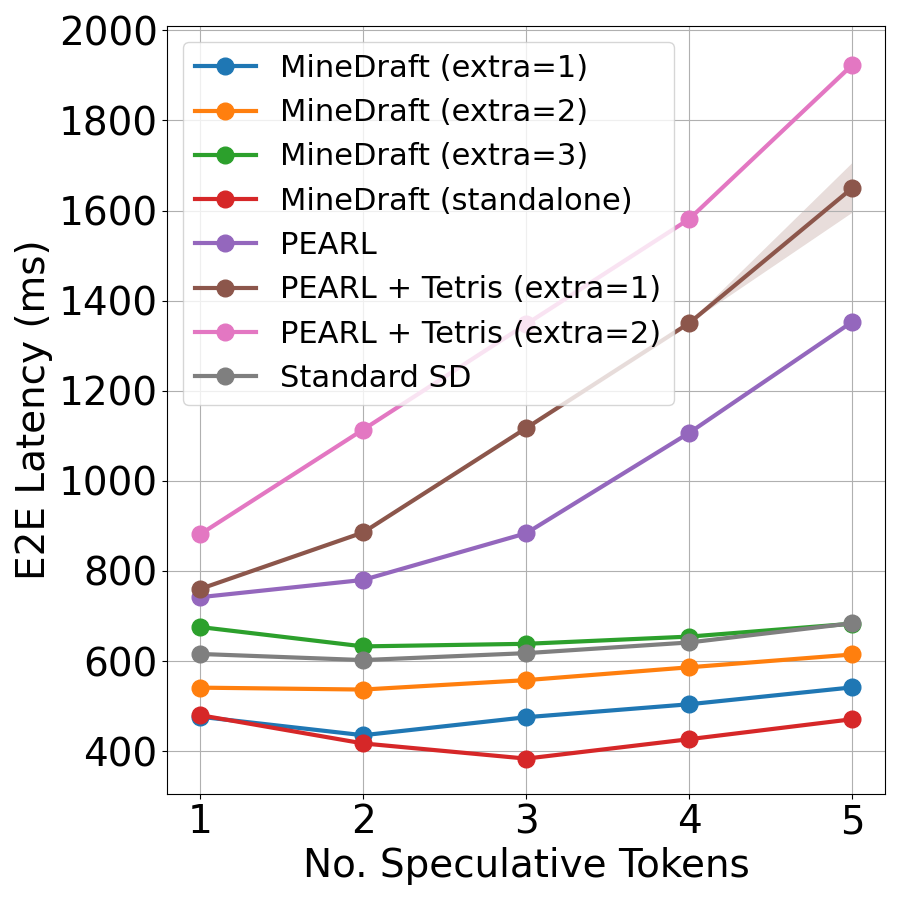} \\
        & \hspace{2mm}(e) \hspace{2mm} $\uparrow$ 31.13\%, $\Delta$ 38.94\% &
        \hspace{2mm}(f) \hspace{2mm} $\uparrow$ 29.22\%, $\Delta$ 31.57\% &
        \hspace{2mm}(g) \hspace{2mm} $\uparrow$ 25.36\%, $\Delta$ 30.18\% &
        \hspace{2mm}(h) \hspace{2mm} $\uparrow$ 30.73\%, $\Delta$ 37.92\% \\

        \rotatebox{90}{\parbox{3.5cm}{\centering \hspace{8mm}\textbf{Qwen3~32B-8B}}} & 
        \includegraphics[width=0.23\linewidth]{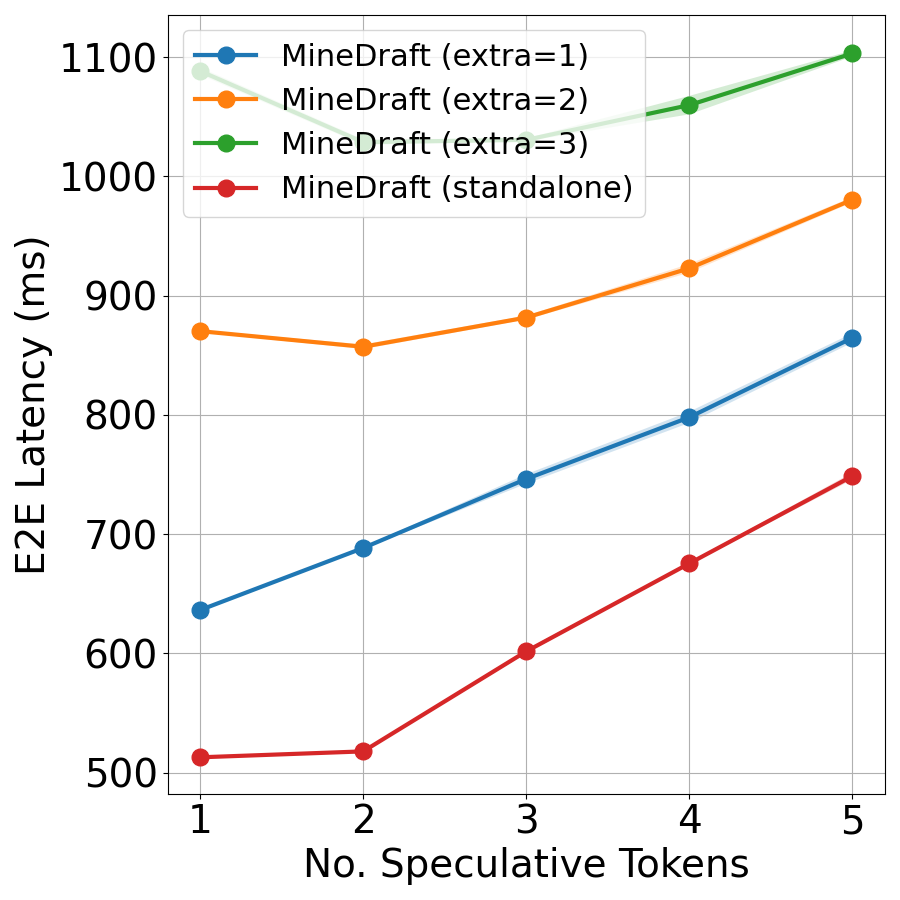} &
        \includegraphics[width=0.23\linewidth]{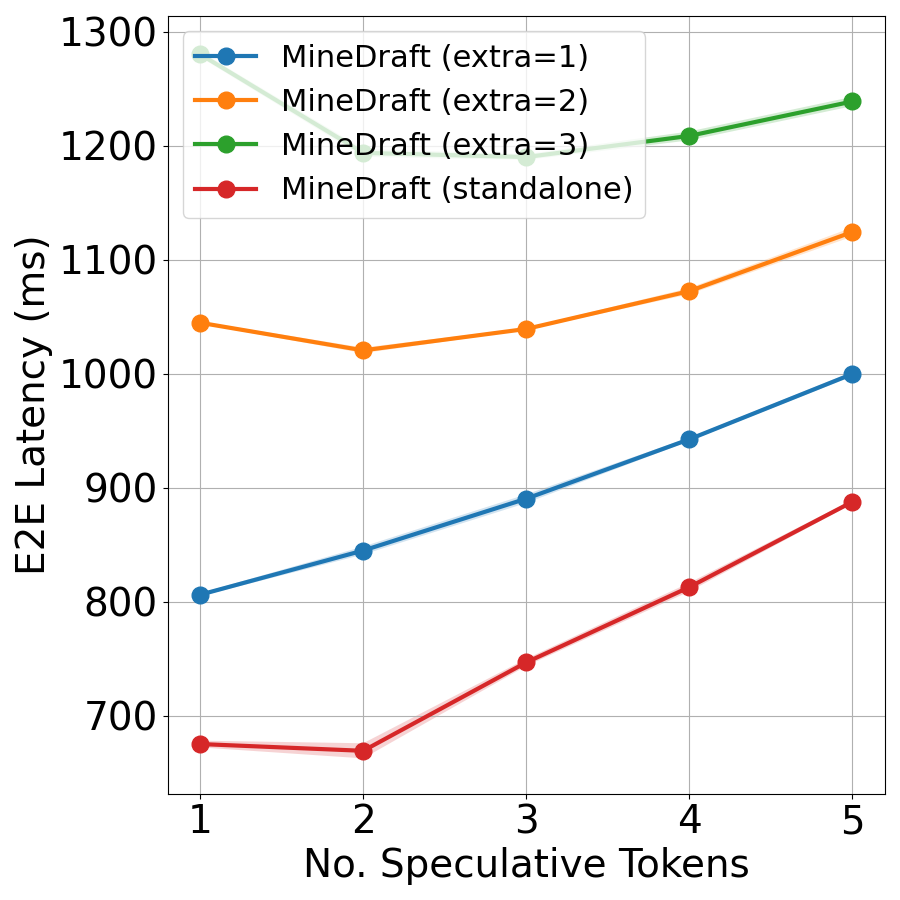} &
        \includegraphics[width=0.23\linewidth]{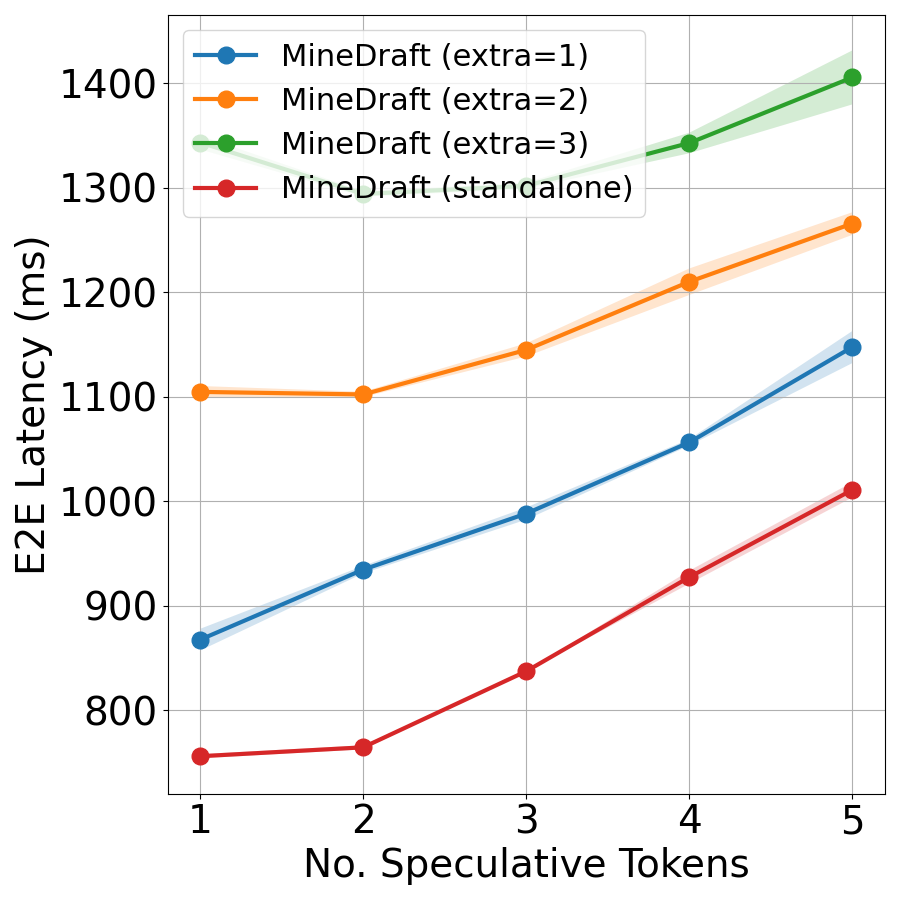} &
        \includegraphics[width=0.23\linewidth]{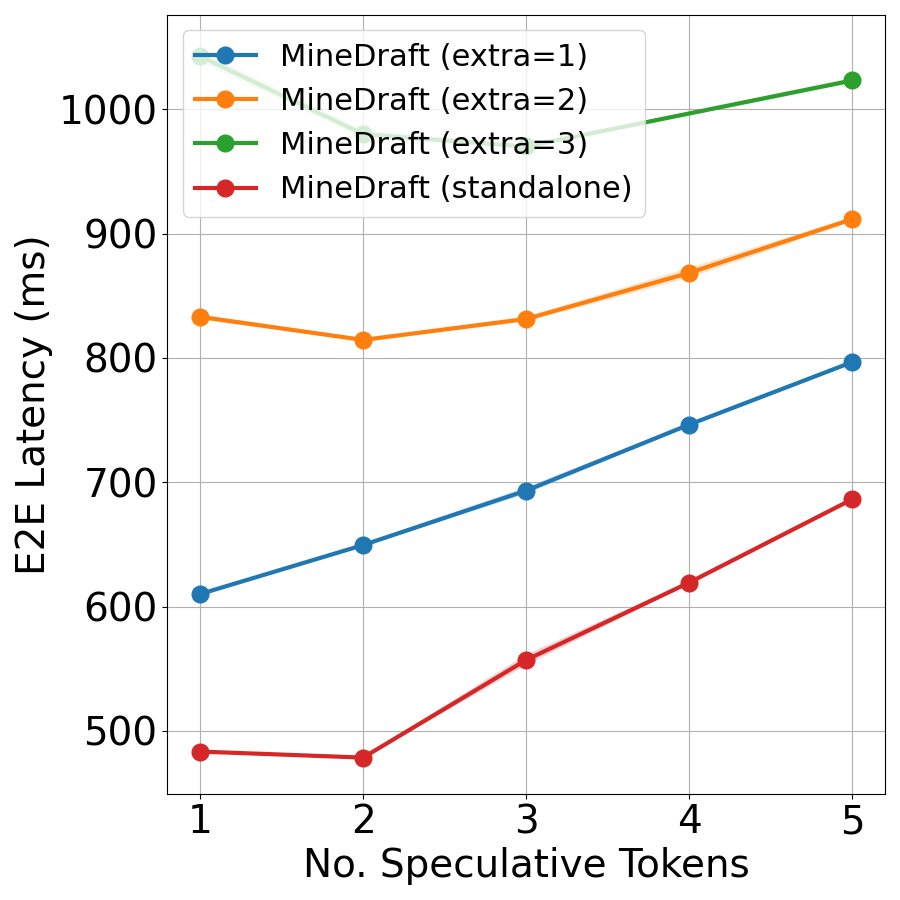} \\
    \end{tabular}}
    \caption{End-to-end latency comparison against baseline methods across Settings 1, 3, and 4. $\uparrow$ indicates the average improvement over the best baseline method. $\Delta$ indicates the maximum average gap between \alg{} and standard SD. \alg{} consistently outperforms baselines, achieving a reduction in latency of up to 31.13\% compared to the best-performing baseline and up to 38.94\% compared to standard SD. Standard SD fails in Setting~4 due to OOM.
    }
    \label{fig:e2elatency-baseline}
\end{figure}

\begin{figure}[!ht]
    \centering
    \setlength{\tabcolsep}{1pt} 
    \resizebox{0.99\linewidth}{!}{
    \begin{tabular}{ccccc}
        & \hspace{8mm}\textbf{Arena} & \hspace{8mm}\textbf{ShareGPT} & \hspace{8mm}\textbf{Spec-Bench}  & \hspace{8mm}\textbf{Tough}\\
        \rotatebox{90}{\parbox{3.5cm}{\centering \hspace{8mm}\textbf{Llama-3~70B-8B}}} & 
        \includegraphics[width=0.23\linewidth]{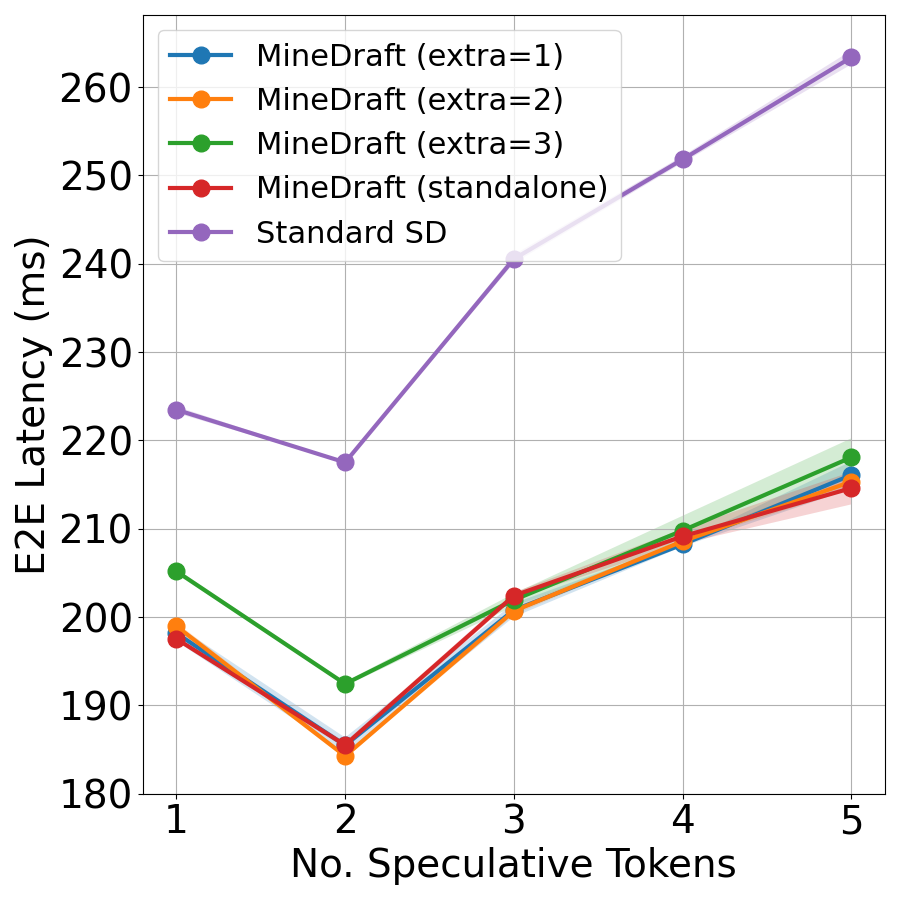} &
        \includegraphics[width=0.23\linewidth]{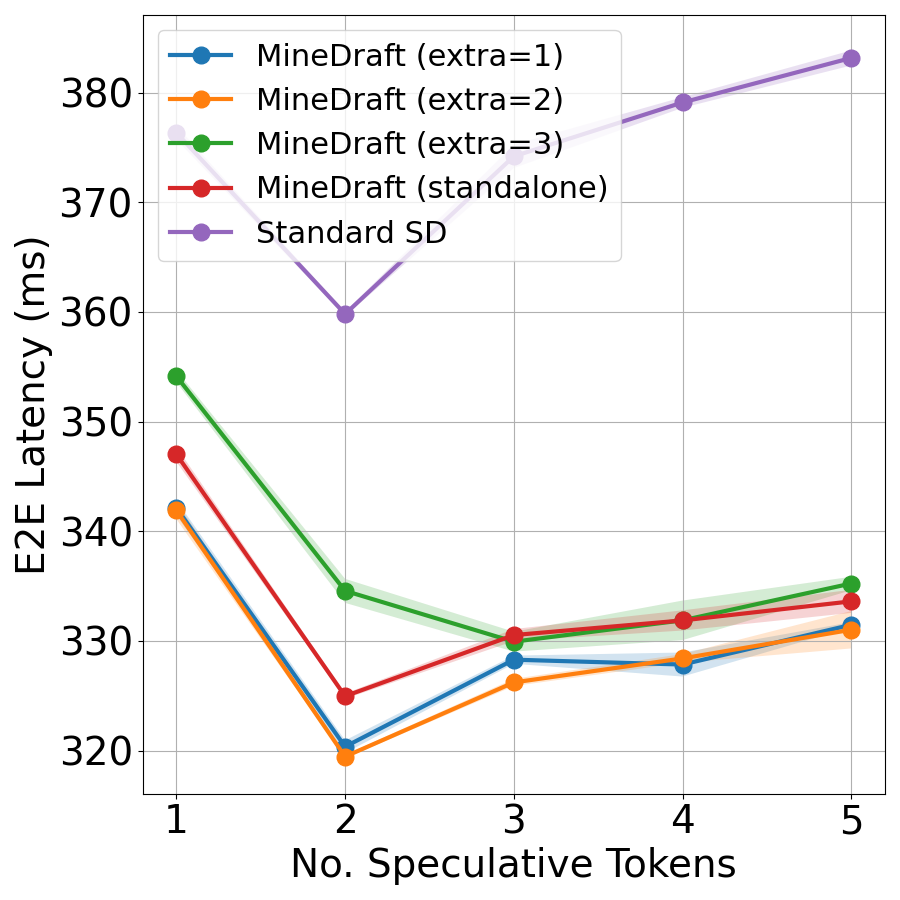} &
        \includegraphics[width=0.23\linewidth]{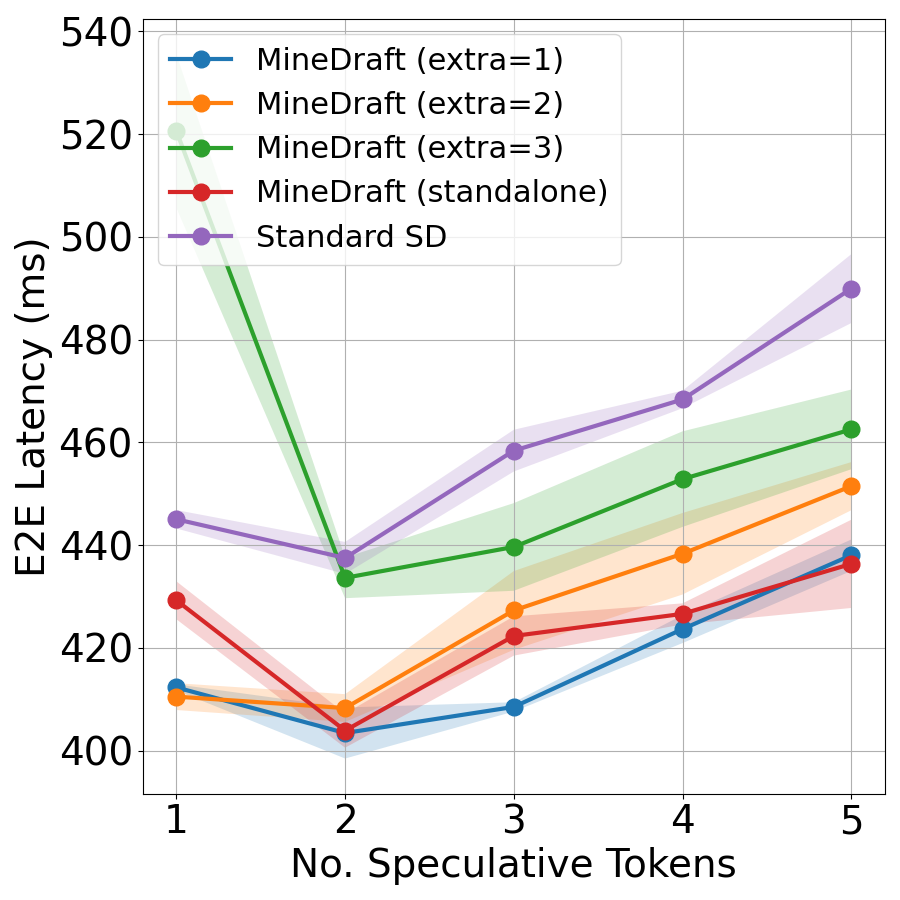} &
        \includegraphics[width=0.23\linewidth]{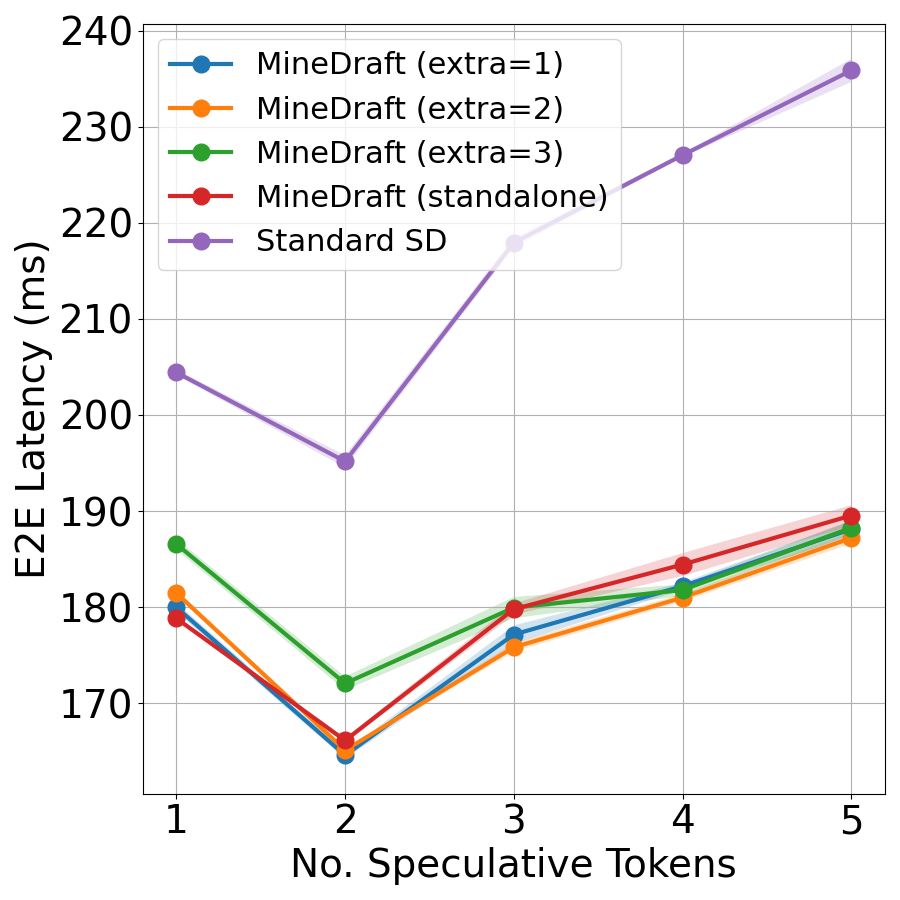} \\
        & (a) \hspace{2mm} $\uparrow$ 15.28\%, $\Delta$ 18.52\% &
        (b) \hspace{2mm} $\uparrow$ 11.22\%, $\Delta$ 13.61\% &
        (c) \hspace{2mm} $\uparrow$ 7.79\%, $\Delta$ 10.93\% &
        (d) \hspace{2mm} $\uparrow$ 15.67\%, $\Delta$ 20.63\% \\

        \rotatebox{90}{\parbox{3.5cm}{\centering \hspace{8mm}\textbf{Vicuna~33B-EAGLE}}} &
        \includegraphics[width=0.23\linewidth]{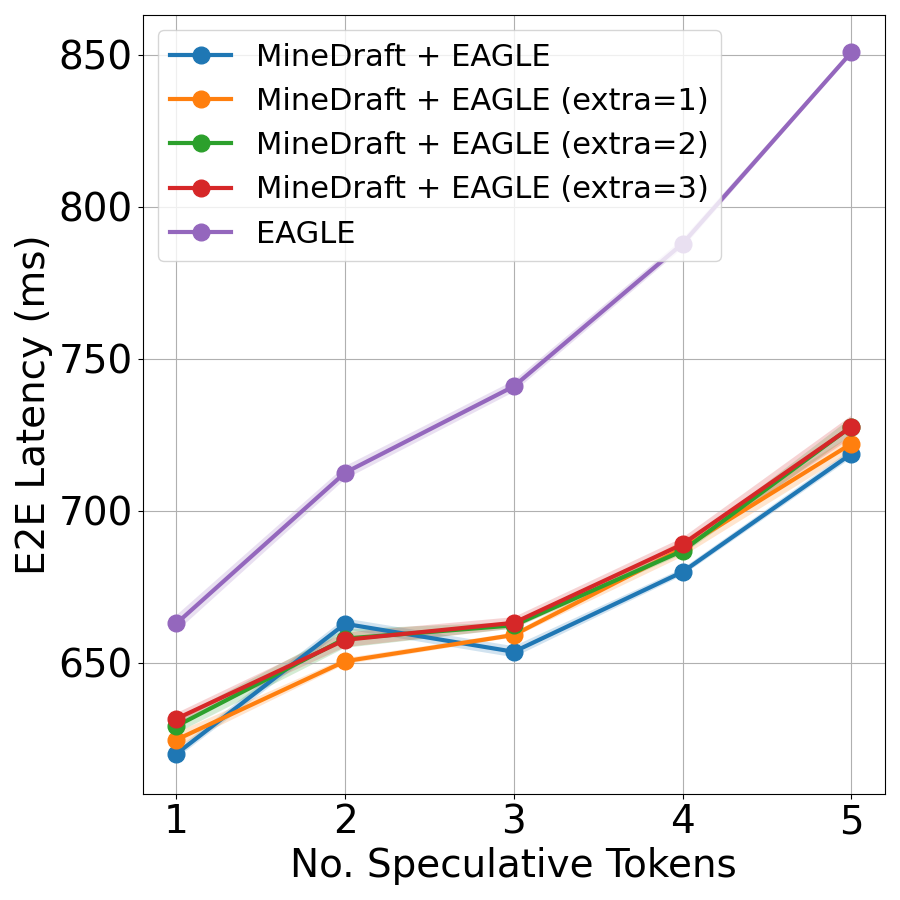} &
        \includegraphics[width=0.23\linewidth]{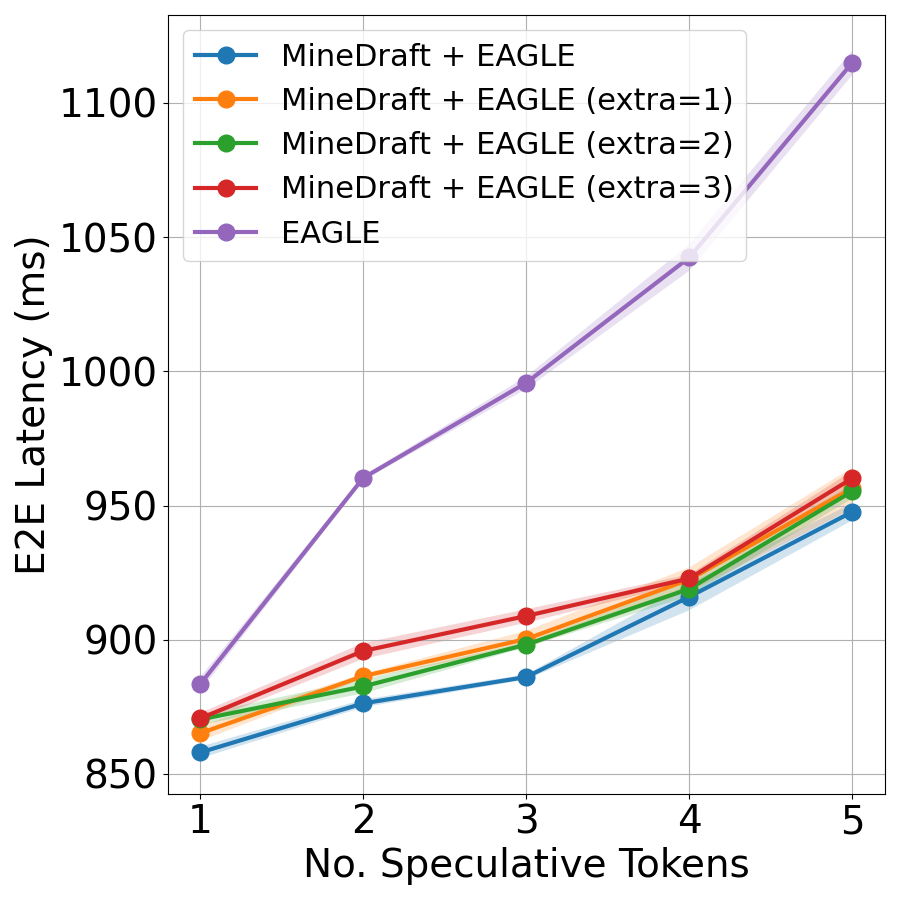} &
        \includegraphics[width=0.23\linewidth]{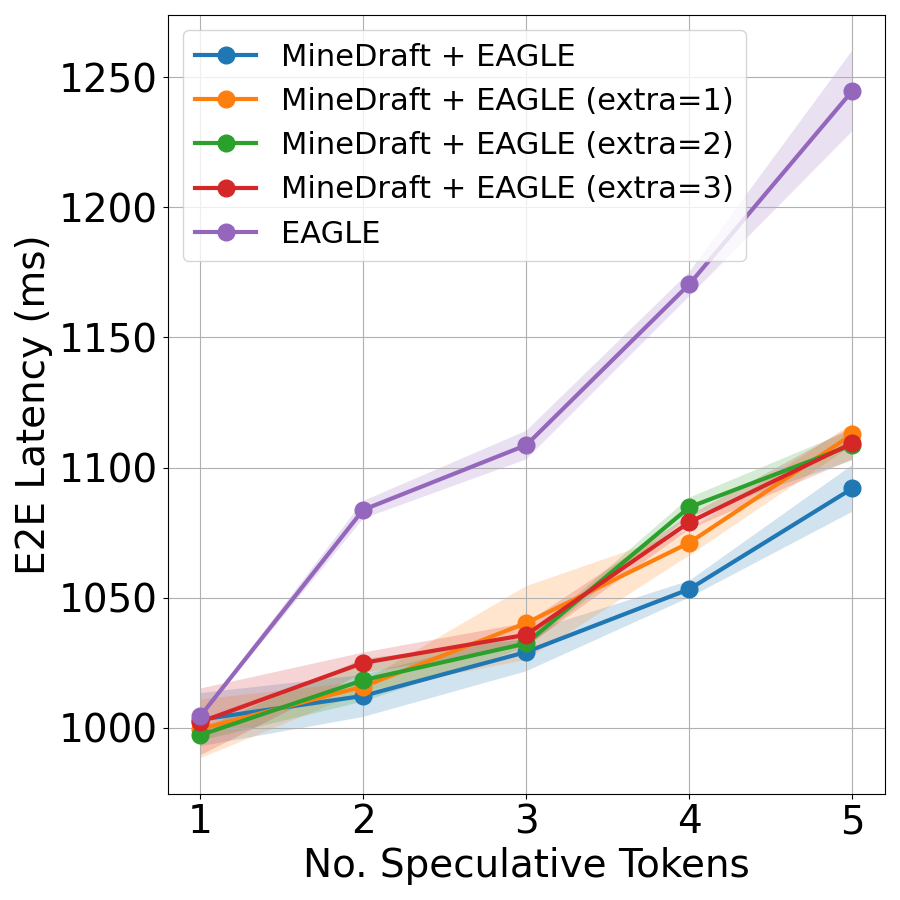} &
        \includegraphics[width=0.23\linewidth]{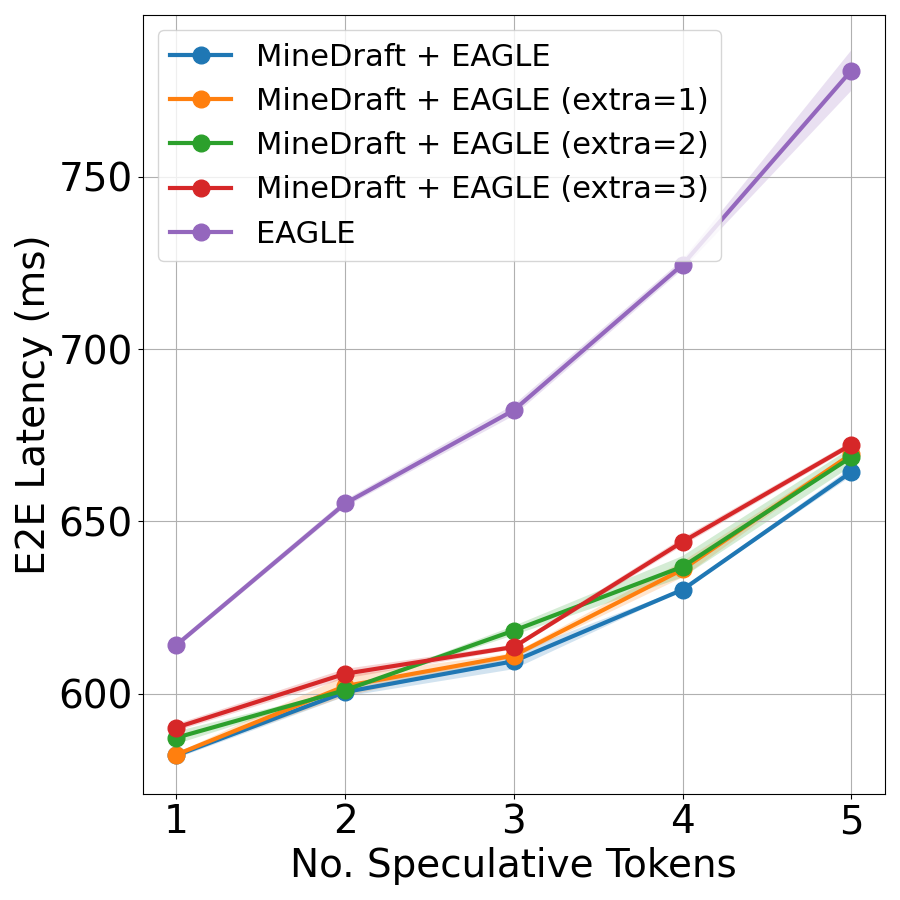} \\
        & \hspace{2mm}(e) \hspace{2mm} $\uparrow$ 6.47\%, $\Delta$ 15.52\% &
        \hspace{2mm}(f) \hspace{2mm} $\uparrow$ 2.88\%, $\Delta$ 15.01\% &
        \hspace{2mm}(g) \hspace{2mm} $\uparrow$ 0.72\%, $\Delta$ 12.27\% &
        \hspace{2mm}(h) \hspace{2mm} $\uparrow$ 5.22\%, $\Delta$ 14.93\% \\

        \rotatebox{90}{\parbox{3.5cm}{\centering \hspace{8mm}\textbf{Vicuna~13B-EAGLE}}} &
        \includegraphics[width=0.23\linewidth]{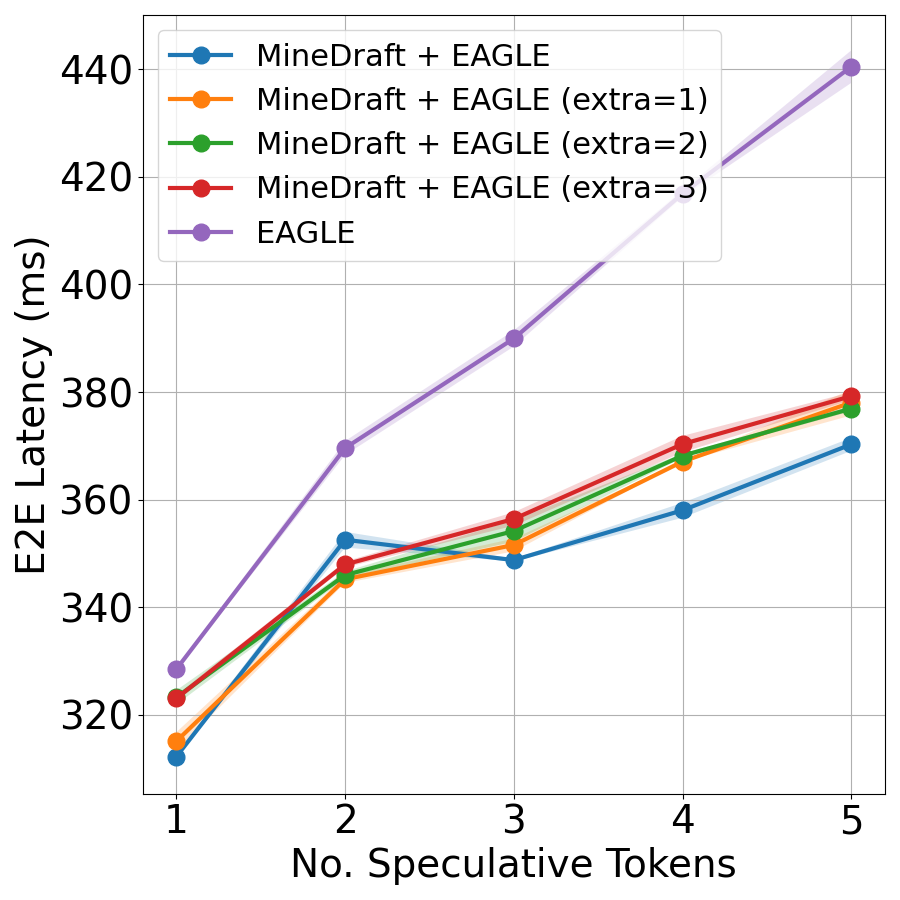} &
        \includegraphics[width=0.23\linewidth]{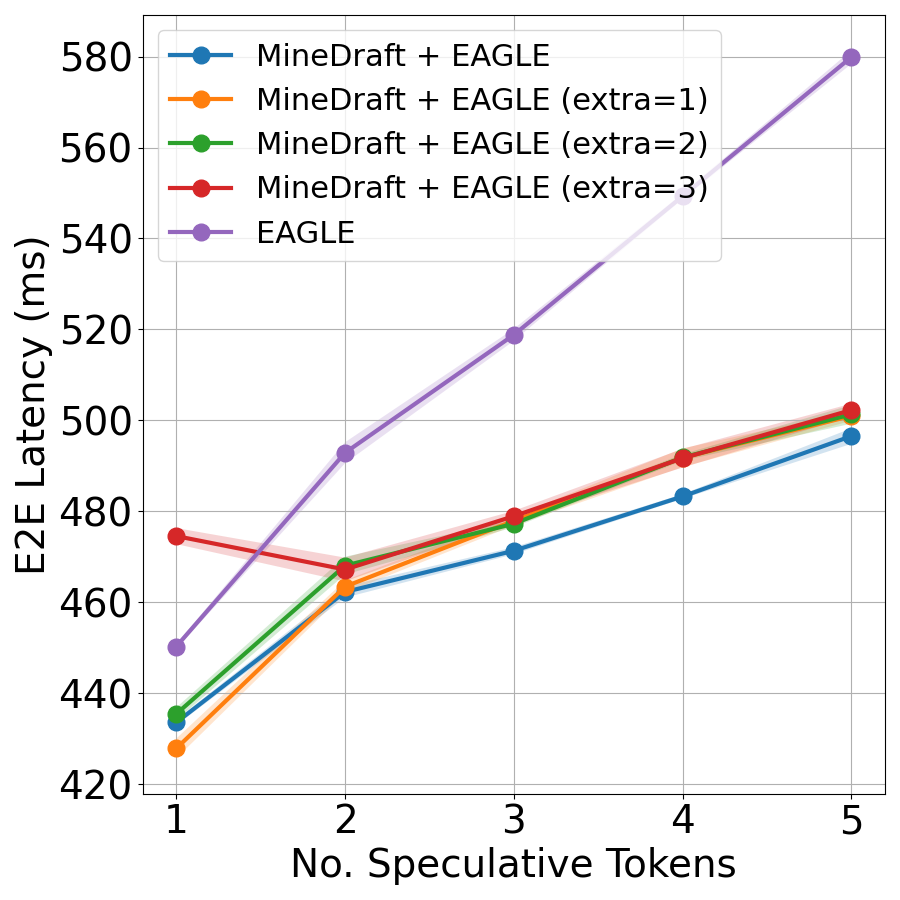} &
        \includegraphics[width=0.23\linewidth]{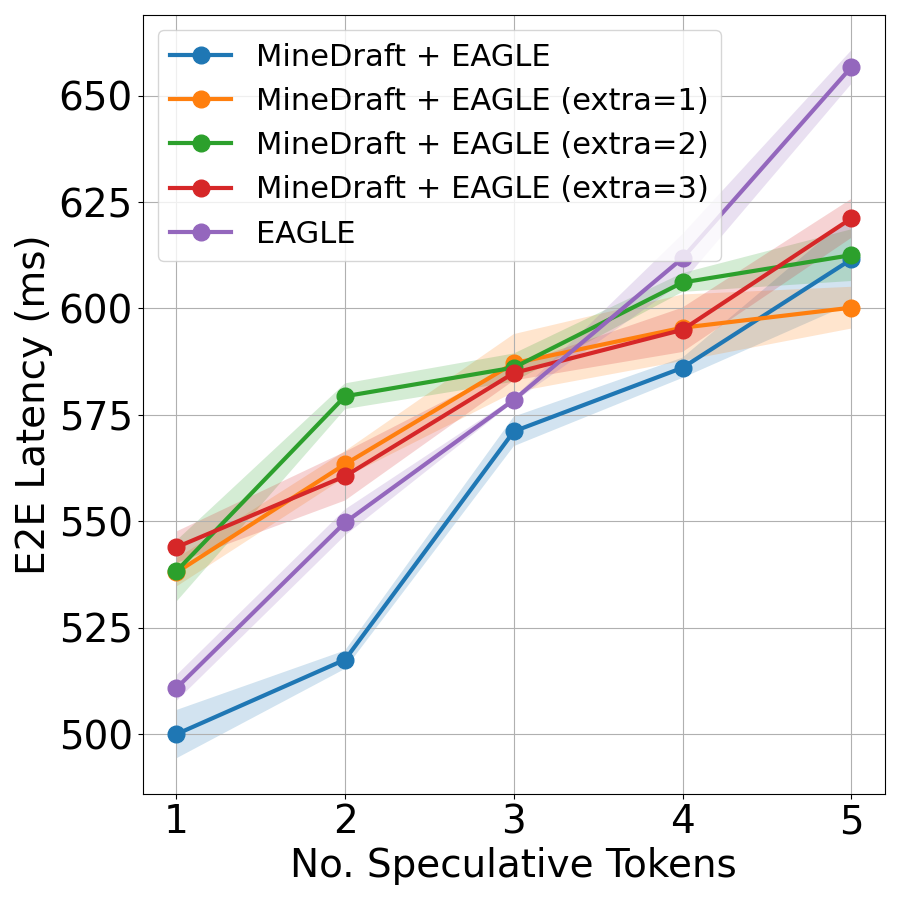} &
        \includegraphics[width=0.23\linewidth]{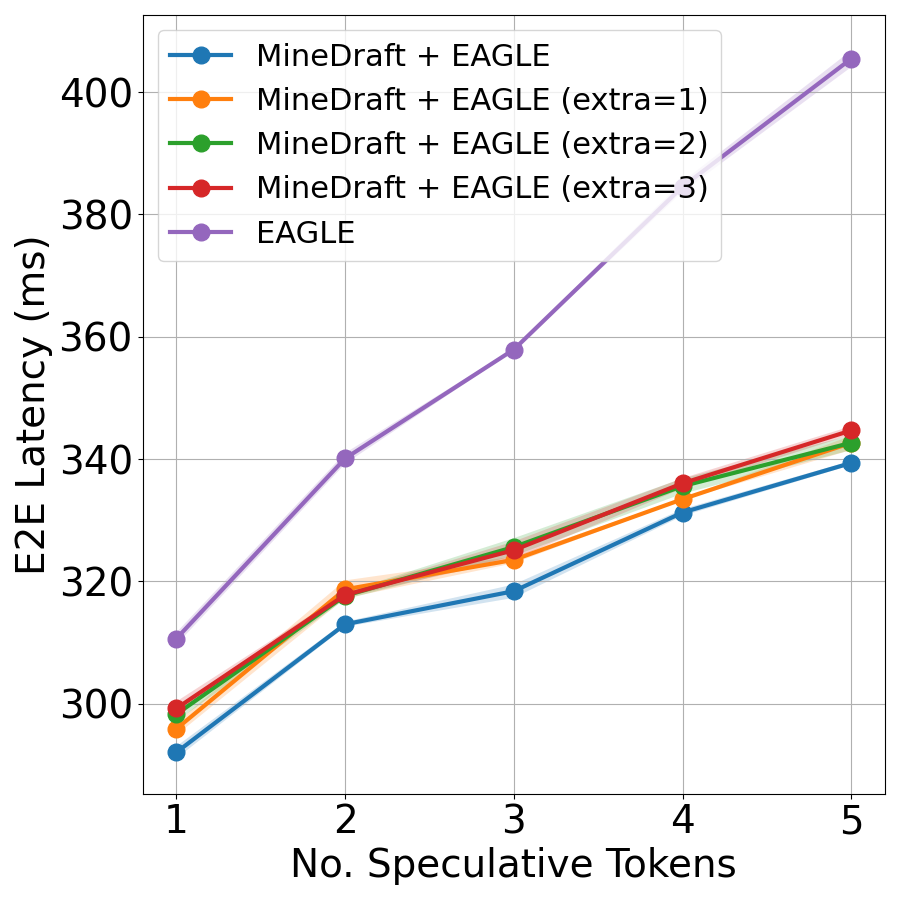} \\
        & \hspace{2mm}(i) \hspace{2mm} $\uparrow$ 4.95\%, $\Delta$ 15.95\% &
        \hspace{2mm}(j) \hspace{2mm} $\uparrow$ 4.98\%, $\Delta$ 14.37\% &
        \hspace{2mm}(k) \hspace{2mm} $\uparrow$ 2.12\%, $\Delta$ 8.60\% &
        \hspace{2mm}(l) \hspace{2mm} $\uparrow$ 5.99\%, $\Delta$ 16.30\% \\
    \end{tabular}}
    \caption{End-to-end latency comparison across Settings 5--7. $\uparrow$ indicates the average improvement over the best baseline method. $\Delta$ indicates the maximum average gap between \alg{} and standard SD or EAGLE. \alg{} consistently outperforms standard SD or EAGLE, reducing latency by up to 20.63\% over standard SD and 16.30\% over EAGLE.
    }
    \label{fig:e2elatency-strategy}
\end{figure}

\begin{figure}[!ht]
    \centering
    \setlength{\tabcolsep}{1pt} 
    \resizebox{0.99\linewidth}{!}{
    \begin{tabular}{ccccc}
        & \hspace{8mm}\textbf{Arena} & \hspace{8mm}\textbf{ShareGPT} & \hspace{8mm}\textbf{Spec-Bench}  & \hspace{8mm}\textbf{Tough}\\
               
        \rotatebox{90}{\parbox{3.5cm}{\centering \hspace{8mm}\textbf{Qwen3~32B-0.6B}}} &
        \includegraphics[width=0.23\linewidth]{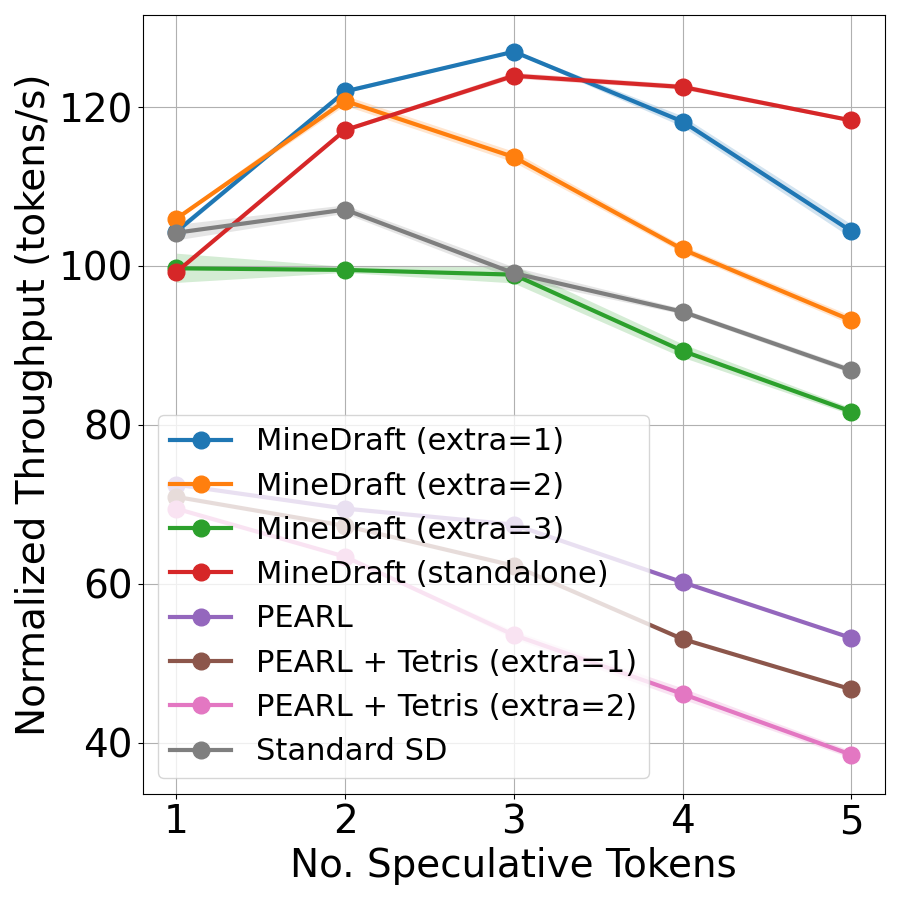} &
        \includegraphics[width=0.23\linewidth]{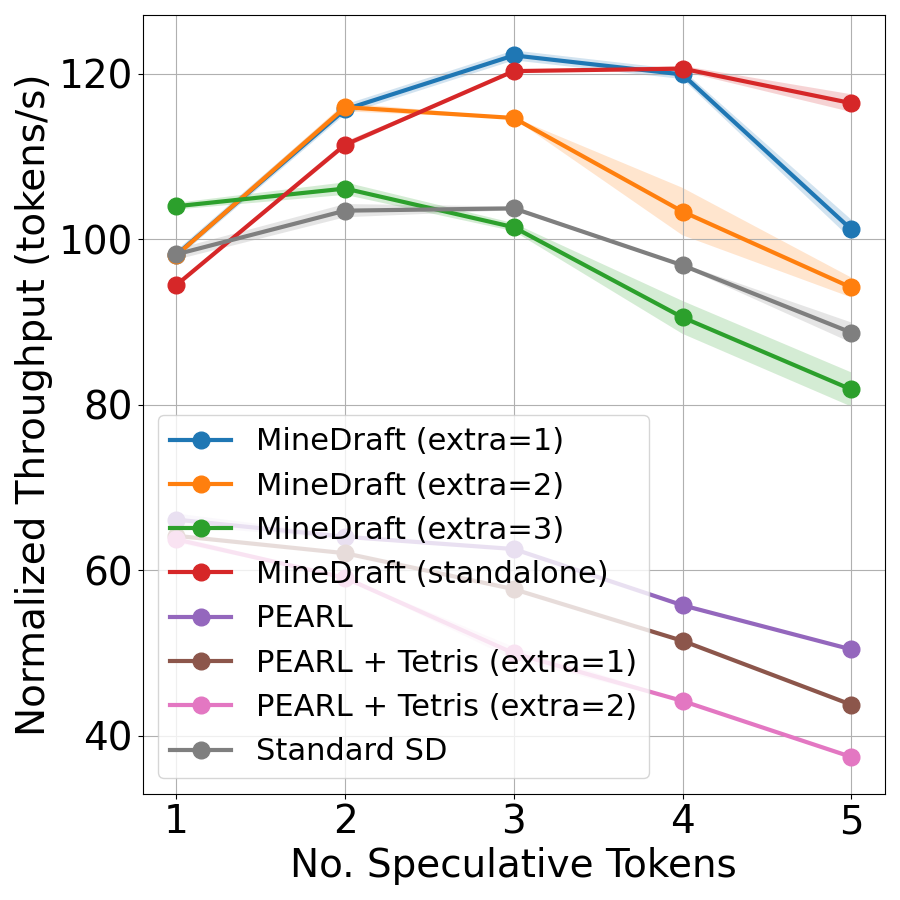} &
        \includegraphics[width=0.23\linewidth]{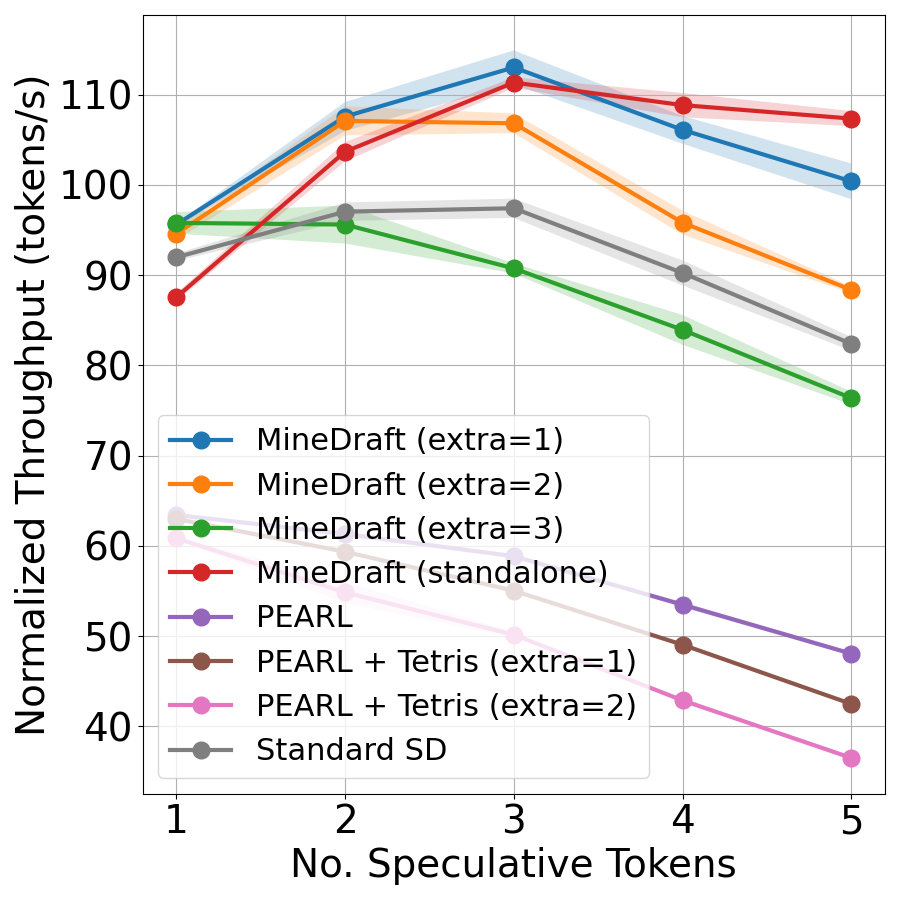} &
        \includegraphics[width=0.23\linewidth]{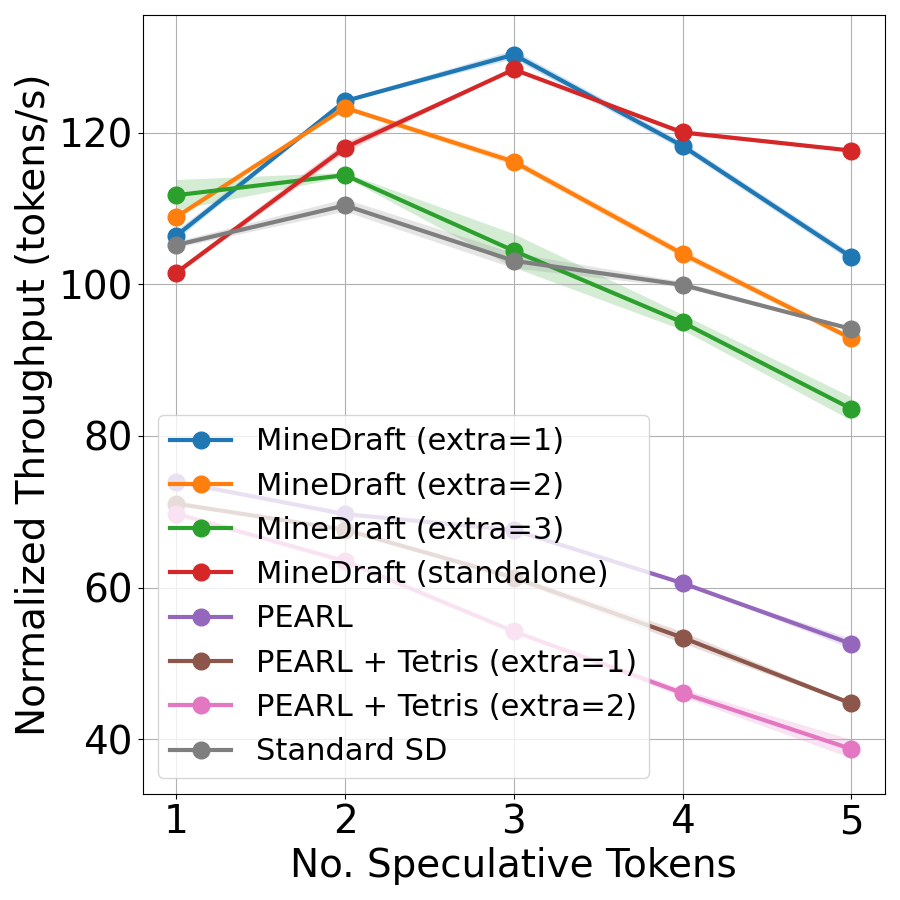} \\
        & \hspace{2mm}(a) \hspace{2mm} $\uparrow$ 16.02\%, $\Delta$ 30.30\% &
        \hspace{2mm}(b) \hspace{2mm} $\uparrow$ 17.82\%, $\Delta$ 31.24\% &
        \hspace{2mm}(c) \hspace{2mm} $\uparrow$ 13.89\%, $\Delta$ 36.25\% &
        \hspace{2mm}(d) \hspace{2mm} $\uparrow$ 12.47\%, $\Delta$ 26.36\% \\
        \vspace{2mm}
        \rotatebox{90}{\parbox{3.5cm}{\centering \hspace{8mm}\textbf{Qwen3~32B-4B}}} &
        \includegraphics[width=0.23\linewidth]{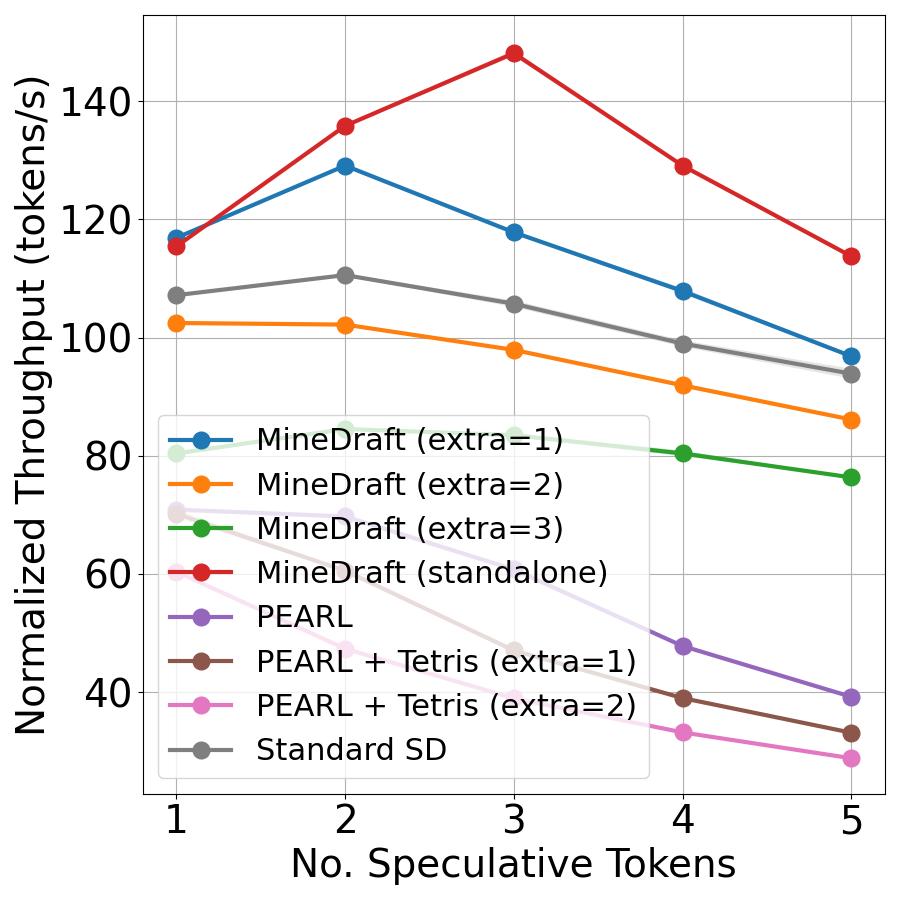} &
        \includegraphics[width=0.23\linewidth]{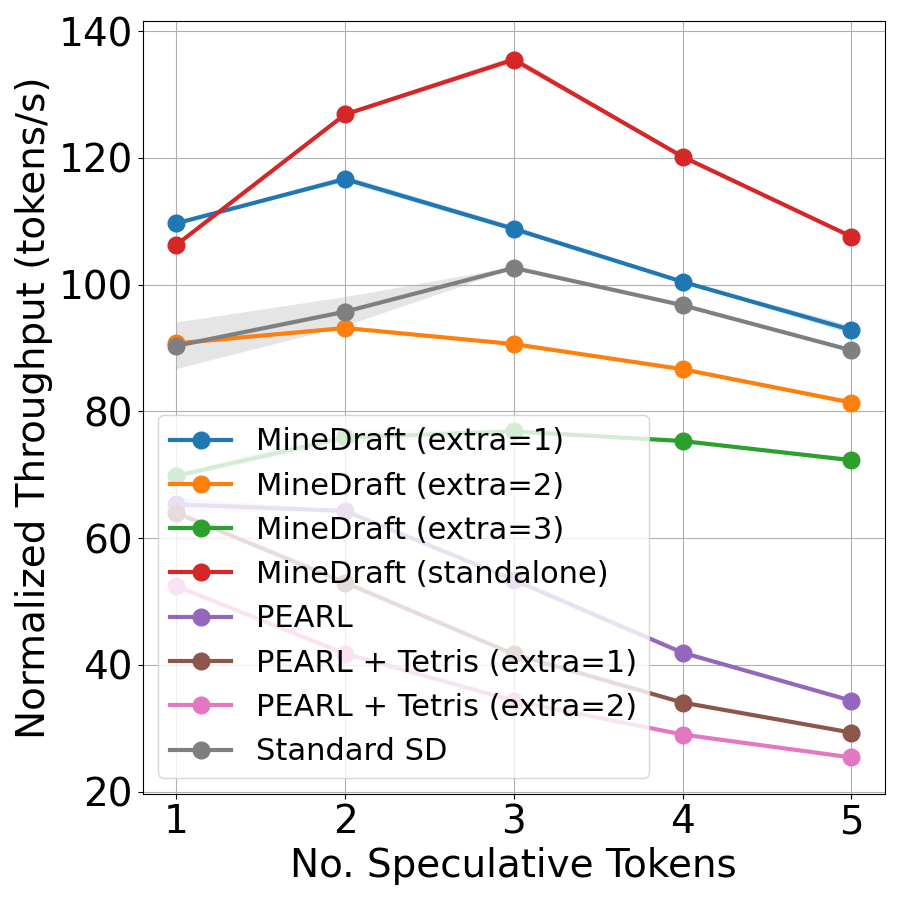} &
        \includegraphics[width=0.23\linewidth]{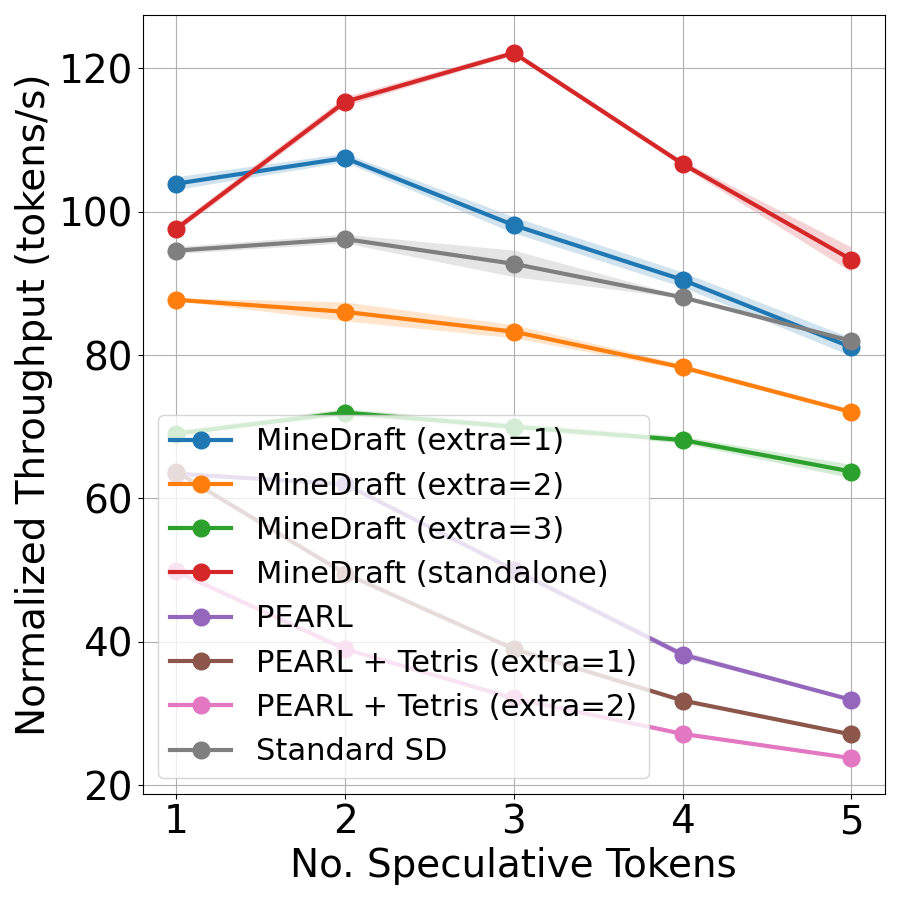} &
        \includegraphics[width=0.23\linewidth]{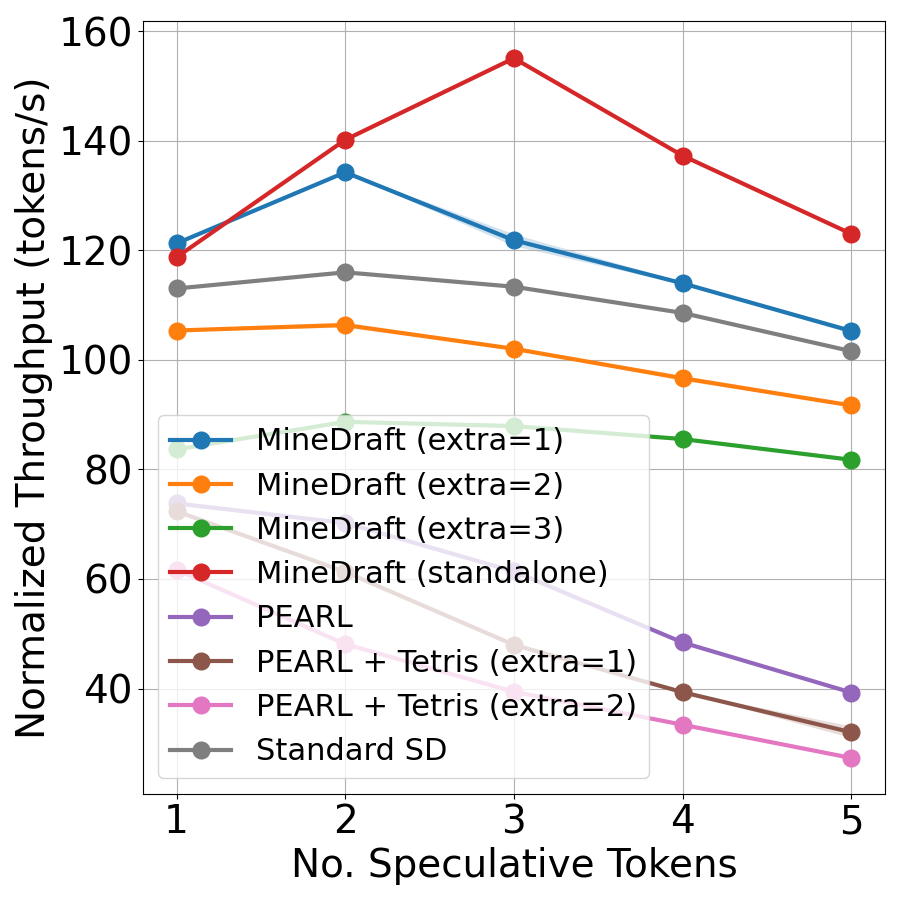} \\
        & \hspace{2mm}(e) \hspace{2mm} $\uparrow$ 19.90\%, $\Delta$ 31.72\% &
        \hspace{2mm}(f) \hspace{2mm} $\uparrow$ 32.02\%, $\Delta$ 32.51\% &
        \hspace{2mm}(g) \hspace{2mm} $\uparrow$ 22.84\%, $\Delta$ 40.09\% &
        \hspace{2mm}(h) \hspace{2mm} $\uparrow$ 20.85\%, $\Delta$ 36.79\% \\
        \vspace{2mm}
        \rotatebox{90}{\parbox{3.5cm}{\centering \hspace{8mm}\textbf{Qwen3~32B-8B}}} &
        \includegraphics[width=0.23\linewidth]{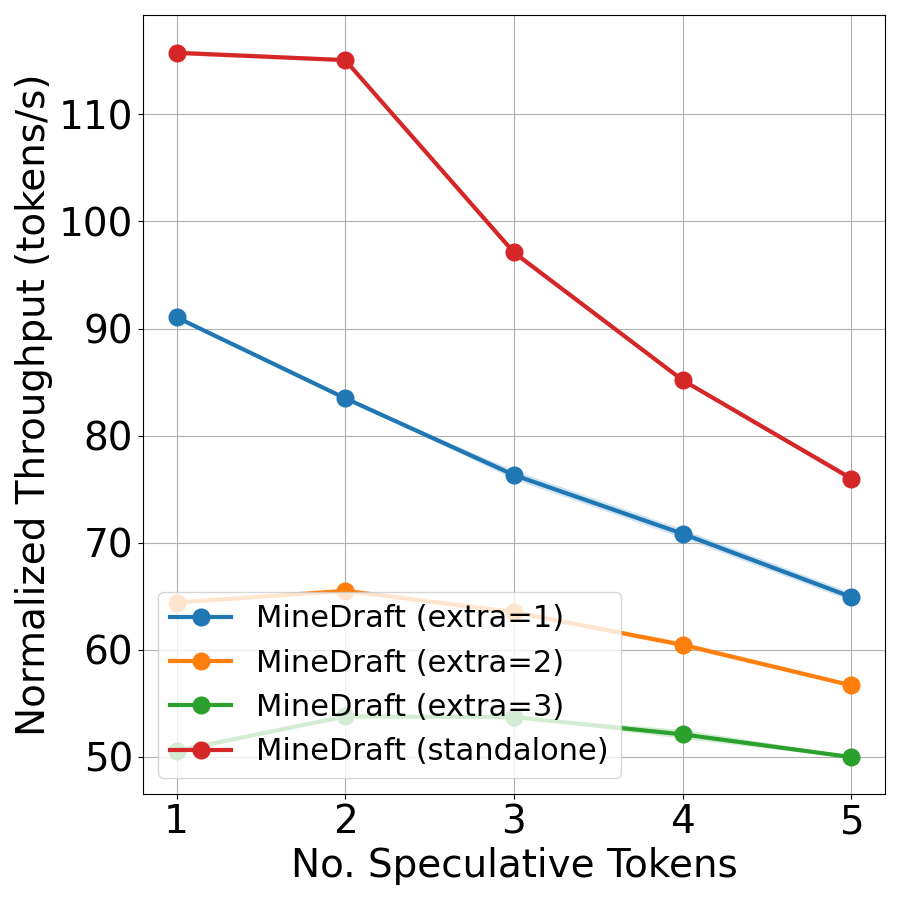} &
        \includegraphics[width=0.23\linewidth]{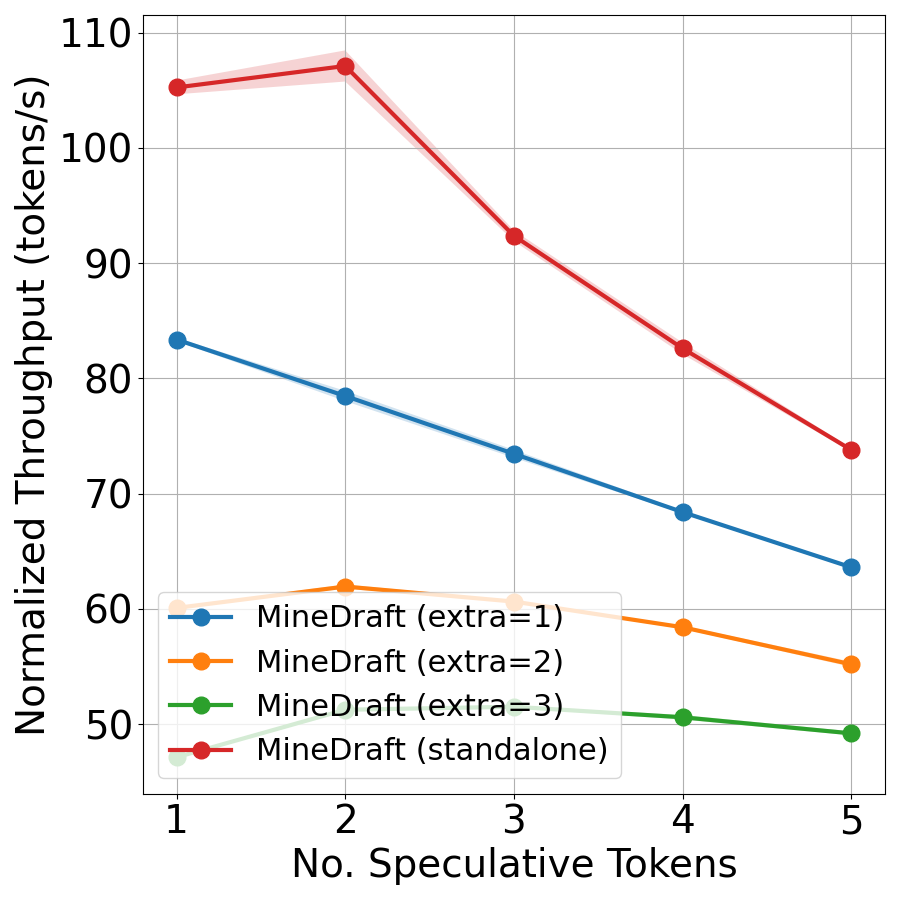} &
        \includegraphics[width=0.23\linewidth]{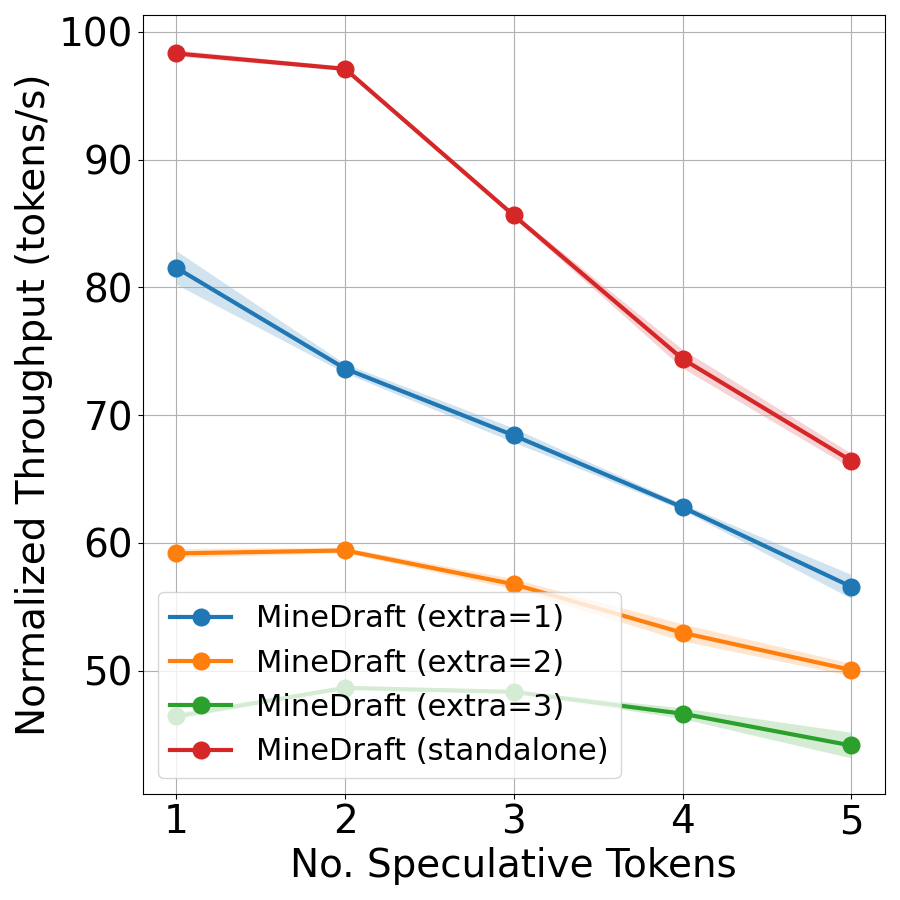} &
        \includegraphics[width=0.23\linewidth]{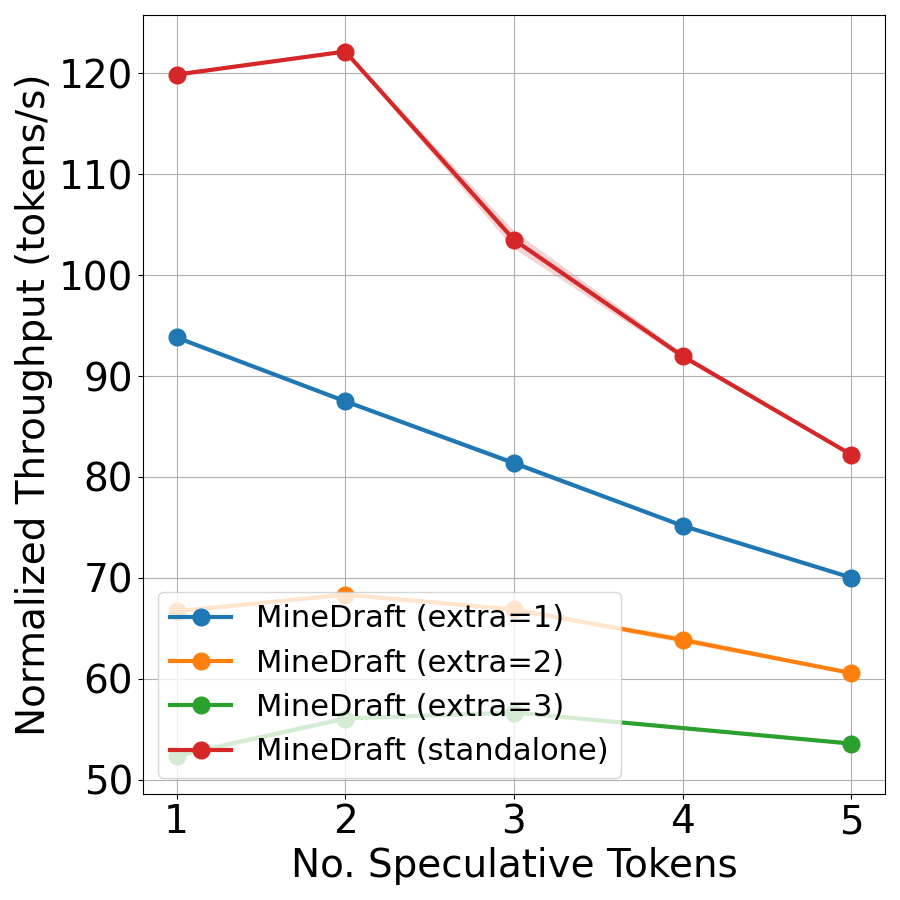} \\
    \end{tabular}}
    \caption{
        Normalized throughput comparison against baseline methods across Settings 1, 3, and 4. $\uparrow$ indicates the average improvement over the best baseline method. $\Delta$ indicates the maximum average gap between \alg{} and standard SD. 
        Overall, \alg{} improves normalized average throughput by up to 32.02\% over the best-performing baseline and by up to 40.09\% over standard SD. Standard SD fails in Setting 4 due to OOM.
    }
    \label{fig:norm-throughput}
\end{figure}

\begin{figure}[!ht]
    \centering
    \setlength{\tabcolsep}{1pt} 
    \resizebox{0.99\linewidth}{!}{
    \begin{tabular}{ccccc}
        & \hspace{8mm}\textbf{Arena} & \hspace{8mm}\textbf{ShareGPT} & \hspace{8mm}\textbf{Spec-Bench}  & \hspace{8mm}\textbf{Tough}\\
               
        \rotatebox{90}{\parbox{3.5cm}{\centering \hspace{8mm}\textbf{Qwen3~32B-0.6B}}} &
        \includegraphics[width=0.23\linewidth]{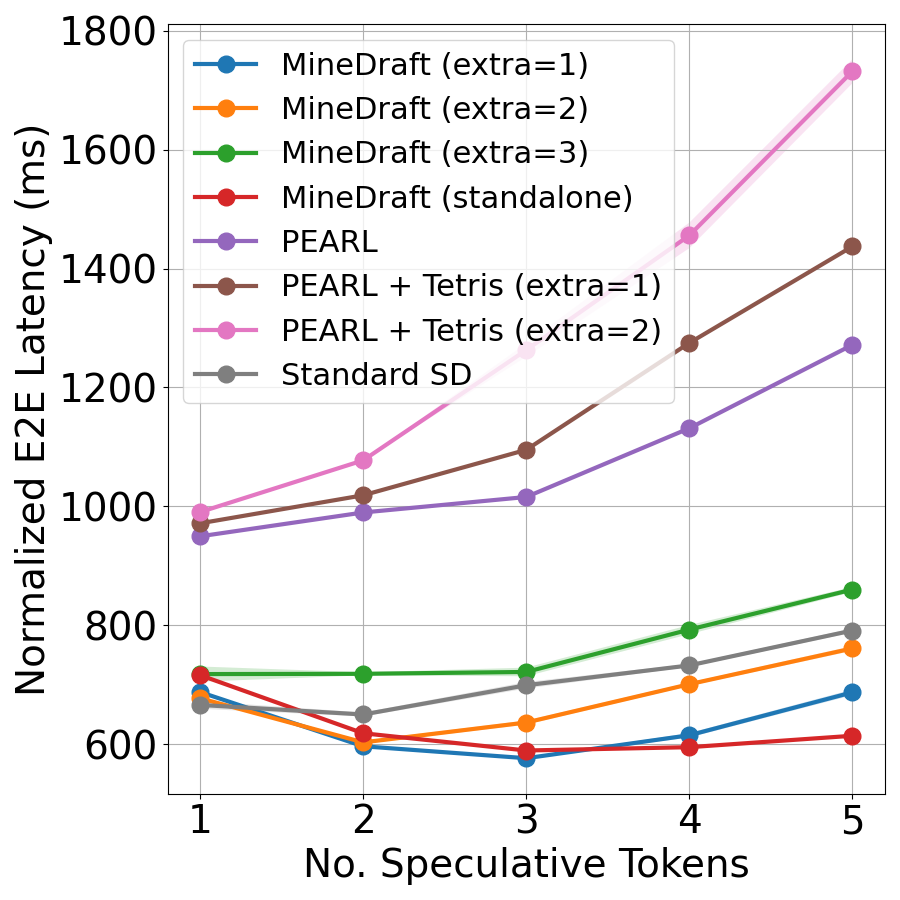} &
        \includegraphics[width=0.23\linewidth]{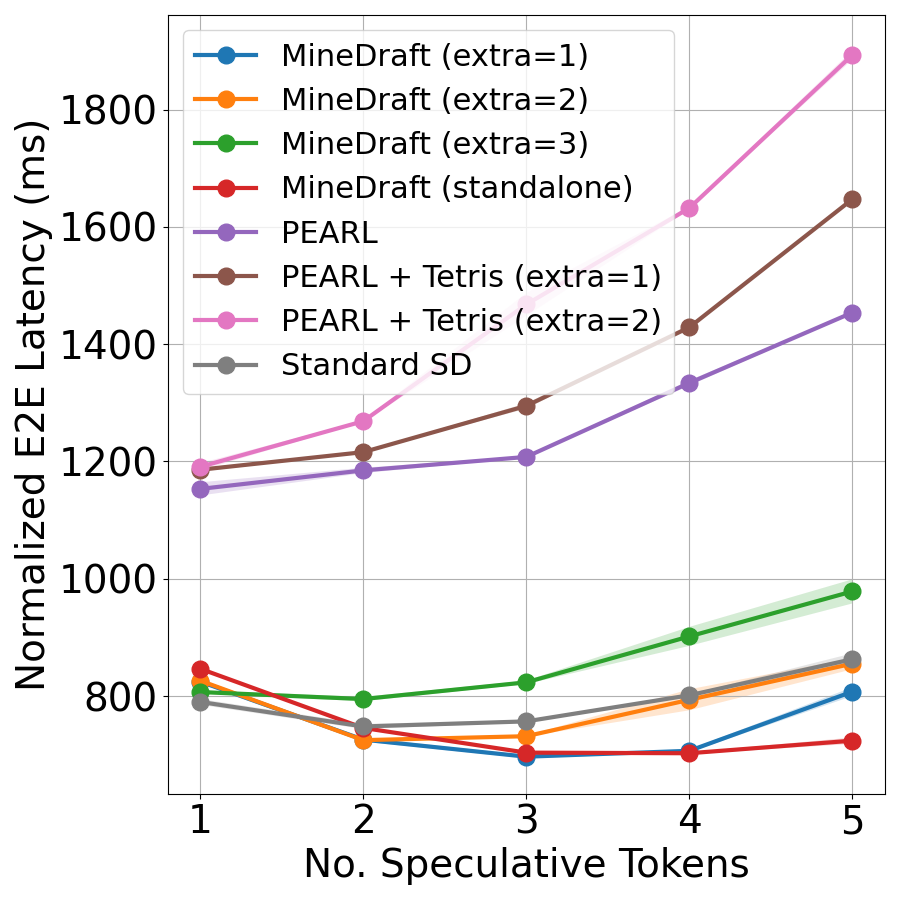} &
        \includegraphics[width=0.23\linewidth]{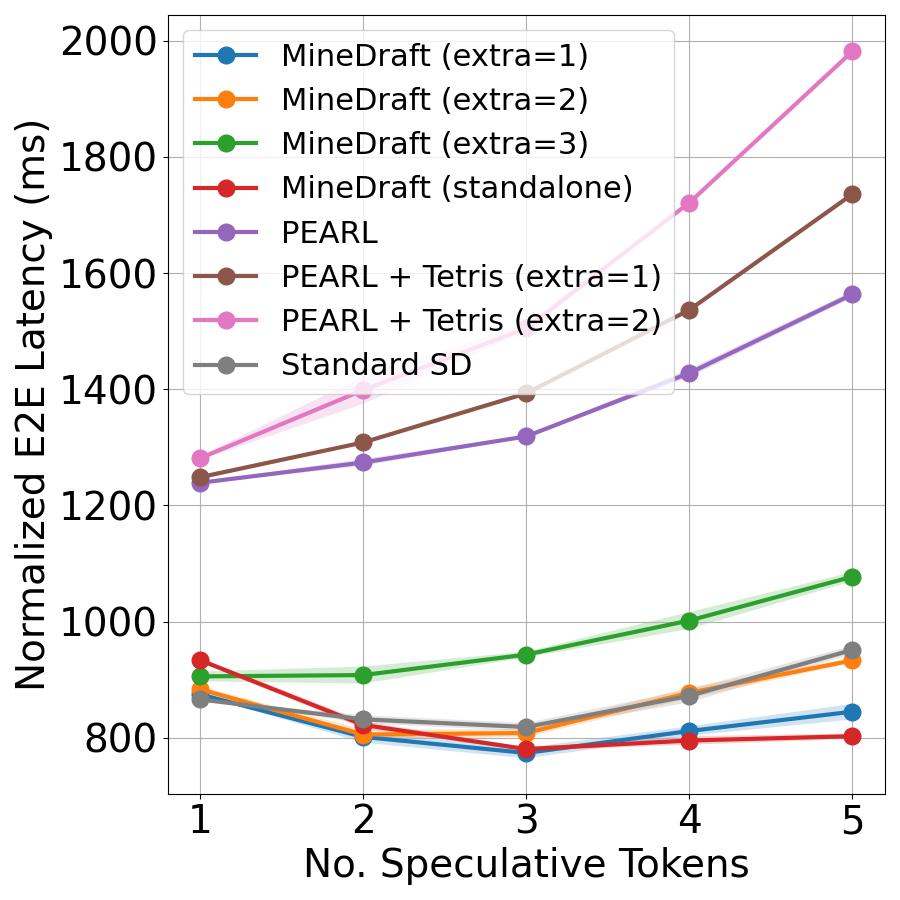} &
        \includegraphics[width=0.23\linewidth]{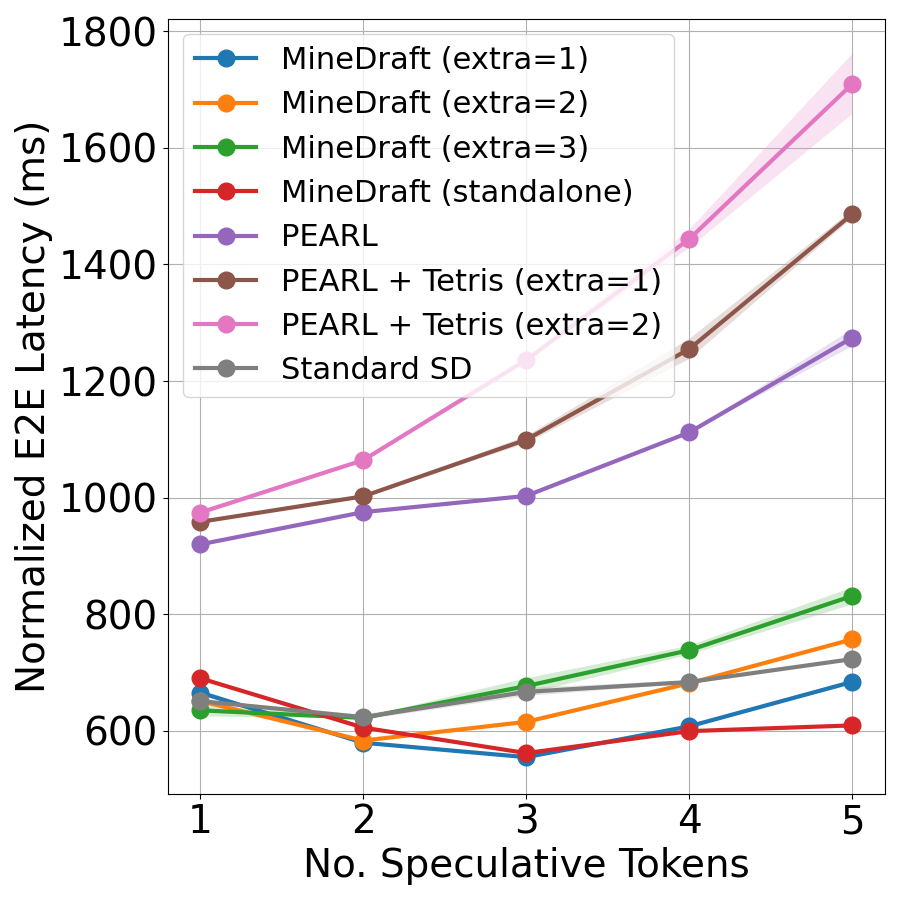} \\
        & \hspace{2mm}(a) \hspace{2mm} $\uparrow$ 8.22\%, $\Delta$ 22.36\% &
        \hspace{2mm}(b) \hspace{2mm} $\uparrow$ 3.12\%, $\Delta$ 16.09\% &
        \hspace{2mm}(c) \hspace{2mm} $\uparrow$ 5.46\%, $\Delta$ 15.62\% &
        \hspace{2mm}(d) \hspace{2mm} $\uparrow$ 7.04\%, $\Delta$ 16.80\% \\
        \vspace{2mm}
        \rotatebox{90}{\parbox{3.5cm}{\centering \hspace{8mm}\textbf{Qwen3~32B-4B}}} &
        \includegraphics[width=0.23\linewidth]{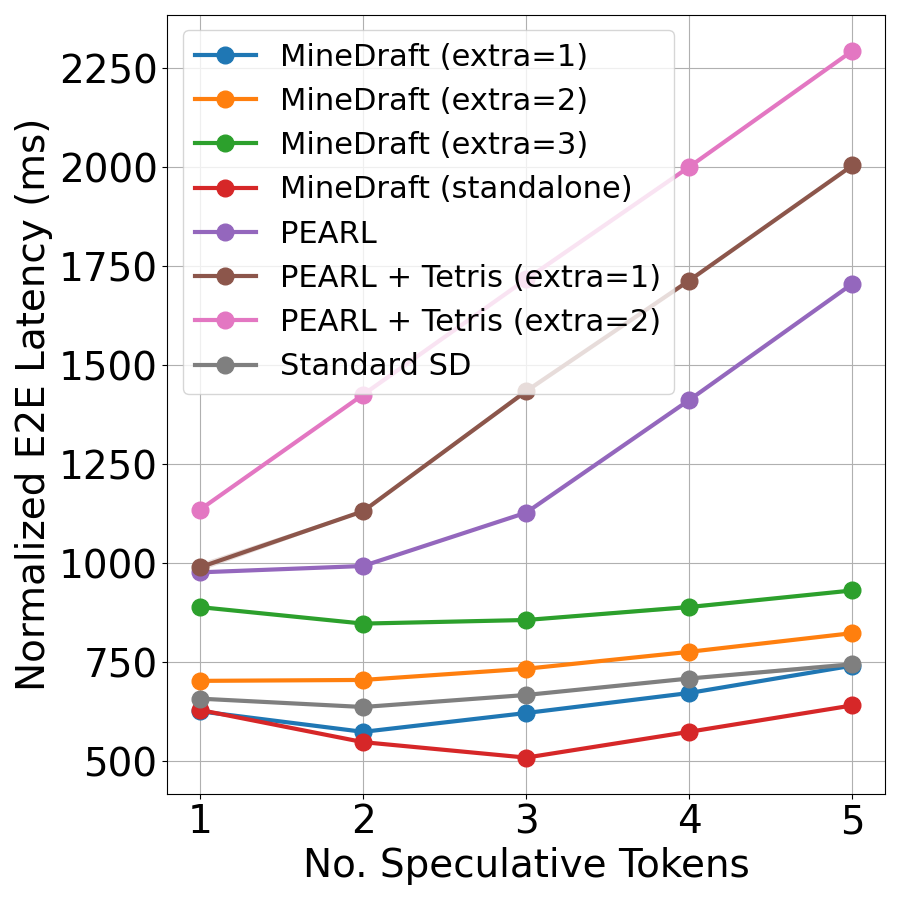} &
        \includegraphics[width=0.23\linewidth]{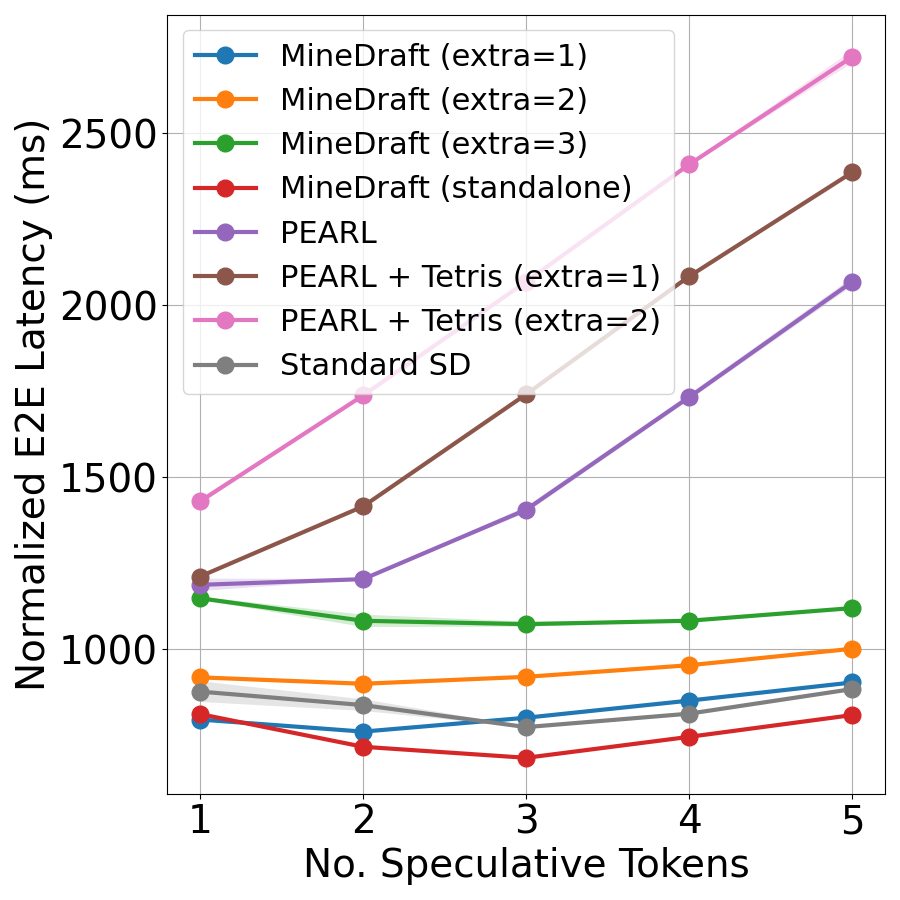} &
        \includegraphics[width=0.23\linewidth]{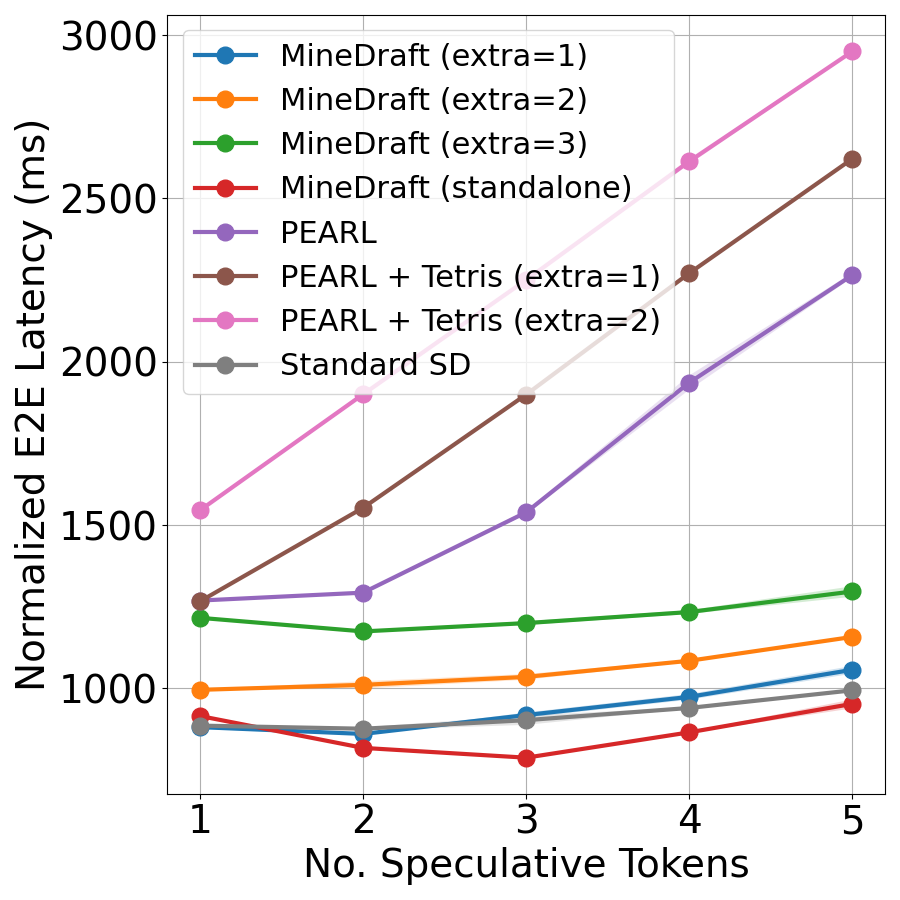} &
        \includegraphics[width=0.23\linewidth]{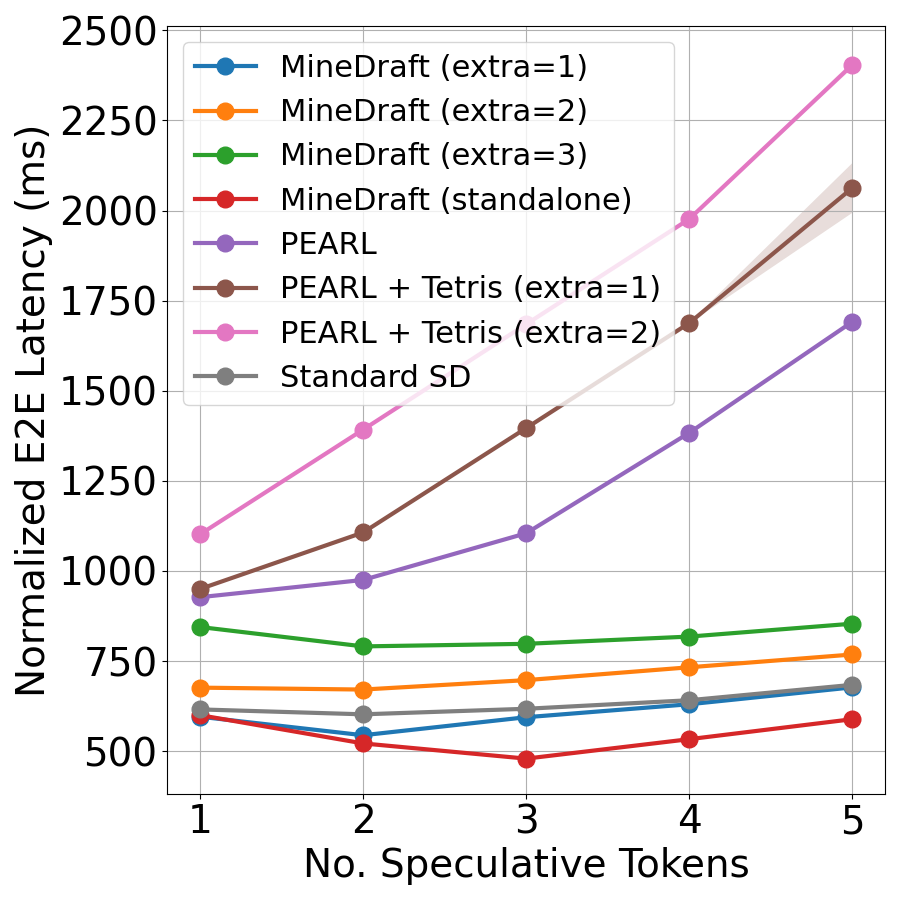} \\
        & \hspace{2mm}(e) \hspace{2mm} $\uparrow$ 13.91\%, $\Delta$ 23.68\% &
        \hspace{2mm}(f) \hspace{2mm} $\uparrow$ 11.52\%, $\Delta$ 14.46\% &
        \hspace{2mm}(g) \hspace{2mm} $\uparrow$ 6.70\%, $\Delta$ 12.72\% &
        \hspace{2mm}(h) \hspace{2mm} $\uparrow$ 13.41\%, $\Delta$ 22.40\% \\
        \vspace{2mm}
        \rotatebox{90}{\parbox{3.5cm}{\centering \hspace{8mm}\textbf{Qwen3~32B-8B}}} &
        \includegraphics[width=0.23\linewidth]{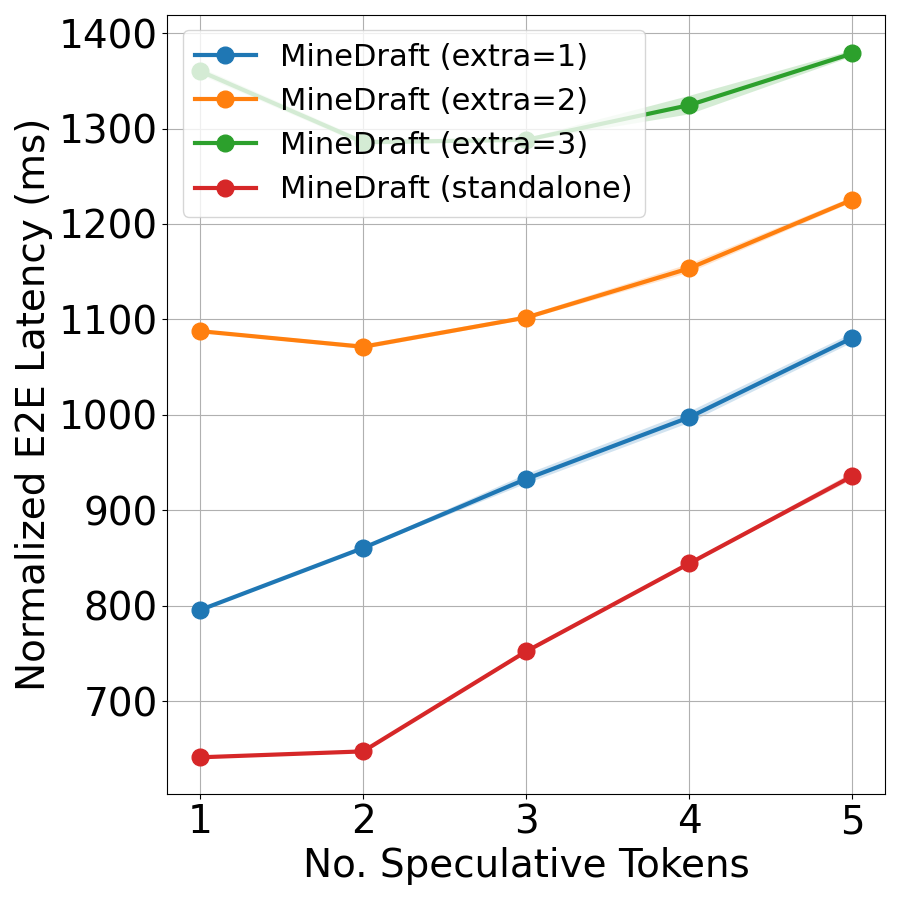} &
        \includegraphics[width=0.23\linewidth]{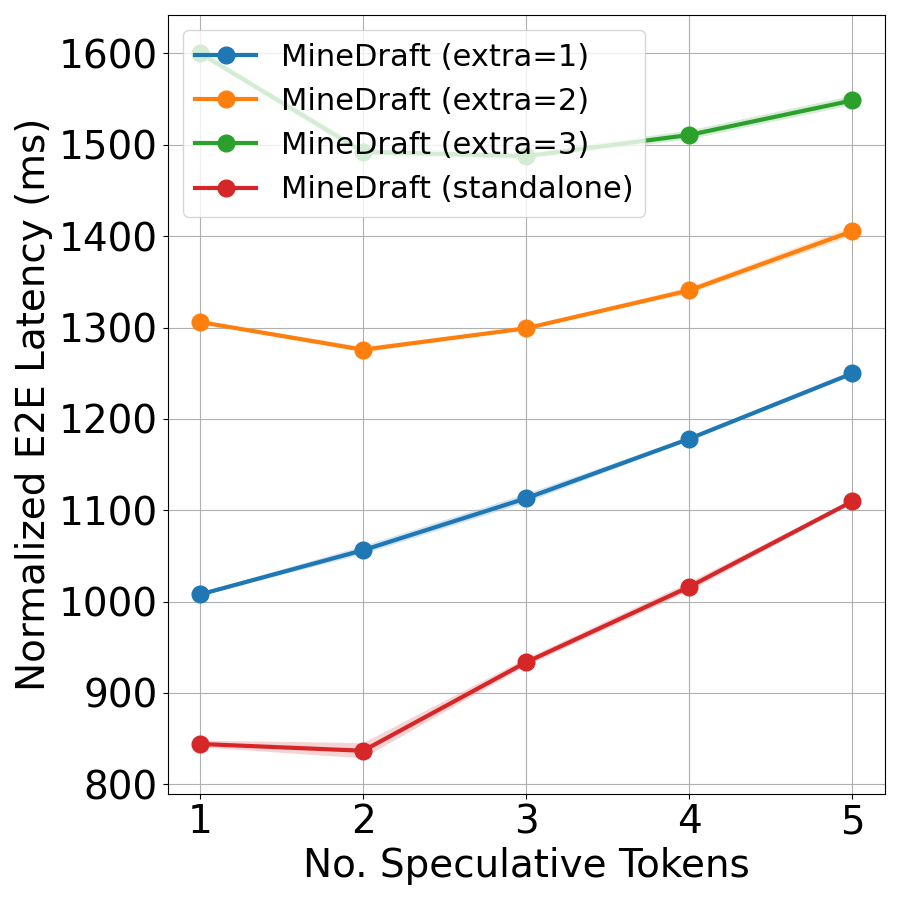} &
        \includegraphics[width=0.23\linewidth]{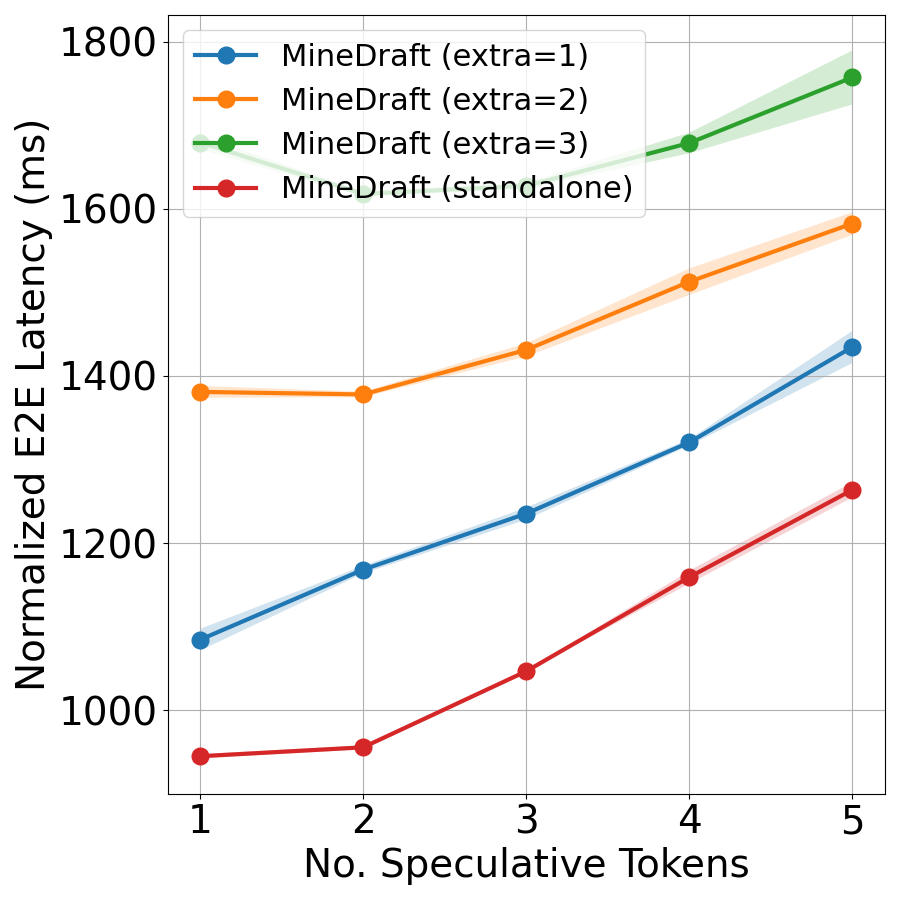} &
        \includegraphics[width=0.23\linewidth]{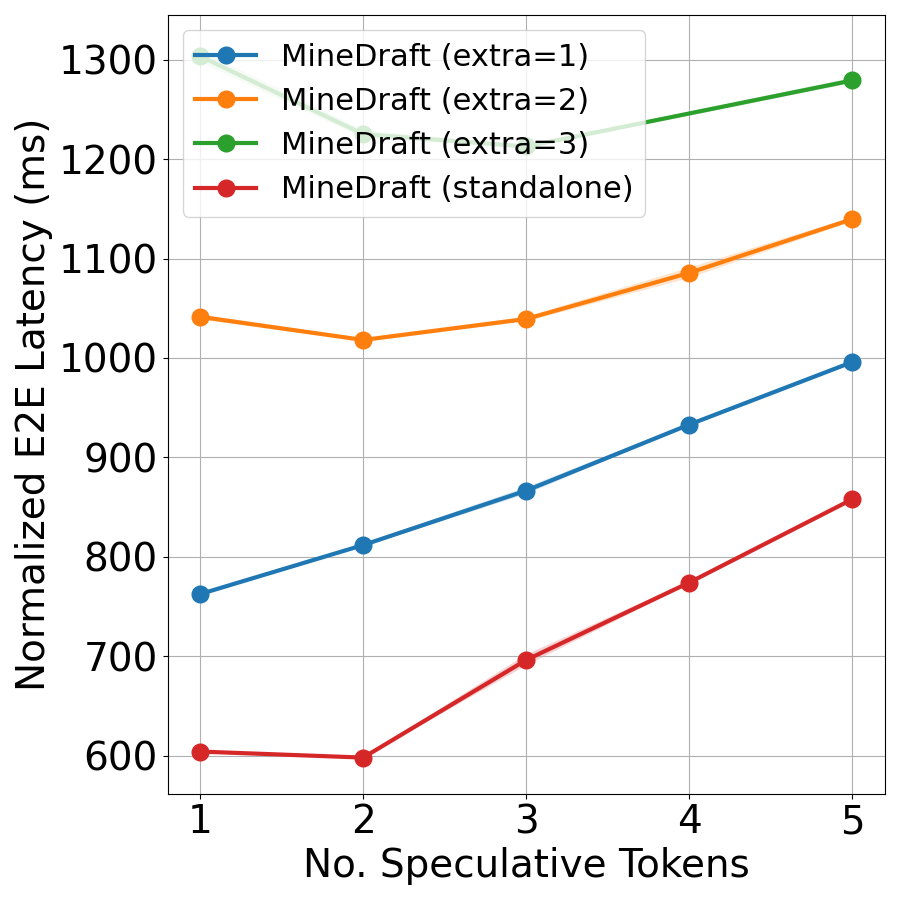} \\
    \end{tabular}}
    \caption{
        Normalized end-to-end latency comparison against baseline methods across Settings 1, 3, and 4. $\uparrow$ indicates the average improvement over the best baseline method. $\Delta$ indicates the maximum average gap between \alg{} and standard SD. 
        \alg{} outperforms baselines with an exception when $k$ is small ($k=1$), where \alg{} may be underperformed compared with standard SD. Besides that, \alg{} improves normalized latency by up to 13.91\% over the best-performing baseline and by up to 23.68\% over standard SD. Standard SD fails in Setting 4 due to OOM.
    }
    \label{fig:norm-latency}
\end{figure}

\begin{figure}[!ht]
    \centering
    \setlength{\tabcolsep}{1pt} 
    \resizebox{0.99\linewidth}{!}{
    \begin{tabular}{ccccc}
        & \hspace{8mm}\textbf{Arena} & \hspace{8mm}\textbf{ShareGPT} & \hspace{8mm}\textbf{Spec-Bench}  & \hspace{8mm}\textbf{Tough}\\

        \rotatebox{90}{\parbox{3.5cm}{\centering \hspace{8mm}\textbf{Throughput}}} & 
        \includegraphics[width=0.23\linewidth]{results/Arena_Qwen3-32B_1000_bs16_100pct_gen_throughput.png} &
        \includegraphics[width=0.23\linewidth]{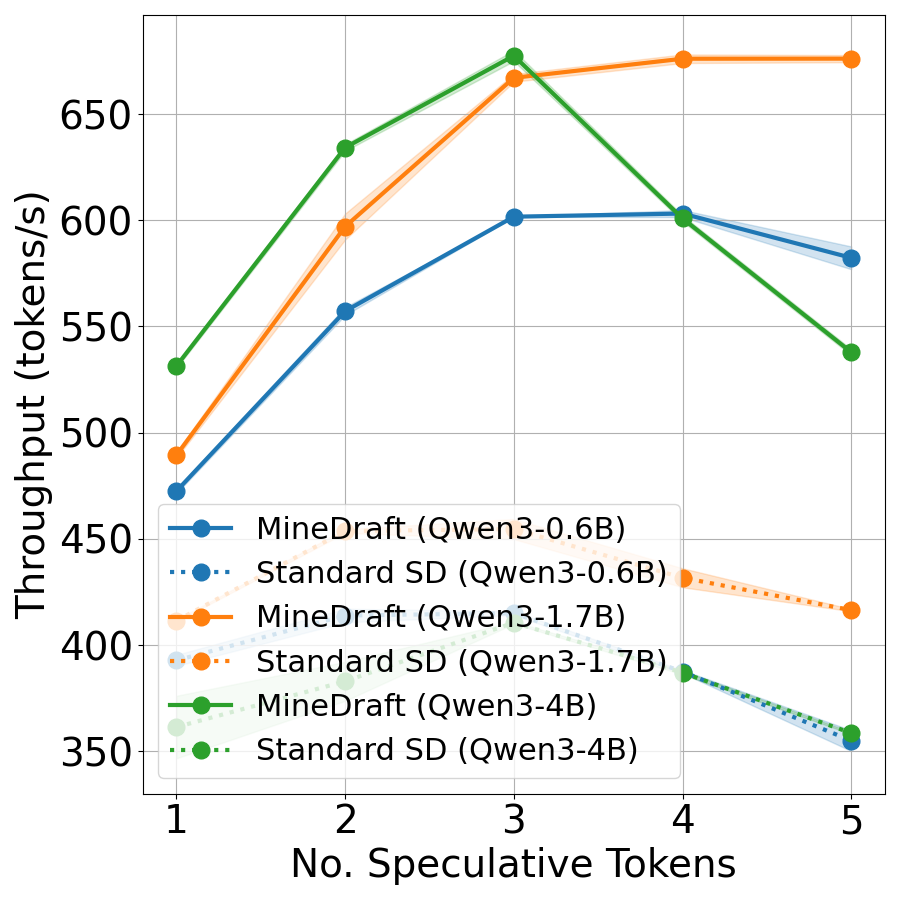} &
        \includegraphics[width=0.23\linewidth]{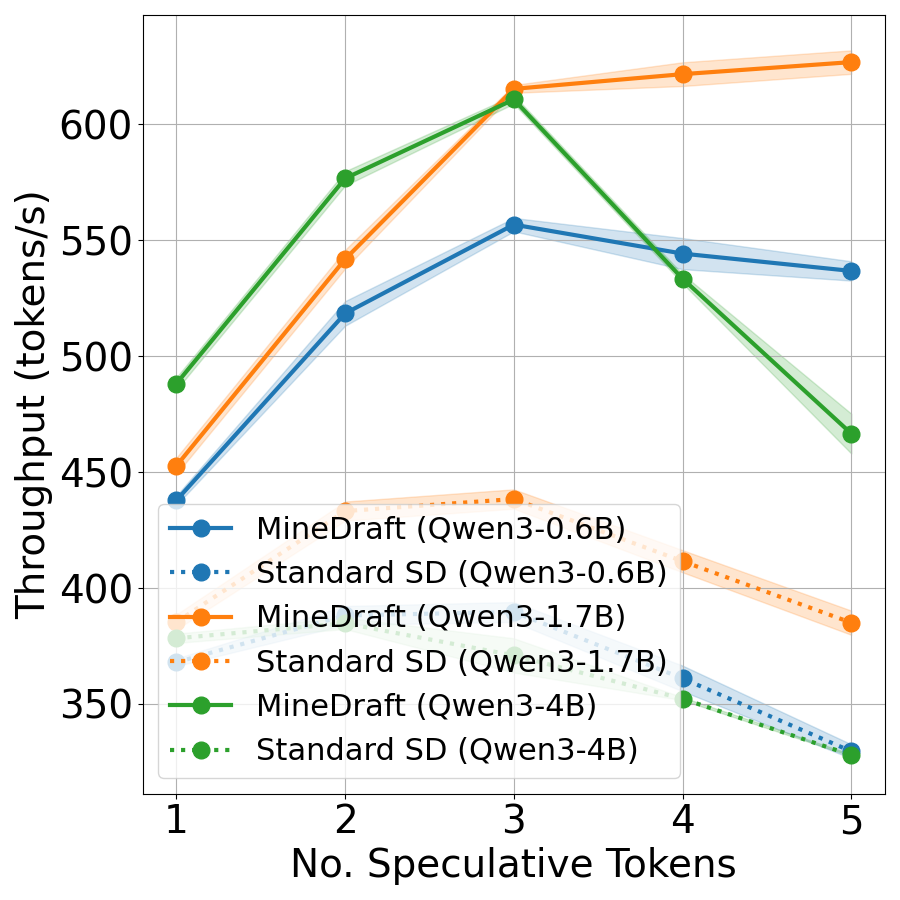} &
        \includegraphics[width=0.23\linewidth]{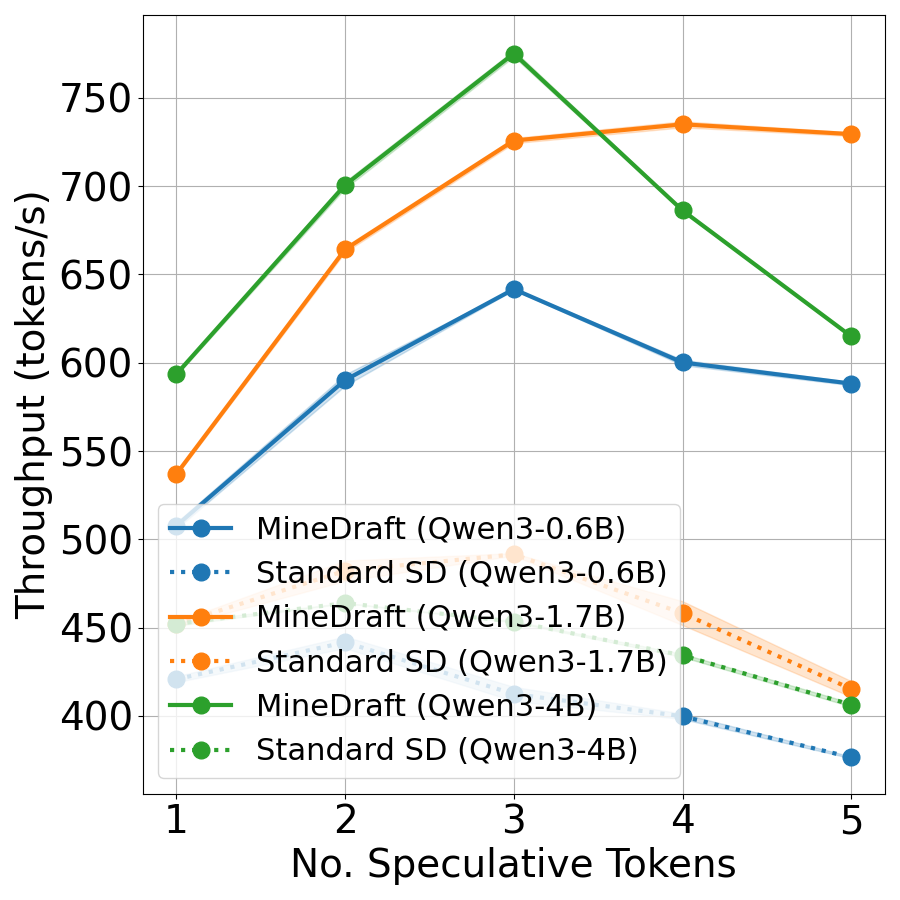} \\

        \rotatebox{90}{\parbox{3.5cm}{\centering \hspace{8mm}\textbf{E2EL}}} &
        \includegraphics[width=0.23\linewidth]{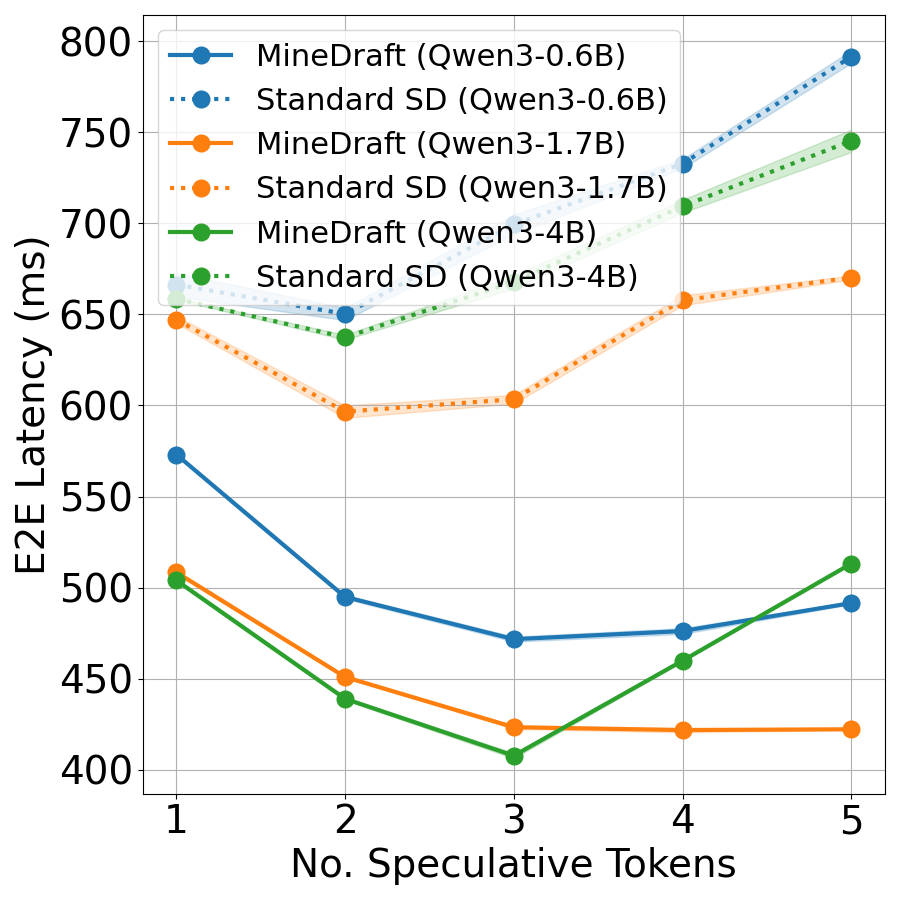} &
        \includegraphics[width=0.23\linewidth]{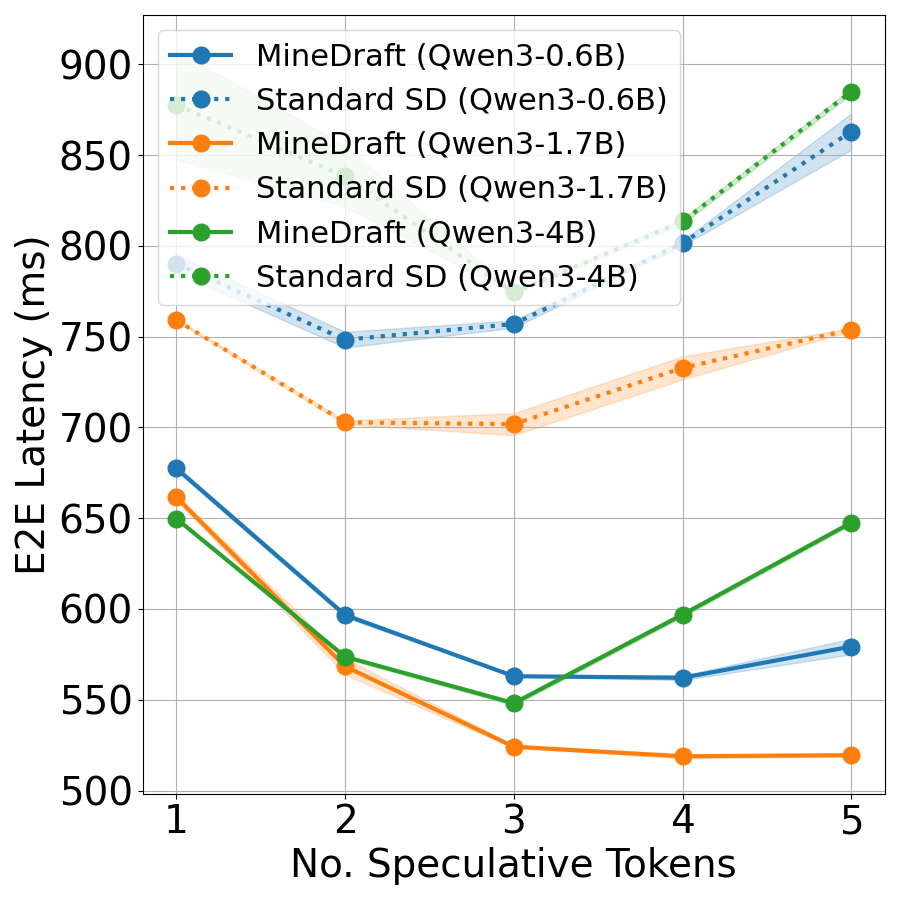} &
        \includegraphics[width=0.23\linewidth]{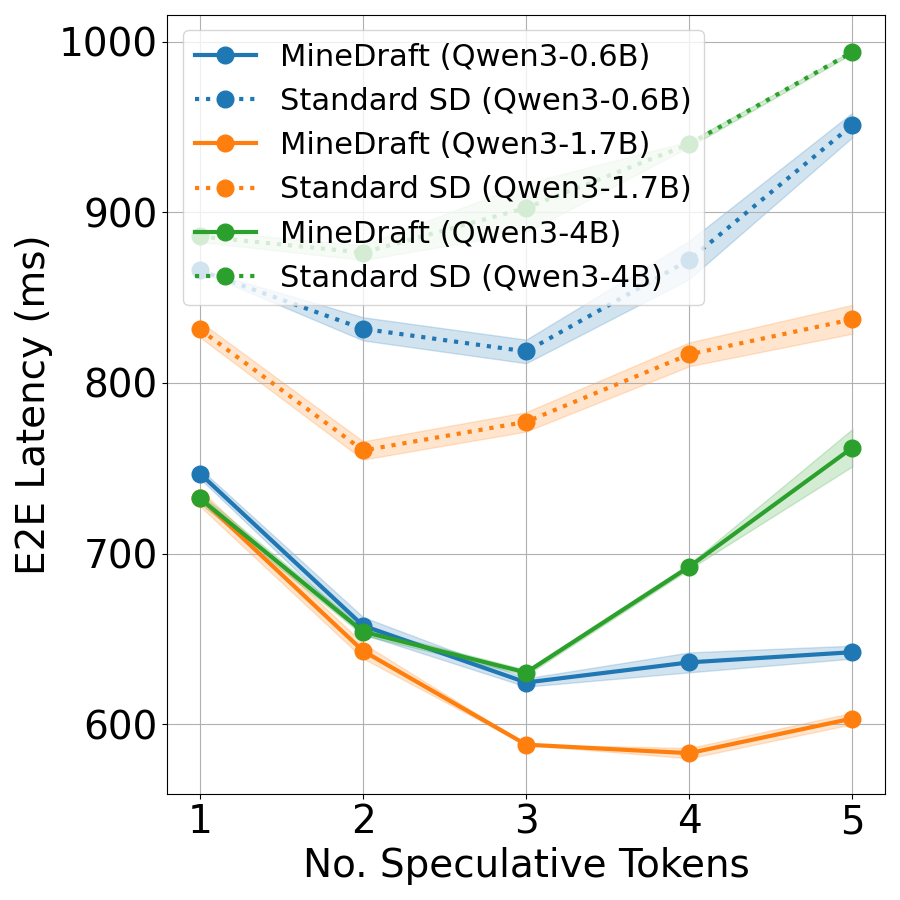} &
        \includegraphics[width=0.23\linewidth]{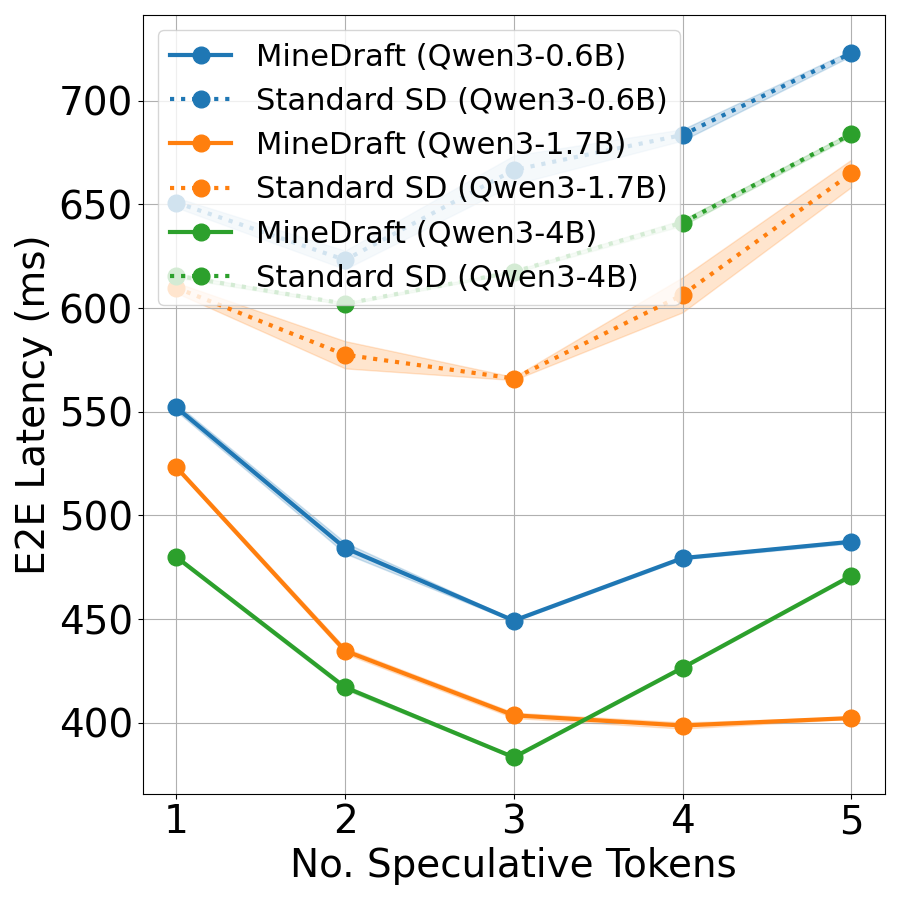} \\

        \rotatebox{90}{\parbox{3.5cm}{\centering \hspace{8mm}\textbf{VSR}}} &
        \includegraphics[width=0.23\linewidth]{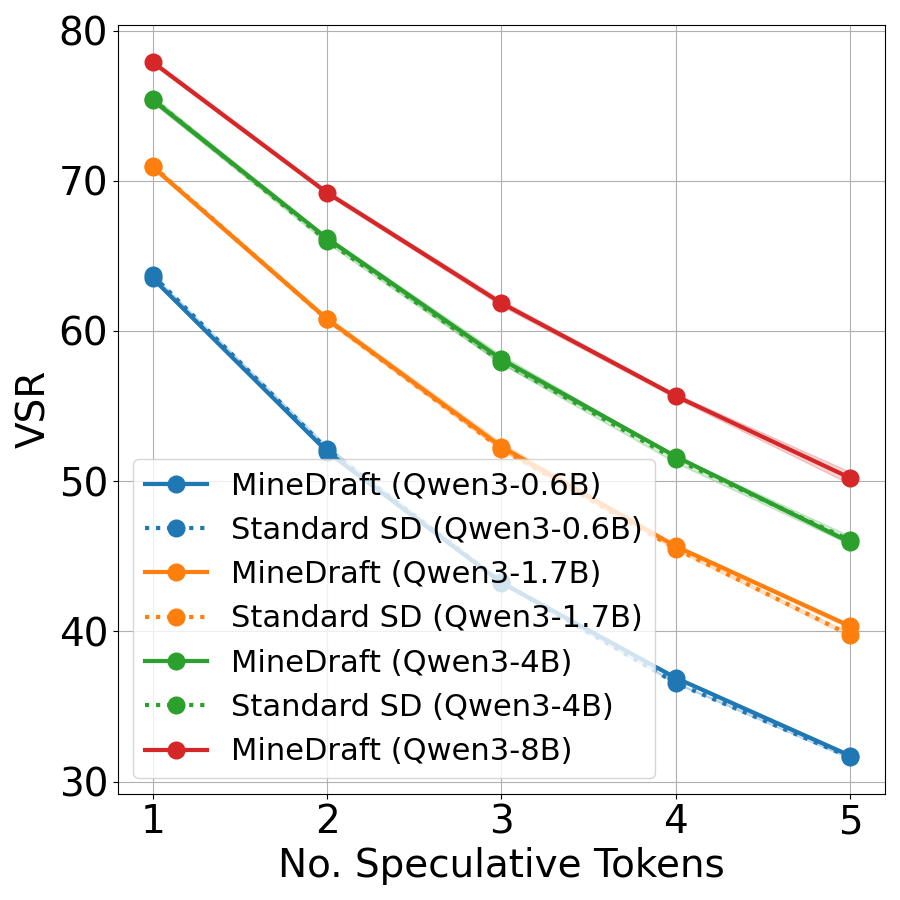} &
        \includegraphics[width=0.23\linewidth]{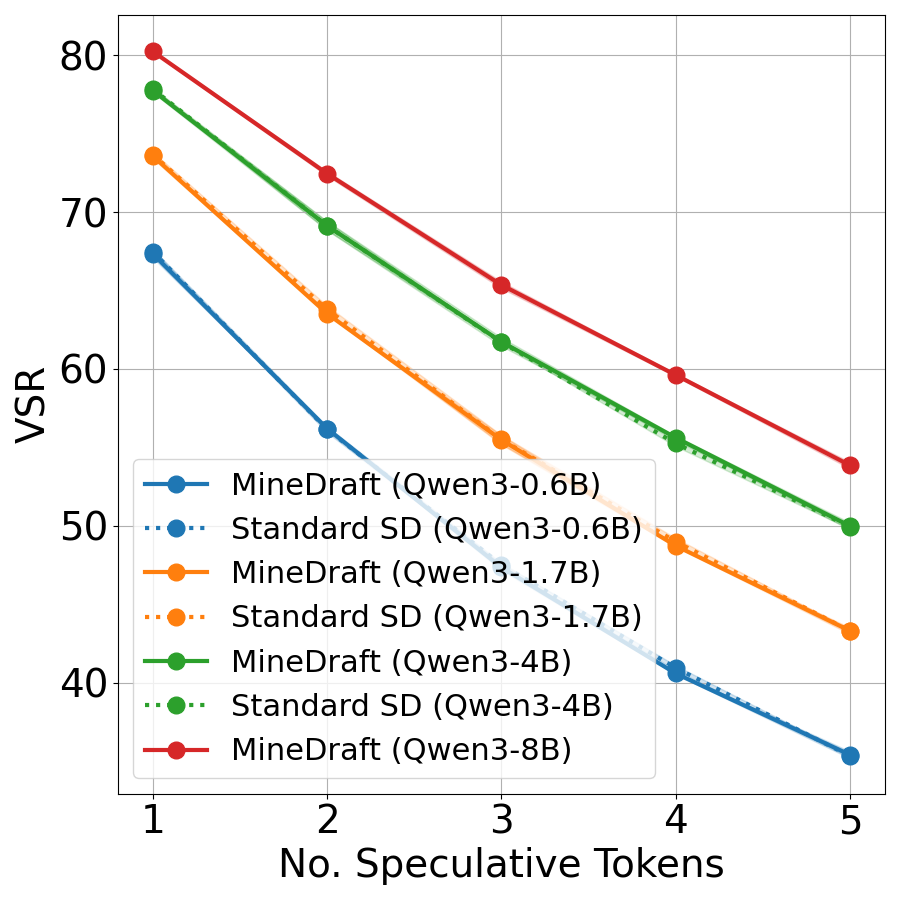} &
        \includegraphics[width=0.23\linewidth]{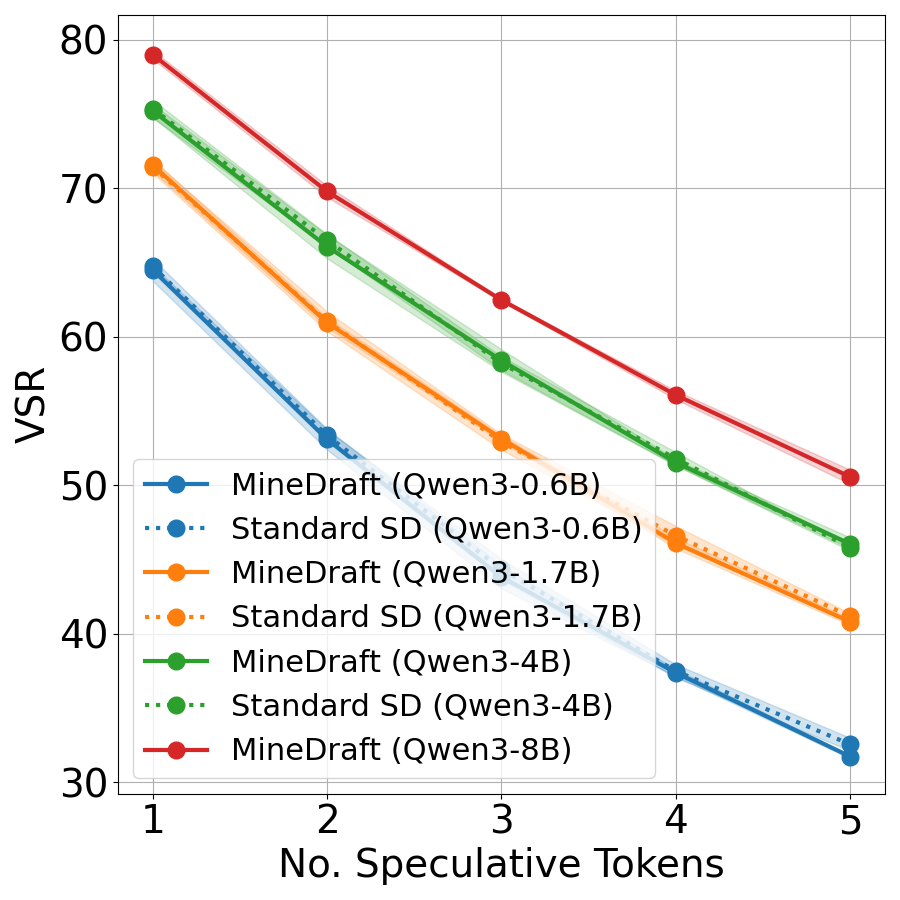} &
        \includegraphics[width=0.23\linewidth]{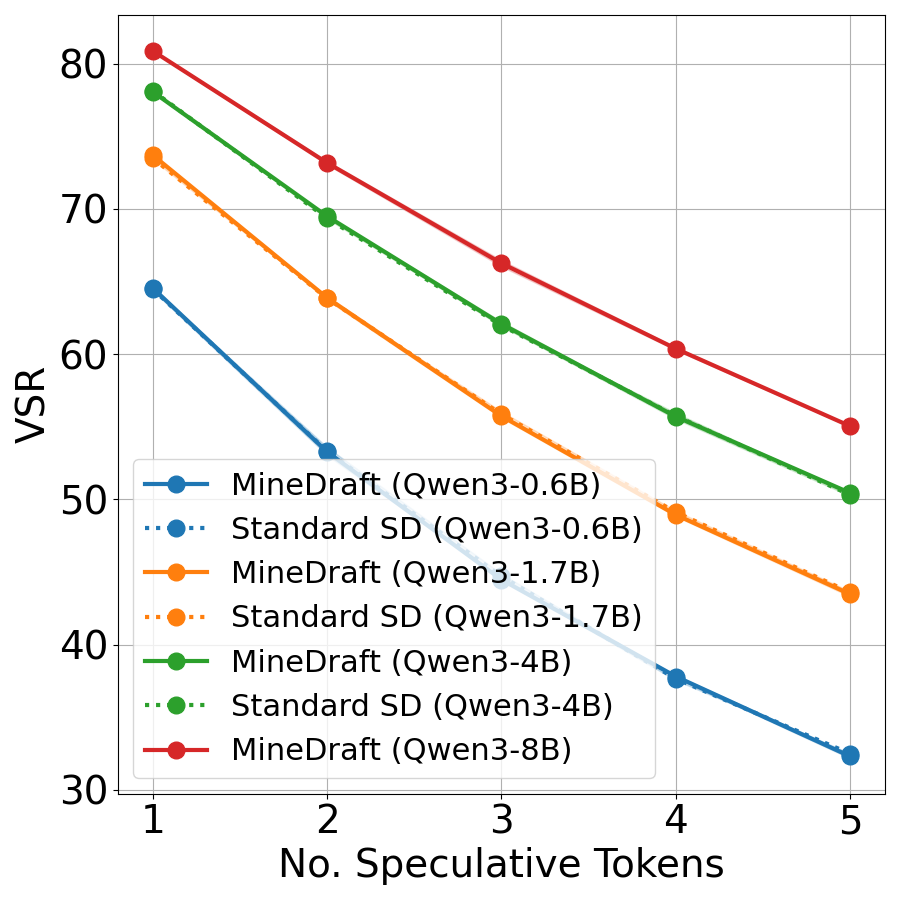} \\
    \end{tabular}}
    \caption{Comparisons for various draft models paired with Qwen3-32B using \alg{}. \alg{} always improves average throughput and end-to-end latency compared to standard SD counterparts and matches the VSR of standard SD counterparts across different draft models. More importantly, the performance of Settings 1 (Qwen3-0.6B) and 3 (Qwen3-4B) is degraded when $k>3$, while that of Setting 2 (Qwen3-1.7B) is not. This phenomenon of degraded performance is discussed in \cref{app:exp-draft-model}.
    }
    \label{fig:qwen3-32b-draft-models}
\end{figure}

\begin{figure}[!ht]
    \centering
    \setlength{\tabcolsep}{1pt} 
    \resizebox{0.99\linewidth}{!}{
    \begin{tabular}{ccccc}
        & \hspace{8mm}\textbf{Arena} & \hspace{8mm}\textbf{ShareGPT} & \hspace{8mm}\textbf{Spec-Bench}  & \hspace{8mm}\textbf{Tough}\\

        \rotatebox{90}{\parbox{3.5cm}{\centering \hspace{8mm}\textbf{Throughput}}} & 
        \includegraphics[width=0.23\linewidth]{results/Arena_Qwen3-32B_Qwen3-0_6B_1000_100pct_gen_throughput_vs_e.png} &
        \includegraphics[width=0.23\linewidth]{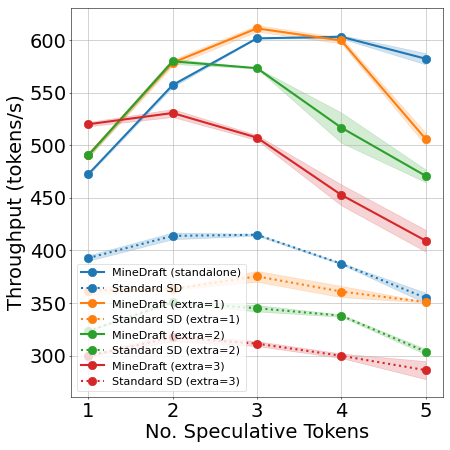} &
        \includegraphics[width=0.23\linewidth]{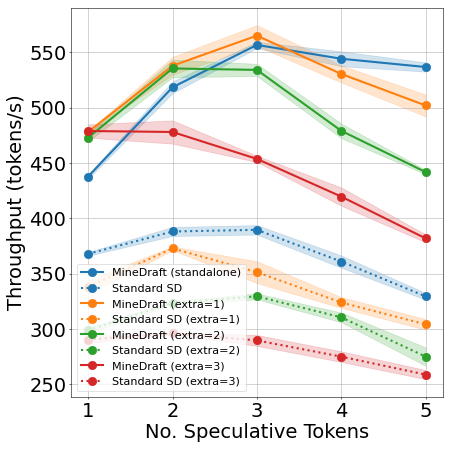} &
        \includegraphics[width=0.23\linewidth]{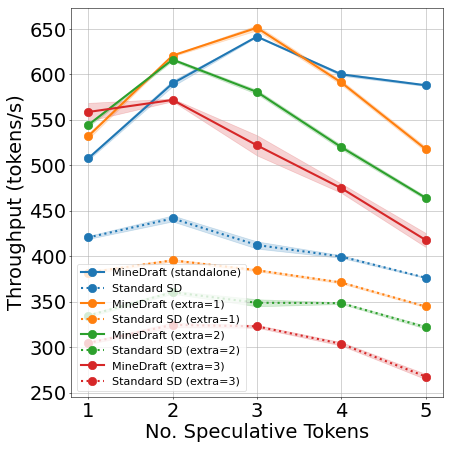} \\

        \rotatebox{90}{\parbox{3.5cm}{\centering \hspace{8mm}\textbf{E2EL}}} &
        \includegraphics[width=0.23\linewidth]{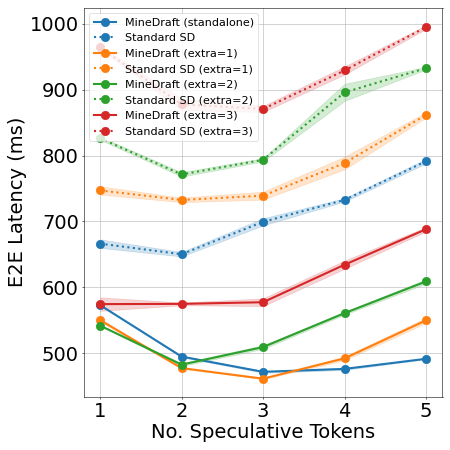} &
        \includegraphics[width=0.23\linewidth]{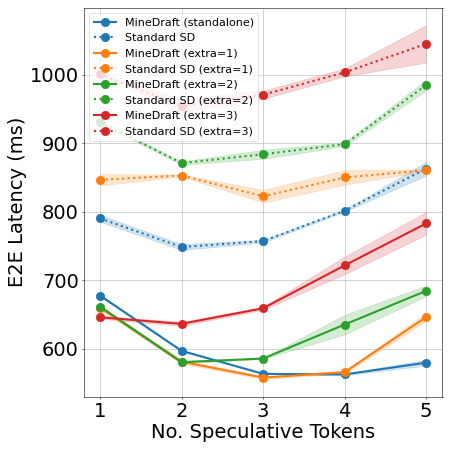} &
        \includegraphics[width=0.23\linewidth]{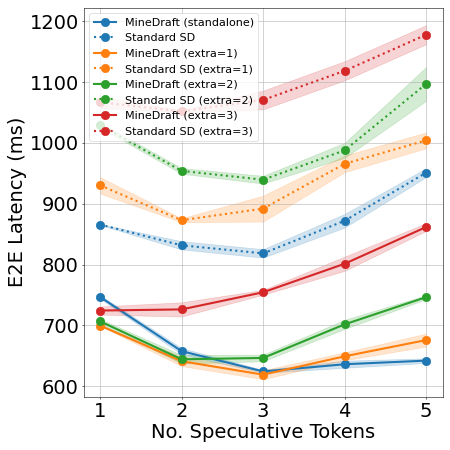} &
        \includegraphics[width=0.23\linewidth]{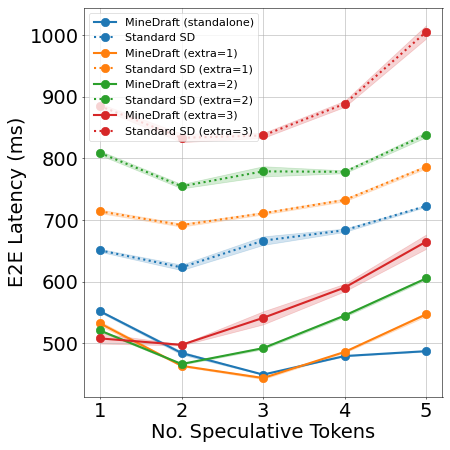} \\

        \rotatebox{90}{\parbox{3.5cm}{\centering \hspace{8mm}\textbf{VSR}}} &
        \includegraphics[width=0.23\linewidth]{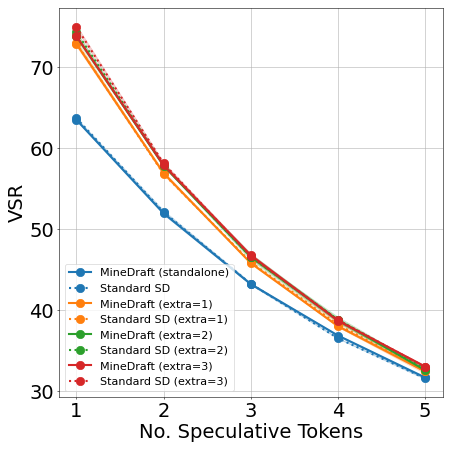} &
        \includegraphics[width=0.23\linewidth]{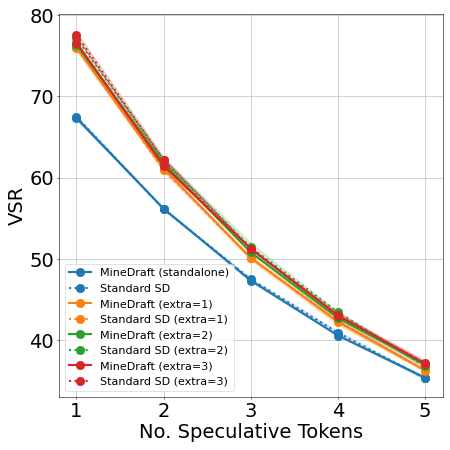} &
        \includegraphics[width=0.23\linewidth]{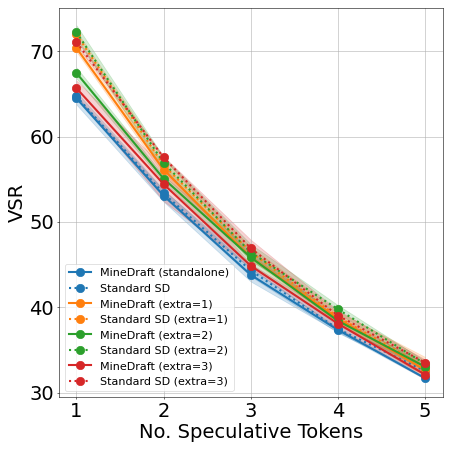} &
        \includegraphics[width=0.23\linewidth]{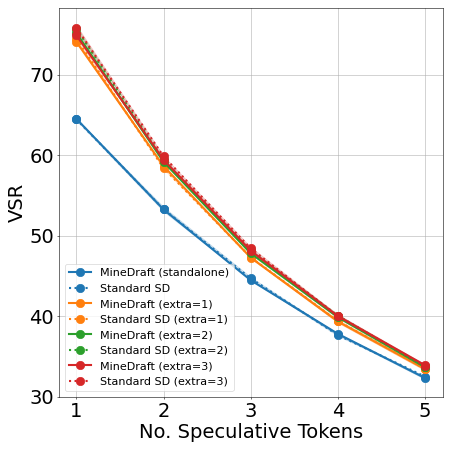} \\
    \end{tabular}}
    \caption{Comparisons against standard SD across various $e$ on Setting 1. \alg{} consistently improves average throughput and end-to-end latency over its standard SD counterparts. However, the VSR of \alg{} sometimes degrades relative to standard SD on the Spec-Bench dataset, as analyzed in \cref{app:exp-vsr}.
    }
    \label{fig:vary-e}
\end{figure}

\begin{figure}[!ht]
    \centering
    \setlength{\tabcolsep}{1pt} 
    \resizebox{0.99\linewidth}{!}{
    \begin{tabular}{ccccc}
        & \hspace{8mm}\textbf{Arena} & \hspace{8mm}\textbf{ShareGPT} & \hspace{8mm}\textbf{Spec-Bench}  & \hspace{8mm}\textbf{Tough}\\
        
        \rotatebox{90}{\parbox{3.5cm}{\centering \hspace{8mm}\textbf{Throughput}}} & 
        \includegraphics[width=0.23\linewidth]{results/Arena_Qwen3-32B_Qwen3-0_6B_1000_bs16_100pct_gen_throughput_vs_n.png} &
        \includegraphics[width=0.23\linewidth]{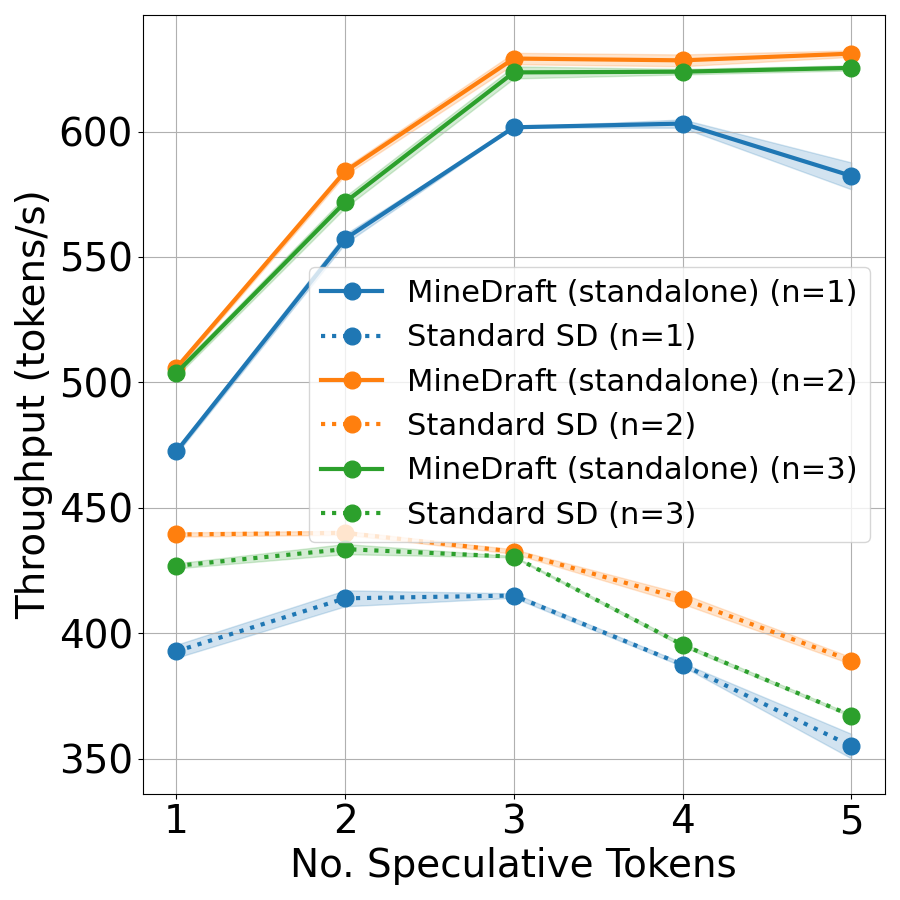} &
        \includegraphics[width=0.23\linewidth]{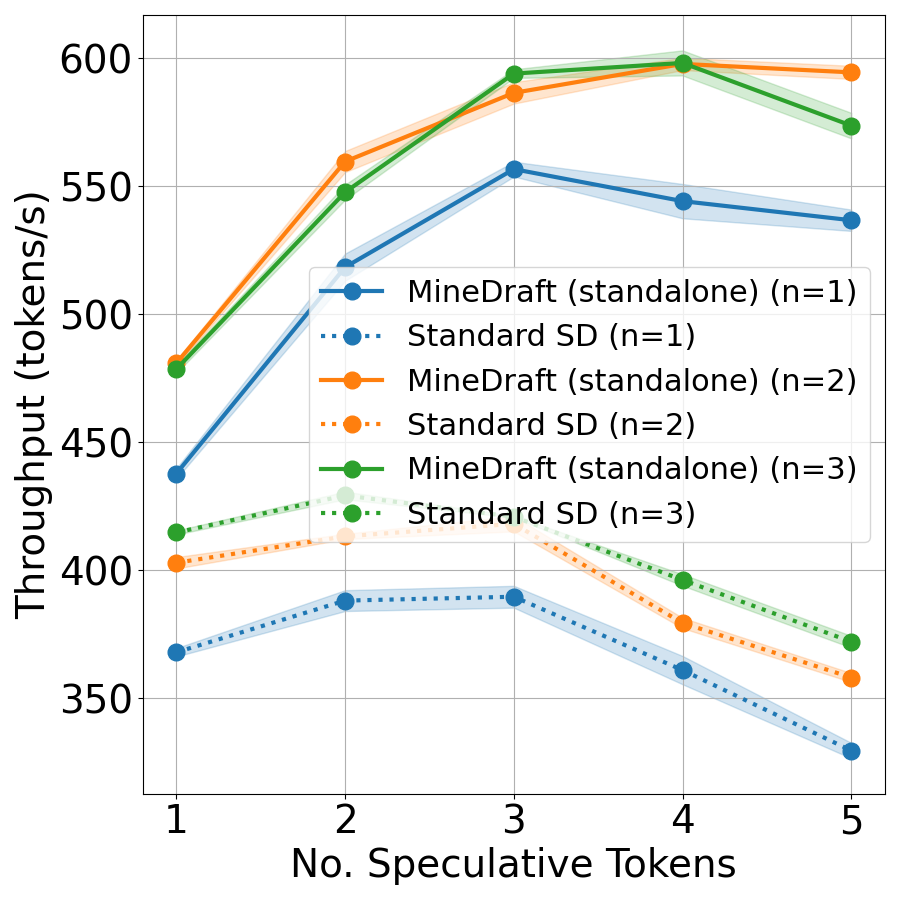} &
        \includegraphics[width=0.23\linewidth]{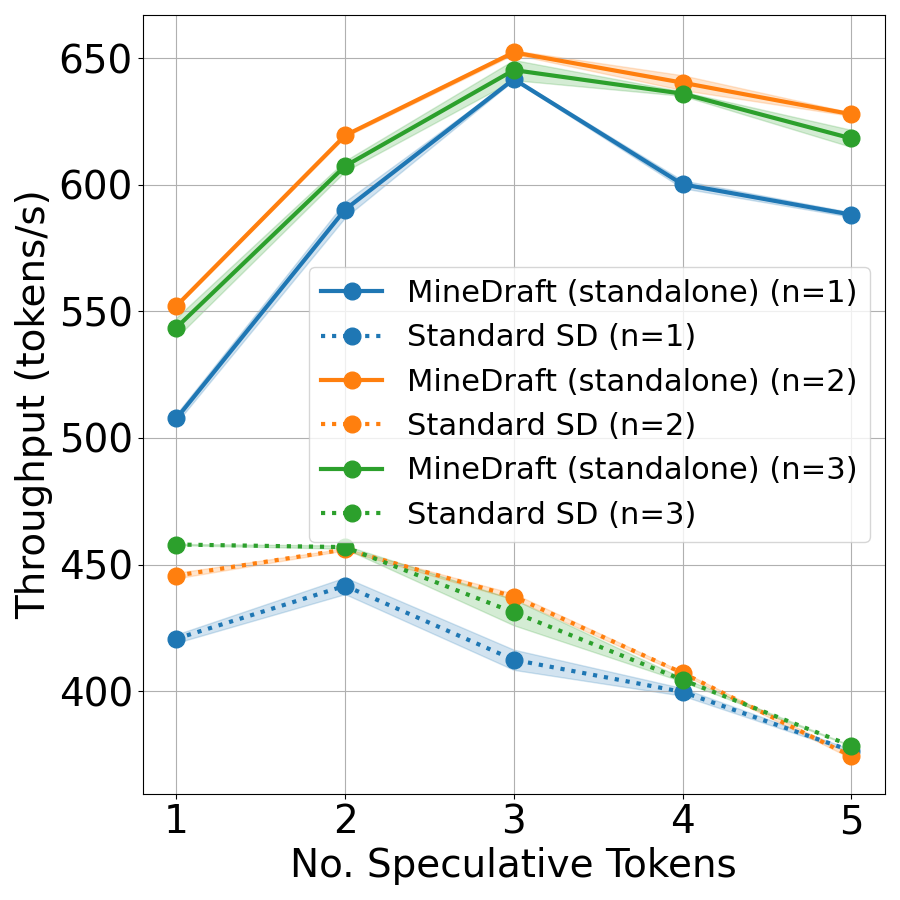} \\

        \rotatebox{90}{\parbox{3.5cm}{\centering \hspace{8mm}\textbf{E2EL}}} & 
        \includegraphics[width=0.23\linewidth]{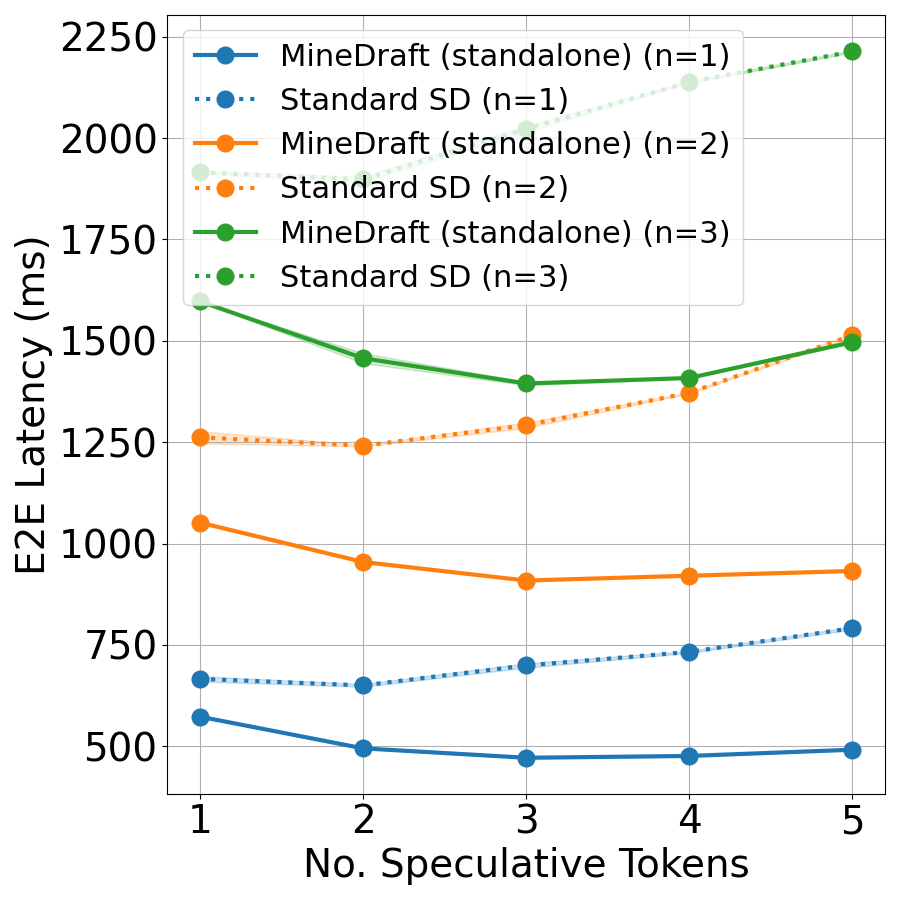} &
        \includegraphics[width=0.23\linewidth]{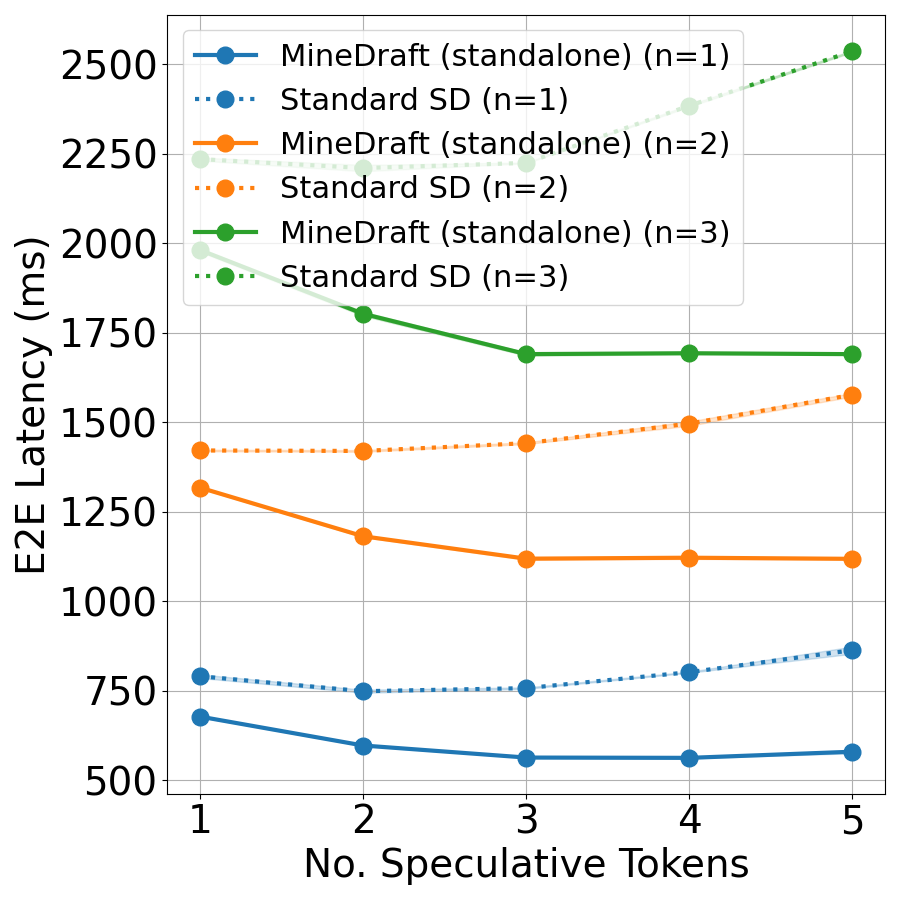} &
        \includegraphics[width=0.23\linewidth]{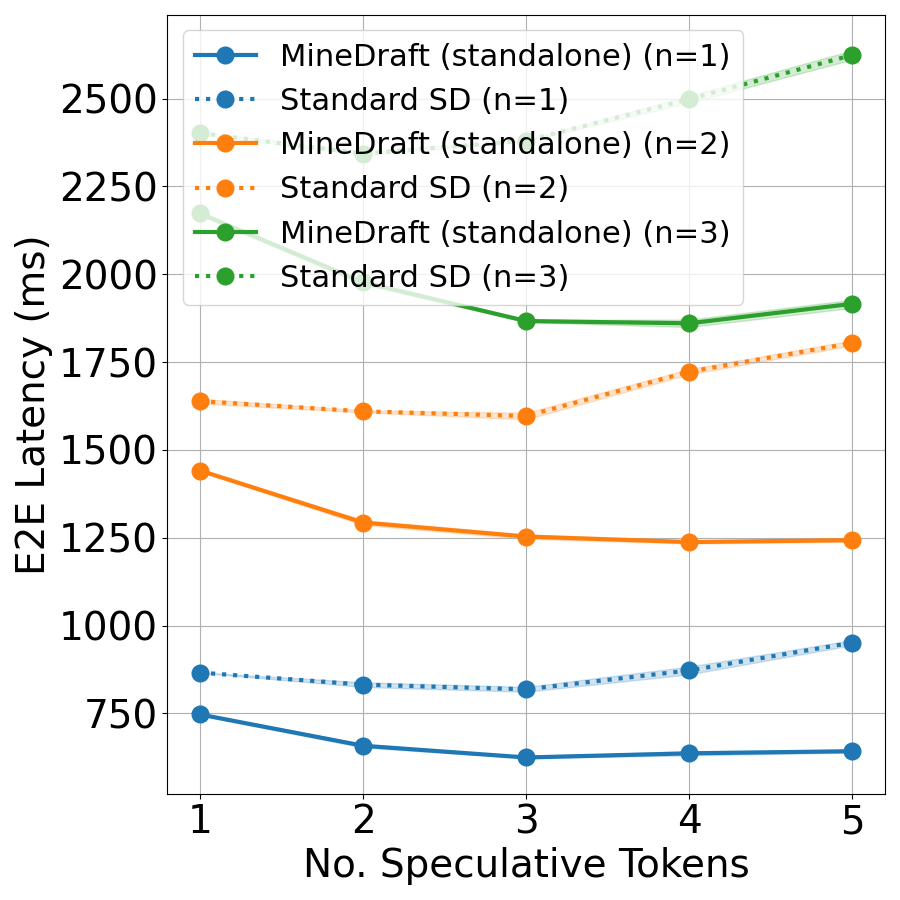} &
        \includegraphics[width=0.23\linewidth]{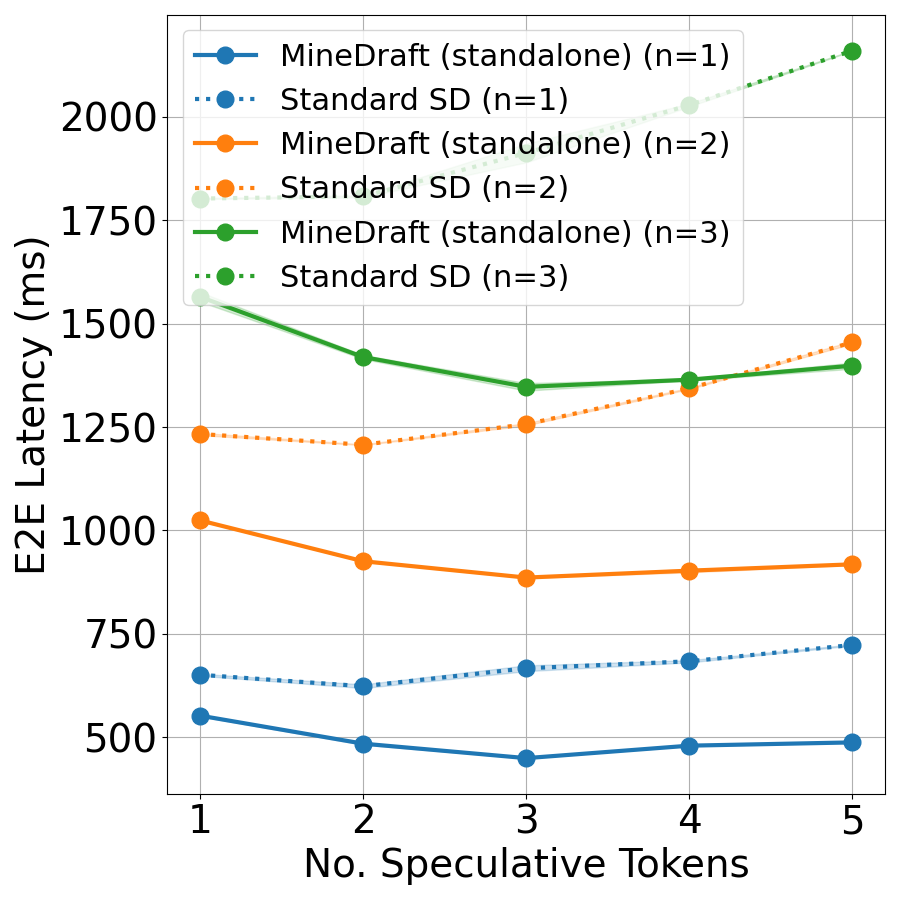} \\
    
    \end{tabular}}
    \caption{Throughput and end-to-end latency comparisons against standard SD across various $n$ on Setting 1.
    \alg{} consistently outperforms standard SD across all selected $n$ values. Throughput of \alg{} reaches the maximum when $n=2$ except in the Spec-Bench dataset, where $n=3$ may also take the maximum throughput at specific $k$ values (3 and 4).
    }
    \label{fig:vary-n}
\end{figure}

\begin{figure}[!ht]
    \centering
    \setlength{\tabcolsep}{1pt} 
    \resizebox{0.99\linewidth}{!}{
    \begin{tabular}{ccccc}
        & \hspace{8mm}\textbf{Arena} & \hspace{8mm}\textbf{ShareGPT} & \hspace{8mm}\textbf{Spec-Bench}  & \hspace{8mm}\textbf{Tough}\\
        
        \rotatebox{90}{\parbox{3.5cm}{\centering \hspace{8mm}\textbf{Throughput}}} & 
        \includegraphics[width=0.23\linewidth]{results/Arena_Qwen3-32B_Qwen3-0_6B_1000_100pct_gen_throughput_vs_m.png} &
        \includegraphics[width=0.23\linewidth]{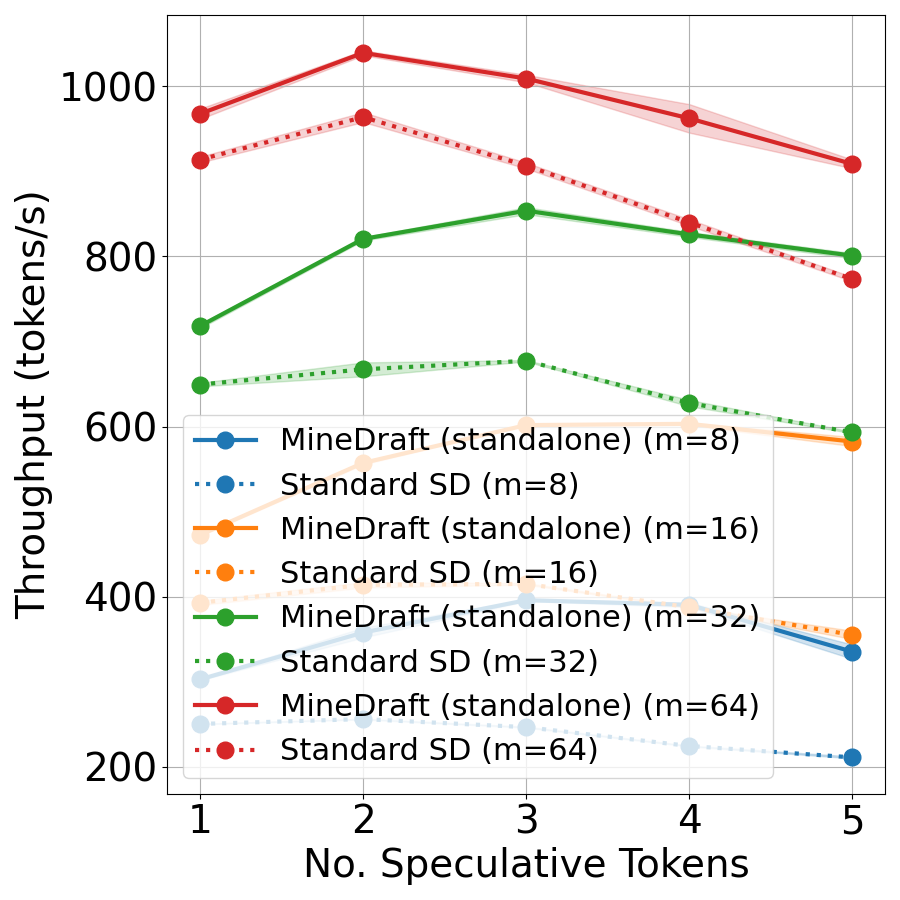} &
        \includegraphics[width=0.23\linewidth]{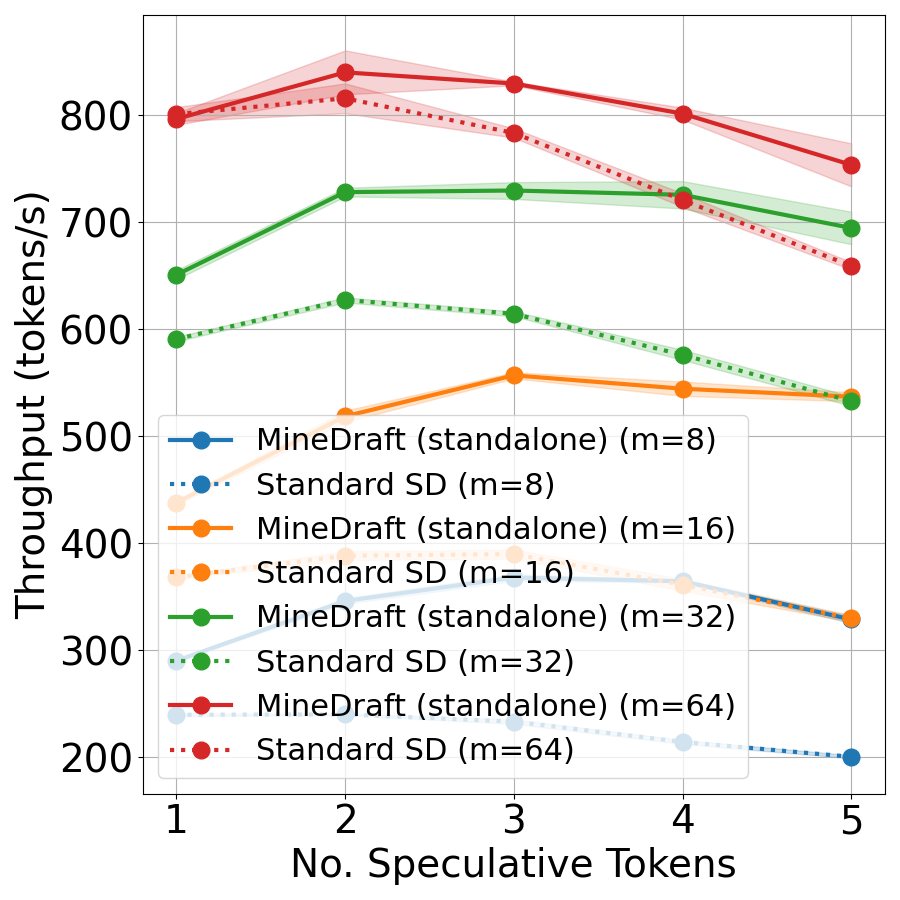} &
        \includegraphics[width=0.23\linewidth]{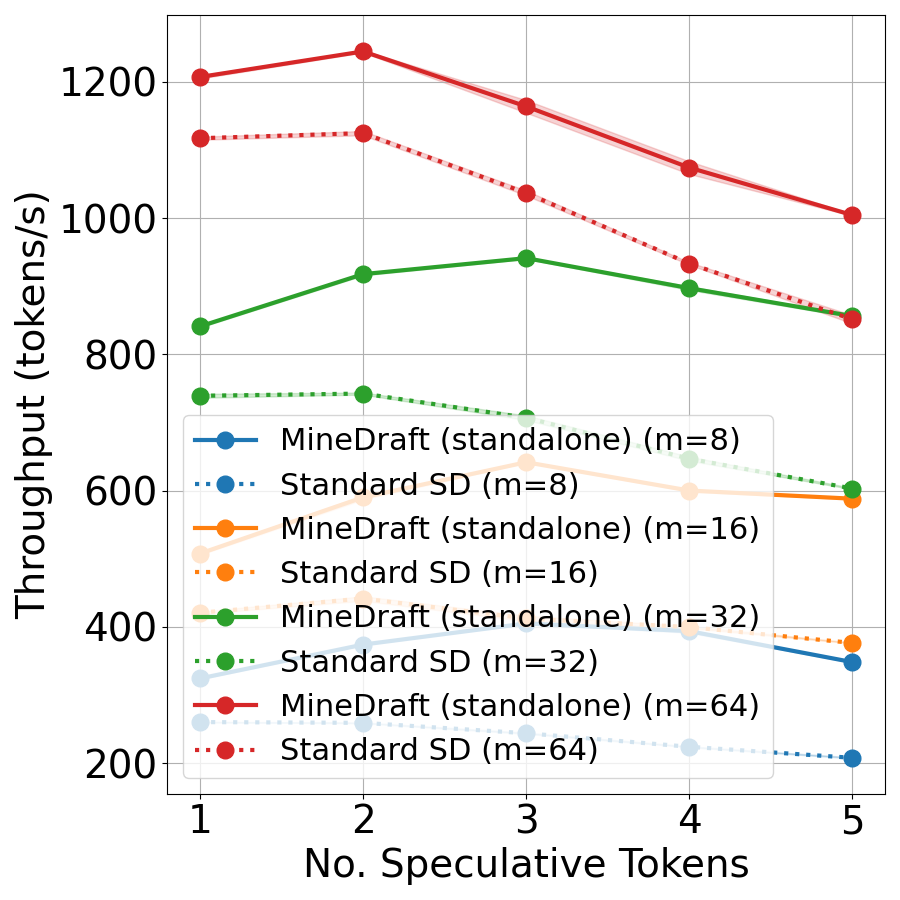} \\

        \rotatebox{90}{\parbox{3.5cm}{\centering \hspace{8mm}\textbf{E2EL}}} & 
        \includegraphics[width=0.23\linewidth]{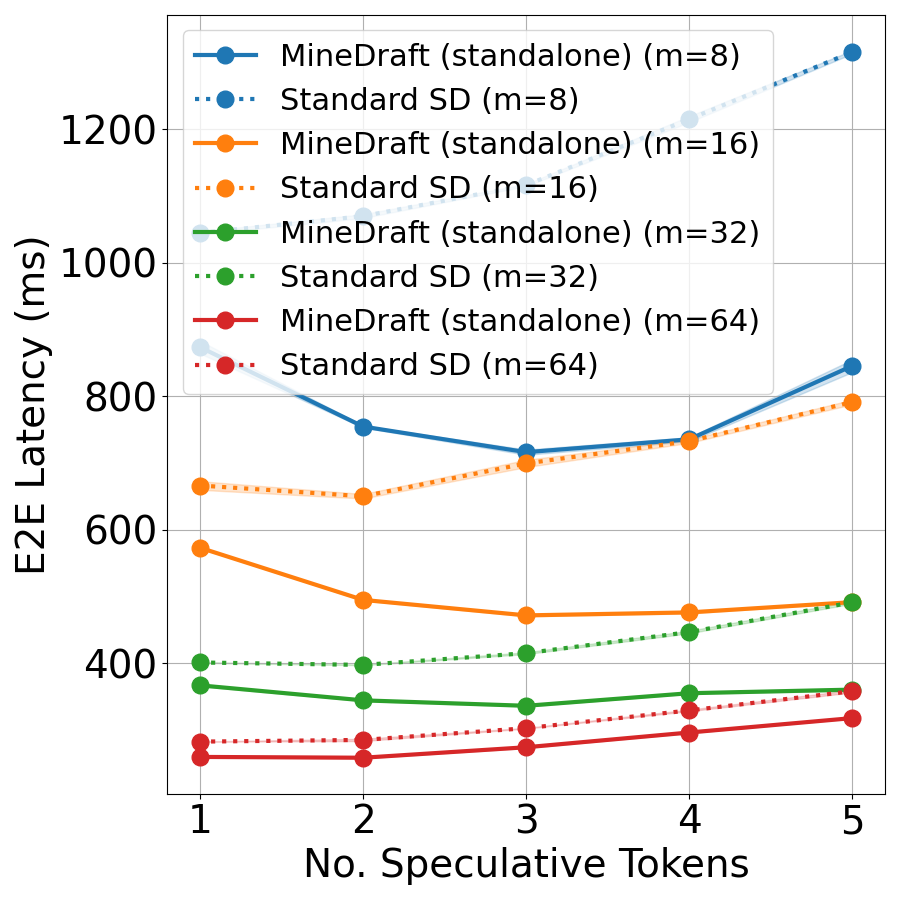} &
        \includegraphics[width=0.23\linewidth]{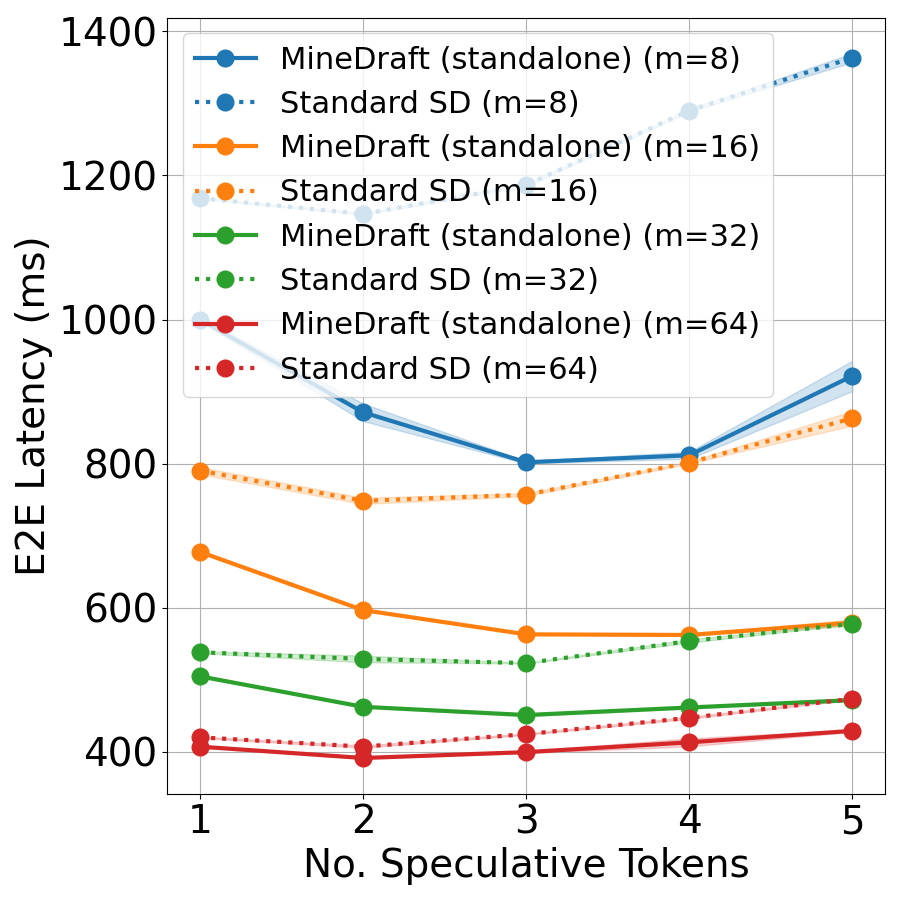} &
        \includegraphics[width=0.23\linewidth]{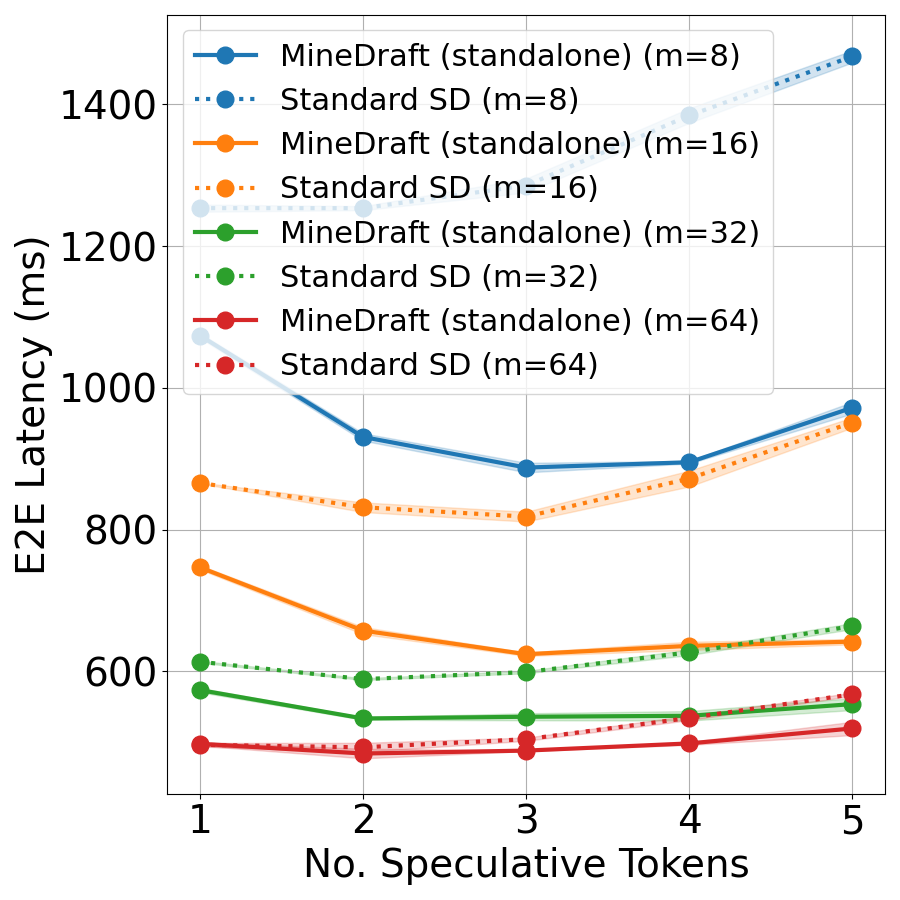} &
        \includegraphics[width=0.23\linewidth]{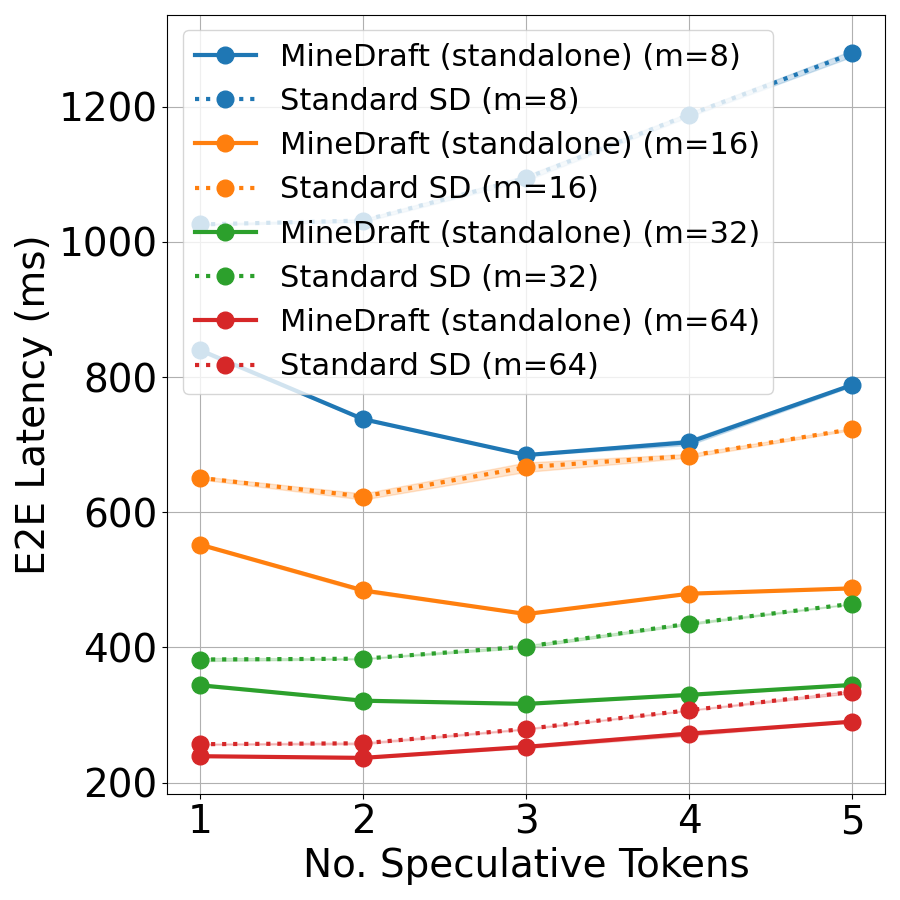} \\

    \end{tabular}}
    \caption{Throughput and end-to-end latency comparisons against standard SD across various $m$ (batch size) on Setting 1.
    The plots show that the performance of \alg{} is sometimes comparable to that of standard SD with a doubled batch size ($2m$) at a large $k$ value (4 to 5).
    }
    \label{fig:vary-bs}
\end{figure}

\begin{figure}[!ht]
    \centering
    \setlength{\tabcolsep}{1pt} 
    \resizebox{0.99\linewidth}{!}{
    \begin{tabular}{ccccc}
        & \hspace{8mm}\textbf{Arena} & \hspace{8mm}\textbf{ShareGPT} & \hspace{8mm}\textbf{Spec-Bench}  & \hspace{8mm}\textbf{Tough}\\

        \rotatebox{90}{\parbox{3.5cm}{\centering \hspace{10mm}\textbf{Qwen3~32B-0.6B}}} &
        \includegraphics[width=0.23\linewidth]{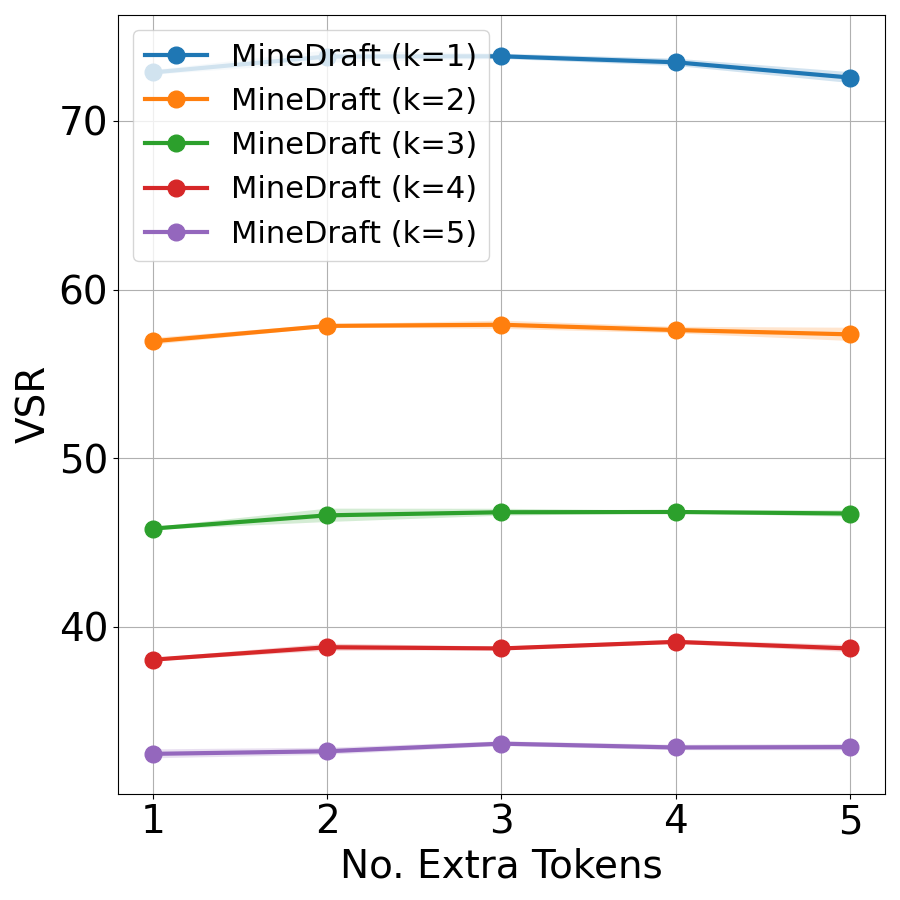} &
        \includegraphics[width=0.23\linewidth]{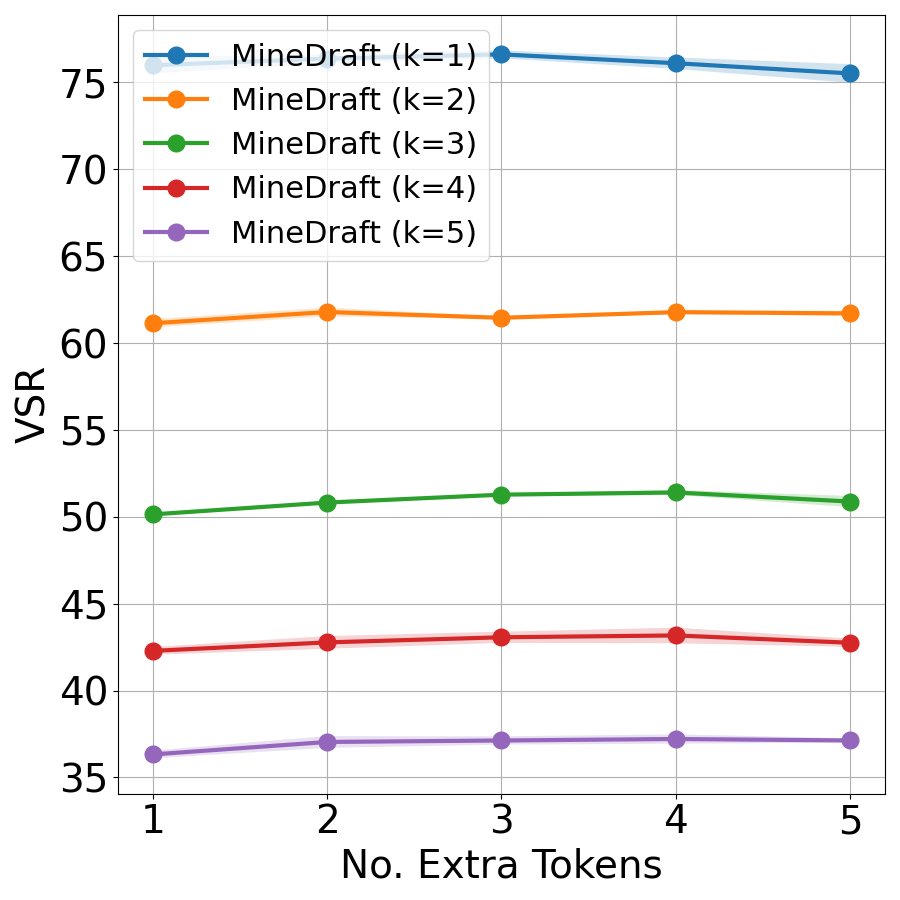} &
        \includegraphics[width=0.23\linewidth]{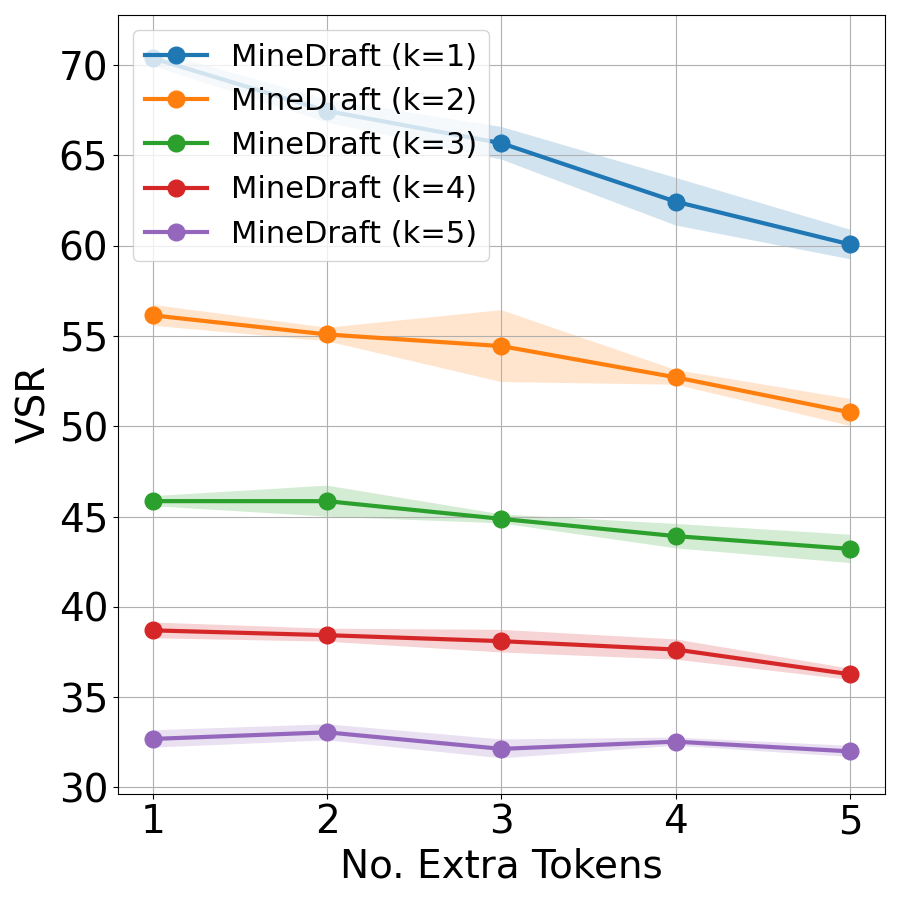} &
        \includegraphics[width=0.23\linewidth]{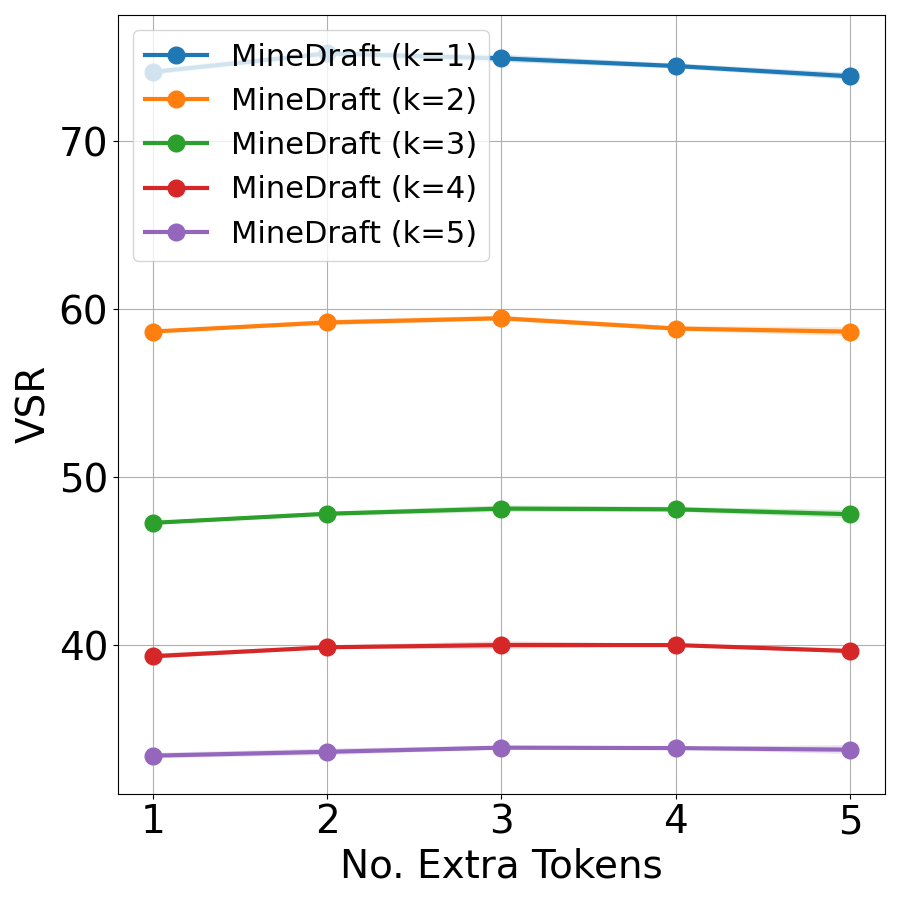} \\
    \end{tabular}}
    \caption{VSR comparison for \alg{} with TETRIS across extra tokens on Setting 1.
    VSR of \alg{} 's integration with TETRIS generally increases as the number of extra tokens ($e$) increases except on Spec-Bench.
    }
    \label{fig:vsr-vs-e-all}
\end{figure}

\begin{figure}[!ht]
    \centering
    \setlength{\tabcolsep}{1pt} 
    \resizebox{0.96\linewidth}{!}{
    \begin{tabular}{ccccc}
        & \hspace{8mm}\textbf{Arena} & \hspace{8mm}\textbf{ShareGPT} & \hspace{8mm}\textbf{Spec-Bench}  & \hspace{8mm}\textbf{Tough}\\

        \rotatebox{90}{\parbox{3.5cm}{\centering \hspace{10mm}\textbf{$e = 1$}}} &
        \includegraphics[width=0.23\linewidth]{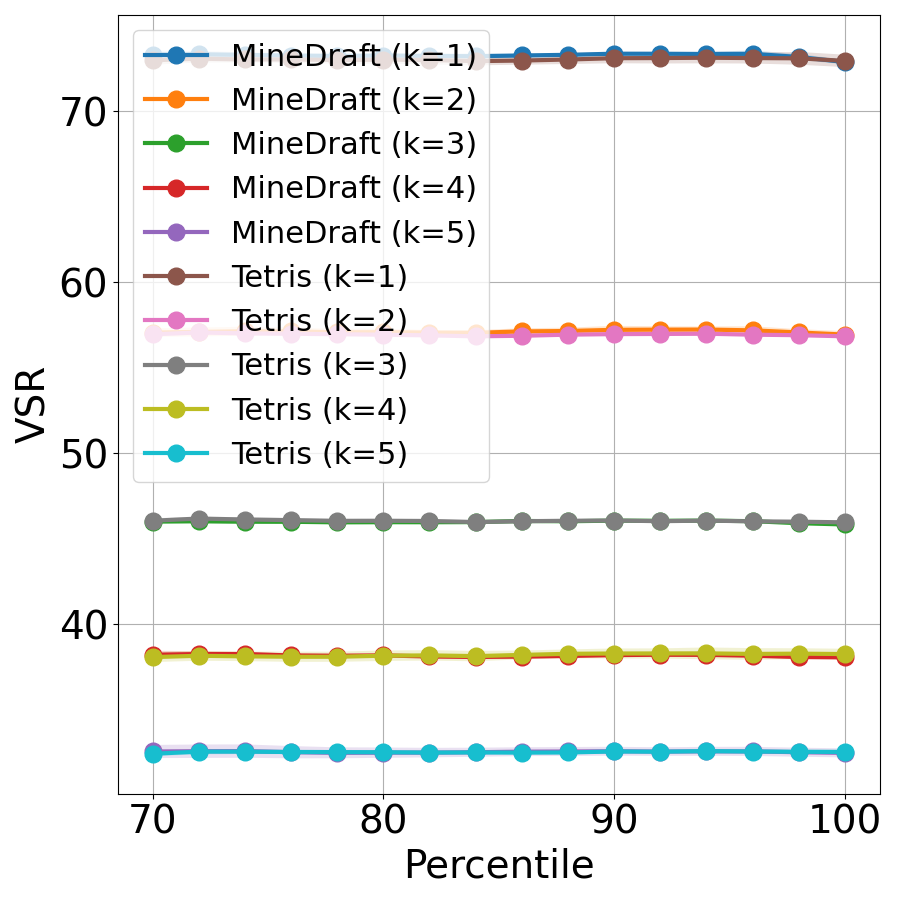} &
        \includegraphics[width=0.23\linewidth]{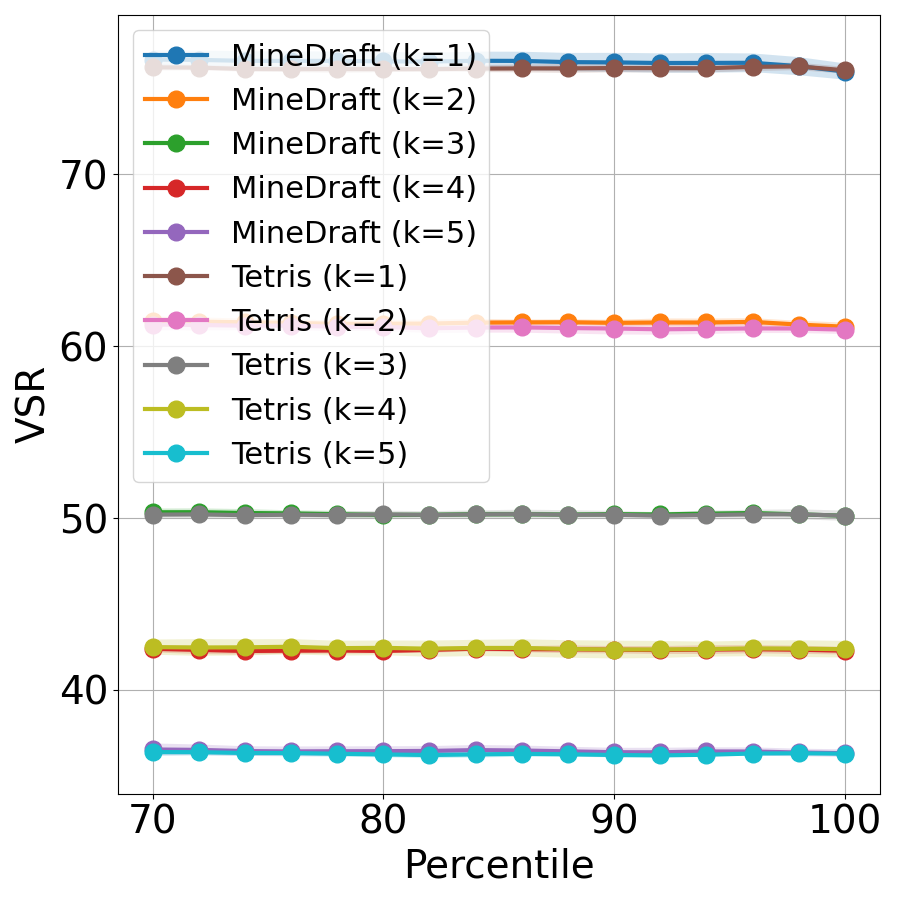} &
        \includegraphics[width=0.23\linewidth]{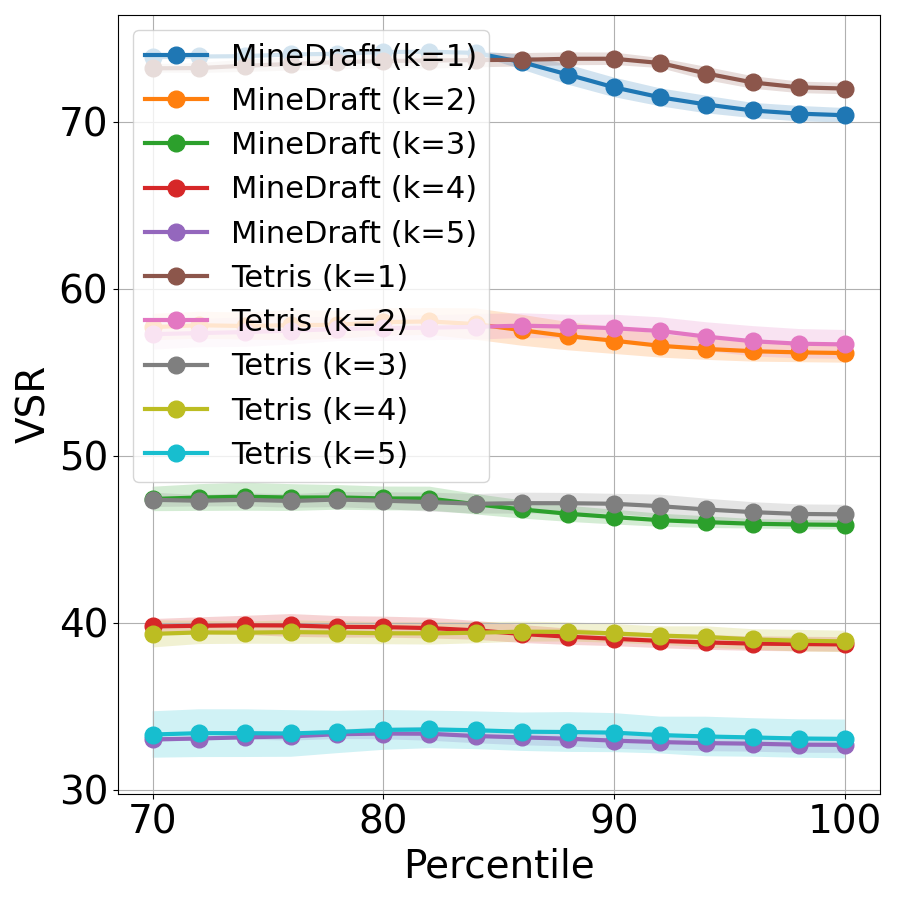} &
        \includegraphics[width=0.23\linewidth]{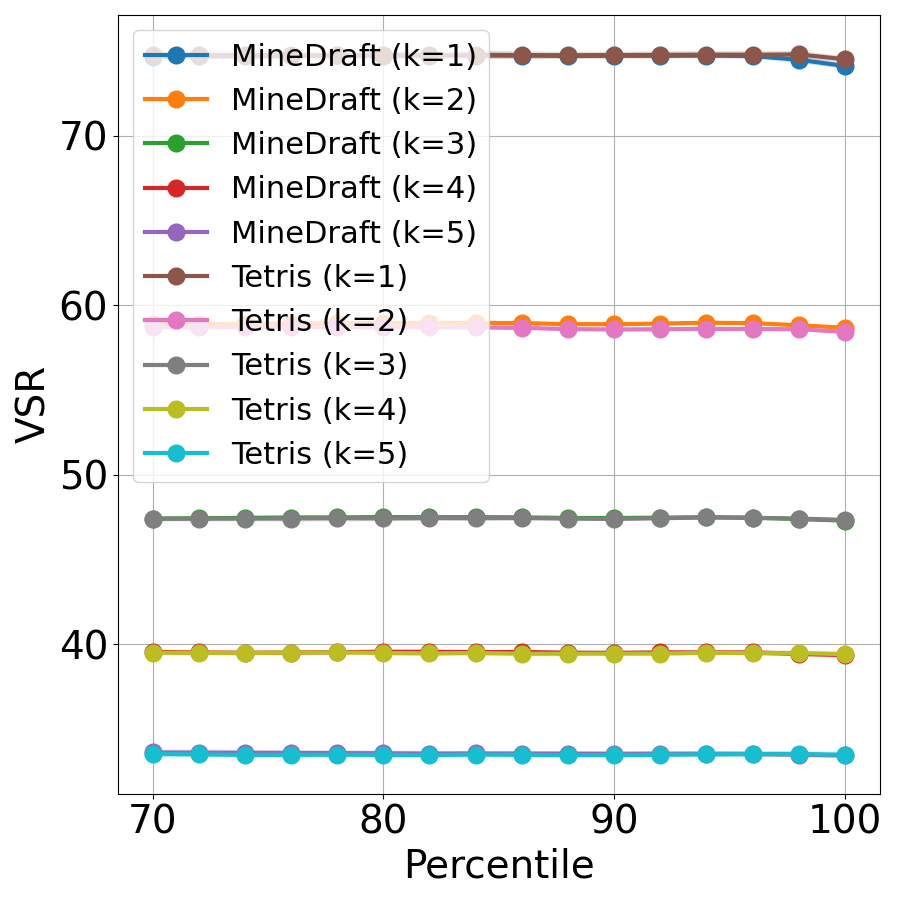} \\

        \rotatebox{90}{\parbox{3.5cm}{\centering \hspace{10mm}\textbf{$e = 2$}}} &
        \includegraphics[width=0.23\linewidth]{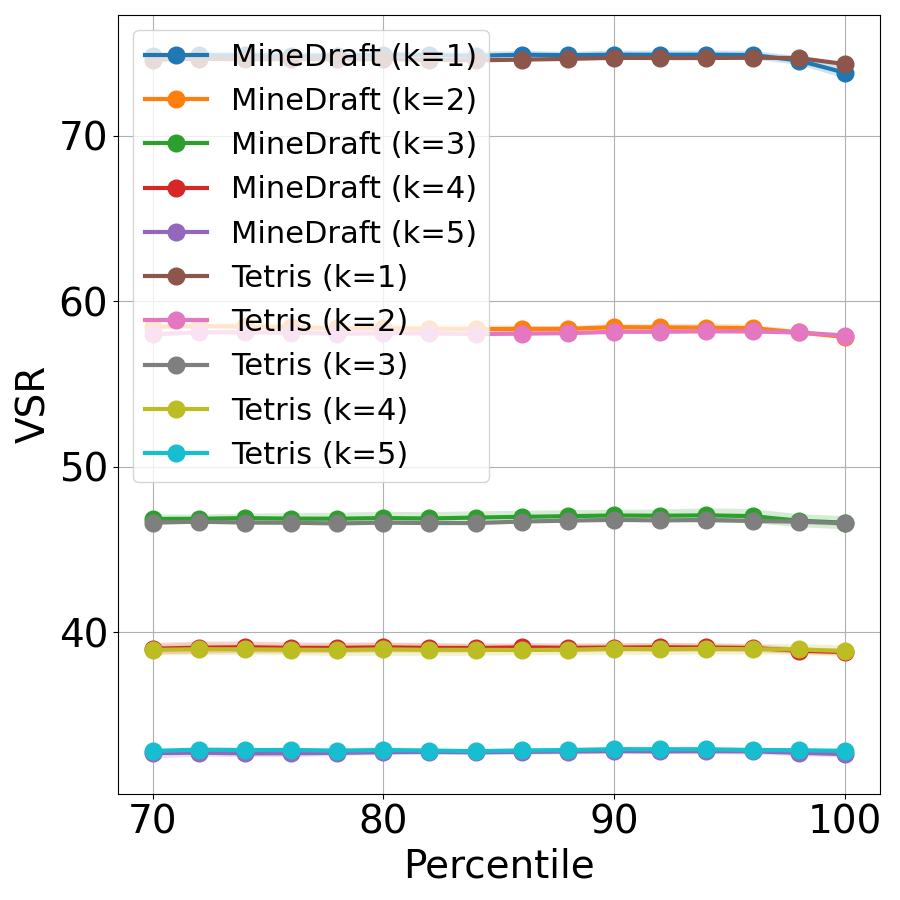} &
        \includegraphics[width=0.23\linewidth]{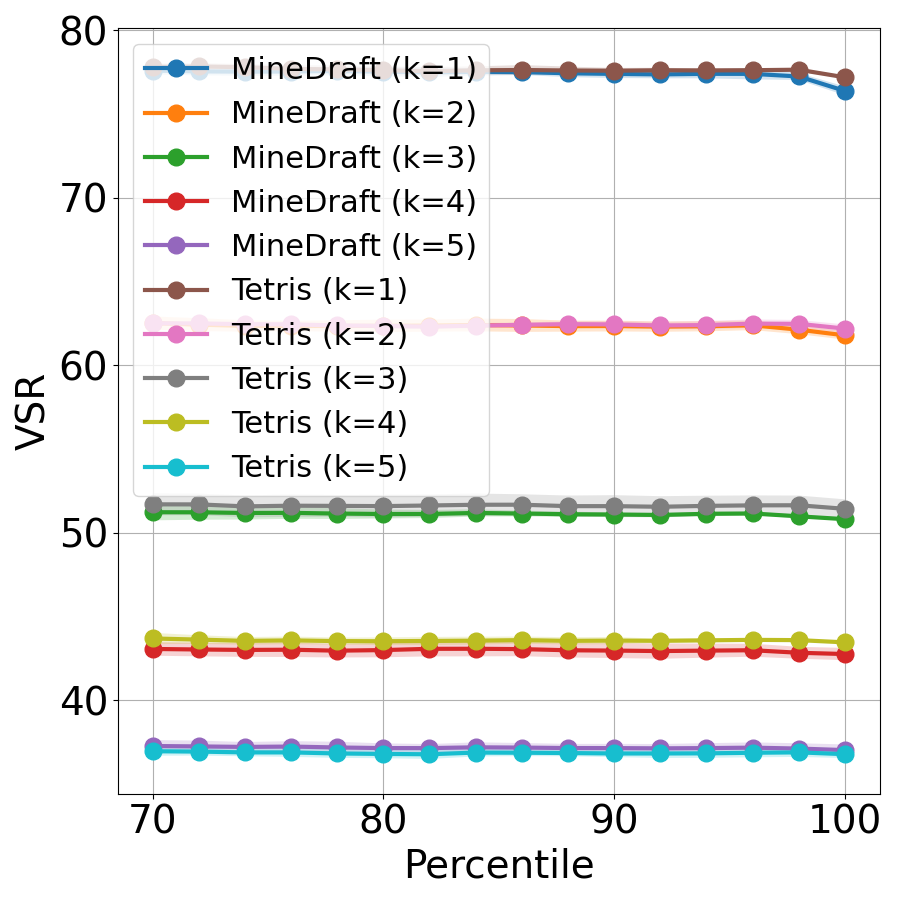} &
        \includegraphics[width=0.23\linewidth]{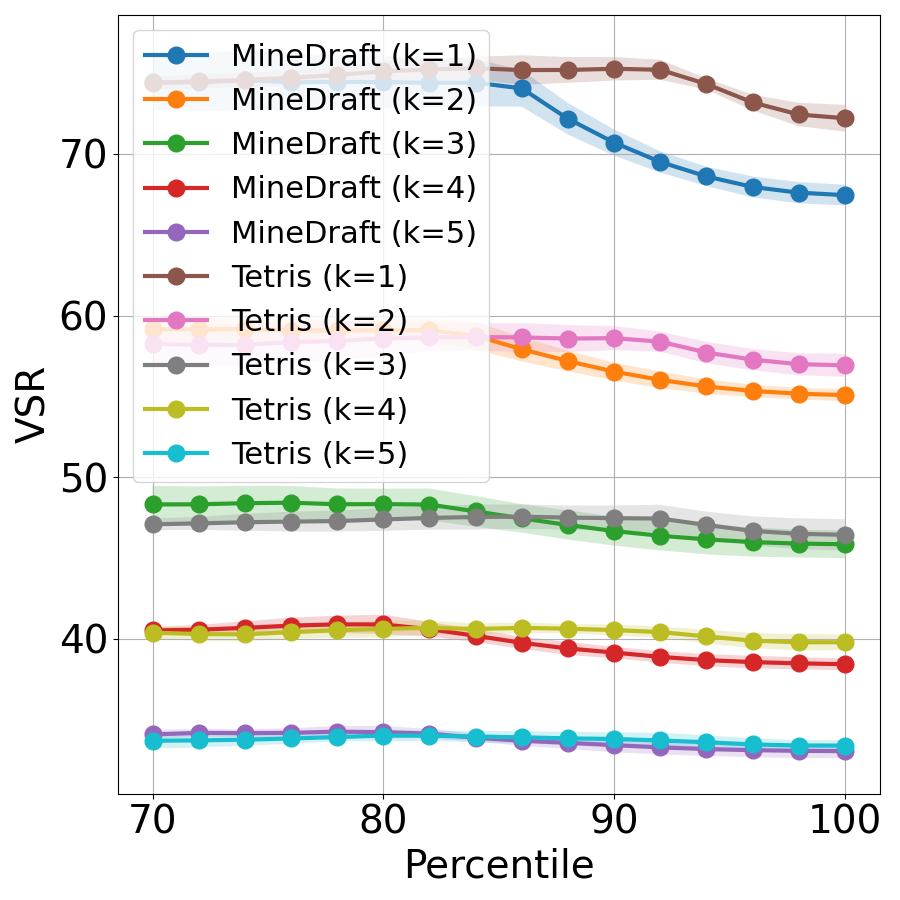} &
        \includegraphics[width=0.23\linewidth]{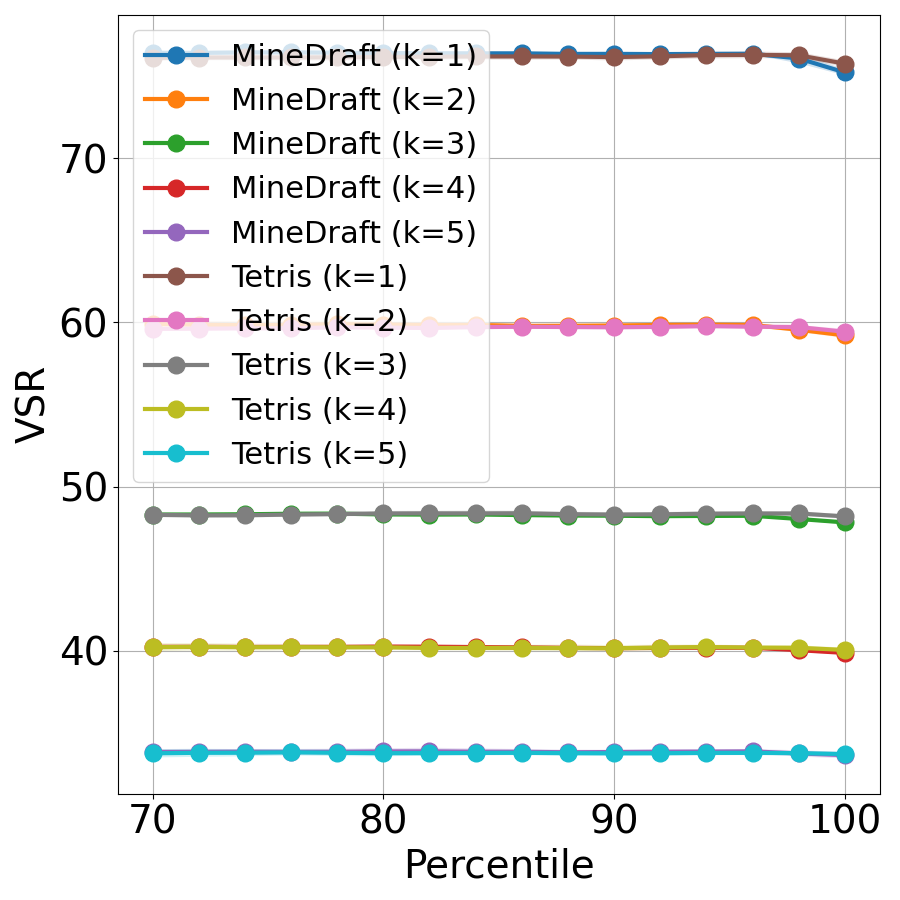} \\

        \rotatebox{90}{\parbox{3.5cm}{\centering \hspace{10mm}\textbf{$e = 3$}}} &
        \includegraphics[width=0.23\linewidth]{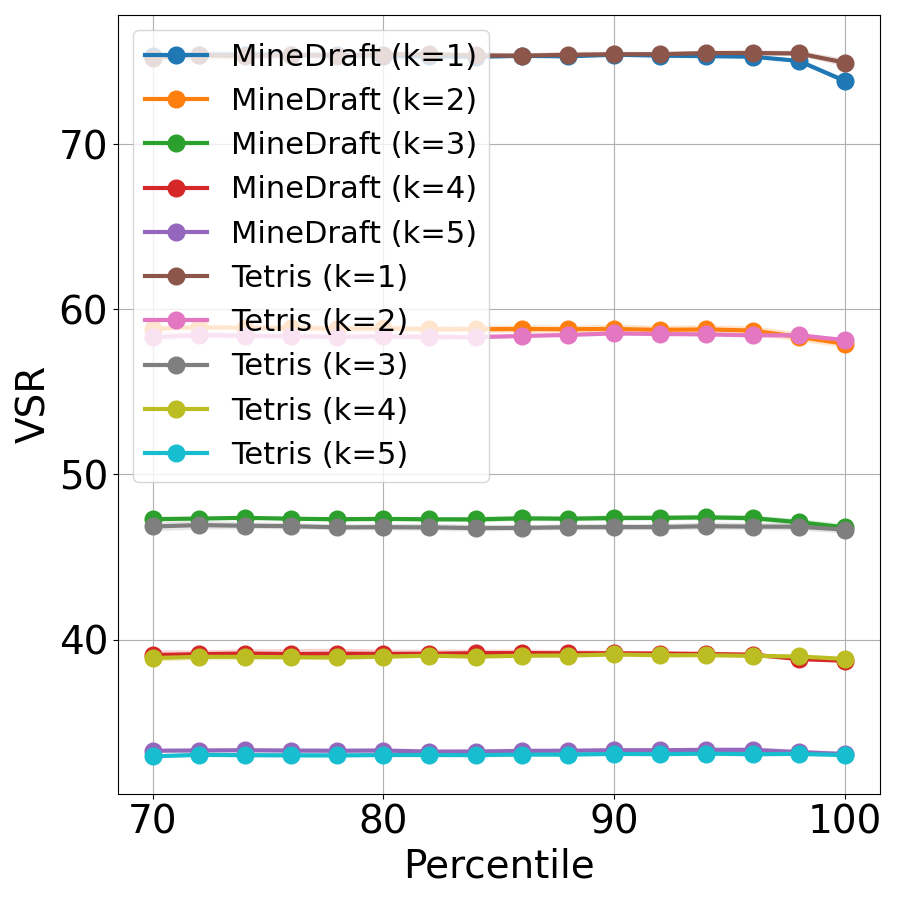} &
        \includegraphics[width=0.23\linewidth]{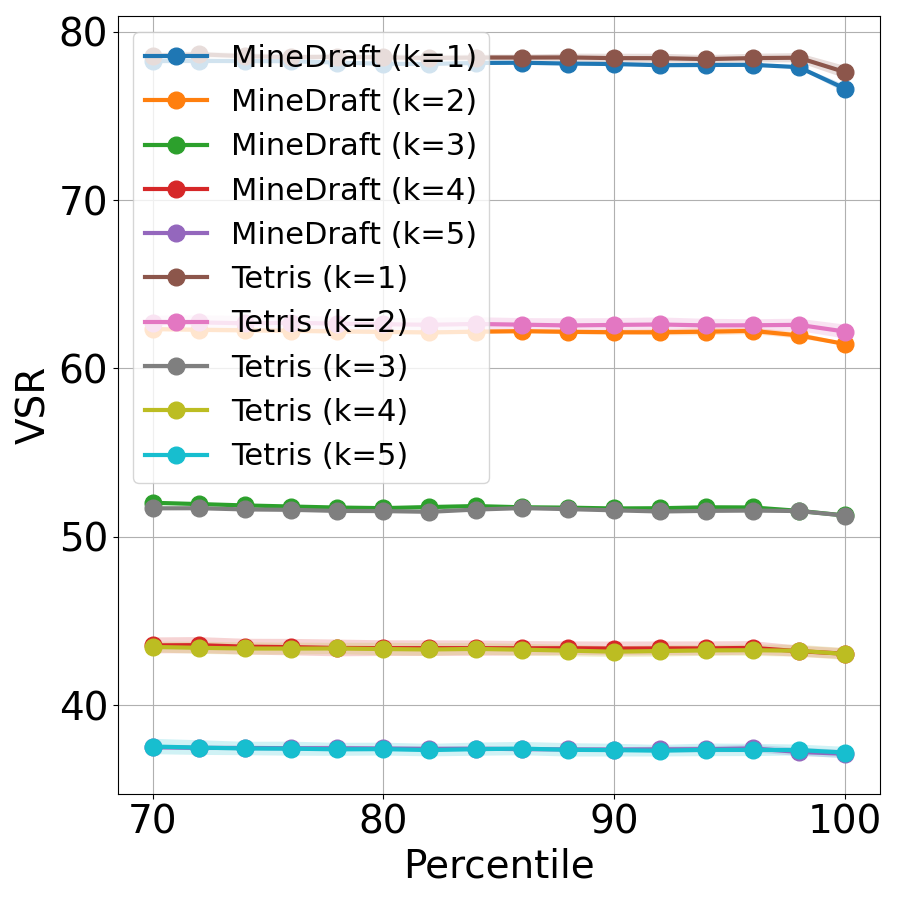} &
        \includegraphics[width=0.23\linewidth]{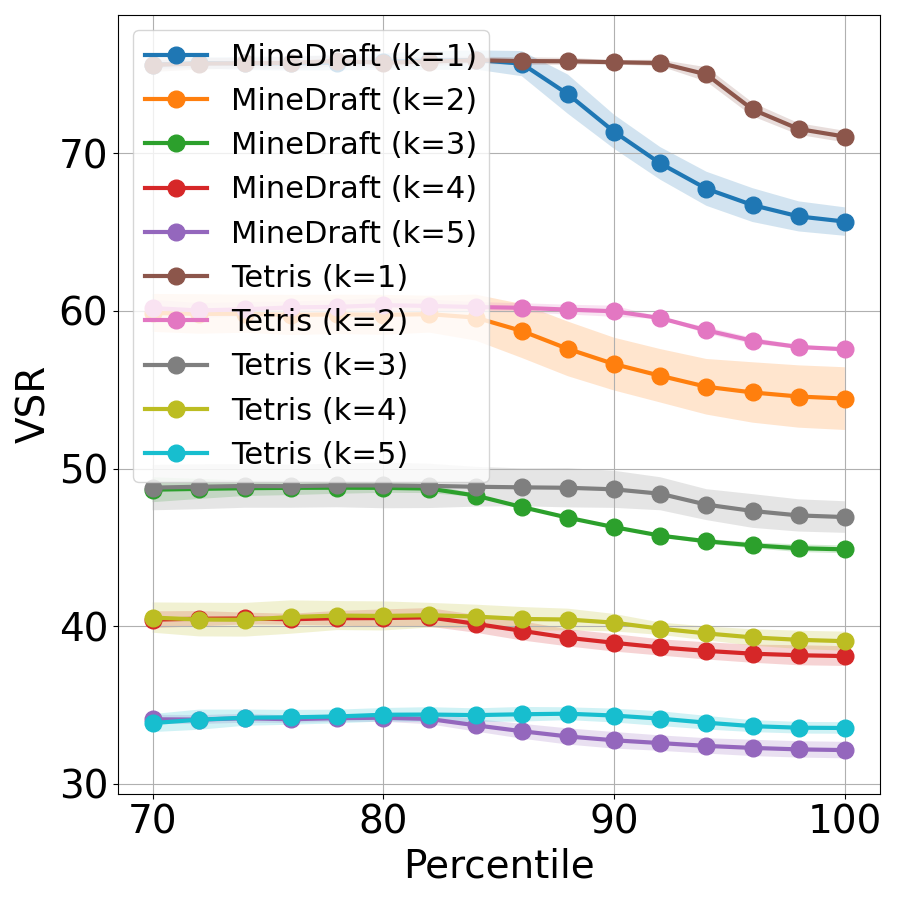} &
        \includegraphics[width=0.23\linewidth]{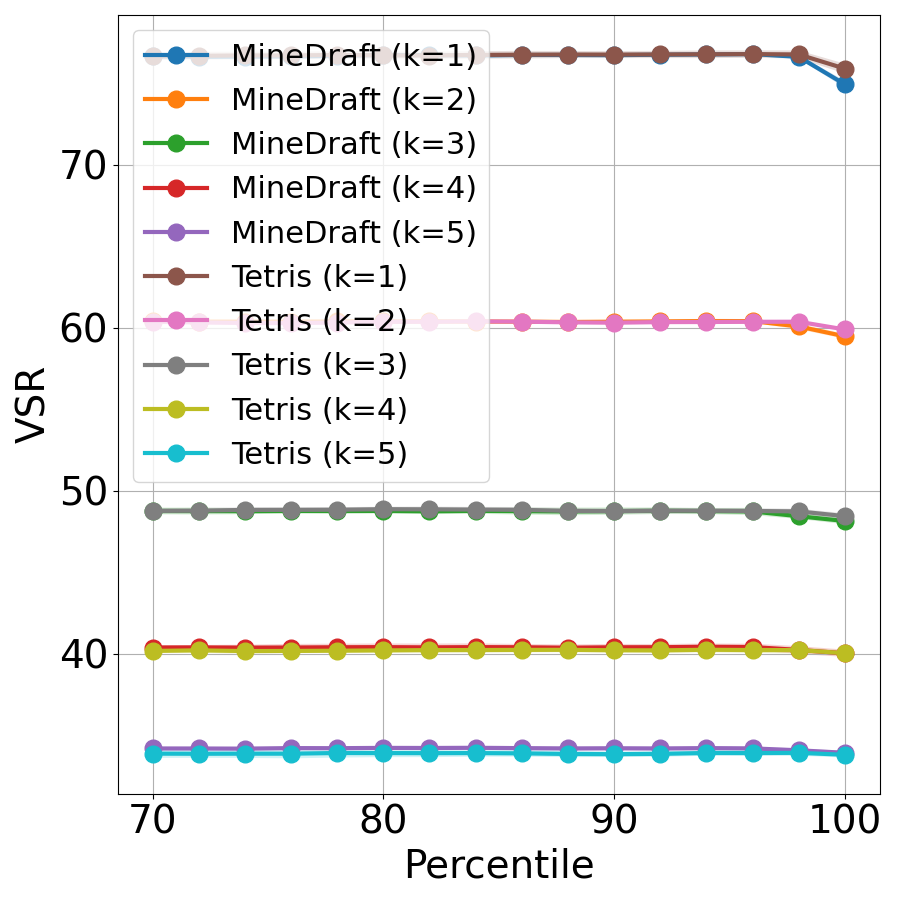} \\

        \rotatebox{90}{\parbox{3.5cm}{\centering \hspace{10mm}\textbf{$e = 4$}}} &
        \includegraphics[width=0.23\linewidth]{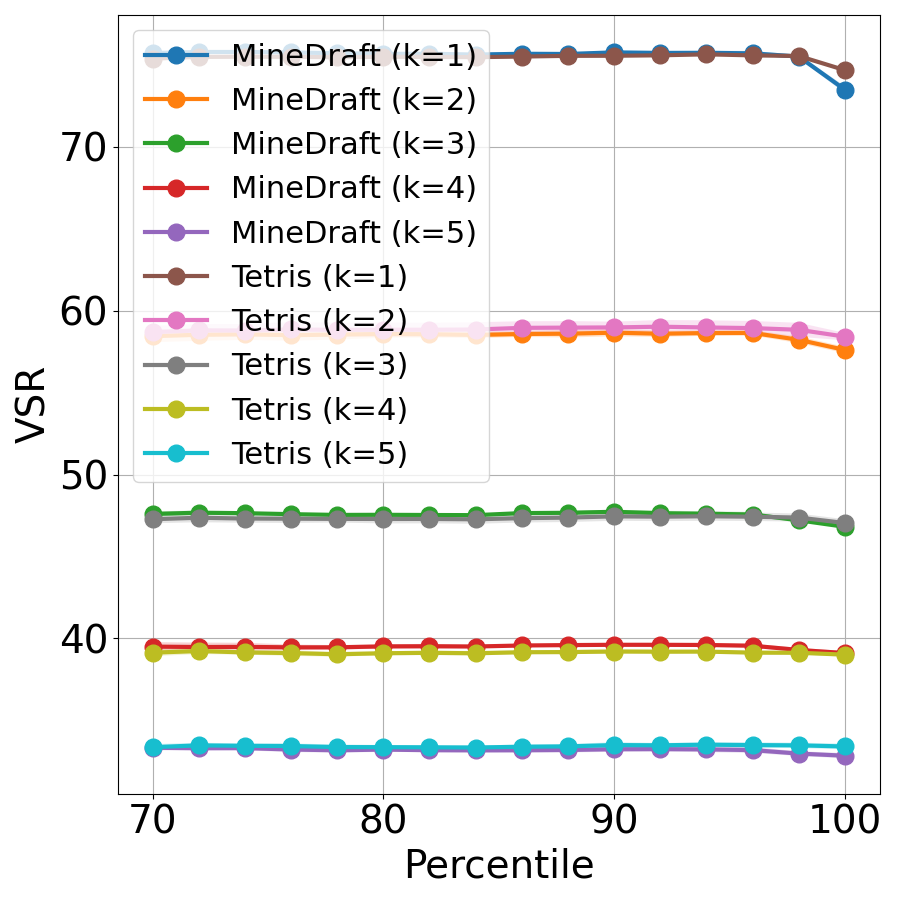} &
        \includegraphics[width=0.23\linewidth]{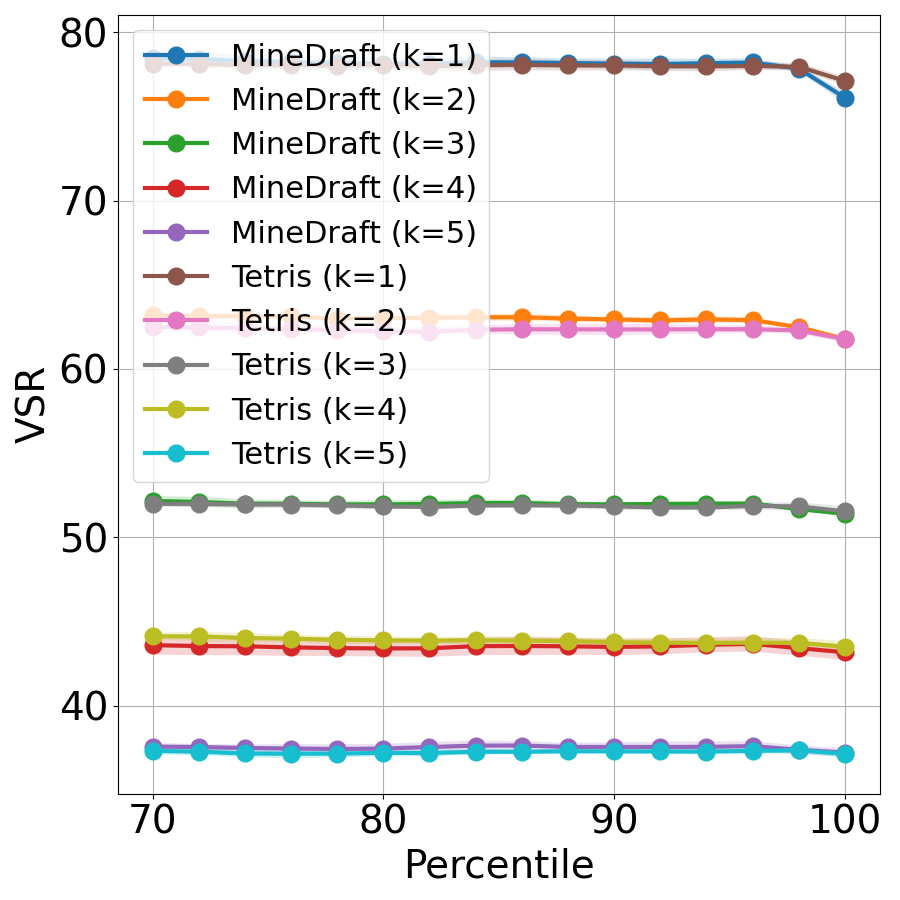} &
        \includegraphics[width=0.23\linewidth]{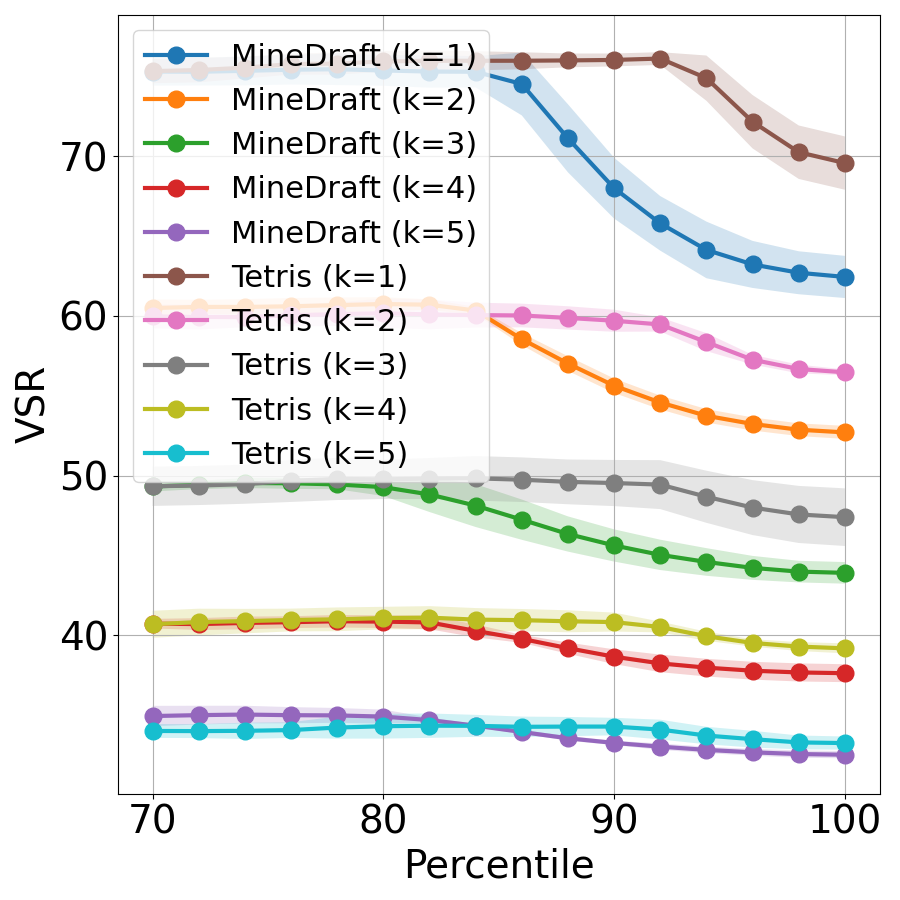} &
        \includegraphics[width=0.23\linewidth]{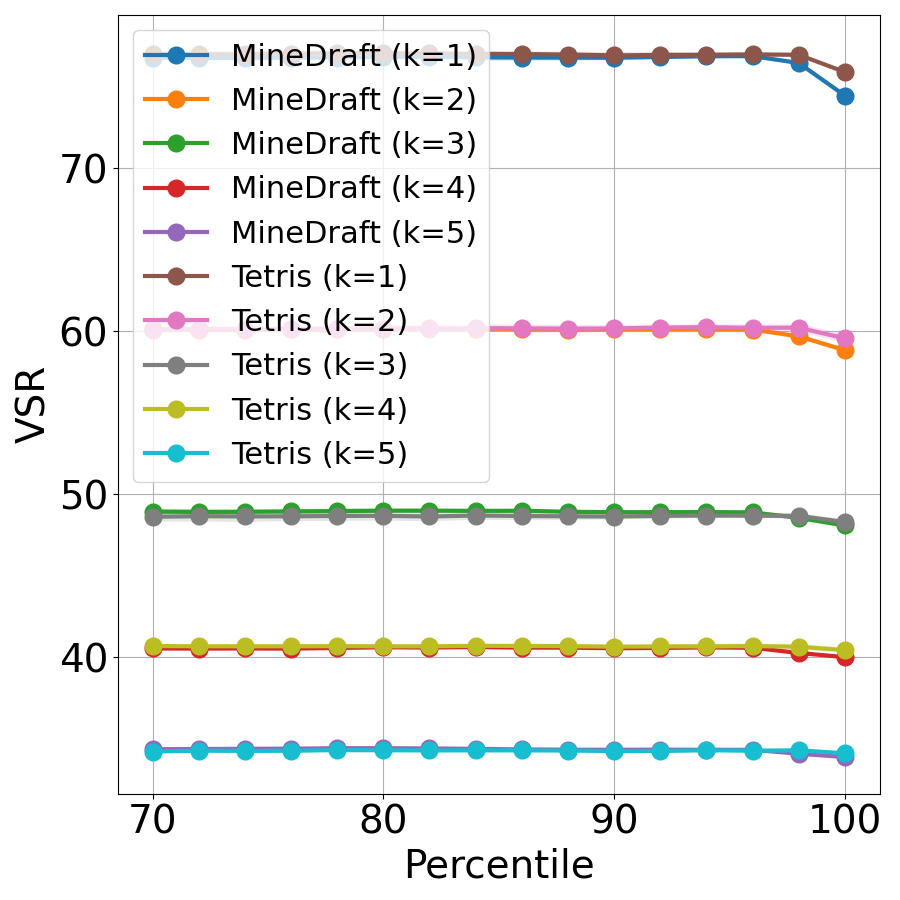} \\

        \rotatebox{90}{\parbox{3.5cm}{\centering \hspace{10mm}\textbf{$e = 5$}}} &
        \includegraphics[width=0.23\linewidth]{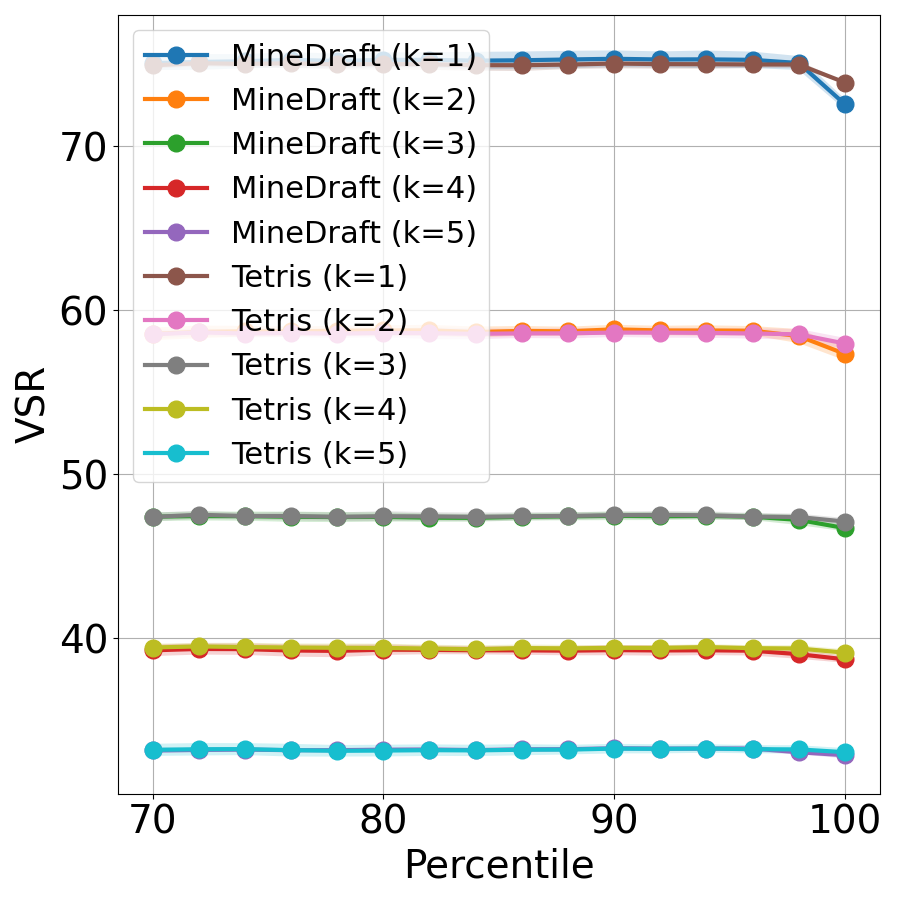} &
        \includegraphics[width=0.23\linewidth]{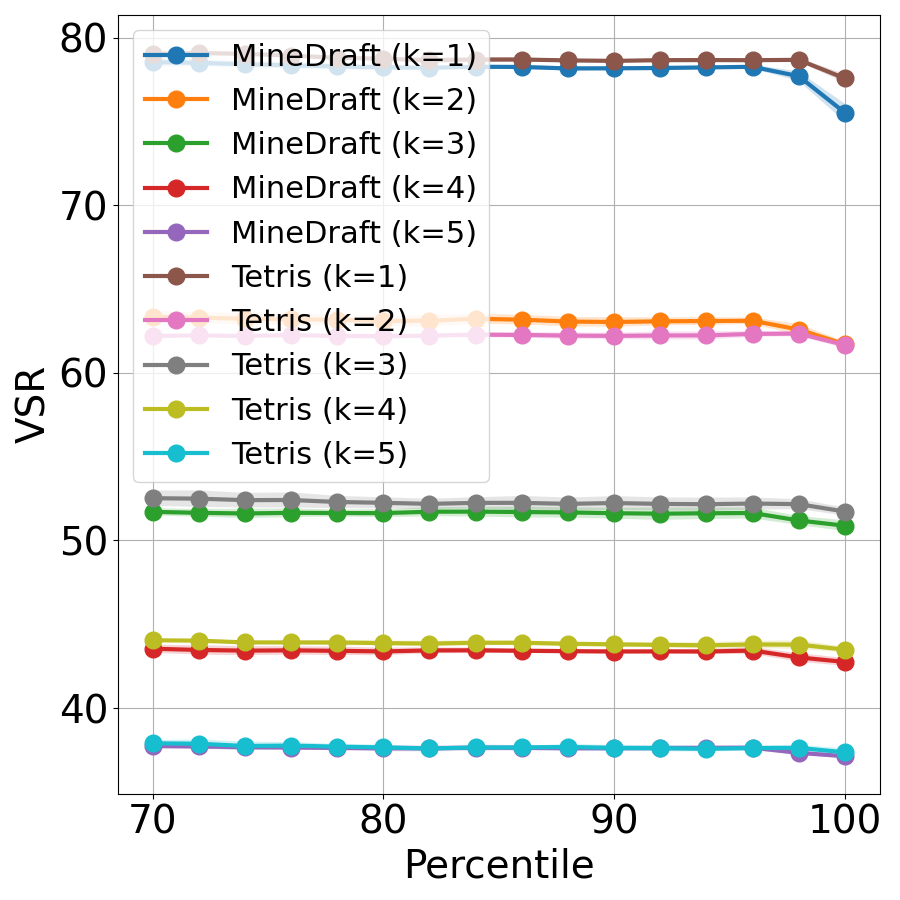} &
        \includegraphics[width=0.23\linewidth]{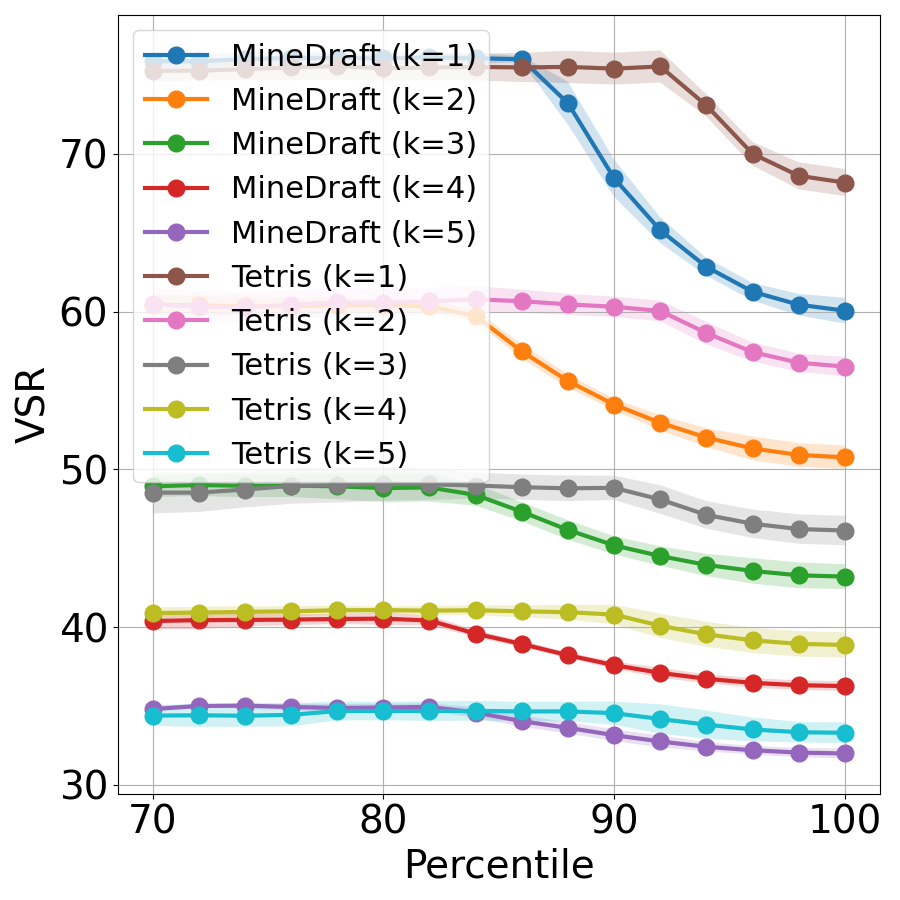} &
        \includegraphics[width=0.23\linewidth]{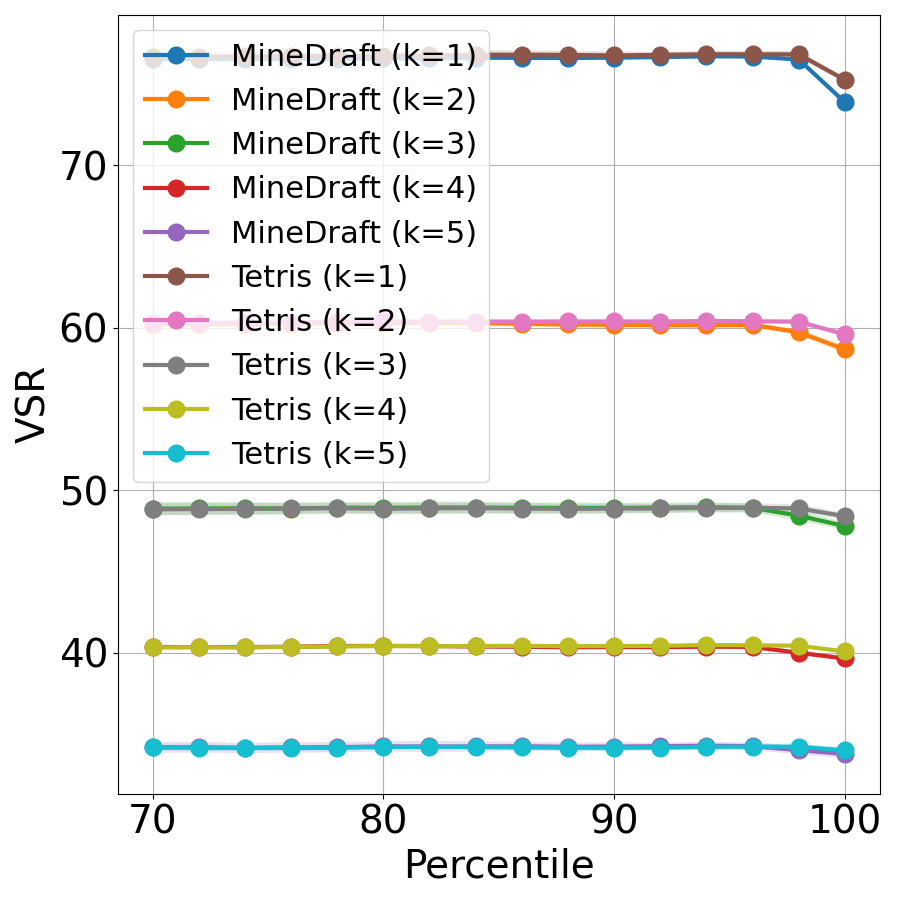} \\
    \end{tabular}}
    \caption{VSR comparison for \alg{} with TETRIS and standalone TETRIS across trimming percentiles on Setting 1.
    VSR of \alg{} 's integration with TETRIS becomes worse than that of standalone TETRIS from approximately 82--86\% of SD steps on Spec-Bench. On the remaining datasets, VSR of \alg{} integrated with TETRIS is consistent with that of standalone TETRIS over the first 98\% of SD steps.
    }
    \label{fig:vsr-vs-percentile-all}
    \vspace{-3mm}
\end{figure}

\begin{figure}[!ht]
    \centering
    \setlength{\tabcolsep}{1pt} 
    \resizebox{0.99\linewidth}{!}{
    \begin{tabular}{ccccc}
        & \hspace{8mm}\textbf{Arena} & \hspace{8mm}\textbf{ShareGPT} & \hspace{8mm}\textbf{Spec-Bench}  & \hspace{8mm}\textbf{Tough}\\

        \rotatebox{90}{\parbox{3.5cm}{\centering \hspace{10mm}\textbf{Qwen3~32B-0.6B}}} &
        \includegraphics[width=0.23\linewidth]{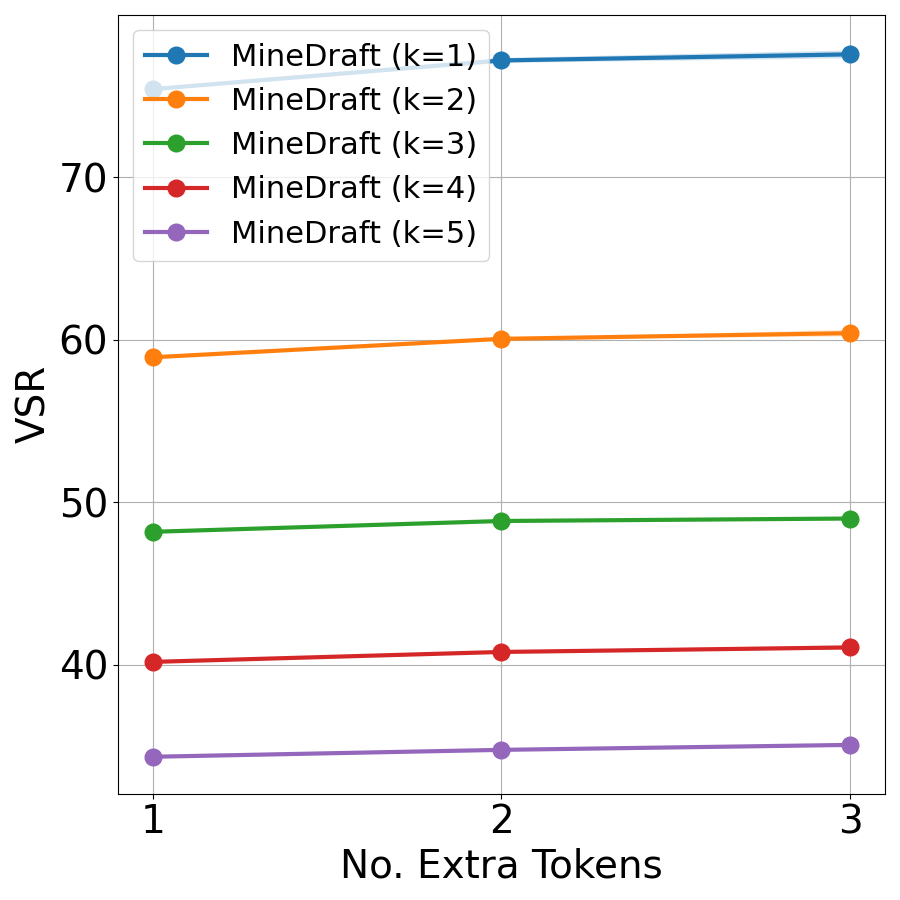} &
        \includegraphics[width=0.23\linewidth]{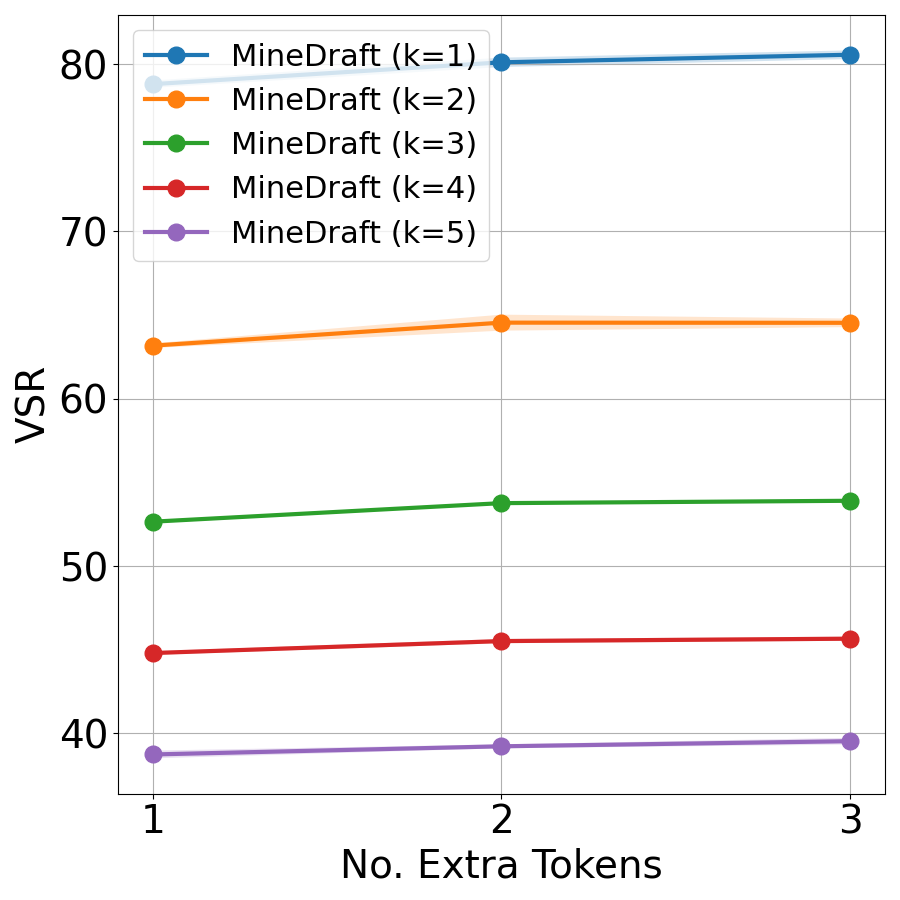} &
        \includegraphics[width=0.23\linewidth]{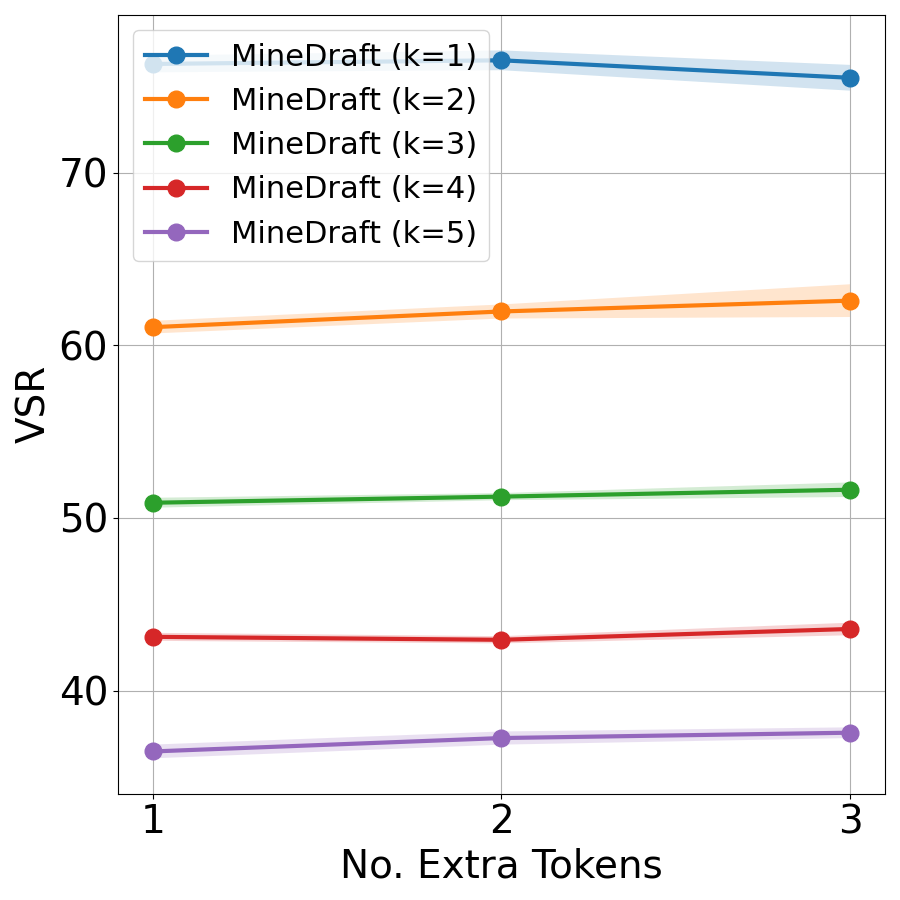} &
        \includegraphics[width=0.23\linewidth]{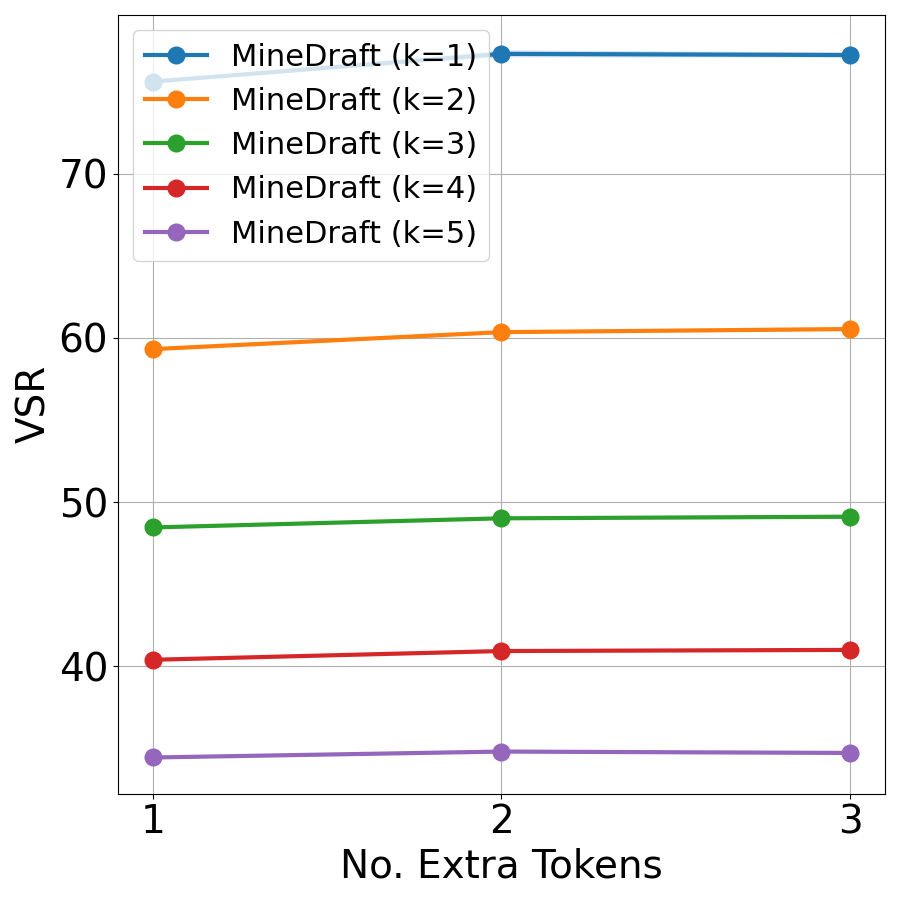} \\
        
        \rotatebox{90}{\parbox{3.5cm}{\centering \hspace{10mm}\textbf{Llama-3~70B-8B}}} & 
        \includegraphics[width=0.23\linewidth]{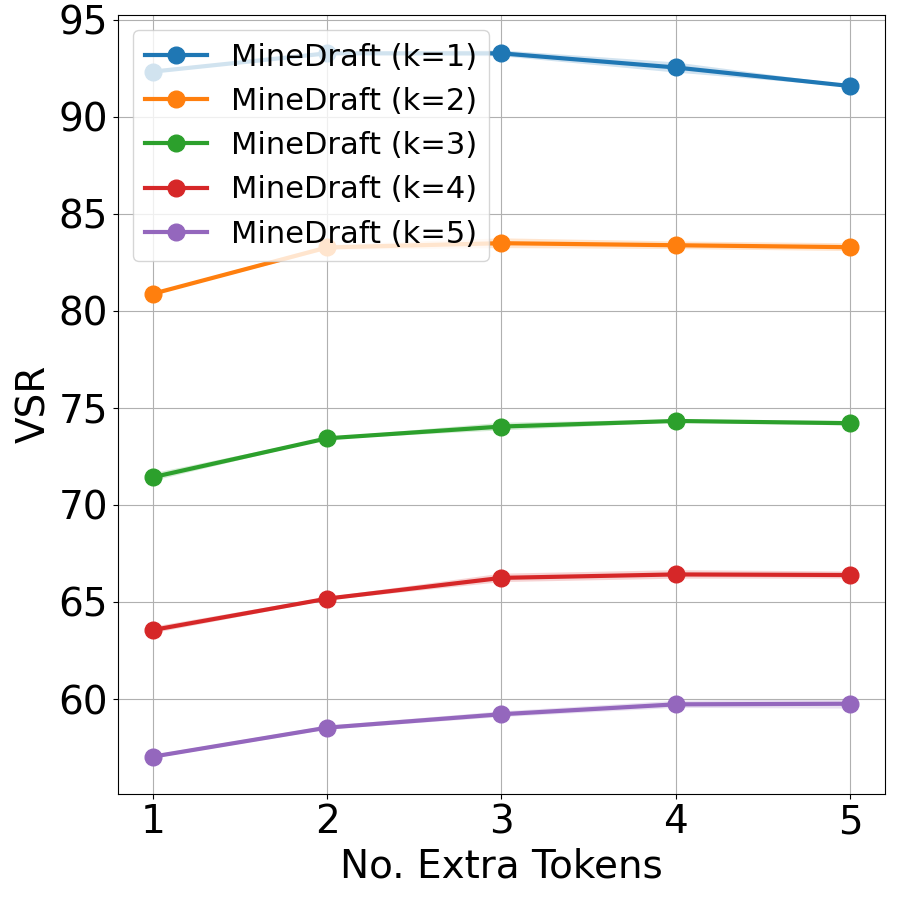} &
        \includegraphics[width=0.23\linewidth]{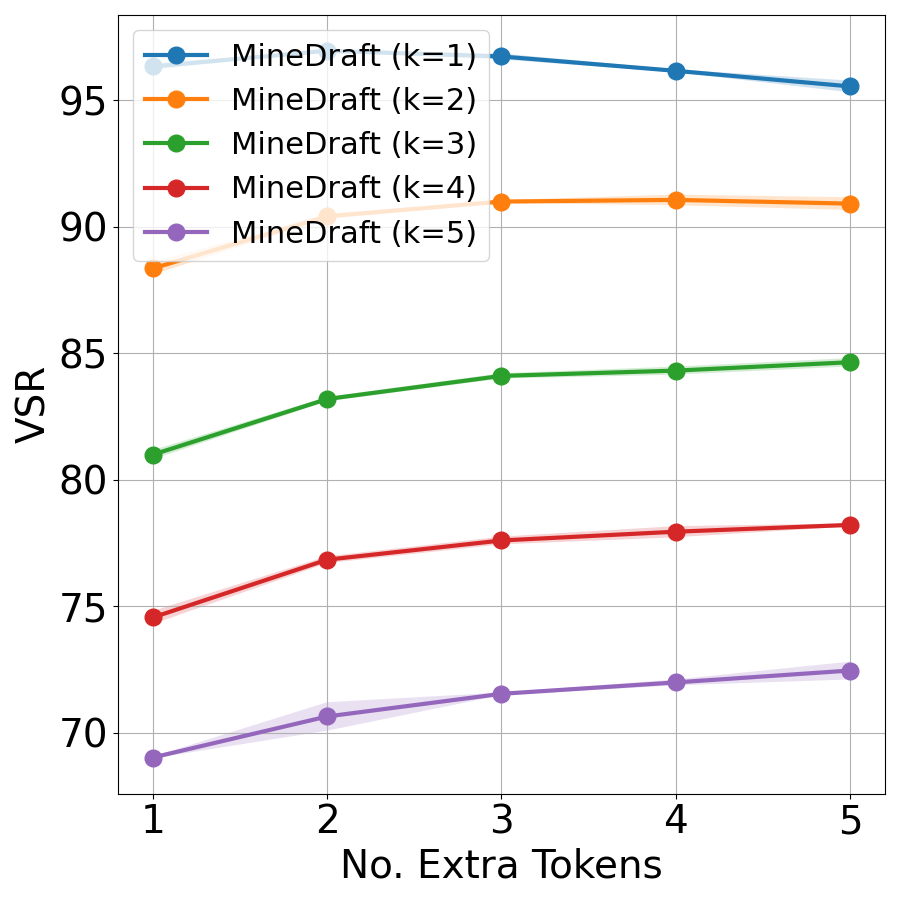} &
        \includegraphics[width=0.23\linewidth]{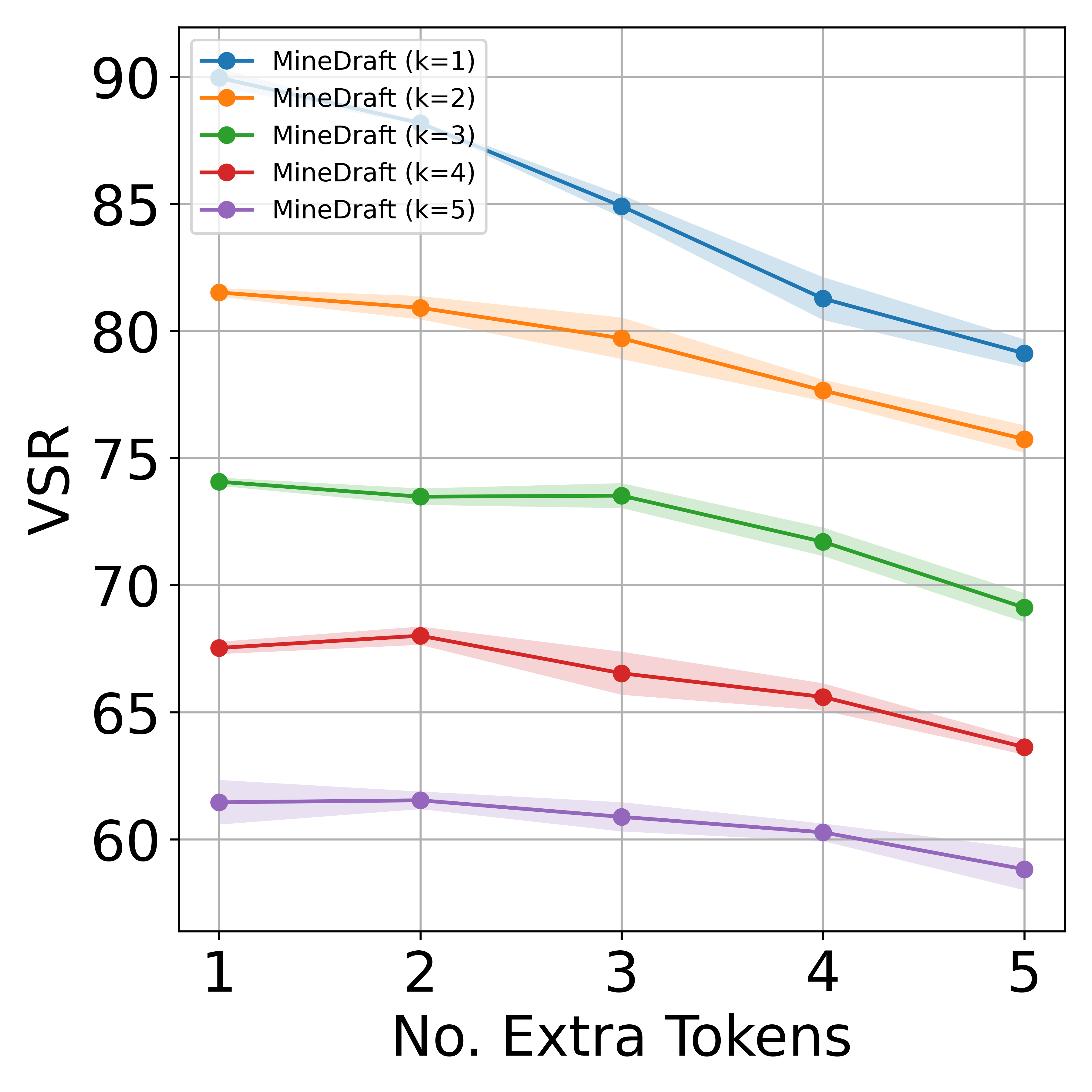} &
        \includegraphics[width=0.23\linewidth]{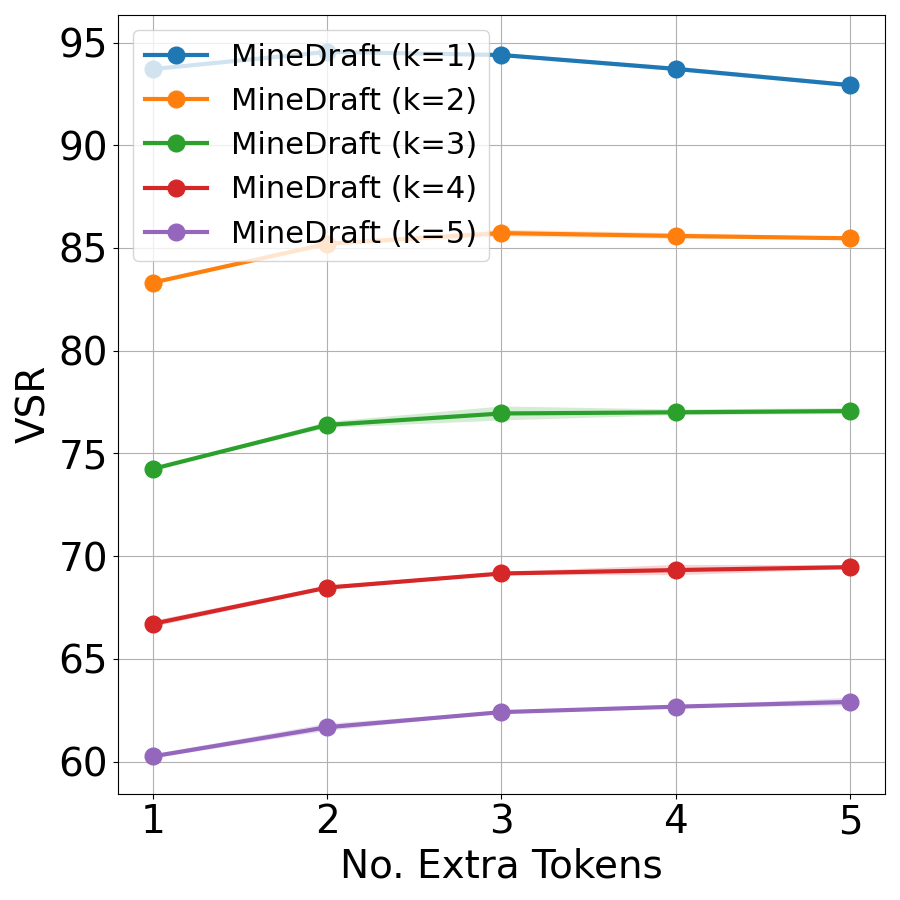} \\

    \end{tabular}}
    \caption{VSR comparison for \alg{} with TETRIS across Settings 1 and 5 on $n = 2$ and $\text{temperature} = 0.8$.
    VSR of \alg{} 's integration with TETRIS generally increases as the number of extra tokens ($e$) increases except on Spec-Bench.
    }
    \label{fig:vsr-vs-e-n2}
    \vspace{-3mm}
\end{figure}

\begin{figure}[!ht]
    \centering
    \setlength{\tabcolsep}{1pt} 
    \resizebox{0.99\linewidth}{!}{
    \begin{tabular}{ccccc}
        & \hspace{8mm}\textbf{Arena} & \hspace{8mm}\textbf{ShareGPT} & \hspace{8mm}\textbf{Spec-Bench}  & \hspace{8mm}\textbf{Tough}\\

        \rotatebox{90}{\parbox{3.5cm}{\centering \hspace{10mm}\textbf{$e = 1$}}} &
        \includegraphics[width=0.23\linewidth]{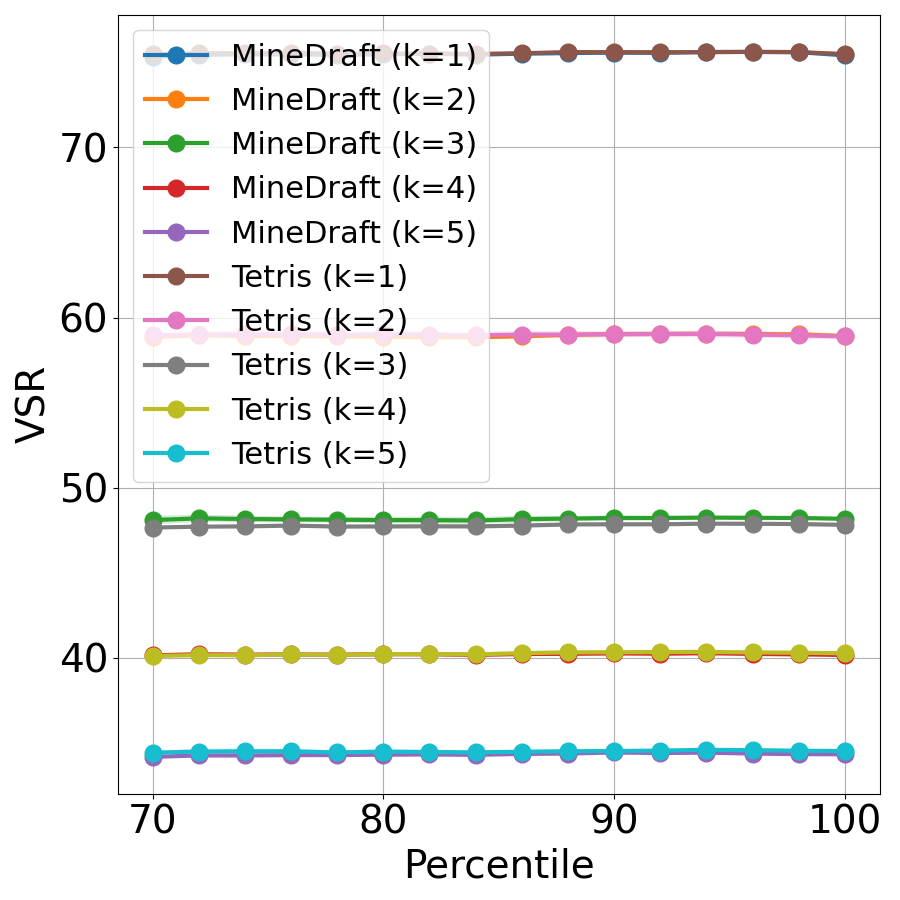} &
        \includegraphics[width=0.23\linewidth]{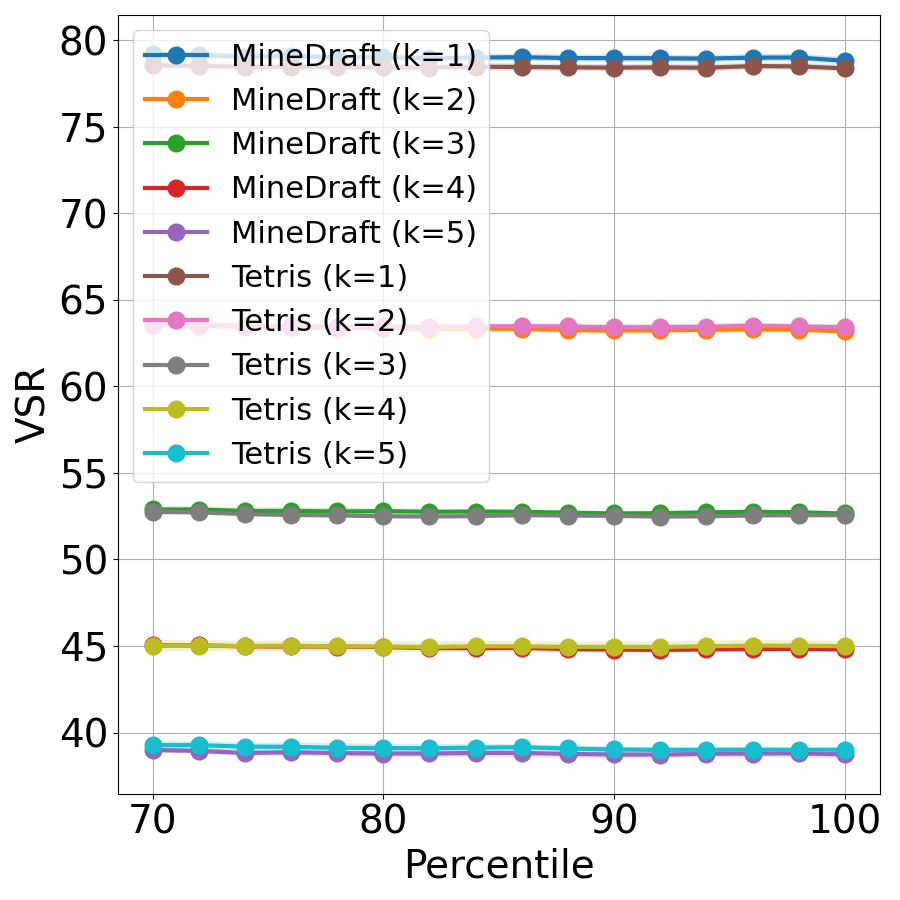} &
        \includegraphics[width=0.23\linewidth]{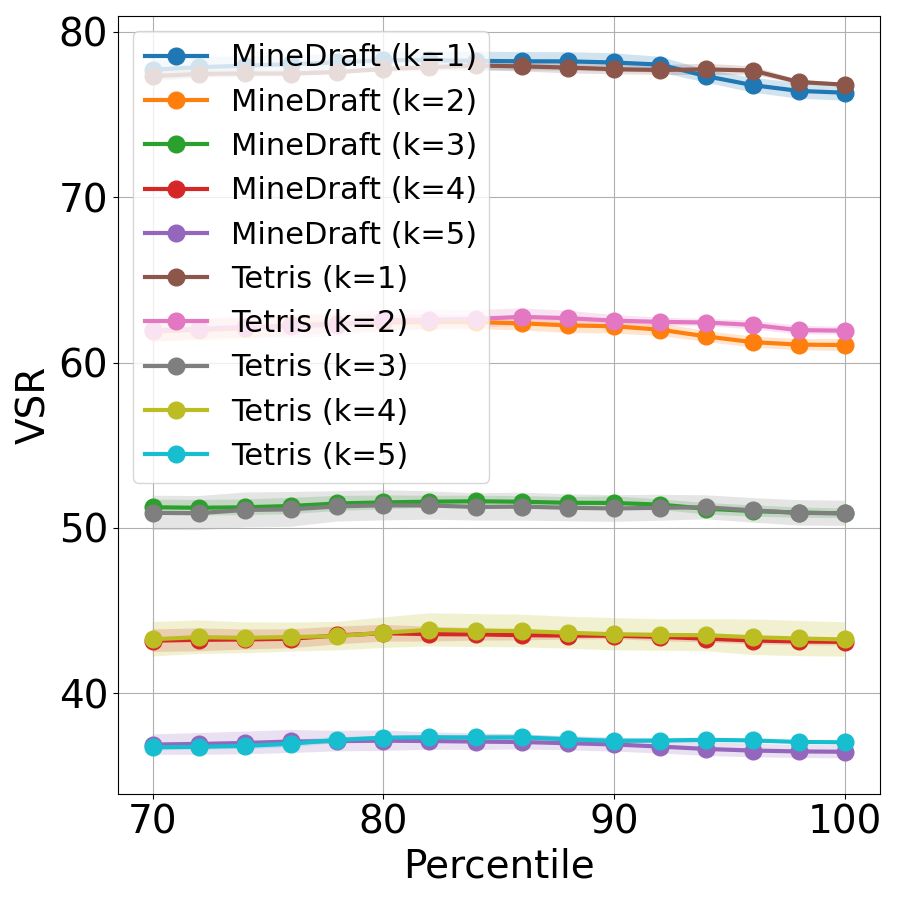} &
        \includegraphics[width=0.23\linewidth]{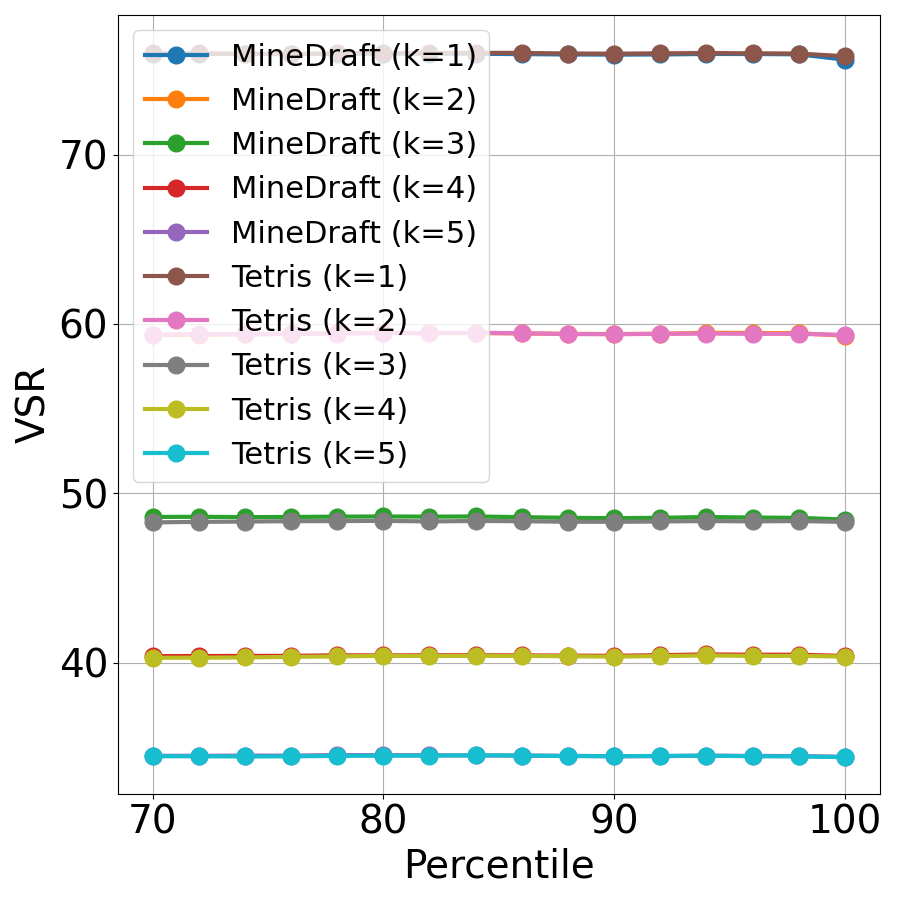} \\

        \rotatebox{90}{\parbox{3.5cm}{\centering \hspace{10mm}\textbf{$e = 2$}}} &
        \includegraphics[width=0.23\linewidth]{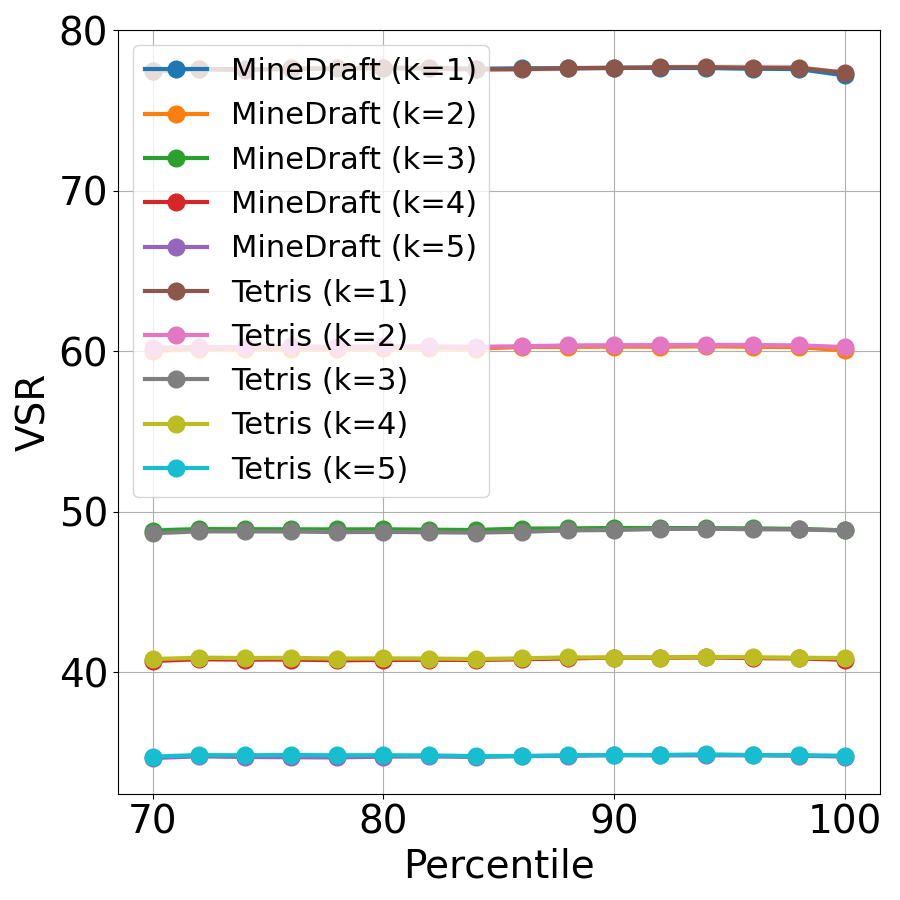} &
        \includegraphics[width=0.23\linewidth]{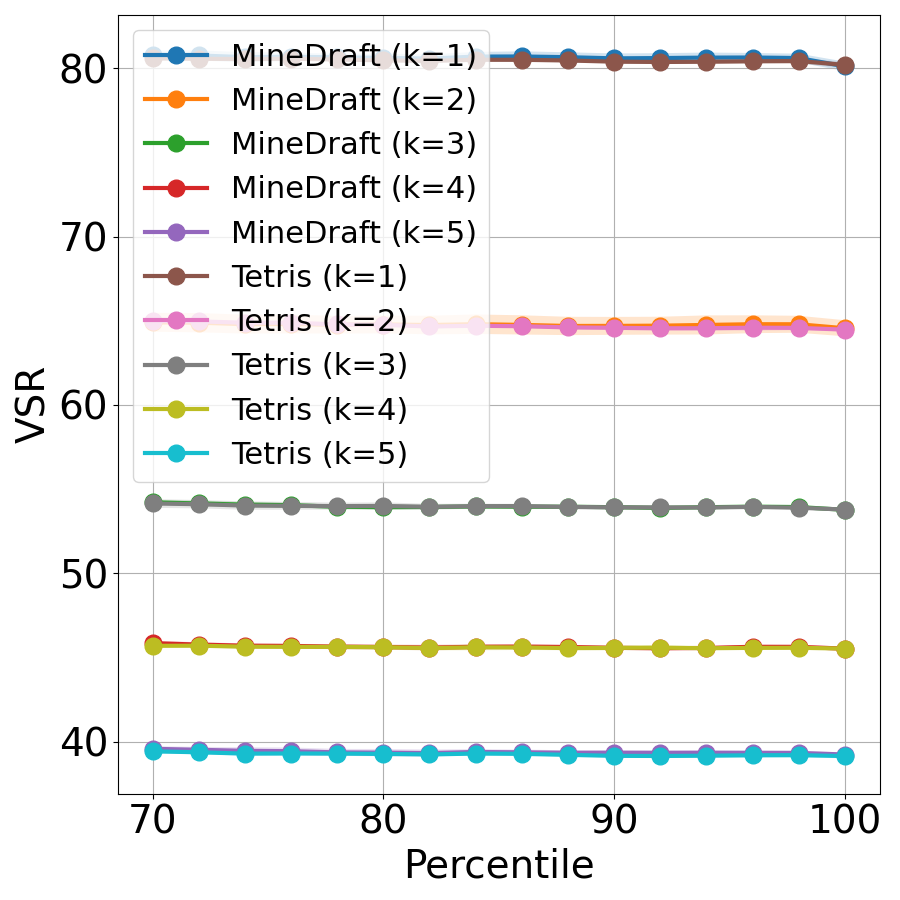} &
        \includegraphics[width=0.23\linewidth]{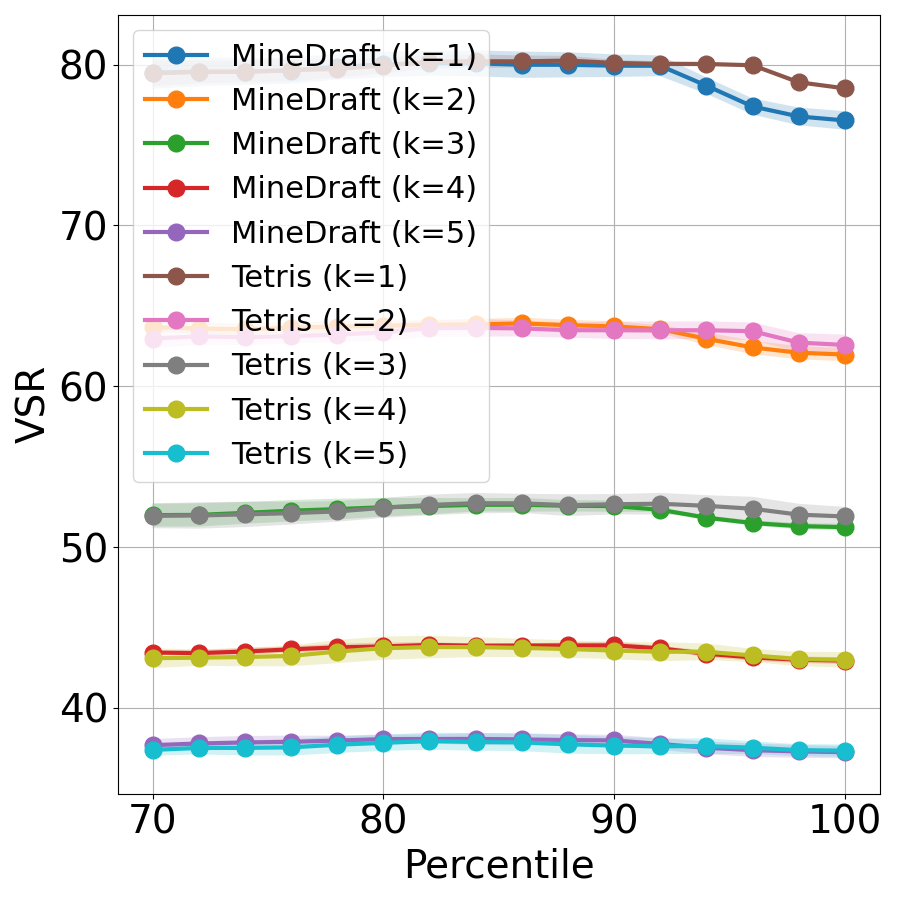} &
        \includegraphics[width=0.23\linewidth]{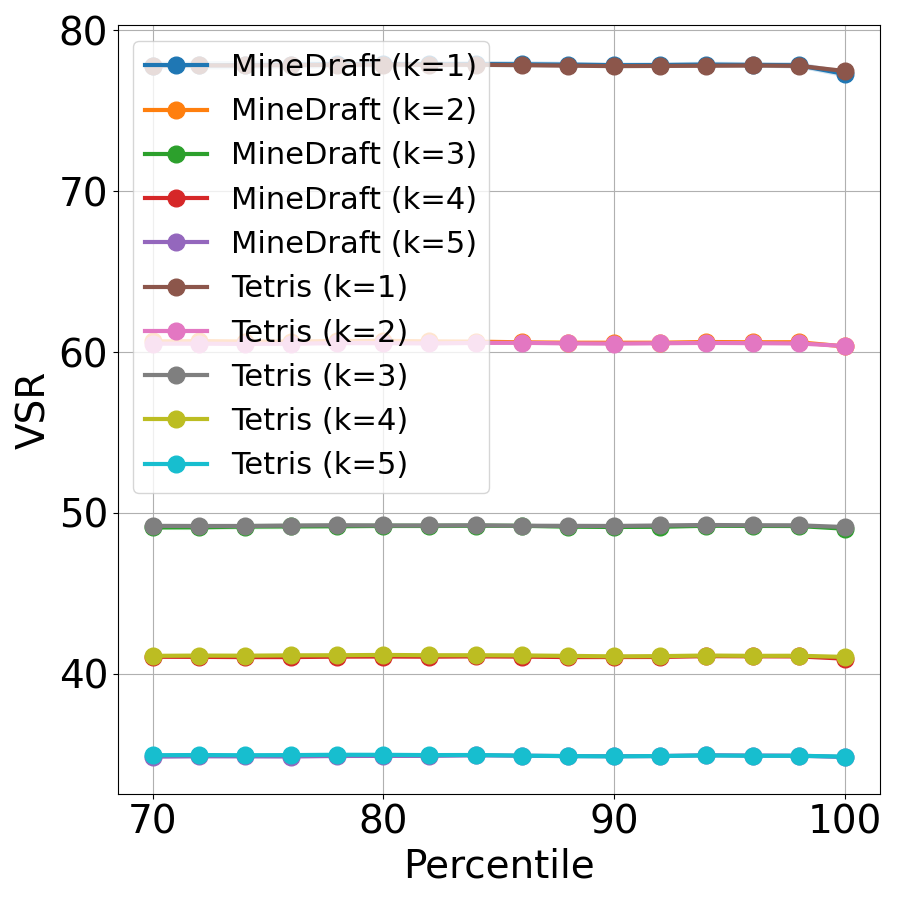} \\
    \end{tabular}}
    \caption{VSR comparison for \alg{} with TETRIS and standalone TETRIS across trimming percentiles on Setting 1, $n = 2$ and $\text{temperature} = 0.8$.
    VSR of \alg{} 's integration with TETRIS becomes worse than that of standalone TETRIS from approximately 92\% of SD steps on Spec-Bench. On the remaining datasets, VSR of \alg{} integrated with TETRIS is consistent with that of standalone TETRIS throughout the experiments.}
    \label{fig:vsr-vs-percentile-n2}
\end{figure}

\begin{figure}[!ht]
    \centering
    \setlength{\tabcolsep}{1pt} 
    \resizebox{0.99\linewidth}{!}{
    \begin{tabular}{ccccc}
        & \hspace{8mm}\textbf{Arena} & \hspace{8mm}\textbf{ShareGPT} & \hspace{8mm}\textbf{Spec-Bench}  & \hspace{8mm}\textbf{Tough}\\
        
        \rotatebox{90}{\parbox{3.5cm}{\centering \hspace{8mm}\textbf{Throughput}}} & 
        \includegraphics[width=0.23\linewidth]{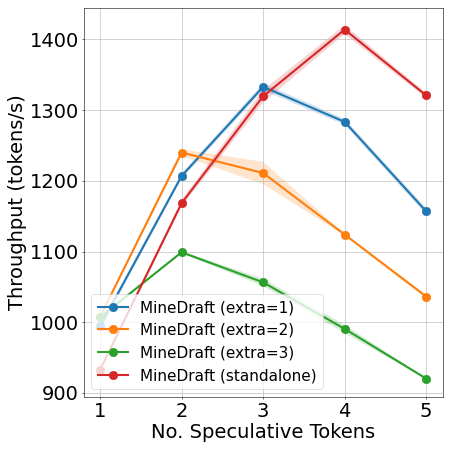} &
        \includegraphics[width=0.23\linewidth]{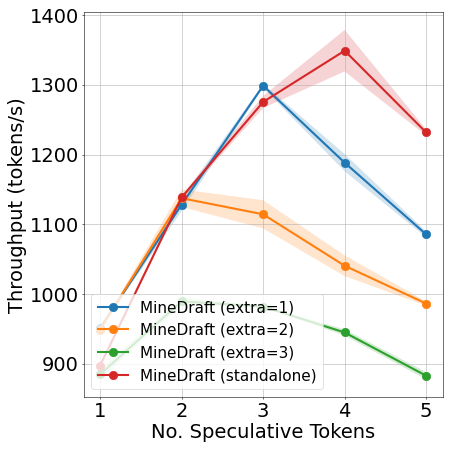} &
        \includegraphics[width=0.23\linewidth]{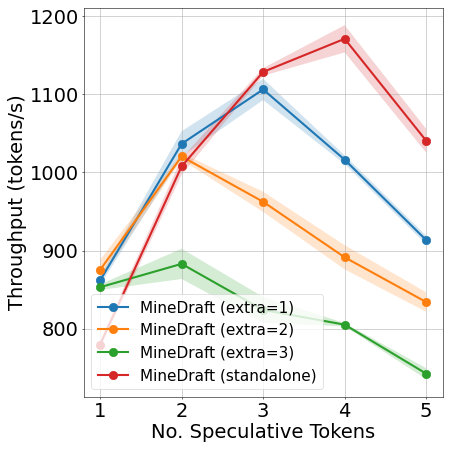} &
        \includegraphics[width=0.23\linewidth]{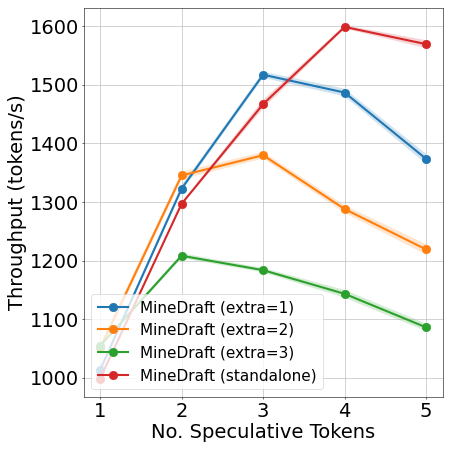} \\
        \vspace{2mm}
        \rotatebox{90}{\parbox{3.5cm}{\centering \hspace{8mm}\textbf{E2EL}}} & 
        \includegraphics[width=0.23\linewidth]{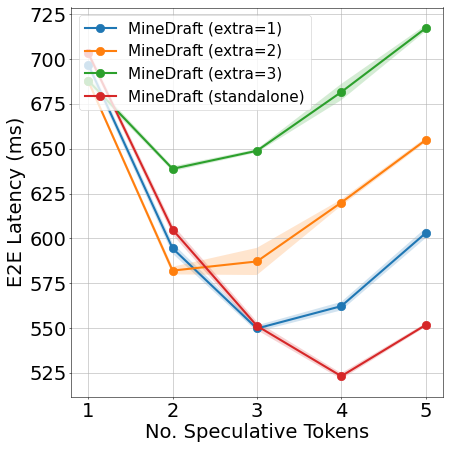} &
        \includegraphics[width=0.23\linewidth]{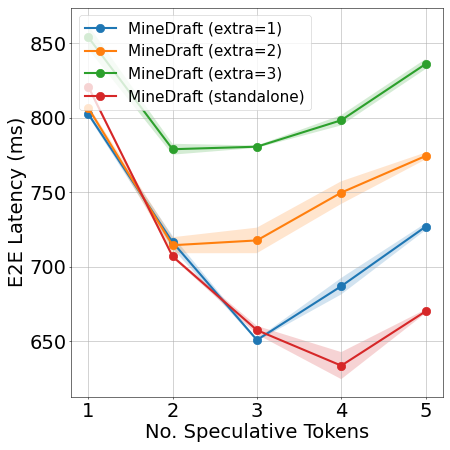} &
        \includegraphics[width=0.23\linewidth]{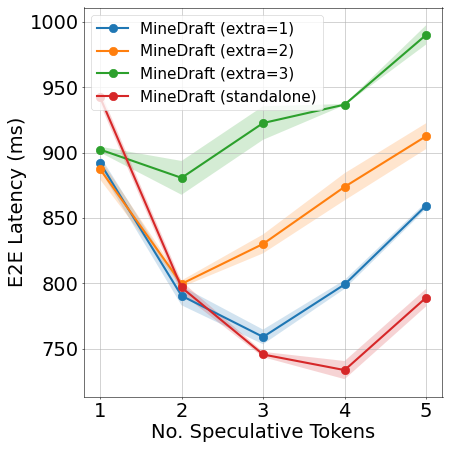} &
        \includegraphics[width=0.23\linewidth]{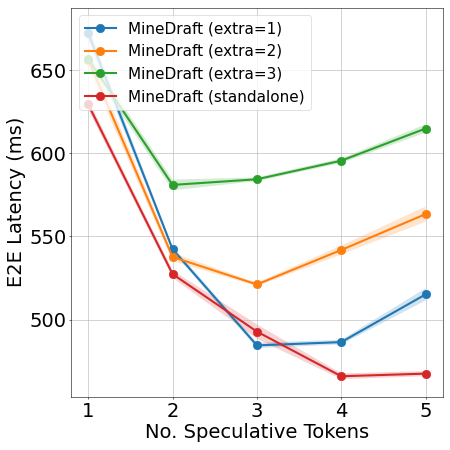} \\
    \end{tabular}}
    \caption{\alg{} throughput and end-to-end latency results on Qwen3-235B-A22B-Instruct-2507-FP8 paired with Qwen3-14B, $n=2$ and $\text{temperature}=0.8$. The reported numbers reflect the mean and standard deviation over 2 independent trials.
    \alg{} reaches the optimal performance when $k=4$, producing an average throughput of 1200-1600 tokens/s. Standard SD fails in this setting due to OOM.
    }
    \label{fig:exp-qwen-235b}
\end{figure}

\subsection{Performance Improvement Metrics.}
\label{app:performance_improvement}
To quantify the performance improvements of \alg{}, we compute two percentage metrics relative to baseline methods. Let $p$ denote a performance metric, and let $B_p(k)$ denote the best value of $p$ achieved by baseline methods at $k$, where $k$ is the number of draft tokens per step. Similarly, let $M_p(k)$ denote the best value of $p$ achieved by \alg{} variants (including the standalone variant and its integration with TETRIS) at $k$.

\para{Average improvement over the best baseline.} 
First, for each performance metric $p$, the optimal baseline draft token count $k_p^*$ is selected as follows:
\begin{itemize}
    \item If $p$ is \textbf{maximized} (higher is better): $k_p^*=\arg\max\limits_k B_p(k).$

    \item If $p$ is \textbf{minimized} (lower is better):
    $
    k_p^*=\arg\min\limits_k B_p(k).
    $
\end{itemize}
Let $B_p^* = B_p(k_p^*)$ be the corresponding baseline performance metric value. Next, among all \alg{} variants, select the best \alg{} performance metric value at $k_p^*$ as $M_p^* = M_p(k_p^*)$. The average improvement of $p$ of \alg{} over the best baseline method $\uparrow_p$ (in percent) is calculated as follows:
\begin{itemize}
    \item If $p$ is \textbf{maximized}:
    $
    \uparrow_p=\frac{M_p^*-B_p^*}{B_p^*}\times 100.
    $
    \item If $p$ is \textbf{minimized}:
    $
    \uparrow_p=\frac{B_p^*-M_p^*}{B_p^*}\times 100.
    $
\end{itemize}

\para{Maximum average improvement.}
Define $S_p(k)$ as the value of $p$ of standard SD or EAGLE*~\citep{li2024eagle,li2024eagle2,li2025eagle3}, where applicable, at $k$. For a given target metric $p$, the average percentage improvement of \alg{} over standard SD at $k$ ($\Delta_p(k)$) is calculated as:
\begin{itemize}
    \item If $p$ is \textbf{maximized}:
    $
    \Delta_p(k)=\frac{M_p(k)-S_p(k)}{S_p(k)}\times 100.
    $
    \item If $p$ is \textbf{minimized}:
    $
    \Delta_p(k)=\frac{S_p(k)-M_p(k)}{S_p(k)}\times 100.
    $
\end{itemize}
The maximum average percentage improvement of $p$ of \alg{} over standard SD or EAGLE ($\Delta_p$) is the maximum of $\Delta_p(k)$ over all $k$ values:
$
\Delta_p=\max\limits_k \Delta_p(k).
$

\subsection{End-to-End Latency Results}
\label{app:exp-e2el}

\para{Comparison with baselines.} 
\cref{fig:e2elatency-baseline} shows that \alg{} reduces end-to-end latency by up to 31.13\% relative to the best-performing baseline, and the maximum reduction compared to standard SD reaches 38.94\%, consistent with the theoretical analysis in \cref{sec:theoretical_analysis} which predicts an improvement of 37.9\% under mild assumptions.

\para{Results with larger draft model.}
As shown in \cref{fig:e2elatency-baseline}, under Setting 4 with Qwen3-8B as the draft model, \alg{} achieves its lowest end-to-end latency when generating 1--2 draft tokens per step. Drafting more than 2 tokens per step results in degraded performance. In this setting, integrating TETRIS does not yield additional reductions in end-to-end latency, as the computational cost of drafting already dominates that of verification. Standard SD fails to execute on this setting due to VRAM constraints. The effect of draft model size on performance is further examined in \cref{app:exp-draft-model}.

\para{Integration with different drafting strategies.}
\cref{fig:e2elatency-strategy} shows that \alg{} achieves a 15.67\% reduction in end-to-end latency relative to the best standard SD result in Setting~5, with a maximum improvement of 20.63\%. On the selected EAGLE model settings, \alg{} outperforms the best standalone EAGLE result by up to 6.47\% and attains a maximum latency reduction of 16.30\%.

\subsection{Additional Normalized Results}
\label{app:exp-norm}
We present normalized performance comparisons for Settings 1, 3, and 4 on selected datasets. 
Specifically, we calculate the normalized average throughput by dividing the throughputs of parallel methods by 5, and that of standard SD by 4. End‑to‑end latency is normalized by multiplying the latencies of parallel methods by 1.25 to account for their proportionally higher compute resource usage relative to standard SD.

\cref{fig:norm-throughput,fig:norm-latency} show that \alg{} outperforms standard SD and PEARL except in regimes where $k$ is small (between 1 and 2), where \alg{} may underperform compared to standard SD. We attribute this degradation in normalized latency to the communication bottleneck inherent in distributed systems~\citep{fernandez2024hardware}, which is well-characterized by the Universal Scalability Law~\citep{gunther2008general}. We empirically validate this hypothesis by profiling experiments across Settings 1--3 on the Spec-Bench dataset using Nsight Systems and computing the ratio of average total communication latency per step to the average end-to-end latency. As reported in \cref{tab:comm_percentage}, communication overhead dominates the end-to-end latency for small $k$: it accounts for 50.49\%--57.93\% of the latency when $k=1$ and 20.96\%--34.18\% when $k=2$. For $k>2$, communication overhead diminishes significantly to between 9.14\% and 13.44\%. Overall, \cref{fig:norm-throughput} demonstrates that \alg{} improves normalized average throughput by up to 32.02\% over the best-performing baseline and by up to 40.55\% over standard SD. Similarly, \cref{fig:norm-latency} shows that \alg{} reduces normalized end-to-end latency by up to 13.91\% compared to the best baseline results and by up to 24.38\% compared to standard SD.

\begin{table}[!ht]
    \vspace{-2mm}
    \caption{
        Percentage of average total communication latency per step over the average end-to-end latency on Spec-Bench across Settings~1--3. Reported numbers are extracted from Nsight Systems profiling results of one run. The percentage of communication latency generally decreases as $k$ increases.
    }
    \label{tab:comm_percentage}
    \centering
    \small
    \resizebox{0.7\linewidth}{!}{%
    \begin{tabular}{@{} l c c @{}}
        \toprule
        \textbf{Setting} & $\boldsymbol{k}$ & \textbf{Percentage of Communication Latency (\%)} \\
        \midrule
        Qwen3 32B-0.6B (Setting 1) & 1 & 57.93 \\
         & 2 & 34.18 \\
         & 3 & 13.44 \\
         & 4 & 11.49 \\
        \midrule
        Qwen3 32B-1.7B (Setting 2) & 1 & 57.30 \\
         & 2 & 26.63 \\
         & 3 & 12.74 \\
         & 4 & 11.47 \\
        \midrule
        Qwen3 32B-4B (Setting 3) & 1 & 50.49 \\
         & 2 & 20.96 \\
         & 3 & 10.23 \\
         & 4 & 9.14 \\
        \bottomrule
    \end{tabular}}
    \vspace{-3mm}
\end{table}

\subsection{Ablation on Draft Model Size}
\label{app:exp-draft-model}
We fix the target model to Qwen3-32B and evaluate \alg{} using three draft models: Qwen3-0.6B, Qwen3-1.7B, and Qwen3-4B. As shown in \cref{fig:qwen3-32b-draft-models}, performance degrades noticeably when the number of draft tokens per step exceeds $3$ for the Qwen3-4B and Qwen3-0.6B draft models, whereas the medium-sized Qwen3-1.7B model is less affected.

These results highlight the need to balance draft quality and speed in \alg{}. \cref{fig:qwen3-32b-draft-models} shows that an overly small draft model (e.g., Qwen3-0.6B) may fail to generate sufficiently accurate drafts, leading to a low verification success rate and degraded performance. Conversely, an overly large draft model (e.g., Qwen3-4B), while capable of producing high-quality drafts, may generate them too slowly to benefit from PSD, which requires drafting to be faster than verification. As shown in \cref{fig:qwen3-32b-draft-models}, Qwen3-1.7B strikes this balance by producing accurate drafts at sufficient speed to exploit parallelized drafting in \alg{}. This observation is supported by Nsight Systems~\citep{NVIDIA_Nsight_Systems_2024} profiling with Spec-Bench in \cref{app:nsight}.

\subsection{Ablation on Number of Extra Tokens}
\label{app:exp-extra-tokens}
To evaluate the performance scaling behavior of \alg{} when integrated with TETRIS, we measure the average throughput and end-to-end latency in Setting~1 across varying numbers of extra tokens for TETRIS ($e$). As illustrated in \cref{fig:vary-e}, \alg{} consistently outperforms standard SD across different $e$. However, \alg{} 's integration with TETRIS frequently underperforms relative to standalone \alg{} for $e \geq 3$, as previously noted in \cref{subsec:ablations}.

This performance trend mirrors the observations from varying the draft model size in \cref{app:exp-draft-model}, where drafting speed and quality both affect overall system performance. \cref{fig:vary-e} indicates that, while a larger $e$ enables TETRIS to select higher-quality draft tokens for verification and yields a higher verification success rate (VSR), it also prolongs the latency to generate these draft tokens. This trade-off reiterates the need to balance drafting speed and draft quality in \alg{}.

\cref{fig:vary-e} also reveals that the VSR of \alg{} exhibits little overlap with that of standard SD on the Spec-Bench dataset, whereas on the other datasets, the VSR of \alg{} aligns closely with its standard SD counterparts. This anomaly is analyzed in detail in \cref{app:exp-vsr}.

\subsection{Empirical Results with Varied Sequences per Request and Batch Sizes.}
\label{app:exp-n-m}
Using Setting~1, we evaluate \alg{} across varying numbers of sequences per request ($n$) and batch sizes ($m$). As shown in \cref{fig:vary-n,fig:vary-bs}, \alg{} consistently improves both average throughput and end-to-end latency over standard SD across all tested configurations of $n$ and $m$. Specifically, varying $n$, \alg{} achieves up to 72.64\% improvement in throughput and 38.41\% reduction in latency; varying $m$, the corresponding improvements reach 75.81\% and 40.79\%. 

As shown in \cref{fig:vary-bs}, when the number of draft tokens per step reaches five, the performance of \alg{} becomes comparable to that of standard SD with a doubled batch size ($2m$). This result highlights the effectiveness of \alg{} as a batch PSD framework: with appropriate adaptive drafting strategies and model configurations, \alg{} can further optimize performance even when a large number of draft tokens are generated per step.

\subsection{Analysis of Verification Success Rate}
\label{app:exp-vsr}

The verification success rate (VSR) is defined as follows:
\eqs{
    \textstyle \textit{VSR} = \frac{\textit{Accepted tokens}}{\textit{Tokens sent for verification}} \ ,    
}
which is the proportion of tokens successfully verified out of the total number of tokens submitted for verification. VSR measures the quality of the selected draft tokens. We analyze the VSR of \alg{} integrated with TETRIS under Settings~1 and~5. As shown in \cref{fig:vsr-vs-e-all,fig:vsr-vs-e-n2}, VSR generally increases with the number of extra tokens, except on the Spec-Bench dataset, where VSR decreases more frequently than on the other datasets.

We attribute this anomaly to the combined effects of Spec-Bench's smaller sample size and TETRIS's behavior on non-full batches. Because Spec-Bench contains fewer conversations than the other datasets, its requests are exhausted more quickly, causing both batches to fall below their nominal capacity $m$ earlier in the lifespan of \alg{}. When TETRIS operates on non-full batches, the reduced pool of candidate draft tokens increases the likelihood of selecting lower-quality drafts, thereby raising the target model's rejection rate. As a result, VSR decreases during non-full-batch operation; since a larger fraction of SD steps occur under this regime, the aggregate VSR is negatively affected, effectively diluting the high VSR observed during the initial, fully saturated steps.

To validate this explanation for the VSR degradation observed on Spec-Bench, we compute VSR over truncated portions of the execution, considering only the initial 70\%--100\% or 80\%--100\% of SD steps. As shown in \cref{fig:vsr-vs-percentile-all,fig:vsr-vs-percentile-n2}, on Spec-Bench, the VSR of \alg{} integrated with TETRIS remains comparable to or higher than that of standalone TETRIS until approximately 82--86\% of SD steps for $n=1$, and until approximately 92\% for $n=2$.
In contrast, on the Arena, ShareGPT, and Tough datasets, the VSR of \alg{} with TETRIS matches that of standalone TETRIS over the first 98\% of SD steps for $n=1$, and throughout the entire evaluation for $n=2$.

\subsection{Additional Experiment Results on Larger Models}
\label{app:exp-large-model}

We conduct additional experiments on a large Mixture-of-Experts (MoE) target model~\citep{shazeer2017,dai2024deepseekmoe,qwen3technicalreport}, Qwen3-235B-A22B-Instruct-2507-FP8~\citep{qwen3technicalreport}, to evaluate the compatibility of \alg{} with modern MoE architectures. We use Qwen3-14B~\citep{qwen3technicalreport} as the draft model with $m=64$, $n=2$, a temperature of $0.8$, and served on five H100 GPUs. Under this configuration, standard SD is infeasible due to VRAM constraints.
As shown in \cref{fig:exp-qwen-235b}, \alg{} achieves its best performance when four draft tokens are generated per step, reaching an average throughput of approximately 1200--1600 tokens per second.

\begin{figure}[!ht]
    \centering
    \small
    \begin{minipage}[t]{0.48\textwidth}
        \includegraphics[width=\linewidth, height=0.25\textheight, keepaspectratio]{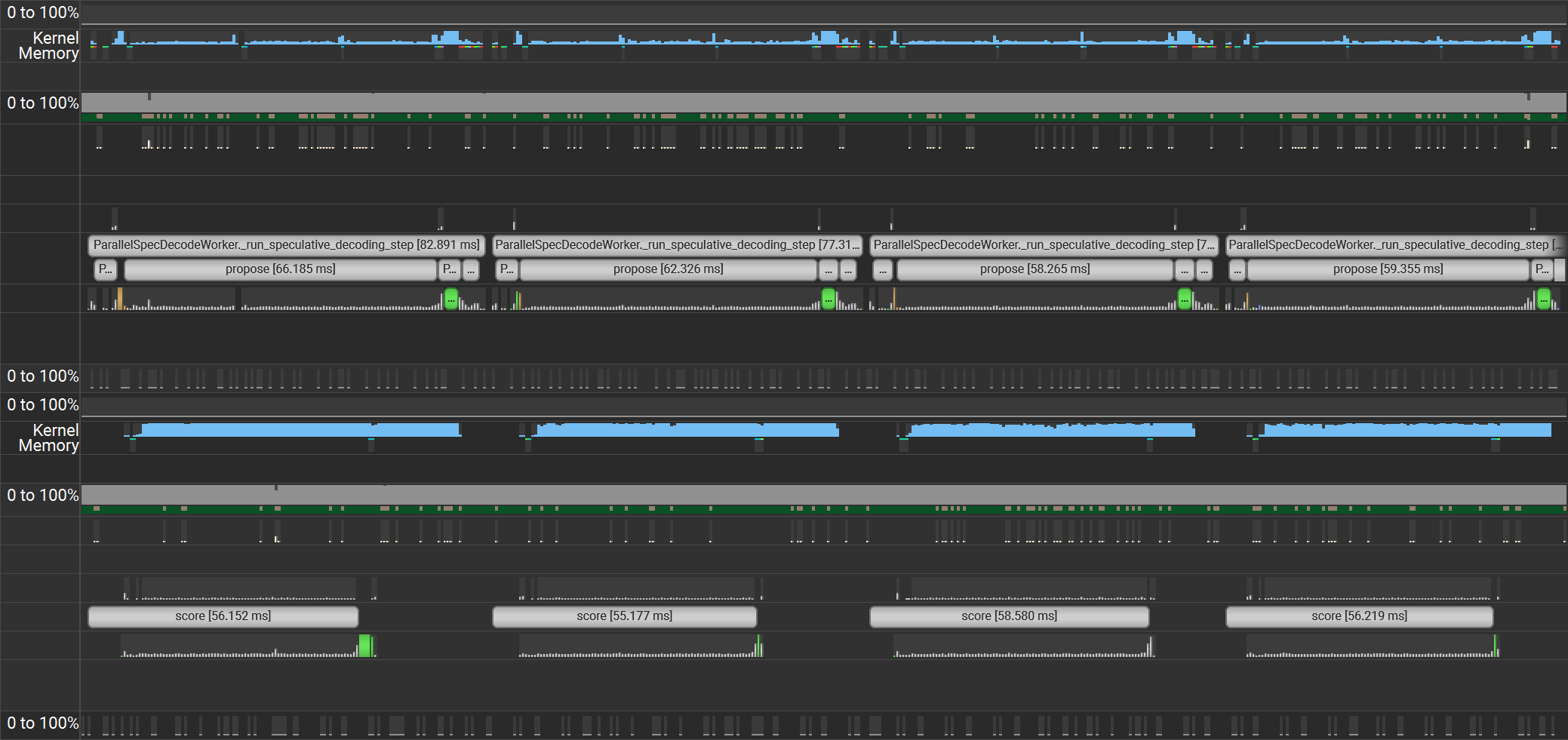}
        \par\vspace{5pt}
        \textbf{(a)} Sample segment of Nsight Systems profiling results with Setting 1, Spec-Bench dataset, and $k=3$. Median drafting and verification times are 49.60 ms and 48.62 ms, respectively, suggesting \alg{} almost perfectly overlaps drafting with verification under this configuration.
    \end{minipage}
    \hfill
    \begin{minipage}[t]{0.48\textwidth}
        \includegraphics[width=\linewidth, height=0.25\textheight, keepaspectratio]{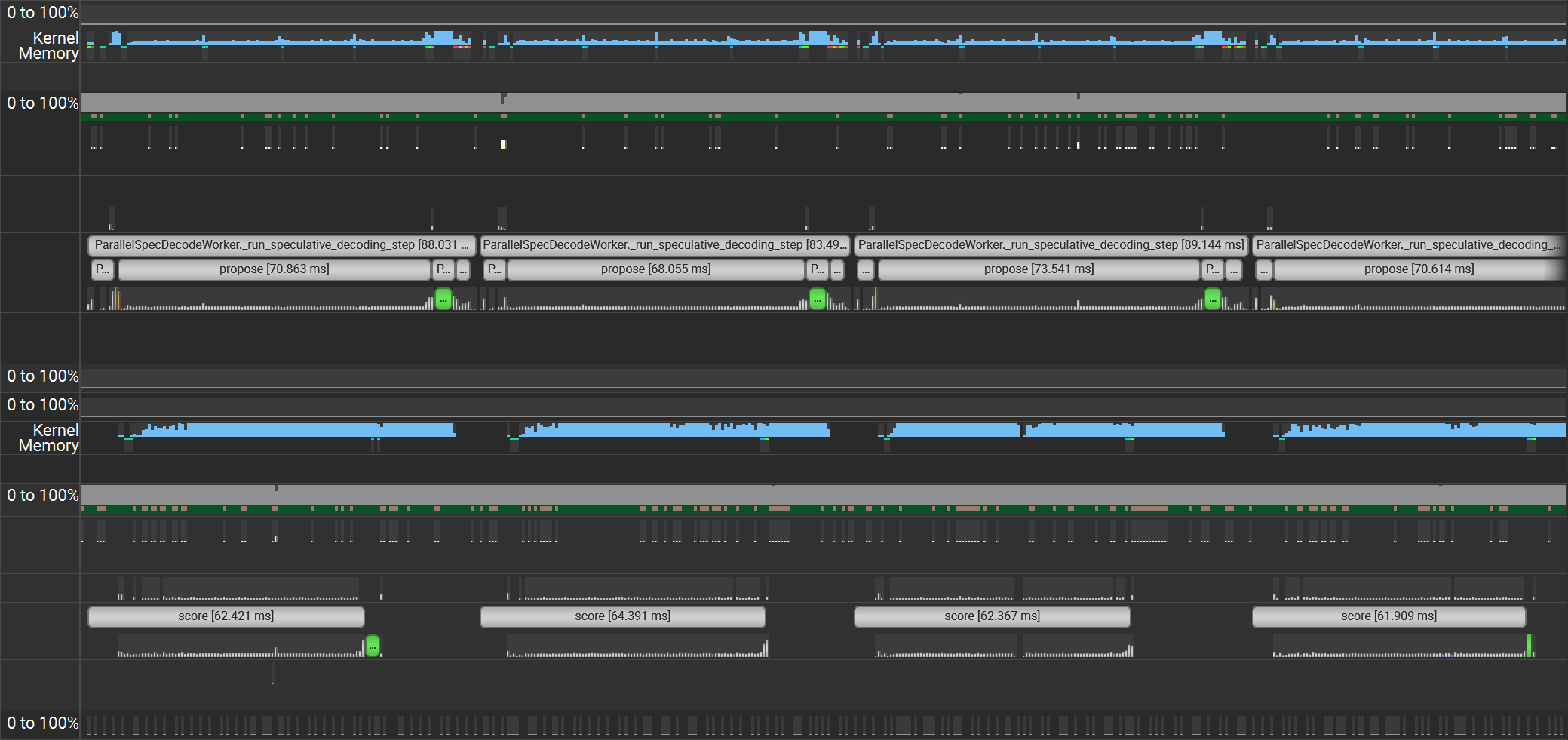}
        \par\vspace{5pt}
        \textbf{(b)} Sample segment of Nsight Systems profiling results with Setting 1, Spec-Bench dataset, and $k=4$. Median drafting and verification times are 64.06 ms and 51.06 ms, respectively, suggesting drafting time exceeds verification time by 25.48\%, and the performance of \alg{} may degrade under this configuration.
    \end{minipage}

    \vspace{1cm} 
    \begin{minipage}[t]{0.48\textwidth}
        \includegraphics[width=\linewidth, height=0.25\textheight, keepaspectratio]{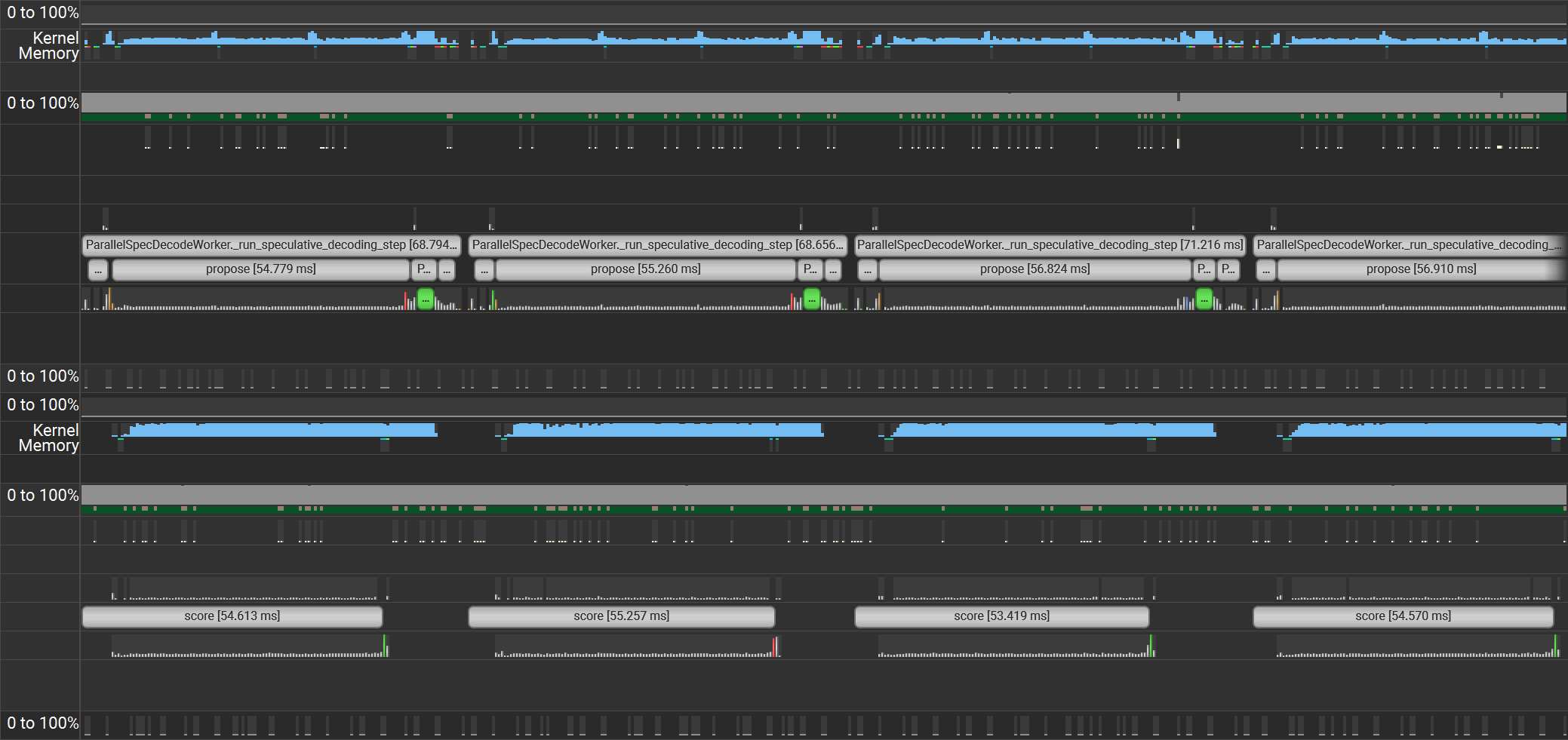}
        \par\vspace{5pt}
        \textbf{(c)} Sample segment of Nsight Systems profiling results with Setting 2, Spec-Bench dataset, and $k=3$. Median drafting and verification times are 49.53 ms and 49.30 ms, respectively, suggesting \alg{} almost perfectly overlaps drafting with verification under this configuration.
    \end{minipage}
    \hfill
    \begin{minipage}[t]{0.48\textwidth}
        \includegraphics[width=\linewidth, height=0.25\textheight, keepaspectratio]{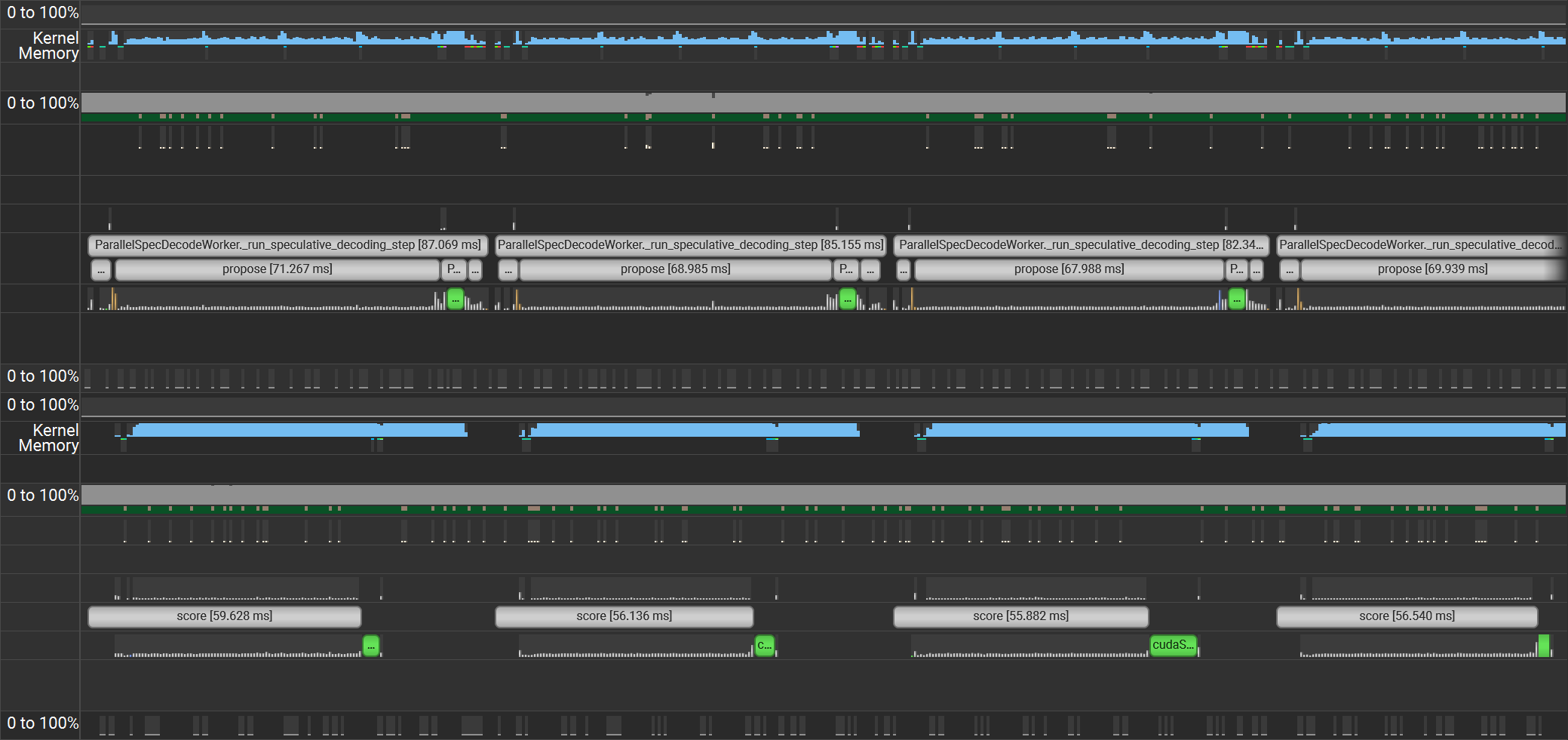}
        \par\vspace{5pt}
        \textbf{(d)} Sample segment of Nsight Systems profiling results with Setting 2, Spec-Bench dataset, and $k=4$. Median drafting and verification times are 65.75 ms and 50.70 ms, respectively, suggesting drafting time exceeds verification time by 29.68\%, and the performance of \alg{} may degrade under this configuration.
    \end{minipage}

    \vspace{1cm} 
    \begin{minipage}[t]{0.48\textwidth}
        \includegraphics[width=\linewidth, height=0.25\textheight, keepaspectratio]{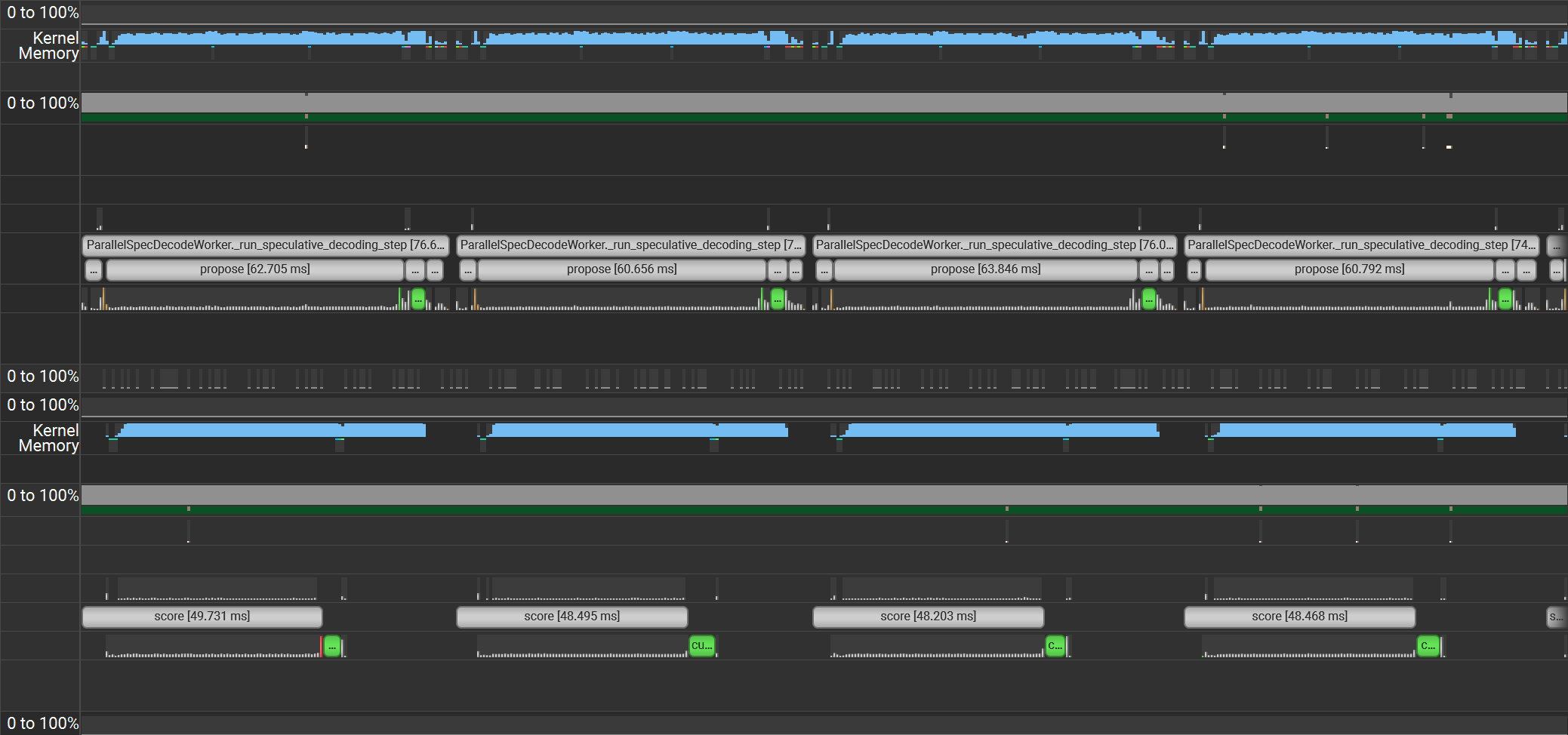}
        \par\vspace{5pt}
        \textbf{(e)} Sample segment of Nsight Systems profiling results with Setting 3, Spec-Bench dataset, and $k=3$. Median drafting and verification times are 61.83 ms and 47.62 ms, respectively, suggesting drafting time exceeds verification time by 29.86\%, and the performance of \alg{} may degrade under this configuration.
    \end{minipage}
    \hfill
    \begin{minipage}[t]{0.48\textwidth}
        \includegraphics[width=\linewidth, height=0.25\textheight, keepaspectratio]{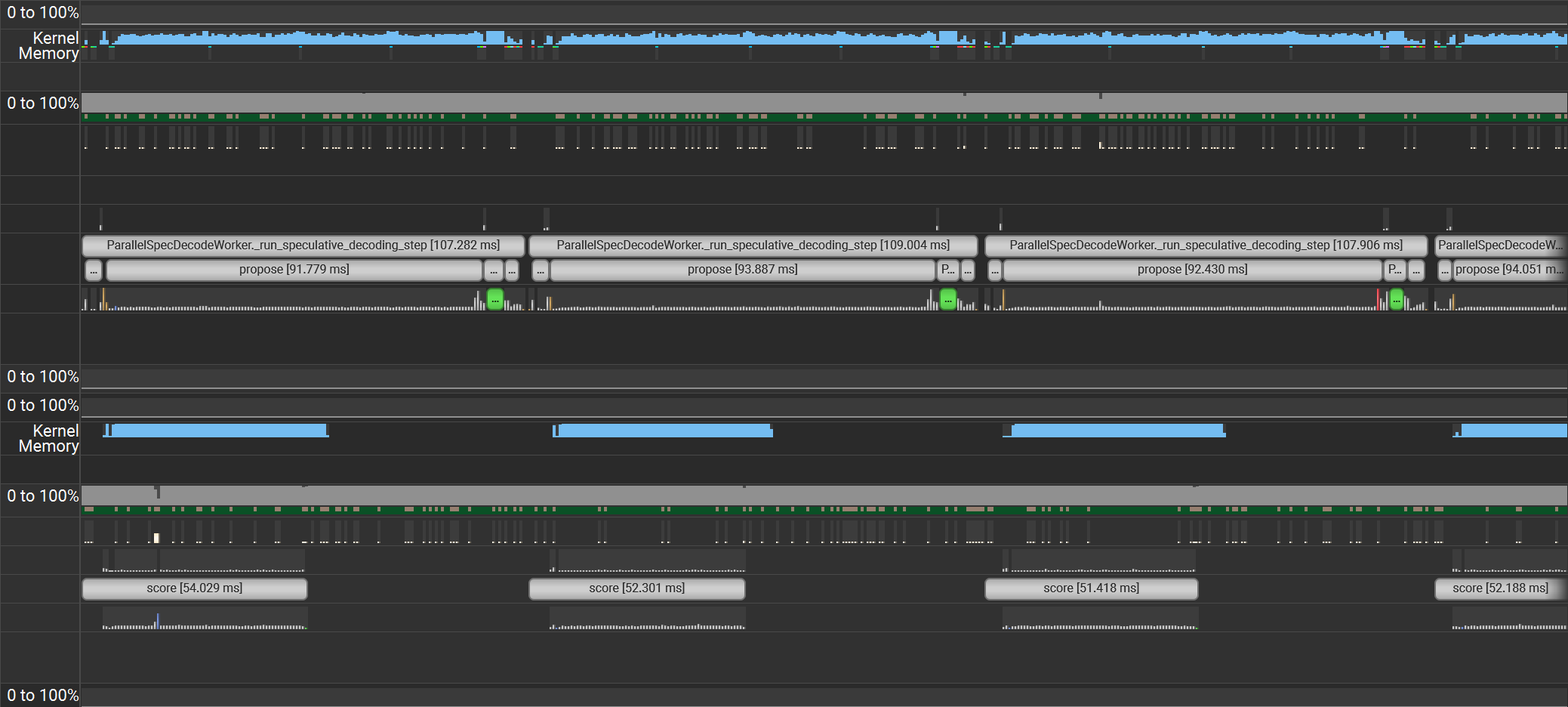}
        \par\vspace{5pt}
        \textbf{(f)} Sample segment of Nsight Systems profiling results with Setting 3, Spec-Bench dataset, and $k=4$. Median drafting and verification times are 81.00 ms and 48.93 ms, respectively, suggesting drafting time exceeds verification time by 65.52\%, and the performance of \alg{} is highly likely to degrade under this configuration.
    \end{minipage}

    \caption{Comparison of Nsight Systems profiling across 6 different configurations.}
    \label{fig:nsight_all}
\end{figure}

\subsection{Nsight Systems Profiling Results}
\label{app:nsight}
Nsight Systems profiling in Settings~1--3 using the Spec-Bench dataset shows that Qwen3-1.7B and Qwen3-0.6B exhibit similar drafting times, whereas Qwen3-4B has the longest drafting time among the three draft models. As illustrated in \cref{fig:nsight_all}, Qwen3-4B spends significantly more time on drafting than on verification, limiting its ability to benefit from PSD. In contrast, Qwen3-1.7B, while slightly slower than Qwen3-0.6B, has greater model capacity and is thus more effective at generating draft tokens that pass verification. This observation supports the conclusion in \cref{app:exp-draft-model} that Qwen3-1.7B achieves the optimal balance between drafting speed and draft quality.

\subsection{Parallelization Coverage}
\label{app:parallelization_coverage}

We assess the impact of the tail effect discussed in \cref{sec:limitations} by quantifying the proportion of decoding steps executed in parallel mode. \cref{tab:parallel_percentage} reports these rates across various model settings and datasets. The results show that \alg{} achieves an overall average parallelization rate of 99.19\%, indicating that the limitation imposed by the tail effect is negligible.

\begin{table}[!ht]
    \vspace{-2mm}
    \caption{
        Percentage of total steps when standalone MineDraft operates in parallel vs. sequential mode across different settings and datasets. Reported numbers are averaged from three independent runs. MineDraft achieves an average parallelization percentage of 99.19\%.
    }
    \label{tab:parallel_percentage}
    \centering
    \small
    \resizebox{0.7\linewidth}{!}{%
    \begin{tabular}{@{} l c c c c c @{}}
        \toprule
        \textbf{Setting} & $\boldsymbol{k}$ & \textbf{Arena} & \textbf{ShareGPT} & \textbf{Spec-Bench} & \textbf{Tough} \\
        \midrule
        
        Qwen3 32B-0.6B (Setting 1) & 1 & 99.84 & 99.96 & 94.32 & 99.88 \\
         & 2 & 99.78 & 99.83 & 98.10 & 99.85 \\
         & 3 & 99.86 & 99.84 & 98.61 & 99.76 \\
         & 4 & 99.80 & 99.81 & 98.85 & 99.93 \\
         & 5 & 99.86 & 99.61 & 98.79 & 99.87 \\
        \midrule
        
        Qwen3 32B-1.7B (Setting 2) & 1 & 99.90 & 99.75 & 93.61 & 99.94 \\
         & 2 & 99.94 & 99.77 & 97.16 & 99.92 \\
         & 3 & 99.72 & 99.79 & 98.13 & 99.81 \\
         & 4 & 99.60 & 99.75 & 98.82 & 99.86 \\
         & 5 & 99.78 & 99.66 & 98.02 & 99.86 \\
        \midrule
        
        Qwen3 32B-4B (Setting 3) & 1 & 99.96 & 99.86 & 93.00 & 99.90 \\
         & 2 & 99.90 & 99.83 & 95.66 & 99.96 \\
         & 3 & 99.89 & 99.81 & 97.52 & 99.88 \\
         & 4 & 99.83 & 99.65 & 98.51 & 99.84 \\
         & 5 & 99.83 & 99.71 & 99.31 & 99.73 \\
        \midrule
        
        Qwen3 32B-8B (Setting 4) & 1 & 99.93 & 99.95 & 93.48 & 99.94 \\
         & 2 & 99.91 & 99.88 & 95.87 & 99.94 \\
         & 3 & 99.95 & 99.70 & 98.91 & 99.81 \\
         & 4 & 99.83 & 99.88 & 97.12 & 99.85 \\
         & 5 & 99.65 & 99.83 & 98.16 & 99.74 \\
        \midrule
        
        Llama-3 70B-8B (Setting 5) & 1 & 99.77 & 99.88 & 99.59 & 99.81 \\
         & 2 & 99.68 & 99.71 & 98.01 & 99.80 \\
         & 3 & 99.64 & 99.60 & 97.02 & 99.81 \\
         & 4 & 99.50 & 99.19 & 98.05 & 99.73 \\
         & 5 & 98.86 & 99.53 & 95.62 & 99.63 \\
        \midrule
        
        Vicuna 33B-EAGLE (Setting 6) & 1 & 99.89 & 99.81 & 98.59 & 99.76 \\
         & 2 & 99.78 & 99.69 & 97.63 & 99.91 \\
         & 3 & 99.89 & 99.63 & 96.36 & 99.82 \\
         & 4 & 99.86 & 99.72 & 98.58 & 99.71 \\
         & 5 & 99.88 & 99.76 & 99.56 & 99.82 \\
        \midrule
        
        Vicuna 13B-EAGLE (Setting 7) & 1 & 99.89 & 99.91 & 96.01 & 99.92 \\
         & 2 & 99.88 & 99.74 & 97.31 & 99.87 \\
         & 3 & 99.75 & 99.87 & 99.57 & 99.83 \\
         & 4 & 99.73 & 99.88 & 97.52 & 99.83 \\
         & 5 & 99.76 & 99.68 & 98.18 & 99.75 \\
        
        \bottomrule
    \end{tabular}}
\end{table}


\section{Frequently Asked Questions}
\label{asec:faq}

\para{Question 1.} Why does \alg{} not attain the maximum theoretical latency improvement of 50\% in the experiments?

\para{Answer.}
The theoretical upper bound of a 50\% latency reduction assumes an idealized setting in which drafting and verification times are perfectly matched throughout the execution of \alg{}. In practice, both stages exhibit stochastic variation due to system overheads, resource contention, and time multiplexing, making the precise temporal alignment required to attain this bound infeasible in real-world deployments.

\para{Question 2.} Why does \alg{} maintain two batches instead of more? Why is the ``Batch Parallelism size'' fixed to 2?

\para{Answer.} 
The design of \alg{} is intrinsically tied to the two-stage structure of speculative decoding (SD): drafting and verification. Compared to autoregressive decoding, the additional drafting stage introduces extra latency, which can become prohibitive when either the number of draft tokens per step is large or the draft model is computationally expensive. \alg{} mitigates this overhead by overlapping the drafting of one set of requests with the verification of another. This design naturally requires exactly two batches: one for drafting and one for verification. Introducing additional batches would not produce further parallelism benefits, as there are no additional stages to overlap, and would instead incur unnecessary scheduling and management overhead.

\para{Question 3.} Why does \alg{} not incorporate prefill requests in the parallelism?

\para{Answer.}
The current design of \alg{} assumes that chunked prefill is disabled. Under this configuration, vLLM completes the prefill stage for new requests in a dedicated step before transitioning them to decoding in subsequent steps, isolating prefill requests from the speculative decoding steps in which \alg{} operates. As support for chunked prefill is a planned extension, future versions of \alg{} will integrate prefill requests into its parallel execution workflow. \\

    \hrule height 0.5mm

\end{document}